\documentclass[twoside,11pt]{article}

\usepackage{blindtext}

%

%
%
%

\usepackage{jmlr2e}



\setcounter{topnumber}{2}

\setcounter{bottomnumber}{1}

\setcounter{totalnumber}{10}

\setcounter{dbltopnumber}{3}

\usepackage[math]{cellspace}
    \cellspacetoplimit 3pt
    \cellspacebottomlimit 3pt


\usepackage{soul}   
\usepackage{makecell}   
\usepackage{accents}
\usepackage{float}  


\usepackage{color}
\newcommand{\rred}[1]{\textcolor{red}{#1}}




\newcommand{\blockComment}[1]{}


\newcommand{\connect}{}
\newcommand{\transit}{}
\newcommand{\explain}{}
\newcommand{\compare}{}

\newcommand{\future}{}

\newcommand{\ppp}[1]{\mathbb{P}\left( #1 \right)}
\newcommand{\prob}[1]{\mathcal{P}_{#1}}
\newcommand{\expt}[2]{\mathbb{E}_{#1}\left[ #2 \right]}

\newcommand{\mmatrix}[1]{\begin{pmatrix} 
#1
\end{pmatrix}}

\newcommand{\icol}[1]{
    \left(\begin{smallmatrix}#1\end{smallmatrix}\right)
}


\newcommand{\mrm}[1]{\mathrm{#1}}   

\newcommand{\eqarr}[1]{\begin{eqnarray}
#1
\end{eqnarray}}

\newcommand{\dx}{\mathrm{d}x}   

\newcommand{\ind}[1]{\mathbb{I}\left[ #1 \right]}   



\newcommand{\rmp}{\mathrm{p}}   
\newcommand{\rmn}{\mathrm{n}}   
\newcommand{\rmu}{\mathrm{u}}   
\newcommand{\rms}{\mathrm{s}}   
\newcommand{\rmd}{\mathrm{d}}   
\newcommand{\rmpc}{\mathrm{pc}}   
\newcommand{\rmsc}{\mathrm{sc}}   


\newcommand{\mcdp}{\gamma_\mathrm{1}}    
\newcommand{\mcdn}{\gamma_\mathrm{2}}
\newcommand{\mcdpp}{\gamma_\rmp}
\newcommand{\mcdnn}{\gamma_\rmn}

\newcommand{\corr}[1]{\bar{#1}}
\newcommand{\Mcorr}[1]{M_{\mathrm{#1}}}


\usepackage{lastpage}
\jmlrheading{oo}{2023}{1-\pageref{LastPage}}{oo/oo; Revised oo/oo}{oo/oo}{oo-oooo}{Chao-Kai Chiang and Masashi Sugiyama}


\ShortHeadings{A Weakly Supervised Learning Framework}{Chiang and Sugiyama}
\firstpageno{1}

\begin{document}

\title{Unified Risk Analysis for Weakly Supervised Learning}

\author{\name Chao{-}Kai Chiang \email chaokai@k.u-tokyo.ac.jp \\
       \name Masashi Sugiyama \email sugi@k.u-tokyo.ac.jp \\
       \addr Department of Complexity Science and Engineering \\
       Graduate School of Frontier Sciences \\
       The University of Tokyo \\
       5-1-5 Kashiwanoha, Kashiwa-shi, Chiba 277-8561, Japan}

\editor{Editor Name(s)}

\maketitle

\begin{abstract}
Among the flourishing research of weakly supervised learning (WSL), we recognize the lack of a unified interpretation of the mechanism behind the weakly supervised scenarios, let alone a systematic treatment of the risk rewrite problem, a crucial step in the empirical risk minimization approach. In this paper, we introduce a framework providing a comprehensive understanding and a unified methodology for WSL. The formulation component of the framework, leveraging a contamination perspective, provides a unified interpretation of how weak supervision is formed and subsumes fifteen existing WSL settings. The induced reduction graphs offer comprehensive connections over WSLs. 
The analysis component of the framework, viewed as a decontamination process, provides a systematic method of conducting risk rewrite.
In addition to the conventional inverse matrix approach, we devise a novel strategy called marginal chain aiming to decontaminate distributions. We justify the feasibility of the proposed framework by recovering existing rewrites reported in the literature.
\end{abstract}

\begin{keywords}
    weakly supervised learning, 
    classification risk,  
    learning with noisy labels,
    pairwise comparison,
    partial-label,
    confidence
\end{keywords}


\section{Introduction}
\label{sec:intro}
Accurate labels allow one to generalize to unseen data via empirical risk minimization (ERM) and analyze the generalization error in terms of the classification risk. 
In practice, there are various situations in which acquiring accurate labels is hard or even impossible.
One obstacle preventing us from acquiring accurate labels is labeling restrictions, such as imperfect supervision due to imperceptibility, time constraints, annotation costs, and even data sensitivity.
Another obstacle is the disruption by unavoidable noise from the environment.

To address the first obstacle of restrictions, various formulations have been studied under the notion of weakly supervised learning (WSL) \citep{wsl_18_survey/Zhou/18, wsl_sugibook/Sugiyama/BILSG/22}.
Based on various types of available label information, it evolves to thriving topics, including the conventional settings 
\citep{uu_18/Lu/NMS/19, uu_19_Lu, uu_21_Lu, pu_08_EN, pu_14/Plessis/NS/14, pu_15/Plessis/NS/15, pu_16/Niu/PSMS/16, pu_17/Kiryo/NPS/17, pu_19_SNZ} that investigating the potential of unlabeled data, complementary-label learning 
\citep{comp_17/Ishida/NHS/17, comp_18/Ishida/NMS/19, comp_17_Tao/Yu/LGT/18, comp_20_MCL/Feng/KHNAS/20, katsura2020bridging, chou2020unbiased}, partial-label learning 
\citep{cour2011learning, wang2019partial, partial_20_PIPL/Lv/XFNGS/20, partial_20_PCPL/Feng/LHXNGAS/20, partial_21_PPL/Wu/LS/23}, learning with confidence information 
\citep{pconf_17/Ishida/NS/18, scconf_21/Cao/FSXANS/21, sconf_21/Cao/FXANS/21, instance_20/Berthon/HLNS/21, soft_22/Ishida/YCNS/22}, and learning with comparative information 
\citep{su_18/Bao/NS/18, sdu_19/Shimada/BSS/21, pcomp_20/Feng/SLS/21, sconf_21/Cao/FXANS/21}.
Developing to resolve the second obstacle of noise, learning with noisy labels (LNL) can be categorized into two major formulations; one is called mutually contaminated distributions (MCD) \citep{mcd_13_Scott/Scott/BH/13, mcd_15/Menon/ROW/15, mcd_19_Scott/Katz-Samuels/BS/19} in which class-conditional distributions contaminate each other, and the other is named class-conditional random label noise (CCN) \citep{ccn_13/Natarajan/DRT/13, ccn_18/Natarajan/DRT/17} where a label is flipped by random noise.

\blockComment{
}   

Despite fruitful results and tremendous impact, we recognize a lack of {global understanding} and {systematic treatment} of WSL.
\explain 
From the perspective of {\emph{formulation}}, there are only scattered links among WSLs.
\cite{uu_18/Lu/NMS/19} and \cite{pcomp_20/Feng/SLS/21} showed that parameter substitution could reduce unlabeled-unlabeled to similar-unlabeled and positive-unlabeled settings.
Figure 1 in \cite{partial_21_PPL/Wu/LS/23} showed relationships among four WSLs of partial- and complementary-labels.
A similar observation can be found in the intersection of WSLs and LNLs.
Several WSLs were shown to be special cases of the MCD model, and some other WSLs are special cases of the CCN model.
For details, please refer to the discussions in Sections 8.2.3 and 9.2.4 of \cite{wsl_sugibook/Sugiyama/BILSG/22}.
\transit These connections encourage us to consider the possibility that there exists a unique interpretation that explains the mechanism behind WSL.
\explain 
Luckily, from the {\emph{methodological}} viewpoint, most of the existing WSL research adopted certain forms of the ERM approach.
A crucial shared step is to perform the risk rewrite, a way of rephrasing the uncomputable risk to a computable one in terms of the data-generating distributions.
A successful rewrite is the starting point of many downstream tasks, including but not limited to the following: Devising a practical or robust objective for training, comparing the strengths and properties of loss functions, proving the consistency, and analyzing generalization error bounds. 
However, many rewrite forms (summarized in Tables~\ref{tab:binary_WSL_rewrites} and \ref{tab:multiclass_WSL_rewrites}) look independent as if they are tailored to fit each problem's unique form of supervision and are not adaptable to each other.
\transit These seemingly non-adaptable estimators post a practical challenge: When facing a new form of weak (or noisy) supervision, we do not have a guideline or general strategy to leverage developed methods to address the new situation.

These observations raise the following questions we aim to answer in this paper: What is the essence of WSL? 
From a {formulation} perspective, can a unique interpretation be found to explain the mechanism behind WSL? 
Does a {methodology} exist to address as many WSLs as possible?
 
This paper proposes a framework with the following contributions to answer the research questions. 
\begin{enumerate}
    \item {To the best of our knowledge}, the framework is the first systematic attempt to address how and why WSLs are connected. The framework consists of a formulation component and an analysis component, subsuming fifteen weakly supervised scenarios. Table~\ref{tab:all_theorem_table} summarizes results generated from our framework.
    \item The formulation component, modeling from a \emph{contamination} perspective, provides WSL data generation processes with a coherent interpretation. 
    It produces three reduction graphs, shown in Tables~\ref{tab:MCD_matrices_summary}, \ref{tab:CCN_matrices_summary}, and \ref{tab:conf_matrices_summary}, revealing comprehensive connections between WSL formulations.
    It also unveils a distinctive confidence-based type WSLs that do not belong to the prominent MCD or CCN categories.
    \item The analysis component, leveraging the \emph{decontamination} concept, establishes a generic methodology for conducting risk rewrites for all WSLs discussed in this paper.
    The methodology also discovers the underlying mechanism that forms seemingly different risk rewrites.
    \item Regarding the technical contributions, a combined advantage of our framework and Theorem 1 from \cite{partial_21_PPL/Wu/LS/23} distinguishes two approaches, the inversion approach and the marginal chain approach presented by Theorems~\ref{thm:inv_method} and \ref{thm:marginal_chain}, to carry out the decontamination concept.
    The discovery of the marginal chain injects a brand-new thought to realize decontamination.
    \item We provide alternative proofs to demonstrate how the risk rewrites derived from our framework recover existing results reported in the literature. These alternatives have their respective logic stemming from the proposed framework.
\end{enumerate}

The idea of decontamination has been widely implemented and investigated. 
There are two major approaches, loss correction, and label correction, in LNL.
Closest to the current paper, \cite{partial_12_Cid-Suerio/Cid-Suerio/12}, \cite{18_Rooyen/Rooyen/W/17}, \cite{mcd_19_Scott/Katz-Samuels/BS/19}, \cite{17_Patrini/Patrini/RMNQ/17}, and \cite{15_Rooyen/Rooyen/W/15} exploited the inverse matrix, sometimes known as the backward method \citep{17_Patrini/Patrini/RMNQ/17}, to construct a corrected training loss to obtain an {unbiased estimator}.
There were deep learning methods leveraging the contamination assumption, sometimes called the forward method \citep{17_Patrini/Patrini/RMNQ/17}, to train a classifier \citep{17_Patrini/Patrini/RMNQ/17, comp_17_Tao/Yu/LGT/18, 15_Sukhbaatar/Sukhbaatar/F/14, 17_Goldberger/Goldberger/B/17, instance_20/Berthon/HLNS/21}. 
Besides modifying the loss function, one has two other strategies to manipulate the corrupted labels. 
The (iterative) pseudo-label method modified the labels for training \citep{18_Ma/Ma/WHZEXWB/18, label_18_Tanaka/Tanaka/IYA/18, label_14_Reed/Reed/LASER/14}. 
Filtering clean data points for training is the other option
\citep{17_Northcutt/Northcutt/WC/17, 19_Northcutt/Northcutt/JC/21, 18_Jiang/Jiang/ZLLF/18, 18_Han/Han/YYNXHTS/18, ICML:Yu+etal:2019}. 
Apart from classification, a different research branch studies conditions and methods for recovering the base distributions \citep{mcd_19_Scott/Katz-Samuels/BS/19, mcd_14_Scott/Blanchard/S/14, 16_Scott/Blanchard/FHPS/16}.

The current work is close to the loss correction approach in LNL. 
Most previous loss correction methods exploited invertibility to construct the corrected losses. 
In contrast, the marginal chain approach we propose in this paper adopts the conditional probability formula to build the corrected losses.
Many of the existing work targeted either the MCD or the CCN models. \cite{portion_20_Scott/Scott/Z/20}, 
\cite{instance_20/Berthon/HLNS/21}, \cite{17_Patrini/Patrini/RMNQ/17}, \cite{17_Goldberger/Goldberger/B/17}, \cite{15_Sukhbaatar/Sukhbaatar/F/14}, \cite{comp_17_Tao/Yu/LGT/18}, \cite{ccn_13/Natarajan/DRT/13}, \cite{ccn_18/Natarajan/DRT/17}, \cite{17_Northcutt/Northcutt/WC/17}, and \cite{19_Northcutt/Northcutt/JC/21} were based on the CCN model, and \cite{mcd_19_Scott/Katz-Samuels/BS/19}, \cite{mcd_14_Scott/Blanchard/S/14}, and \cite{16_Scott/Blanchard/FHPS/16} were based on the MCD model.
\cite{mcd_15/Menon/ROW/15}, \cite{18_Rooyen/Rooyen/W/17}, and \cite{mcd_19_Scott/Katz-Samuels/BS/19} studied multiple noise models at the same time. 
However, the current paper investigates the connections between MCD, CCN, and confidence-based settings simultaneously through the lens of matrix decontamination as broadly as possible to identify a generic methodology for WSLs.
Different from the current paper aiming for risk minimization, research also studied various performance measures, such as the balanced error rate \citep{portion_20_Scott/Scott/Z/20, portion_19_Scott/abs-1910-04665, mcd_15/Menon/ROW/15, TAAI:duPlessis+etal:2013}, the area under the receiver operating characteristic curve \citep{icml:2019:Charoenphakdee+LS, ML:Sakai+etal:2018, mcd_15/Menon/ROW/15}, and cost-sensitive measures \citep{ICML:Charoenphakdee+etal:2021, ccn_18/Natarajan/DRT/17}. 
We choose the classification risk as the only measure due to the focus of this paper.

The remaining sections are organized as follows.
Section~\ref{sec:background} reviews ERM in supervised learning, the risk rewrite problem, and the existing results.
Section~\ref{sec:recipe} presents the proposed framework.
We show that the proposed framework provides a unified way to formulate diverse weakly supervised scenarios in Section~\ref{sec:matrixFormulations}.
Section~\ref{sec:riskRewrite} demonstrates how to instantiate the framework to conduct risk rewrite.
Finally, we conclude the paper and discuss outlooks in Section~\ref{sec:future}.

\section{Preliminaries}
\label{sec:background}
Let $(y,x)$ be a training example where {the instance} $x \in \mathcal{X}$ and {the label} $y \in \mathcal{Y}$.
For binary classification, the label space $\mathcal{Y}$ is $\{ \rmp, \rmn\}$, and for multiclass classification with $K$ classes, $\mathcal{Y} = \{1,2, \ldots, K\} := [K]$.
The joint distribution is $\ppp{Y, X}$, the class prior is $\ppp{Y}$, the class-conditional distribution is $\ppp{X|Y}$, and the class probability function is $\ppp{Y|X}$.
Given a space of hypotheses $\mathcal{G}$, we denote the loss of a hypothesis $g\in\mathcal{G}$ on predicting $y$ of $(y, x)$ as $\ell_{Y=y}(g(x))$.
To accommodate concise expressions and readability for all WSLs considered in this paper simultaneously, we use alias notations when the context is unambiguous. 
Table~\ref{tab:small_notation_table} provides a set of common notations used in this paper.

\begin{table}[H]    
\centering
\caption{\label{tab:small_notation_table} Alias of Common Notations.}
{\renewcommand{\arraystretch}{1.2}
\begin{tabular}[t]{ |l|l|l| } 
    \hline
    Name of the notation & Expression & Aliases \\
    \hline
    Binary classes & $\{\rmp, \rmn\}$ &  \\
    Multiple classes & $\{1, \ldots, K\}$ & $[K]$ \\
    Compound set of $[K]$ & $2^{[K]} \backslash \left\{\emptyset, [K] \right\}$ & $\mathcal{S}$ \\
    \hline
    Joint distribution & $\ppp{Y=y, X=x}$ & $\prob{Y=y,x}$, $\prob{Y=y,X}$, or $\prob{Y,X}$ \\
    \hline
    Hypothesis and its space & $g \in \mathcal{G}$ & \\
    \hline
    Loss of $g$ & $\ell_{Y=y}(g(x))$ & $\ell_{y}$, $\ell_{y}(X)$, or $\ell_{Y}(g(X))$ \\
    \hline
    Classification risk & $\expt{Y,X}{\ell_{Y}(g(X))}$ & $R(g)$ \\
    \hline
    The $j$-th entry of vector $V$ & $\left( V\right)_{j}$ & $V_j$ \\
    \hline
    Class prior & $\ppp{Y=y}$ & $\pi_{y}$ \\
    Marginal & $\ppp{X}$ & $\prob{X}$ \\
    Class-conditional & $\ppp{X=x|Y=y}$ & $\prob{X|Y}$, $\prob{X|Y=y}$, or $\prob{x|Y=y}$ \\
    Confidence & $\ppp{Y=y|X=x}$ & $r_{y}(X)$, $r_{y}(x)$, or $r(X)$ if $y=\rmp$ \\
    \hline
\end{tabular}
}
\end{table}

We use $(y,x)$ instead of the convention $(x,y)$ to represent a data instance because, in the current paper, we focus on discussing different types of supervision. 
Placing the label before the instance emphasizes the type of supervision under investigation in theorems and derivations.

\subsection{Supervised Learning and the ERM Method}
In supervised learning with $K$ classes, the observed data is of the form
\eqarr{
    &&\{x_i^{y}\}_{i=1}^{n_y} \stackrel{\text{i.i.d.}}{\sim} \prob{X|Y=y}, \forall y \in [K]. \nonumber 
}
Notation $x_i^{y}$ denotes the shorthand of $(y, x_i).$
The goal of learning is to find a classifier $g \in \mathcal{G}$ that minimizes the classification risk
\eqarr{
    R(g) := \expt{Y, X}{\ell_{Y}(g(X))} = \sum_{y=1}^{K} \int_{x\in\mathcal{X}} \prob{Y=y, x} \, \ell_{Y=y}(g(x)) \, \dx. \label{eq:erm_b0}
}

To find such a classifier, ERM first constructs an empirical risk estimator with the data in hand:
\eqarr{
    \hat{R}(g) = 
    \sum_{y=1}^{K} \frac{1}{n_y} \sum_{i=1}^{n_y} \pi_y \ell_{Y=y}(g(x_i^{y})). \label{eq:erm_b2}
}
The estimator approximates $R(g)$ consistently since it can be shown that (\ref{eq:erm_b2}) approaches (\ref{eq:erm_b0}) as $N\rightarrow\infty$ \citep{14_Tewari_LearnTheory, pu_17/Kiryo/NPS/17} and \citep[Chapter~3]{wsl_sugibook/Sugiyama/BILSG/22}.
Then, ERM takes $\hat{R}(g)$ as the training objective and optimizes it to find the optimal classifier 
\eqarr{
    g^* = \arg\min_{g \in \mathcal{G}} \hat{R}(g) \label{eq:erm_b4}
}
in the hypothesis space $\mathcal{G}$ as the output of ERM.

\blockComment{
}

\subsection{The Risk Rewrite Problem and Existing Results}
\label{sec:formulations_old}
In every WSL scenario, the goal of learning is the same as supervised learning.
However, the observed data is no longer as perfectly labeled as in supervised learning.
That said, there are differences in the formulations of the observed data and the ways of estimating the classification risk.
We begin with reviewing WSLs derived from binary classes. 
For $K=2$, we assign $\mathcal{Y} := \{\rmp, \rmn\}$.

\subsubsection{Positive-Unlabeled (PU) learning}
\label{sec:review_PU}
The observed data in PU learning \citep{pu_15/Plessis/NS/15} is of the form
\eqarr{
    \begin{aligned}
    \label{eq:formulate_PU}
        & \left\{x_i^{\rmp}\right\}_{i=1}^{n_{\rmp}} 
        \stackrel{\text{i.i.d.}}{\sim} \prob{\mrm{P}} 
        := \prob{X|Y=\rmp}, \\
        & \left\{x_j^{\rmu}\right\}_{j=1}^{n_{\rmu}} 
        \stackrel{\text{i.i.d.}}{\sim} \prob{\mrm{U}} 
        := \pi_\rmp \; \prob{X|Y=\rmp} + \pi_\rmn \; \prob{X|Y=\rmn},
    \end{aligned}
}
where $x_j^{\rmu}$ is viewed as the shorthand of $(\rmu, x_j)$ symbolizing the unlabeled data\footnote{Seemingly being redundant, but it is helpful to use $(\rmu, x_j)$ to distinguish it from the positively labeled instance $(\rmp, x_i)$.}.
The unlabeled data set $\{x_j^{\rmu}\}_j$ consists of a mixture of samples from $\prob{X|Y=\rmp}$ and $\prob{X|Y=\rmn}$ with proportion $\pi_\rmp$.
Since the information of negatively sampled data is unavailable, (\ref{eq:erm_b2}) is uncomputable, causing directly optimizing (\ref{eq:erm_b4}) infeasibility.
Therefore, to make ERM applicable, the \emph{risk rewrite problem} \citep{wsl_sugibook/Sugiyama/BILSG/22} asks:
\begin{center}
    Can one rephrase the classification risk $R(g)$ (\ref{eq:erm_b0}) in terms of the given data formulation?
\end{center}
\cite{pu_15/Plessis/NS/15} rewrote the classification risk in terms of the data-generating distributions $\prob{\mrm{P}}$ and $\prob{\mrm{U}}$ as
\eqarr{
    R(g) 
    = 
    \expt{\mrm{P}}{
        \pi_\rmp \ell_\rmp - \pi_\rmp \ell_\rmn
    } + \expt{\mrm{U}}{
        \ell_\rmn
    }. \label{eq:review_rewrite_PU} 
}

\subsubsection{Positive-confidence (Pconf) Learning Learning}
\label{sec:review_Pconf}
The observed data in Pconf learning \citep{pconf_17/Ishida/NS/18} is of the form
\eqarr{
    \left\{x_i, r(x_i)\right\}_{i=1}^{n}, \nonumber
}
where
\eqarr{
    \begin{aligned}
    \label{eq:formulate_Pconf}
        & x_i \stackrel{\text{i.i.d.}}{\sim} \prob{\mrm{P}} := \prob{X|Y=\rmp}, \\
        & r(x_i) := \prob{Y=\rmp|X=x_i}.
    \end{aligned}
}
The function $r(x)$ represents how confident an example $x$ would be positively labeled.
\cite{pconf_17/Ishida/NS/18} rewrote the classification risk as
\eqarr{
    R(g)
    =
    \pi_\rmp
    \expt{\mrm{P}}{
        \ell_\rmp + \frac{1-r(X)}{r(X)} \ell_\rmn
    }. \label{eq:review_rewrite_Pconf}
}

\subsubsection{Unlabeled-Unlabeled (UU) learning}
\label{sec:review_UU}
The observed data in UU learning \citep{uu_18/Lu/NMS/19} is of the form
\eqarr{
    \begin{aligned}
    \label{eq:formulate_UU}
        &\left\{x_i^{\rmu_1}\right\}_{i=1}^{n_{\rmu_1}} 
        \stackrel{\text{i.i.d.}}{\sim} \prob{\mrm{U}_1} 
        := (1-\mcdp) \; \prob{X|Y=\rmp} + \mcdp \; \prob{X|Y=\rmn}, \\
        &\left\{x_j^{\rmu_2}\right\}_{j=1}^{n_{\rmu_2}} 
        \stackrel{\text{i.i.d.}}{\sim} \prob{\mrm{U}_2} 
        := \mcdn \; \prob{X|Y=\rmp} + (1-\mcdn) \; \prob{X|Y=\rmn},
    \end{aligned}
}
where $x_i^{\rmu_1}$ (resp.\ $x_j^{\rmu_2}$) being the shorthand of $(\rmu_1, x_i)$ (resp.\ $(\rmu_2, x_j)$) represents $x_i$ (resp.\ $x_j$) belonging to the unlabeled data whose mixture parameter is $\mcdp$ (resp.\ $\mcdn$).
Notice a difference that the mixture proportion of the unlabeled data in PU learning is $\pi_\rmp$.  
\cite{uu_18/Lu/NMS/19} rewrote the classification risk in terms of the data-generating distributions $\prob{\mrm{U}_1}$ and $\prob{\mrm{U}_2}$ as follows:
Assume $\mcdp + \mcdn \neq 1$. Then,
\eqarr{
    R(g)
    =
    \expt{\mrm{U}_1}{
        \frac{(1-\mcdn)\pi_\rmp}{1-\mcdp-\mcdn} \ell_\rmp + \frac{-\mcdn \pi_\rmn}{1-\mcdp-\mcdn} \ell_\rmn
    } 
    + 
    \expt{\mrm{U}_2}{
        \frac{-\mcdp\pi_\rmp}{1-\mcdp-\mcdn} \ell_\rmp + \frac{(1-\mcdp)\pi_\rmn}{1-\mcdp-\mcdn} \ell_\rmn
    }. \label{eq:review_rewrite_UU}
}

\subsubsection{Similar-Unlabeled (SU) learning}
\label{sec:review_SU}
The observed data in SU learning \citep{su_18/Bao/NS/18} is of the form
\eqarr{
    \begin{aligned}
    \label{eq:formulate_SU}
        &\left\{\left(x_i^{\rms}, x_i^{\rms'}\right)\right\}_{i=1}^{n_{\rms}} 
        \stackrel{\text{i.i.d.}}{\sim} \prob{\mrm{S}} 
        := \frac{\pi_\rmp^2 \prob{X|Y=\rmp} \prob{X'|Y=\rmp} + \pi_\rmn^2 \prob{X|Y=\rmn} \prob{X'|Y=\rmn}}{\pi_\rmp^2+\pi_\rmn^2}, \\
        &\left\{x_j^{\rmu}\right\}_{j=1}^{n_{\rmu}} 
        \stackrel{\text{i.i.d.}}{\sim} \prob{\mrm{U}} 
        := \pi_\rmp \; \prob{X|Y=\rmp} + \pi_\rmn \; \prob{X|Y=\rmn}.
    \end{aligned}
}
The word ``similar'' means the examples in every $(x^{\rms},x^{\rms'})$ pair have the same label; either both are positive, or both are negative.
Under the assumption $\pi_\rmp \neq \pi_\rmn$,
\cite{su_18/Bao/NS/18} rewrote the classification risk as
\eqarr{
    R(g)
    =
    \left(\pi_\rmp^2+\pi_\rmn^2\right)\expt{\mrm{S}}{\frac{\mathcal{L}(X) + \mathcal{L}(X')}{2}}
    +
    \expt{\mrm{U}}{\mathcal{L}_{-}(X)}, \label{eq:review_rewrite_SU}
}
where
\eqarr{
    \mathcal{L}(X) &:=& \frac{1}{\pi_\rmp - \pi_\rmn} \ell_{\rmp}(X) - \frac{1}{\pi_\rmp - \pi_\rmn} \ell_{\rmn}(X), \nonumber \\
    \mathcal{L}_{-}(X) &:=& - \frac{\pi_\rmn}{\pi_\rmp -\pi_\rmn} \ell_{\rmp}(X) + \frac{\pi_\rmp}{\pi_\rmp - \pi_\rmn} \ell_{\rmn}(X). \nonumber
}

\subsubsection{Dissimilar-Unlabeled (DU) learning}
\label{sec:review_DU}
The observed data in DU learning \citep{sdu_19/Shimada/BSS/21} is of the form
\eqarr{
    \begin{aligned}
    \label{eq:formulate_DU}
        &\left\{\left(x_i^{\rmd}, x_i^{\rmd'}\right)\right\}_{i=1}^{n_{\rmd}} 
        \stackrel{\text{i.i.d.}}{\sim} \prob{\mrm{D}} 
        := \frac{\prob{X|Y=\rmp} \prob{X'|Y=\rmn} + \prob{X|Y=\rmn} \prob{X'|Y=\rmp}}{2}, \\
        &\left\{x_j^{\rmu}\right\}_{j=1}^{n_{\rmu}} 
        \stackrel{\text{i.i.d.}}{\sim} \prob{\mrm{U}} 
        := \pi_\rmp \; \prob{X|Y=\rmp} + \pi_\rmn \; \prob{X|Y=\rmn}.
    \end{aligned}
}
The word ``dissimilar'' means the examples in every $(x^{\rmd},x^{\rmd'})$ pair have distinct labels.
Under the assumption $\pi_\rmp \neq \pi_\rmn$,
\cite{sdu_19/Shimada/BSS/21} rewrote the classification risk as 
\eqarr{
    R(g)
    =
    2\pi_\rmp\pi_\rmn
    \expt{\mrm{D}}{-\frac{\mathcal{L}(X) + \mathcal{L}(X')}{2}}
    +
    \expt{\mrm{U}}{\mathcal{L}_{+}(X)}, \label{eq:review_rewrite_DU}
}
where
\eqarr{
    \mathcal{L}(X) &=& \frac{1}{\pi_\rmp - \pi_\rmn} \ell_{\rmp}(X) - \frac{1}{\pi_\rmp - \pi_\rmn} \ell_{\rmn}(X), \nonumber \\
    \mathcal{L}_{+}(X) &:=& \frac{\pi_\rmp}{\pi_\rmp -\pi_\rmn} \ell_{\rmp}(X) - \frac{\pi_\rmn}{\pi_\rmp - \pi_\rmn} \ell_{\rmn}(X). \nonumber
}
Note that $\mathcal{L}(X)$ has been defined in the SU setting. 
We repeat it here for clarity.

\subsubsection{Similar-Dissimilar (SD) learning}
\label{sec:review_SD}
The observed data in SD learning \citep{sdu_19/Shimada/BSS/21} is of the form
\eqarr{
    \begin{aligned}
    \label{eq:formulate_SD}
        &\left\{\left(x_i^{\rms}, x_i^{\rms'}\right)\right\}_{i=1}^{n_{\rms}} 
        \stackrel{\text{i.i.d.}}{\sim} \prob{\mrm{S}} 
        := \frac{\pi_\rmp^2 \prob{X|Y=\rmp} \prob{X'|Y=\rmp} + \pi_\rmn^2 \prob{X|Y=\rmn} \prob{X'|Y=\rmn}}{\pi_\rmp^2+\pi_\rmn^2}, \\
        &\left\{\left(x_i^{\rmd}, x_i^{\rmd'}\right)\right\}_{i=1}^{n_{\rmd}} 
        \stackrel{\text{i.i.d.}}{\sim} \prob{\mrm{D}} 
        := \frac{\prob{X|Y=\rmp} \prob{X'|Y=\rmn} + \prob{X|Y=\rmn} \prob{X'|Y=\rmp}}{2}.
    \end{aligned}
}
Under the assumption $\pi_\rmp \neq \pi_\rmn$,
\cite{sdu_19/Shimada/BSS/21} rewrote the classification risk as 
\eqarr{
    R(g)
    =
    \left( \pi_\rmp^2 + \pi_\rmn^2 \right)
    \expt{\mrm{S}}{
        \frac{\mathcal{L}_{+}(X) + \mathcal{L}_{+}(X')}{2}
    } + 
    2\pi_\rmp\pi_\rmn
    \expt{\mrm{D}}{
        \frac{\mathcal{L}_{-}(X) + \mathcal{L}_{-}(X')}{2}
    }, \label{eq:review_rewrite_SD}
}
where
\eqarr{
    \mathcal{L}_{+}(X) &=& \frac{\pi_\rmp}{\pi_\rmp -\pi_\rmn} \ell_{\rmp}(X) - \frac{\pi_\rmn}{\pi_\rmp - \pi_\rmn} \ell_{\rmn}(X), \nonumber \\
    \mathcal{L}_{-}(X) &=& - \frac{\pi_\rmn}{\pi_\rmp -\pi_\rmn} \ell_{\rmp}(X) + \frac{\pi_\rmp}{\pi_\rmp - \pi_\rmn} \ell_{\rmn}(X). \nonumber
}
Note that $\mathcal{L}_{+}(X)$ and $\mathcal{L}_{-}(X)$ have been defined in the DU and SU settings. 
We repeat them here for clarity.

\subsubsection{Pairwise Comparison (Pcomp) Learning}
\label{sec:review_Pcomp}
The observed data in Pcomp learning \citep{pcomp_20/Feng/SLS/21} is of the form
\eqarr{
    \left\{\left(x_i^{\rmpc}, x_i^{\rmpc'}\right)\right\}_{i=1}^{n_{\rmpc}} 
    \stackrel{\text{i.i.d.}}{\sim} \prob{\mrm{PC}} 
    := \frac{\pi_\rmp^2 \prob{X|Y=\rmp} \prob{X'|Y=\rmp} + \pi_\rmp\pi_\rmn \prob{X|Y=\rmp} \prob{X'|Y=\rmn} + \pi_\rmn^2 \prob{X|Y=\rmn} \prob{X'|Y=\rmn}}{\pi_\rmp^2 + \pi_\rmp\pi_\rmn + \pi_\rmn^2}. \nonumber \\ \label{eq:formulate_Pcomp}
}
The pairwise comparison encodes a meaning that each $x^{\rmpc}$ ``can not be more negative'' than $x^{\rmpc'}$ in the $(x^{\rmpc}, x^{\rmpc'})$ pair.
That is, the labels in $(x^{\rmpc}, x^{\rmpc'})$ are of the form $(\rmp, \rmp)$, $(\rmp, \rmn)$, or $(\rmn, \rmn)$.
\cite{pcomp_20/Feng/SLS/21} rewrote the classification risk as
\eqarr{
    R(g)
    =
    \expt{\mrm{Sup}}{\ell_\rmp - \pi_\rmp \ell_\rmn}
    +
    \expt{\mrm{Inf}}{-\pi_\rmn \ell_\rmp + \ell_\rmn}, \label{eq:review_rewrite_Pcomp}
}
where the expectations are computed over the following distributions
\eqarr{
    \prob{\mrm{Sup}} &:=& \int_{x'\in\mathcal{X}} \prob{\mrm{PC}} \, \dx', \nonumber \\
    \prob{\mrm{Inf}} &:=& \int_{x\in\mathcal{X}} \prob{\mrm{PC}} \, \dx. \nonumber
}

\subsubsection{Similarity-Confidence Learning (Sconf) Learning}
\label{sec:review_Sconf}
The observed data in Sconf learning \citep{sconf_21/Cao/FXANS/21} is of the form
\eqarr{
    \left\{x_i^{\rmsc}, x_i^{\rmsc'}, r\left(x_i^{\rmsc},x_i^{\rmsc'}\right)\right\}_{i=1}^{n}, \nonumber
}
where
\eqarr{
    \begin{aligned}
    \label{eq:formulate_Sconf}
        & x_i^{\rmsc}
        \stackrel{\text{i.i.d.}}{\sim} \prob{X} 
        := \pi_\rmp \; \prob{X|Y=\rmp} + \pi_\rmn \; \prob{X|Y=\rmn}, \\
        & x_i^{\rmsc'}
        \stackrel{\text{i.i.d.}}{\sim} \prob{X'} 
        := \pi_\rmp \; \prob{X'|Y=\rmp} + \pi_\rmn \; \prob{X'|Y=\rmn}, \\
        & r\left(x_i^{\rmsc},x_i^{\rmsc'}\right) := \prob{Y = y_i^{\rmsc} = Y' = y_i^{\rmsc'}|X = x_i^{\rmsc}, X' = x_i^{\rmsc'}}.
    \end{aligned}
}
\cite{sconf_21/Cao/FXANS/21} rewrote the classification risk as 
\eqarr{
    R(g)
    =
    \expt{X,X'}{
        \frac{r(X,X')-\pi_\rmn}{\pi_\rmp-\pi_\rmn} \mathcal{L}_{\rmp}(X, X') 
        +
        \frac{\pi_\rmp-r(X,X')}{\pi_\rmp-\pi_\rmn} \mathcal{L}_{\rmn}(X, X') 
    }, \label{eq:review_rewrite_Sconf}
}
where
\eqarr{
    \mathcal{L}_{\rmp}(X, X') := \frac{\ell_{\rmp}(X) + \ell_{\rmp}(X')}{2}, \nonumber \\
    \mathcal{L}_{\rmn}(X, X') := \frac{\ell_{\rmn}(X) + \ell_{\rmn}(X')}{2}. \nonumber
}

\subsubsection{Complementary-Label (CL) Learning}
\label{sec:review_CL}
One can also formulate weak supervision from multiclass classification.
For $K$ classes, we assign $\mathcal{Y} := [K]$.

The observed data in CL learning \citep{comp_18/Ishida/NMS/19} is of the form
\eqarr{
    \left\{(\corr{s}_i,x_i)\right\}_{i=1}^{n}
    \stackrel{\text{i.i.d.}}{\sim} \prob{\corr{S},X}
    := \frac{1}{K-1} \sum_{Y\neq \corr{S}} \prob{Y,X}. \label{eq:formulate_CL}
}
As is named ``complementary,'' $\corr{s} \in [K]$ represents that the true label $y$ of $x$ cannot be $\corr{s}$.
\cite{comp_18/Ishida/NMS/19} rewrote the classification risk as
\eqarr{
    R(g)
    =
    \expt{\corr{S}, X}
    {
        \sum_{y=1}^{K} \ell_{y} - (K-1)\ell_{\corr{S}}
    }. \label{eq:review_rewrite_CL}
}

\subsubsection{Multi-Complementary-Label (MCL) Learning}
\label{sec:review_MCL}
The observed data in MCL learning \citep{comp_20_MCL/Feng/KHNAS/20} is of the form
\eqarr{
    \left\{(\corr{s}_i,x_i)\right\}_{i=1}^{n}
    \stackrel{\text{i.i.d.}}{\sim} \prob{\corr{S}, X}
    :=
    \begin{cases}
        \sum_{d=1}^{K-1} \prob{|\corr{S}| = d} \cdot \frac{1}{{K-1 \choose |\corr{S}|}} \sum_{Y\notin \corr{S}} \prob{Y,X}, & \text{if}\ |\corr{S}|=d, \label{eq:formulate_MCL} \\
        0, & \text{otherwise.}
    \end{cases} 
}
Generalized from CL, $\corr{s} \subset [K]$ in MCL is a set of classes of size $d \in [K-1]$, representing multiple exclusions.
In other words, CL is the special case of MCL with $d=1$.
\cite{comp_20_MCL/Feng/KHNAS/20} rewrote the classification risk as
\eqarr{
    R(g)
    =
    \sum_{d=1}^{K-1} \prob{|\corr{S}|=d} \expt{\corr{S},X||\corr{S}|=d}{
        \sum_{y\notin\corr{S}} \ell_{y} - \frac{K-1-|\corr{S}|}{|\corr{S}|} \sum_{\corr{s} \in \corr{S}} \ell_{\corr{s}}
    }. \label{eq:review_rewrite_MCL}
}

\subsubsection{Provably Consistent Partial-Label (PCPL) Learning}
\label{sec:review_PCPL}
The observed data in PCPL learning \citep{partial_20_PCPL/Feng/LHXNGAS/20} is of the form
\eqarr{
    \left\{(s_i,x_i)\right\}_{i=1}^{n}
    \stackrel{\text{i.i.d.}}{\sim} \prob{S, X}
    := \frac{1}{2^{K-1}-1} \sum_{Y\in S} \prob{Y, X}. \label{eq:formulate_PCPL}
}
A partial-label $s \subset [K]$ is a set of classes containing the true label $y$ of $x$.
\cite{partial_20_PCPL/Feng/LHXNGAS/20} rewrote the classification risk as
\eqarr{
    R(g)
    =
    \frac{1}{2} \expt{S,X}{\sum_{y=1}^{K} \frac{\prob{Y=y|X} }{\sum_{a \in S} \prob{Y=a|X}} \ell_{y}}. \label{eq:review_rewrite_PCPL}
}

\subsubsection{Proper Partial-Label (PPL) Learning}
\label{sec:review_PPL}
The observed data in PPL learning \citep{partial_21_PPL/Wu/LS/23} is of the form
\eqarr{
    \left\{(s_i, x_i)\right\}_{i=1}^{n}
    \stackrel{\text{i.i.d.}}{\sim} \prob{S, X}
    := C(S,X) \sum_{Y\in S} \prob{Y, X}. \label{eq:formulate_PPL}
}
The weight $\frac{1}{2^{K-1}-1}$ in PCPL is generalized to $C(S,X)$, a function of the partial-label and the instance, allowing one to characterize the ``properness'' of a partial-label.
\cite{partial_21_PPL/Wu/LS/23} rewrote the classification risk as
\eqarr{
    R(g) 
    = 
    \expt{S,X}{\sum_{y\in S} \frac{\prob{Y=y|X}}{\sum_{a \in S} \prob{Y=a|X}} \ell_{y}}. \label{eq:review_rewrite_PPL}
}

\subsubsection{Single-Class Confidence (SC-Conf) Learning}
\label{sec:review_SC-Conf}
The observed data in SC-Conf learning \citep{scconf_21/Cao/FSXANS/21} is of the form
\eqarr{
    \left\{x_i, r_1(x_i), \ldots, r_K(x_i)\right\}_{i=1}^{n}, \nonumber
}
where
\eqarr{
    \begin{aligned}
    \label{eq:formulate_SC-conf}
        &x_i \stackrel{\text{i.i.d.}}{\sim} \prob{X|Y=y_\mrm{s}} \text{ with } y_\mrm{s} \in [K], \\
        &r_k(x_i) := \prob{Y=k|X=x_i} \text{ for each } k \in [K].
    \end{aligned}
}
The constraint of SC-Conf is that the examples are sampled from a specific class $y_\mrm{s}$.
The key to risk rewrite is the availability of confident information $r_k(x)$ about each class. 
\cite{scconf_21/Cao/FSXANS/21} rewrote the classification risk as 
\eqarr{
    R(g)
    =
    \pi_{y_\mrm{s}} \expt{X|Y=y_\mrm{s}}{\sum_{y=1}^{K} \frac{r_{y}(X)}{r_{y_\mrm{s}}(X)} \ell_{y}}. \label{eq:review_rewrite_SC-conf}
}

\subsubsection{Subset Confidence (Sub-Conf) Learning}
\label{sec:review_Sub-Conf}
The observed data in Sub-Conf learning \citep{scconf_21/Cao/FSXANS/21} is of the form
\eqarr{
    \left\{x_i, r_1(x_i), \ldots, r_K(x_i)\right\}_{i=1}^{n}, \nonumber
}
where
\eqarr{
    \begin{aligned}
    \label{eq:formulate_Sub-conf}
        &x_i \stackrel{\text{i.i.d.}}{\sim} \prob{X|Y \in \mathcal{Y}_\mrm{s}} \text{ with } \mathcal{Y}_\mrm{s} \subset [K], \\
        &r_k(x_i) := \prob{Y=k|X=x_i} \text{ for each } k \in [K].
    \end{aligned}
}
Sub-Conf is a relaxed setting of SC-Conf where the samples come from a set of classes $\mathcal{Y}_\mrm{s}$.
\cite{scconf_21/Cao/FSXANS/21} rewrote the classification risk as 
\eqarr{
    R(g)
    =
    \pi_{\mathcal{Y}_\mrm{s}} \expt{X|Y\in\mathcal{Y}_\mrm{s}}{\sum_{y=1}^{K} \frac{r_{y}(X)}{r_{\mathcal{Y}_\mrm{s}}(X)} \ell_{y}}, \label{eq:review_rewrite_Sub-conf}
}
where
$\pi_{\mathcal{Y}_\mrm{s}} := \sum_{j\in\mathcal{Y}_\mrm{s}} \pi_j$,
and 
$r_{\mathcal{Y}_\mrm{s}}(X) := \prob{Y\in\mathcal{Y}_\mrm{s}|X} = \sum_{j\in\mathcal{Y}_\mrm{s}}\prob{Y=j|X}$.

\subsubsection{Soft-Label Learning}
\label{sec:review_Soft}
\cite{soft_22/Ishida/YCNS/22} formulated soft-label learning under the binary setting, in which the observed data is of the form
\eqarr{
    \left\{x_i, r(x_i)\right\}_{i=1}^{n}, \nonumber
}
where
\eqarr{
    \begin{aligned}
    \label{eq:formulate_binary_Soft}
        &x_i \stackrel{\text{i.i.d.}}{\sim} \prob{X} := \prob{Y=\rmp,X} + \prob{Y=\rmn,X}, \\
        &r(x_i) := \prob{Y=\rmp|X=x_i}.
    \end{aligned}
}
It is straightforward to obtain a corresponding formulation under the multiclass setting: 
\eqarr{
    \left\{x_i, r_1(x_i), \ldots, r_K(x_i)\right\}_{i=1}^{n}, \nonumber
}
where
\eqarr{
    \begin{aligned}
    \label{eq:formulate_Soft}
        &x_i \stackrel{\text{i.i.d.}}{\sim} \prob{X} := \sum_{k=1}^{K} \prob{Y=k,X}, \\
        &r_k(x_i) := \prob{Y=k|X=x_i} \text{ for each } k \in [K].
    \end{aligned}
}
The difference between SC-Conf and multiclass soft-label (resp.\ the difference between Pconf and binary soft-label) is the sample distribution of $x$.
We rewrote the classification risk as
\eqarr{
    R(g)
    =
    \expt{X}{\sum_{y=1}^{K} r_y(X) \ell_{y}}. \label{eq:review_rewrite_Soft}
}

\subsubsection{Summary of Existing WSL Formulations and Risk Rewrites}
\label{sec:review_all_formulations}
We summarize the weakly supervised scenarios discussed and their risk rewrite results.
The formulations are divided into the binary classification settings in Table~\ref{tab:binary_WSL_formulations} and the multiclass classification settings in Table~\ref{tab:multiclass_WSL_formulations}.
We list the formulations in chronological order, according to their publication order.
Tables~\ref{tab:binary_WSL_rewrites} and \ref{tab:multiclass_WSL_rewrites} are the corresponding rewrites. 

\begin{table}[H]
\centering
\caption{Binary WSL formulations.}
\label{tab:binary_WSL_formulations}
{\renewcommand{\arraystretch}{1.2}
\begin{tabular}[t]{ |Sc|Sl| } 
    \hline
    WSL & Formulation 
    \\ \hline
    PU &  
        $\begin{aligned}
            \{x_i^{\rmp}\}_{i=1}^{n_{\rmp}} 
            \stackrel{\text{i.i.d.}}{\sim} \prob{\mrm{P}} 
            &:= \prob{X|Y=\rmp},
            \\
            \{x_j^{\rmu}\}_{j=1}^{n_{\rmu}} 
            \stackrel{\text{i.i.d.}}{\sim} \prob{\mrm{U}} 
            &:= \pi_\rmp \; \prob{X|Y=\rmp} + \pi_\rmn \; \prob{X|Y=\rmn}.
        \end{aligned}$
        \; (\ref{eq:formulate_PU})
    \\ \hline
    Pconf &
        $\begin{aligned}
            & \left\{x_i, r(x_i)\right\}_{i=1}^{n}, \text{ where}
            \\
            &\;\;\;\; x_i \stackrel{\text{i.i.d.}}{\sim} \prob{\mrm{P}} := \prob{X|Y=\rmp}, 
            \\
            &\;\;\;\; r(x_i) := \prob{Y=\rmp|X=x_i}.
        \end{aligned}$
        \; (\ref{eq:formulate_Pconf})
    \\ \hline
    UU &
        $\begin{aligned}
            \{x_i^{\rmu_1}\}_{i=1}^{n_{\rmu_1}} 
            \stackrel{\text{i.i.d.}}{\sim} \prob{\mrm{U}_1} 
            &:= (1-\mcdp) \; \prob{X|Y=\rmp} + \mcdp \; \prob{X|Y=\rmn},
            \\
            \{x_j^{\rmu_2}\}_{j=1}^{n_{\rmu_2}} 
            \stackrel{\text{i.i.d.}}{\sim} \prob{\mrm{U}_2} 
            &:= \mcdn \; \prob{X|Y=\rmp} + (1-\mcdn) \; \prob{X|Y=\rmn}.
        \end{aligned}$
        \; (\ref{eq:formulate_UU})
    \\ \hline
    SU &
        $\begin{aligned}
            \left\{\left(x_i^{\rms}, x_i^{\rms'}\right)\right\}_{i=1}^{n_{\rms}} 
            \stackrel{\text{i.i.d.}}{\sim} \prob{\mrm{S}} 
            &:= \frac{\pi_\rmp^2 \prob{X|Y=\rmp} \prob{X'|Y=\rmp} + \pi_\rmn^2 \prob{X|Y=\rmn} \prob{X'|Y=\rmn}}{\pi_\rmp^2+\pi_\rmn^2},
            \\
            \left\{x_j^{\rmu}\right\}_{j=1}^{n_{\rmu}} 
            \stackrel{\text{i.i.d.}}{\sim} \prob{\mrm{U}} 
            &:= \pi_\rmp \; \prob{X|Y=\rmp} + \pi_\rmn \; \prob{X|Y=\rmn}.
        \end{aligned}$
        \; (\ref{eq:formulate_SU})
    \\ \hline
    DU &
        $\begin{aligned}
            \left\{\left(x_i^{\rmd}, x_i^{\rmd'}\right)\right\}_{i=1}^{n_{\rmd}} 
            \stackrel{\text{i.i.d.}}{\sim} \prob{\mrm{D}} 
            &:= \frac{\prob{X|Y=\rmp} \prob{X'|Y=\rmn} + \prob{X|Y=\rmn} \prob{X'|Y=\rmp}}{2},
            \\
            \left\{x_j^{\rmu}\right\}_{j=1}^{n_{\rmu}} 
            \stackrel{\text{i.i.d.}}{\sim} \prob{\mrm{U}} 
            &:= \pi_\rmp \; \prob{X|Y=\rmp} + \pi_\rmn \; \prob{X|Y=\rmn}.
        \end{aligned}$
        \; (\ref{eq:formulate_DU})
    \\ \hline
    SD &
        $\begin{aligned}
            \left\{\left(x_i^{\rms}, x_i^{\rms'}\right)\right\}_{i=1}^{n_{\rms}} 
            \stackrel{\text{i.i.d.}}{\sim} \prob{\mrm{S}} 
            &:= \frac{\pi_\rmp^2 \prob{X|Y=\rmp} \prob{X'|Y=\rmp} + \pi_\rmn^2 \prob{X|Y=\rmn} \prob{X'|Y=\rmn}}{\pi_\rmp^2+\pi_\rmn^2},
            \\
            \left\{\left(x_i^{\rmd}, x_i^{\rmd'}\right)\right\}_{i=1}^{n_{\rmd}} 
            \stackrel{\text{i.i.d.}}{\sim} \prob{\mrm{D}} 
            &:= \frac{\prob{X|Y=\rmp} \prob{X'|Y=\rmn} + \prob{X|Y=\rmn} \prob{X'|Y=\rmp}}{2}.
        \end{aligned}$
        \; (\ref{eq:formulate_SD})
    \\ \hline
    Pcomp &
        $\begin{aligned}
            & \left\{\left(x_i^{\rmpc}, x_i^{\rmpc'}\right)\right\}_{i=1}^{n_{\rmpc}} 
            \stackrel{\text{i.i.d.}}{\sim} \prob{\mrm{PC}} 
            \\
            &\;\;\;\; := \frac{\pi_\rmp^2 \prob{X|Y=\rmp} \prob{X'|Y=\rmp} + \pi_\rmp\pi_\rmn \prob{X|Y=\rmp} \prob{X'|Y=\rmn} + \pi_\rmn^2 \prob{X|Y=\rmn} \prob{X'|Y=\rmn}}{\pi_\rmp^2 + \pi_\rmp\pi_\rmn + \pi_\rmn^2}.
        \end{aligned}$
        \; (\ref{eq:formulate_Pcomp})
    \\ \hline
    Sconf &
        $\begin{aligned}
            & \left\{x_i^{\rmsc}, x_i^{\rmsc'}, r\left(x_i^{\rmsc},x_i^{\rmsc'}\right) \right\}_{i=1}^{n_{\rmsc}}, \text{ where}
            \\
            &\;\;\;\; x_i^{\rmsc}
            \stackrel{\text{i.i.d.}}{\sim} \prob{X} 
            := \pi_\rmp \; \prob{X|Y=\rmp} + \pi_\rmn \; \prob{X|Y=\rmn},
            \\
            &\;\;\;\; x_i^{\rmsc'}
            \stackrel{\text{i.i.d.}}{\sim} \prob{X'} 
            := \pi_\rmp \; \prob{X'|Y=\rmp} + \pi_\rmn \; \prob{X'|Y=\rmn},
            \\
            &\;\;\;\; r\left(x_i^{\rmsc},x_i^{\rmsc'}\right) := \prob{Y = y_i^{\rmsc} = Y' = y_i^{\rmsc'}|X = x_i^{\rmsc}, X' = x_i^{\rmsc'}}.
        \end{aligned}$ 
        \; (\ref{eq:formulate_Sconf})
    \\ \hline
\end{tabular}
}   
\end{table}     

\begin{table}[H]
\centering
\caption{Multiclass WSL formulations.}
\label{tab:multiclass_WSL_formulations}
{\renewcommand{\arraystretch}{1.2}
\begin{tabular}[t]{ |Sc|Sl| } 
    \hline
    WSL & Formulation
    \\ \hline
    CL &
        $\begin{aligned}
            & \{(\corr{s}_i, x_i)\}_{i=1}^{n}
            \stackrel{\text{i.i.d.}}{\sim} \prob{\corr{S}, X}
            := \frac{1}{K-1} \sum_{Y\neq \corr{S}} \prob{Y, X}.
        \end{aligned}$
        (\ref{eq:formulate_CL})
    \\ \hline
    MCL &
        $\begin{aligned}
            & \{(\corr{s}_i, x_i)\}_{i=1}^{n}
            \stackrel{\text{i.i.d.}}{\sim} \prob{\corr{S}, X}
            := 
            \begin{cases}
                \sum_{d=1}^{K-1} \prob{|\corr{S}| = d} \cdot \frac{1}{{K-1 \choose |\corr{S}|}} \sum_{Y\notin \corr{S}} \prob{Y,X}, & \text{if}\ |\corr{S}|=d, \\
                0, & \text{otherwise.}
            \end{cases}
        \end{aligned}$ 
        (\ref{eq:formulate_MCL})
    \\ \hline
    PCPL &
        $\begin{aligned}
            & \{(s_i, x_i)\}_{i=1}^{n}
            \stackrel{\text{i.i.d.}}{\sim} \prob{S, X}
            := \frac{1}{2^{K-1}-1} \sum_{Y\in S} \prob{Y, X}.
        \end{aligned}$ 
        (\ref{eq:formulate_PCPL})
    \\ \hline
    PPL &
        $\begin{aligned}
            & \{(s_i, x_i)\}_{i=1}^{n}
            \stackrel{\text{i.i.d.}}{\sim} \prob{S, X}
            := C(S,X) \sum_{Y\in S} \prob{Y, X}.
        \end{aligned}$ 
        (\ref{eq:formulate_PPL})
    \\ \hline
    SC-Conf &
        $\begin{aligned}
            & \left\{x_i, r_1(x_i), \ldots, r_K(x_i)\right\}_{i=1}^{n}, \text{ where}
            \\
            &\;\;\;\; x_i \stackrel{\text{i.i.d.}}{\sim} \prob{X|Y=y_\mrm{s}} \text{ with } y_\mrm{s} \in [K],
            \\
            &\;\;\;\; r_k(x_i) := \prob{Y=k|X=x_i} \text{ for each } k \in [K].
        \end{aligned}$
        \; (\ref{eq:formulate_SC-conf})
    \\ \hline
    Sub-Conf &
        $\begin{aligned}
            & \left\{x_i, r_1(x_i), \ldots, r_K(x_i)\right\}_{i=1}^{n}, \text{ where}
            \\
            &\;\;\;\; x_i \stackrel{\text{i.i.d.}}{\sim} \prob{X|Y \in \mathcal{Y}_\mrm{s}} \text{ with } \mathcal{Y}_\mrm{s} \subset [K],
            \\
            &\;\;\;\; r_k(x_i) := \prob{Y=k|X=x_i} \text{ for each } k \in [K].
        \end{aligned}$
        \; (\ref{eq:formulate_Sub-conf})
    \\ \hline
    Soft-label &
        $\begin{aligned}
            & \left\{x_i, r_1(x_i), \ldots, r_K(x_i)\right\}_{i=1}^{n}, \text{ where}
            \\ 
            &\;\;\;\; x_i \stackrel{\text{i.i.d.}}{\sim} \prob{X},
            \\
            &\;\;\;\; r_k(x_i) := \prob{Y=k|X=x_i} \text{ for each } k \in [K].
        \end{aligned}$
        \; (\ref{eq:formulate_Soft})
    \\ \hline
\end{tabular}
}   
\end{table}     

\begin{table}[H]
\centering
\caption{\label{tab:binary_WSL_rewrites} Risk rewrites for binary WSLs.}
{\renewcommand{\arraystretch}{1.2}
\begin{tabular}[t]{ |Sc|Sl| } 
    \hline
    WSL & Risk rewrite for $R(g) = \expt{Y, X}{\ell_{Y}(g(X))}$ (\ref{eq:erm_b0}) 
    \\ \hline
    PU &
        $\begin{aligned}
            R(g) 
            = 
            \expt{\mrm{P}}{
                \pi_\rmp \ell_\rmp - \pi_\rmp \ell_\rmn
            } + \expt{\mrm{U}}{
                \ell_\rmn
            }.
        \end{aligned}$ 
        \; (\ref{eq:review_rewrite_PU})
    \\ \hline
    Pconf &
        $\begin{aligned}
            R(g)
            =
            \pi_\rmp
            \expt{\mrm{P}}{
                \ell_\rmp + \frac{1-r(X)}{r(X)} \ell_\rmn
            }.
        \end{aligned}$ 
        \; (\ref{eq:review_rewrite_Pconf})
    \\ \hline
    UU &
        $\begin{aligned}
            R(g)
            =
            \expt{\mrm{U}_1}{
                \frac{(1-\mcdn)\pi_\rmp}{1-\mcdp-\mcdn} \ell_\rmp + \frac{-\mcdn \pi_\rmn}{1-\mcdp-\mcdn} \ell_\rmn
            } 
            + 
            \expt{\mrm{U}_2}{
                \frac{-\mcdp\pi_\rmp}{1-\mcdp-\mcdn} \ell_\rmp + \frac{(1-\mcdp)\pi_\rmn}{1-\mcdp-\mcdn} \ell_\rmn
            }. 
        \end{aligned}$ 
        \; (\ref{eq:review_rewrite_UU})
    \\ \hline
    SU &
        $\begin{aligned}
            & R(g)
            =
            \left(\pi_\rmp^2+\pi_\rmn^2\right)\expt{\mrm{S}}{\frac{\mathcal{L}(X) + \mathcal{L}(X')}{2}}
            +
            \expt{\mrm{U}}{\mathcal{L}_{-}(X)}, \text{ where}
            \\
            &\;\;\;\; \mathcal{L}(X) := \frac{1}{\pi_\rmp - \pi_\rmn} \ell_{\rmp}(X) - \frac{1}{\pi_\rmp - \pi_\rmn} \ell_{\rmn}(X),
            \\
            &\;\;\;\; \mathcal{L}_{-}(X) := - \frac{\pi_\rmn}{\pi_\rmp -\pi_\rmn} \ell_{\rmp}(X) + \frac{\pi_\rmp}{\pi_\rmp - \pi_\rmn} \ell_{\rmn}(X).
        \end{aligned}$ 
        \; (\ref{eq:review_rewrite_SU})
    \\ \hline
    DU &
        $\begin{aligned}
            & R(g)
            =
            2\pi_\rmp\pi_\rmn
            \expt{\mrm{D}}{-\frac{\mathcal{L}(X) + \mathcal{L}(X')}{2}}
            +
            \expt{\mrm{U}}{\mathcal{L}_{+}(X)}, \text{ where}
            \\
            &\;\;\;\; \mathcal{L}(X) \text{ is defined in the SU setting, and} 
            \\
            &\;\;\;\; \mathcal{L}_{+}(X) := \frac{\pi_\rmp}{\pi_\rmp -\pi_\rmn} \ell_{\rmp}(X) - \frac{\pi_\rmn}{\pi_\rmp - \pi_\rmn} \ell_{\rmn}(X).
        \end{aligned}$
        \; (\ref{eq:review_rewrite_DU})
    \\ \hline
    SD &
        $\begin{aligned}
            & R(g)
            =
            \left( \pi_\rmp^2 + \pi_\rmn^2 \right)
            \expt{\mrm{S}}{
                \frac{\mathcal{L}_{+}(X) + \mathcal{L}_{+}(X')}{2}
            } + 
            2\pi_\rmp\pi_\rmn
            \expt{\mrm{D}}{
                \frac{\mathcal{L}_{-}(X) + \mathcal{L}_{-}(X')}{2}
            }, \mrm{where}
            \\
            &\;\;\;\; \mathcal{L}_{+}(X) \text{ and } \mathcal{L}_{-}(X') \text{ are defined in the SU and DU settings.}
        \end{aligned}$
        \; (\ref{eq:review_rewrite_SD})
    \\ \hline
    Pcomp &
        $\begin{aligned}
            & R(g)
            =
            \expt{\mrm{Sup}}{\ell_\rmp - \pi_\rmp \ell_\rmn}
            +
            \expt{\mrm{Inf}}{-\pi_\rmn \ell_\rmp + \ell_\rmn}, \text{ where}
            \\
            &\;\;\;\; \prob{\mrm{Sup}} := \int_{x'\in\mathcal{X}} \prob{\mrm{PC}} \, \dx', 
            \\
            &\;\;\;\; \prob{\mrm{Inf}} := \int_{x\in\mathcal{X}} \prob{\mrm{PC}} \, \dx.
        \end{aligned}$ 
        \; (\ref{eq:review_rewrite_Pcomp})
    \\ \hline
    Sconf &
        $\begin{aligned}
            R(g)
            =
            \expt{X,X'}{
                \frac{r(X,X')-\pi_\rmn}{\pi_\rmp-\pi_\rmn} \frac{\ell_\rmp(X) + \ell_\rmp(X')}{2} 
                +
                \frac{\pi_\rmp-r(X,X')}{\pi_\rmp-\pi_\rmn} \frac{\ell_\rmn(X) + \ell_\rmn(X')}{2} 
            }.
        \end{aligned}$ 
        \; (\ref{eq:review_rewrite_Sconf})
    \\ \hline
\end{tabular}
}   
\end{table}     

\begin{table}[H]
\centering
\caption{\label{tab:multiclass_WSL_rewrites} Risk rewrites for multiclass WSLs.}
{\renewcommand{\arraystretch}{1.2}
\begin{tabular}[t]{ |Sc|Sl| } 
    \hline
    WSL & Risk rewrite for $R(g) = \expt{Y, X}{\ell_{Y}(g(X))}$ (\ref{eq:erm_b0})
    \\ \hline
    CL &
        $\begin{aligned}
            R(g)
            =
            \expt{\corr{S}, X}
            {
                \sum_{y=1}^{K} \ell_{y} - (K-1)\ell_{\corr{S}}
            }.
        \end{aligned}$ 
        \; (\ref{eq:review_rewrite_CL})
    \\ \hline
    MCL &
        $\begin{aligned}
            R(g)
            =
            \sum_{d=1}^{K-1} \prob{|\corr{S}|=d} \; \expt{\corr{S},X||\corr{S}|=d}{
                \sum_{y\notin\corr{S}} \ell_{y} - \frac{K-1-|\corr{S}|}{|\corr{S}|} \sum_{\corr{s} \in \corr{S}} \ell_{\corr{s}}
            }.
        \end{aligned}$ 
        \; (\ref{eq:review_rewrite_MCL})
    \\ \hline
    PCPL &
        $\begin{aligned}
            R(g)
            =
            \frac{1}{2} \expt{S,X}{\sum_{y=1}^{K} \frac{\prob{Y=y|X} }{\sum_{a \in S} \prob{Y=a|X}} \ell_{y}}.
        \end{aligned}$ 
        \; (\ref{eq:review_rewrite_PCPL})
    \\ \hline
    PPL &
        $\begin{aligned}
            R(g) 
            = 
            \expt{S,X}{\sum_{y\in S} \frac{\prob{Y=y|X}}{\sum_{a \in S} \prob{Y=a|X}} \ell_{y}}.
        \end{aligned}$ 
        \; (\ref{eq:review_rewrite_PPL})
    \\ \hline
    SC-Conf &
        $\begin{aligned}
            R(g)
            =
            \pi_{y_\mrm{s}} \expt{X|Y=y_\mrm{s}}{\sum_{y=1}^{K} \frac{r_{y}(X)}{r_{y_\mrm{s}}(X)} \ell_{y}}.
        \end{aligned}$ 
        \; (\ref{eq:review_rewrite_SC-conf})
    \\ \hline
    Sub-Conf &
        $\begin{aligned}
            R(g)
            =
            \pi_{\mathcal{Y}_\mrm{s}} \expt{X|Y\in\mathcal{Y}_\mrm{s}}{\sum_{y=1}^{K} \frac{r_{y}(X)}{r_{\mathcal{Y}_\mrm{s}}(X)} \ell_{y}}.
        \end{aligned}$ 
        \; (\ref{eq:review_rewrite_Sub-conf})
    \\ \hline
    Soft-label &
        $\begin{aligned}
            R(g)
            =
            \expt{X}{\sum_{y=1}^{K} r_y(X) \ell_{y}}.
        \end{aligned}$ 
        \; (\ref{eq:review_rewrite_Soft})
    \\ \hline
\end{tabular}
}   
\end{table}     

\explain From the above tables, finding a way to reexpress the classification risk $R(g)$ (\ref{eq:erm_b0}) in terms of the data-generating distributions becomes the crux when applying ERM for most WSL studies.
The rewrites also replace loss functions $\ell_{Y}$ defining (\ref{eq:erm_b0}) with various modified losses (shown inside the expectations).
These modified loss functions are sometimes called corrected losses, which is why the approach is also called loss correction.
\transit
Proposing a generic methodology that finds properly corrected losses to achieve risk rewrite in different scenarios is a main topic we would like to elaborate on in this paper.

\subsubsection{Learning with Noisy Labels (LNL) Formulations}
\label{sec:review_MCD_CCN}
Next, we review two related formulations in LNL, the MCD and CCN settings, in Table~\ref{tab:LNL_formulations}.
The observed instances in MCD and CCN are still labeled by $\{\rmp, \rmn\}$ but are polluted by certain noise models.
We use $\corr{Y}$ to represent a polluted label, compared to an unpolluted $Y$.
In MCD, a small portion of the negatively labeled data $\mcdpp \prob{X|Y=\rmn}$ contaminates the positively labeled data $\prob{X|Y=\rmp}$. Likewise, a small portion of the positive data $\mcdnn \prob{X|Y=\rmp}$ contaminates the negatively labeled data $\prob{X|Y=\rmn}$ \citep{mcd_13_Scott/Scott/BH/13}.
In the CCN setting, a label $Y$ is flipped to become $\corr{Y}$ with probability $\prob{\corr{Y}|Y, X}$ \citep{ccn_13/Natarajan/DRT/13}. 
\transit
Although they are formulated for the study of noisy labels, their formulations share similar structures with many WSLs above.
\connect
In Section~\ref{sec:matrixFormulations}, we will use the similarities to categorize WSLs and provide a bird's eye view to reveal connections among WSLs.

\begin{table}[H]
\centering
\caption{\label{tab:LNL_formulations} MCD and CCN formulations.}
{\renewcommand{\arraystretch}{1.2}
\begin{tabular}[t]{ |Sc|Sl| } 
    \hline
    Scenario & Formulation \\
    \hline 
    MCD 
    & 
    $\begin{aligned}
        &\left\{ x_i^{\corr{\rmp}} \right\}_{i=1}^{n_{\corr{\rmp}}} \stackrel{\text{i.i.d.}}{\sim} 
        \prob{X|\corr{Y}=\rmp} := (1-\mcdpp) \; \prob{X|Y=\rmp} + \mcdpp \; \prob{X|Y=\rmn}. \\
        &\left\{ x_j^{\corr{\rmn}} \right\}_{j=1}^{n_{\corr{\rmn}}} \stackrel{\text{i.i.d.}}{\sim}
        \prob{X|\corr{Y}=\rmn} := \mcdnn \; \prob{X|Y=\rmp} + (1-\mcdnn) \; \prob{X|Y=\rmn}.
    \end{aligned}$
    \\
    \hline
    CCN 
    & 
    $\begin{aligned}
        \left\{(\corr{y}_i, x_i)\right\}_{i=1}^{n}
        \stackrel{\text{i.i.d.}}{\sim}
        \prob{\corr{Y}=\corr{y}_i, X} := \sum_{k \in \{\rmp,\rmn\}} \prob{\corr{Y}=\corr{y}_i|Y=k, X} \prob{Y=k, X}, \forall \corr{y}_i \in \{\rmp,\rmn\}.
    \end{aligned}$
    \\
    \hline
\end{tabular}
}   
\end{table}

\blockComment{
}   

\section{A Framework for Risk Rewrite}
\label{sec:recipe}
We illustrate the proposed framework in this section. 
Its job is to provide a unified treatment and understanding of WSL.
It consists of a formulation component and an analysis component. The analysis component suggests a generic methodology to solve the risk rewrite problem. Moreover, diving into the formulation component's logic, we can interpret multiple WSL formulations and the diverse risk rewrites from a single perspective.

\subsection{The Formulation Component of the Framework}
\label{sec:framework_formulation}
The construction of the formulation component is to study the connections among WSLs and provide a foundation for developing the generic methodology.
We draw inspiration from Section~\ref{sec:formulations_old}.
Each WSL formulation represents a type of weaken information of the joint distribution $\prob{Y,X}$ in supervised learning.
For instance, unlabeled data discards the label information \citep{uu_19_Lu, uu_21_Lu}, the complementary-label is a label that cannot be the ground truth \citep{comp_17/Ishida/NHS/17, comp_17_Tao/Yu/LGT/18}, and the similarity encodes a comparative relationship of two ground truth labels \citep{su_18/Bao/NS/18, sdu_19/Shimada/BSS/21, sconf_21/Cao/FXANS/21}.
Thus, we are motivated to search for a general way to link data-generating distributions with the joint distribution.

Denote the data-generating distributions in a vector form $\corr{P}$.
Suppose there are basic elements in defining $\corr{P}$ and relevant to the labeling distributions.
We express them in a vector form $B$ and call them the base distributions\footnote{We reserve $P$, $B$, and $\corr{P}$ for vectors of distributions and $L$ and $\corr{L}$ for vectors of loss functions. We address them as ``the distributions'' and ``the losses'' to avoid the verbose ``the vector of distributions/losses.''}.
To connect $\corr{P}$ and $B$, we assume a matrix $\Mcorr{corr}$ formalizes the connection:
\eqarr{
    \corr{P} = \Mcorr{corr} B. \label{eq:recipe1}
}
Taking PU learning (\ref{eq:formulate_PU}) for example, $\Mcorr{corr}$ aims to connect $\corr{P} = \icol{\prob{\mrm{P}} \\ \prob{\mrm{U}}}$ with $B = \icol{\prob{X|Y=\rmp} \\ \prob{X|Y=\rmn}}$.
To keep the framework as abstract as possible, we would like to defer the definitions of all other $\corr{P}$ and $B$ until we realize their corresponding $\Mcorr{corr}$ in Section~\ref{sec:matrixFormulations}.

The matrix formulation has two advantages.
First, it provides a unified way to characterize a wide range of WSL settings. 
By studying the entries of a matrix, we can easily link one WSL scenario to another to form reduction graphs of WSLs. As the first main topic of this work, Section~\ref{sec:matrixFormulations} shows, for a given WSL setting, how to find the corresponding matrix $\Mcorr{corr}$, and Tables \ref{tab:MCD_matrices_summary} -- \ref{tab:conf_matrices_summary} summarize fifteen WSL settings covered by our matrix formulation and depict a reduction graph rooted from $\Mcorr{corr}$. 
The following subsection illustrates the second advantage of aiding the construction of a generic methodology for conducting risk rewrite.

\blockComment{
}   

\subsection{The Analysis Component of the Framework}
\label{sec:framework_rewrite}
Formulation (\ref{eq:recipe1}) serves as a stepping stone toward constructing the corrected losses needed for a risk rewrite. 
Denote $P$ as the vector of risk-defining distributions whose $k$-th entry is $\prob{Y=k, x}$ and $L$ as the loss vector whose $k$-th entry is $\ell_{Y=k}(g(x))$. 
Then, the conventional expression of $R(g)$ in (\ref{eq:erm_b0}) can be simplified, by the inner product, to be $\int_{x\in \mathcal{X}} L^{\top} P \dx$.
It is immediate to achieve classification rewrite if one shows $L^{\top} P = \corr{L}^{\top} \corr{P}$, with $\corr{L}$ being the vector form of the corrected losses.

The bridge of connection between $P$ and $\corr{P}$ are the base distributions $B$ we assumed in the previous subsection.
We have shown its connection to $\corr{P}$ via $\Mcorr{corr}$.
Here, we connect $B$ with $P$ by assuming a transform matrix $\Mcorr{trsf}$ satisfies $B = \Mcorr{trsf}P$, which embodies its labeling-relevant nature.
Thus, (\ref{eq:recipe1}) becomes
\eqarr{
    \corr{P} = \Mcorr{corr}\Mcorr{trsf}P. \label{eq:recipe7}
}
The logic of having $\Mcorr{trsf}$ is that people can choose different base distributions to formulate observed data and define various performance measures, and $\Mcorr{trsf}$ provides a flexibility to transform between them.

The reason why connecting $P$ with $\corr{P}$ (\ref{eq:recipe7}) helps the construction of the corrected losses is that if we manage to find a way to compensate for the combined effect of $\Mcorr{corr}$ and $\Mcorr{trsf}$, we can implement the compensation mechanism on the ``corrected'' losses $\corr{L}$.
Specifically, suppose there exists a matrix $\Mcorr{corr}^{\dagger}$ satisfying 
\eqarr{
    P = \Mcorr{corr}^{\dagger} \corr{P}. \label{eq:recipe6}
} 
Then, the {corrected losses} defined by
\eqarr{
    \corr{L}^{\top} := L^{\top} \Mcorr{corr}^{\dagger} \label{eq:recipe5}
}
allows us to rephrase the classification risk as 
\eqarr{
    \int_{x\in\mathcal{X}} \corr{L}^{\top} \corr{P} \, \dx 
    &=& \int_{x\in\mathcal{X}} L^{\top} \Mcorr{corr}^{\dagger} \corr{P} \, \dx \nonumber \\
    &=& \int_{x\in\mathcal{X}} L^{\top} P \, \dx \, = R(g), \label{eq:recipe2a}
}
providing a rewrite for $R(g)$ with respect to $\corr{P}$.

The above procedure describes a generic methodology for the risk rewrite problem. As the second main topic, we instantiate the framework by presenting the corresponding matrices $\Mcorr{corr}^{\dagger}$ and $\Mcorr{trsf}$ for each learning scenario in Section~\ref{sec:riskRewrite} to demonstrate its applicability.

\blockComment{
}   

\subsection{Intuition of the Framework}
\label{sec:framework_intuition}
The logic behind the key equations 
\begin{gather*}
    \corr{P} 
        = \Mcorr{corr} B 
        = \Mcorr{corr} \Mcorr{trsf} P, \\
    \corr{L}^{\top} \corr{P}
        = \corr{L}^{\top} \Mcorr{corr}\Mcorr{trsf} P
        = L^{\top} \Mcorr{corr}^{\dagger} \Mcorr{corr}\Mcorr{trsf} P
        = L^{\top} P
\end{gather*}
is succinct and interpretive.
Firstly, from the formulation perspective, 
viewing matrix $\Mcorr{corr}$ as a {contamination matrix} that corrupts the base $B$ to become the contaminated $\corr{P}$, we interpret this \emph{contamination mechanism} as sacrificing certain information in exchange for certain saved costs or privacy, reflecting the essence underlying WSL formulations.
Moreover, $B$ plays a pivotal role in developing the methodology. 
On the one hand, $B$ is a crucial factor in formulating the generation process of the observed data.
On the other hand, its link to the risk-defining distribution connects $\corr{P}$ and $P$ to motivate the design of the corrected losses $\corr{L}$. 
It is this novel viewpoint of connecting the data distributions via the explicit two-step formulation that facilitates the unification work in this paper.

Secondly, regarding the methodological design, 
it becomes easier to devise a countermeasure when the connection between $\corr{P}$ and $P$ is in good shape. 
Therefore, the realizations of $\corr{L}^{\top} = L^{\top} \Mcorr{corr}^{\dagger}$ justify that the seemly different forms of corrected losses
reported in the literature (i.e., referred papers that contribute to Tables~\ref{tab:binary_WSL_rewrites} and \ref{tab:multiclass_WSL_rewrites}, and those referred to as recoveries in Section~\ref{sec:riskRewrite})
are, in fact, determined by $\Mcorr{corr}^{\dagger}$ and can be traced back to one common idea: Restoring the {risk-defining} distributions and the original loss functions are done by the \emph{decontamination} provided by $\corr{L}$.
In summary, the proposed framework is abstract and flexible enough that we use it in the current paper to formulate the contamination mechanisms and provide a generic methodology for a wide range of WSLs. 

\subsection{Building Blocks: The Inversion and the Marginal Chain Approaches}
\label{sec:recipe_building_blocks}
We describe two building blocks, the inversion method and the marginal chain method, that will be used to devise $\Mcorr{corr}^{\dagger}$ that satisfies (\ref{eq:recipe6}) in each scenario we study later.

\begin{theorem}[The inversion method] 
\label{thm:inv_method}
Let $P$ and $\corr{P}$ be vectors.
Suppose $\corr{P} = M P$ holds for an invertible matrix $M$.
Then, choosing $\Mcorr{corr}^{\dagger} = M^{-1}$, we have $P = \Mcorr{corr}^{\dagger} \corr{P}$.
\end{theorem}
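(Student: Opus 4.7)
The plan is to proceed by a direct one-line linear-algebra manipulation: left-multiply the hypothesis $\corr{P} = MP$ by the assumed inverse $M^{-1}$. Since $M$ is invertible by assumption, $M^{-1}$ exists and $M^{-1}M = I$, so the right-hand side collapses to $P$. Substituting the chosen notation $\Mcorr{corr}^{\dagger} := M^{-1}$ on the left yields exactly $P = \Mcorr{corr}^{\dagger}\corr{P}$, which is the desired identity matching the framework's equation (\ref{eq:recipe6}).

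More concretely, I would first state the standing assumption ($M$ invertible, $\corr{P} = MP$, $\Mcorr{corr}^{\dagger} = M^{-1}$), then write the single chain $\Mcorr{corr}^{\dagger}\corr{P} = M^{-1}(MP) = (M^{-1}M)P = IP = P$, and conclude. Because $P$ and $\corr{P}$ are vectors whose entries are functions (distributions), I would also note briefly that the matrix-vector product is interpreted entrywise as linear combinations of those functions, so the cancellation $M^{-1}M = I$ still applies pointwise in $x$; no measure-theoretic subtlety intervenes because $M$ is a constant matrix independent of $x$.

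There is essentially no obstacle: the statement is a reformulation of the defining property of a matrix inverse, and the only thing worth emphasizing is the role it plays in the broader framework rather than the mechanics of the proof. Consequently, the proposal is less about overcoming a technical difficulty and more about presenting the trivial computation cleanly so that the reader sees why the subsequent step $\corr{L}^{\top} := L^{\top}\Mcorr{corr}^{\dagger}$ from (\ref{eq:recipe5}) is well-posed whenever invertibility holds, thereby setting up the contrast with the marginal chain approach of Theorem~\ref{thm:marginal_chain}, which will be needed precisely in the settings where $M$ fails to be square or invertible.
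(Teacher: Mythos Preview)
Your proposal is correct and matches the paper's own proof essentially verbatim: the paper also writes the single chain $\Mcorr{corr}^{\dagger}\corr{P} = M^{-1}\corr{P} = M^{-1}MP = P$ and concludes. Your additional remarks about the entrywise interpretation and the contrast with Theorem~\ref{thm:marginal_chain} are fine commentary but go beyond what the paper includes in the proof itself.
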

\begin{proof}
For any invertible $M$, it is easy to see that, by assigning $\Mcorr{corr}^{\dagger} = M^{-1}$, one has 
\eqarr{
    \Mcorr{corr}^{\dagger} \corr{P}
    = 
    M^{-1} \corr{P}
    = 
    M^{-1} M P
    =
    P. \nonumber
}
\end{proof}
We remark that this simple strategy was adopted in many LNL works. 
A handful of related papers are \cite{partial_12_Cid-Suerio/Cid-Suerio/12}, \cite{mcd_14_Scott/Blanchard/S/14}, \cite{mcd_15/Menon/ROW/15}, \cite{15_Rooyen/Rooyen/W/15}, \cite{17_Patrini/Patrini/RMNQ/17}, \cite{18_Rooyen/Rooyen/W/17}, and \cite{mcd_19_Scott/Katz-Samuels/BS/19}. 
Hence, it can be applied to WSLs that are special cases of certain LNL scenarios.

\begin{theorem}[The marginal chain method] \label{thm:marginal_chain}
Let $Y=k \in [K]$ be a class label, where $[K]$ is the set of classes associated with the classification risk. 
Let $\mathcal{S} = \{s_1, s_2, \ldots, s_{|\mathcal{S}|}\}$ be the set of classes of the observed data and $S$ be the random variable of an observed label.
Denote
$$
P 
=
\mmatrix{
    \prob{Y=1, X} \\
    \vdots \\
    \prob{Y=K, X}
}
\text{ and }
\corr{P}
=
\mmatrix{
    \prob{S=s_{1}, X} \\
    \vdots \\
    \prob{S=s_{|\mathcal{S}|}, X}
}.
$$
Then, 
\eqarr{
    M = 
    \mmatrix{
        \prob{S=s_{1}|Y=1,X} & \prob{S=s_{1}|Y=2,X} & \cdots & \prob{S=s_{1}|Y=K,X} \\
        \prob{S=s_{2}|Y=1,X} & \prob{S=s_{2}|Y=2,X} & \cdots & \prob{S=s_{2}|Y=K,X} \\
        \vdots & \vdots & \ddots & \vdots \\
        \prob{S=s_{|\mathcal{S}|}|Y=1,X} & \prob{S=s_{|\mathcal{S}|}|Y=2,X} & \cdots & \prob{S=s_{|\mathcal{S}|}|Y=K,X}
    } \label{eq:M_MargianlChain}
} 
satisfies $\corr{P} = M P$, and
\eqarr{
    \Mcorr{corr}^{\dagger}
    =
    \mmatrix{
        \prob{Y=1|S=s_{1},X} & \prob{Y=1|S=s_{2},X} & \cdots & \prob{Y=1|S=s_{|\mathcal{S}|},X} \\
        \prob{Y=2|S=s_{1},X} & \prob{Y=2|S=s_{2},X} & \cdots & \prob{Y=2|S=s_{|\mathcal{S}|},X} \\
        \vdots & \vdots & \ddots & \vdots \\
        \prob{Y=K|S=s_{1},X} & \prob{Y=K|S=s_{2},X} & \cdots & \prob{Y=K|S=s_{|\mathcal{S}|},X}
    } \label{eq:MGeneral_inv} \label{eq:MC_matrix}
}
satisfies $P = \Mcorr{corr}^{\dagger} \corr{P}$.
\end{theorem}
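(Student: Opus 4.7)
The plan is to verify both matrix identities by unpacking one row at a time and applying the chain rule of conditional probability followed by marginalization. The statement is really two claims packaged together, and the underlying probabilistic content is identical in both directions, just with the roles of $Y$ and $S$ swapped.

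For the first claim $\corr{P} = MP$, I would fix an index $i \in \{1, \ldots, |\mathcal{S}|\}$ and compute the $i$-th entry of $MP$ as
\[
(MP)_i \;=\; \sum_{k=1}^{K} \prob{S=s_i \mid Y=k, X}\, \prob{Y=k, X}.
\]
Then I would apply the chain rule $\prob{S=s_i \mid Y=k, X}\,\prob{Y=k, X} = \prob{S=s_i, Y=k, X}$ termwise, and marginalize over $Y$ to collapse the sum to $\prob{S=s_i, X}$, which is exactly $(\corr{P})_i$. This requires only that the classes $[K]$ exhaust the support of $Y$, which is given by the problem setup.

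For the second claim $P = \Mcorr{corr}^{\dagger}\corr{P}$, I would carry out the symmetric calculation: the $k$-th entry of $\Mcorr{corr}^{\dagger}\corr{P}$ equals
\[
\sum_{i=1}^{|\mathcal{S}|} \prob{Y=k \mid S=s_i, X}\, \prob{S=s_i, X} \;=\; \sum_{i=1}^{|\mathcal{S}|} \prob{Y=k, S=s_i, X} \;=\; \prob{Y=k, X},
\]
where the last equality marginalizes $S$ over $\mathcal{S}$. This step implicitly uses that $\mathcal{S}$ covers the full support of the observed label $S$; I would state this as a standing hypothesis embedded in the definition of $\mathcal{S}$.

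The main subtlety, and the only place a reader might pause, is that $\Mcorr{corr}^{\dagger}$ is not derived as a formal inverse of $M$ in the sense of Theorem~\ref{thm:inv_method}; $M$ need not even be square here, since $|\mathcal{S}|$ can differ from $K$. So the proof should make explicit that the ``$\dagger$'' notation is justified \emph{directly} by the marginalization identity rather than by any algebraic inversion, which is precisely the novelty advertised in the introduction. Apart from that conceptual remark, the argument is two lines of elementary conditional-probability manipulation, so I do not anticipate any genuine obstacle.
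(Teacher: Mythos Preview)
Your proof is correct and follows essentially the same route as the paper: both claims are verified entrywise via the chain rule and marginalization. The only presentational difference is that for the second claim the paper substitutes $\corr{P}=MP$ first and writes $(\Mcorr{corr}^{\dagger}MP)_i$ as a double sum, so that two consecutive marginalizations (first over $Y$, then over $S$) appear explicitly---this is what motivates the name ``marginal chain''---whereas you use the definition of $\corr{P}$ directly and do a single marginalization over $S$; the mathematical content is identical.
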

\begin{proof}
It suffices to show $\left( MP \right)_{j} = \corr{P}_{j}$ for any $j \in [|\mathcal{S}|]$.
Taking the inner product of the $j$-th row of $M$ and $P$, we have
\eqarr{
    \sum_{k=1}^{K} \prob{S=s_{j}|Y=k, X} \prob{Y=k, X} 
    = \sum_{k=1}^{K} \prob{S=s_{j}, Y=k, X}
    = \prob{S=s_{j}, X} \nonumber
}
that verifies (\ref{eq:M_MargianlChain}).

Next, we prove $P = \Mcorr{corr}^{\dagger} \corr{P}$ by showing $\left( \Mcorr{corr}^{\dagger}\corr{P} \right)_i = P_i$: For each $i \in [K]$,
\eqarr{
    \left(\Mcorr{corr}^{\dagger} \corr{P} \right)_i
    =
    \left(\Mcorr{corr}^{\dagger} M P \right)_i 
    &=&
    \sum_{j=1}^{|\mathcal{S}|} \prob{Y=i|S=s_{j},X} \sum_{k=1}^{K} \prob{S=s_{j}|Y=k,X} \prob{Y=k, X} \nonumber \\
    &\stackrel{\text{(a)}}{=}&
    \sum_{j=1}^{|\mathcal{S}|} \prob{Y=i|S=s_{j},X} \prob{S=s_{j}, X} \nonumber \\
    &\stackrel{\text{(b)}}{=}&
    \prob{Y=i, X} 
    =
    P_i. \label{eq:recipe_CCN} 
}
\end{proof}

\explain 
Besides finding the inverse matrix, we propose a new approach called the \emph{marginal chain} to achieve (\ref{eq:recipe6}).
The development of this approach begins with the observation that $\prob{S=s_j, X}$ in $\corr{P} = M P$ is a distribution where $Y$ is marginalized out.
It inspires an idea that one could perform another marginalization to restore the original distribution $\prob{Y, X}$; specifically, by marginalizing out $S$.
The design of $\Mcorr{corr}^{\dagger}$ in (\ref{eq:MC_matrix}) aims to carry out the idea.
As shown by (a) and (b) in the proof, two consecutive marginalization steps on $Y$ and then $S$ give the name of the marginal chain.

\compare
Both the inversion and marginal chain methods have strengths and weaknesses.
The inversion method only requires $P$ as a real vector but needs the invertible assumption on the contamination matrix $M$.
In contrast, the marginal chain method exploits that $P$, in fact, is a distributional vector, allowing it to find a decontamination matrix $\Mcorr{corr}^{\dagger}$ even for a non-invertible $M$.
A restriction of the marginal chain method is that the construction of $\Mcorr{corr}^{\dagger}$ is regulated by probability equations.

\blockComment{
}   

\connect
We are ready to justify the proposed framework through the following two sections.
Section~\ref{sec:matrixFormulations} discusses weakly supervised scenarios that can be subsumed by the formulation component (\ref{eq:recipe1}). 
Section~\ref{sec:riskRewrite} verifies the analysis component by instantiating (\ref{eq:recipe5}) to conduct the risk rewrite for each scenario mentioned in Section~\ref{sec:matrixFormulations}. 
In both sections, we divide the scenarios into three categories. 
The first two are WSLs that can be viewed as special cases in either the prevalent MCD or CCN settings. 
The third category contains confidence-based scenarios. 
The notations listed in Table~\ref{tab:small_notation_table} will still be functional.
For all notations and their abbreviations required in the coming sections, please refer to Appendix~\ref{sec:notations}.


\section{Contamination as Weak Supervision}
\label{sec:matrixFormulations}
In this section, we instantiate the contamination matrix for each weakly supervised scenario listed in Table~\ref{tab:binary_WSL_formulations} and Table~\ref{tab:multiclass_WSL_formulations}. Tables~\ref{tab:MCD_matrices_summary}  -- \ref{tab:conf_matrices_summary} summarize the contamination matrices developed in this section. Each table also represents a reduction graph of WSL settings. These reduction graphs cluster WSL settings into three main categories, providing a hierarchy of relationships. With this hierarchy, we can understand, compare with, and relate to different settings or even grow the hierarchy by adding new branches. Next are the notations for reading the graphs. For two contamination mechanisms, U and V, we use $\Mcorr{U} \rightarrow \Mcorr{V}$ to denote ``$\Mcorr{U}$ is reduced to $\Mcorr{V}$'' or ``$\Mcorr{U}$ is realized as $\Mcorr{V}$'', and $\Mcorr{U} \leadsto \Mcorr{V}$ means ``$\Mcorr{U}$ is generalized to $\Mcorr{V}$''.

\begin{table}[H]
\centering
\caption{\label{tab:MCD_matrices_summary} Contamination matrices of MCD category in Section~\ref{sec:formulations_mcds}.}
{\renewcommand{\arraystretch}{1.2}
\begin{tabular}[t]{ScSlScSl} 
    \hline
    WSLs & Entry Parameter & Contamination Matrix & Reduction path \\
    \hline
    MCD & $\mcdpp$, $\mcdnn$ & $\Mcorr{MCD}$ (\ref{eq:MMCD_2}) 
    & $\Mcorr{corr} \rightarrow \Mcorr{MCD}$ \\ 
    UU & $\mcdp$, $\mcdn$ & $\Mcorr{UU}$ (\ref{eq:MUU}) & $\Mcorr{corr} \rightarrow \Mcorr{UU} \approx \Mcorr{MCD}$ \\
    PU & $\mcdp = 0$, $\mcdn = \pi_\rmp$ & $\Mcorr{PU}$ (\ref{eq:MPU}) & $\Mcorr{UU} \rightarrow \Mcorr{PU}$ \\
    SU & $\mcdp = \frac{\pi_\rmn^2}{\pi_\rmp^2+\pi_\rmn^2}$, $\mcdn = \pi_\rmp$ & $\Mcorr{SU}$ (\ref{eq:MSU}) & $\Mcorr{UU} \rightarrow \Mcorr{SU}$ \\
    Pcomp & $\mcdp = \frac{\pi_\rmn^2}{\pi_\rmp + \pi_\rmn^2}$, $\mcdn = \frac{\pi_\rmp^2}{\pi_\rmp^2 + \pi_\rmn}$ & $\Mcorr{Pcomp}$ (\ref{eq:MPcomp}) & $\Mcorr{UU} \rightarrow \Mcorr{Pcomp}$ \\
    DU & $\mcdp = 1/2$, $\mcdn = \pi_\rmp$ & $\Mcorr{DU}$ (\ref{eq:MDU}) & $\Mcorr{UU} \rightarrow \Mcorr{DU}$ \\
    SD & $\mcdp = \frac{\pi_\rmn^2}{\pi_\rmp^2+\pi_\rmn^2}$, $\mcdn = 1/2$ & $\Mcorr{SD}$ (\ref{eq:MSD}) & $\Mcorr{UU} \rightarrow \Mcorr{SD}$ \\
    Sconf & & $\Mcorr{Sconf}$ (\ref{eq:MSconf}) & $\Mcorr{corr} \rightarrow \Mcorr{Sconf}$ \\
    \hline
\end{tabular}
}
\end{table}

\begin{table}[H]
\centering
\caption{\label{tab:CCN_matrices_summary} Contamination matrices of CCN category in Section~\ref{sec:formulations_ccns}.}
{\renewcommand{\arraystretch}{1.2}
\begin{tabular}[t]{ScSlScSl} 
    \hline
    WSLs & Entry Parameter & Contamination Matrix & Reduction path \\
    \hline
    CCN & $\prob{\corr{Y}|Y,X}$ (\ref{eq:ccn_mechanism}) & $\Mcorr{CCN}$ (\ref{eq:MCCN_2}) & $\Mcorr{corr} \rightarrow \Mcorr{CCN}$ \\
    Generalized CCN & $\prob{S|Y,X}$ (\ref{eq:P_S|YX_general_2}) & $\Mcorr{gCCN}$ (\ref{eq:MGeneral_2}) & $\Mcorr{corr} \rightarrow \Mcorr{CCN} \leadsto \Mcorr{gCCN}$ \\
    PPL & $C(S,X)\ind{Y \in S}$ (\ref{eq:P_S|YX_PPL}) & $\Mcorr{PPL}$ (\ref{eq:MPPL}) & $\Mcorr{gCCN} \rightarrow \Mcorr{PPL}$ \\
    PCPL & $\frac{1}{2^{K-1}-1} \ind{Y\in S}$ & $\Mcorr{PCPL}$ (\ref{eq:MPCPL}) & \makecell[l]{$\Mcorr{gCCN} \rightarrow \Mcorr{PPL} \rightarrow \Mcorr{PCPL}$} \\
    MCL & $\frac{q_{|\corr{S}|}}{{K-1 \choose |\corr{S}|}} \ind{Y\notin \corr{S}}$ (\ref{eq:PPL_to_MCL2}) & $\Mcorr{MCL}$ (\ref{eq:MMCL_detail}) & $\Mcorr{gCCN} \rightarrow \Mcorr{PPL} \rightarrow \Mcorr{MCL}$ \\
    CL & $|S|=1$, $\frac{1}{K-1}\ind{Y\in S}$ & $\Mcorr{CL}$ (\ref{eq:MCL}) & \makecell[l]{$\Mcorr{gCCN} \rightarrow \Mcorr{PPL} \rightarrow \Mcorr{MCL}$ \\ $\rightarrow \Mcorr{CL}$} \\
    \hline
\end{tabular}
}
\end{table}

\begin{table}[H]
\centering
\caption{\label{tab:conf_matrices_summary} Contamination matrices of confidence-based category in Section~\ref{sec:formulations_confs}.}
{\renewcommand{\arraystretch}{1.2}
\begin{tabular}[t]{ScSlScSl} 
    \hline
    WSLs & Entry Parameter & Contamination Matrix & Reduction path \\
    \hline
    Sub-Conf & $\frac{\prob{Y\in\mathcal{Y}_\mrm{s}|X}}{\prob{Y=k|X}}$ & $\Mcorr{Sub}$ (\ref{eq:MSub}) & $\Mcorr{corr} \rightarrow \Mcorr{Sub}$ \\
    SC & $\mathcal{Y}_\mrm{s} = \{ y_\mrm{s} \}$ in $\Mcorr{Sub}$ & $\Mcorr{SC}$ (\ref{eq:MSC}) & $\Mcorr{Sub} \rightarrow \Mcorr{SC}$ \\
    Pconf & \makecell[l]{$K=2$, $y_\mrm{s} = \rmp$ in $\Mcorr{SC}$} & $\Mcorr{Pconf}$ (\ref{eq:MPconf}) & $\Mcorr{Sub} \rightarrow \Mcorr{SC} \rightarrow \Mcorr{Pconf}$ \\
    Soft & $\frac{1}{\prob{Y=k|X}}$ & $\Mcorr{Soft}$ (\ref{eq:MSoft}) & $\Mcorr{Sub} \rightarrow \Mcorr{Soft}$ \\
    \hline
\end{tabular}
}
\end{table}

\subsection{MCD Scenarios}
\label{sec:formulations_mcds}
As listed in Table~\ref{tab:LNL_formulations}, in binary classification, the MCD model \citep{mcd_15/Menon/ROW/15} corrupts the clean class-conditionals $\prob{X|Y=\rmp}$ and $\prob{X|Y=\rmn}$ via parameters $\mcdpp$ and $\mcdnn$ as follows:
\eqarr{
    \begin{aligned}
    \label{eq:mcd2}
        &\prob{X|\corr{Y}=\rmp} 
        := (1-\mcdpp) \; \prob{X|Y=\rmp} + \mcdpp \; \prob{X|Y=\rmn},
        \\
        &\prob{X|\corr{Y}=\rmn} 
        := \mcdnn \; \prob{X|Y=\rmp} + (1-\mcdnn) \; \prob{X|Y=\rmn},
    \end{aligned}
}
where $\mcdpp, \mcdnn \in [0,1]$ and $\mcdpp + \mcdnn < 1$.
Viewing the contamination targets $\prob{X|Y=\rmp}$ and $\prob{X|Y=\rmn}$ as the base distributions 
\eqarr{
    B := 
    \mmatrix{
        \prob{X|Y=\rmp} \\
        \prob{X|Y=\rmn}
    } \label{eq:base_MCD} \nonumber
}
and denoting the vector of data-generating distributions as
\eqarr{
    \corr{P} := 
    \mmatrix{
        \prob{X|\corr{Y}=\rmp} \\
        \prob{X|\corr{Y}=\rmn}
    }, \nonumber
}
we can express (\ref{eq:mcd2}) in the following matrix form
\eqarr{
    \mmatrix{
        \prob{X|\corr{Y}=\rmp} \\
        \prob{X|\corr{Y}=\rmn}
    } 
    = 
    \mmatrix{
        1-\mcdpp & \mcdpp \\
        \mcdnn & 1-\mcdnn
    }
    \mmatrix{
        \prob{X|Y=\rmp} \\
        \prob{X|Y=\rmn}
    }. \label{eq:MMCD}
} 
Comparing (\ref{eq:MMCD}) with $\corr{P} = \Mcorr{corr} B$ (\ref{eq:recipe1}), we find that the contamination matrix $\Mcorr{corr}$ is realized as 
\eqarr{
    \Mcorr{MCD} := 
    \mmatrix{
        1-\mcdpp & \mcdpp \\
        \mcdnn & 1-\mcdnn
    } \label{eq:MMCD_2}
}
in the MCD setting.

\subsubsection{Unlabeled-Unlabeled (UU) Learning \texorpdfstring{\citep{uu_18/Lu/NMS/19}}{Lg}}
\label{sec:GE1_UU}
Next, we show how to characterize UU learning by a contamination matrix. 
Naming
$$\pi_\rmp \; \prob{X|Y=\rmp} + \pi_\rmn \; \prob{X|Y=\rmn}$$
as $\prob{\mrm{U}}$
is feasible since $\pi_\rmp \; \prob{X|Y=\rmp} + \pi_\rmn \; \prob{X|Y=\rmn} = \prob{X}$ generates data that statistically equals to data sampled from $\prob{Y,X}$ with labels removed.
Viewing $\pi_\rmp$ as the mixture rate of samples from $\prob{X|Y=\rmp}$ and $\prob{X|Y=\rmn}$, $\prob{\mrm{U}}$ is parameterized by $\pi_\rmp$.
Therefore, we can interpret (\ref{eq:formulate_UU}) as formulating two unlabeled data distributions w.r.t.\ mixture rates $(1-\mcdp)$ and $\mcdn$, respectively:
\eqarr{
    \prob{\mrm{U}_1} 
    &=& (1-\mcdp) \; \prob{X|Y=\rmp} + \mcdp \; \prob{X|Y=\rmn}, \nonumber \\
    \prob{\mrm{U}_2} 
    &=& \mcdn \; \prob{X|Y=\rmp} + (1-\mcdn) \; \prob{X|Y=\rmn}. \nonumber
}

Taking the class-conditionals as the base distributions
\eqarr{
    B := 
    \mmatrix{
        \prob{X|Y=\rmp} \\
        \prob{X|Y=\rmn}
    } \label{eq:base_dist_MCD}
}
and converting (\ref{eq:formulate_UU}) to the matrix form, we express the data-generating distributions of UU learning 
\eqarr{
    \corr{P} := 
    \mmatrix{
        \prob{\mrm{U}_1} \\
        \prob{\mrm{U}_2}
    } \nonumber
}
as
\eqarr{
    \mmatrix{
        \prob{\mrm{U}_1} \\
        \prob{\mrm{U}_2}
    } 
    =
    \mmatrix{
        1-\mcdp & \mcdp \\
        \mcdn & 1-\mcdn
    }
    \mmatrix{
        \prob{X|Y=\rmp} \\
        \prob{X|Y=\rmn}
    }, \label{eq:verify_MUU}
}
and we arrive at the following lemma.
\begin{lemma}
\label{lma:formulate_UU}
    Given the base distributions B (\ref{eq:base_dist_MCD}) and the parameters $\mcdp, \mcdn \in [0,1]$, the contamination matrix
    \eqarr{
        \Mcorr{UU} := 
        \mmatrix{
            1-\mcdp & \mcdp \\
            \mcdn & 1-\mcdn
        } \label{eq:MUU}
    }
    characterizes the data-generating process of UU learning (\ref{eq:formulate_UU}).
\end{lemma}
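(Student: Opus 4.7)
The plan is a direct verification. The lemma asks us to exhibit a contamination matrix $M$ such that $\corr{P} = M B$ for the UU data-generating distributions, matching the template $\corr{P} = \Mcorr{corr} B$ of (\ref{eq:recipe1}). Since both $\corr{P}$ and $B$ are two-dimensional and the definitions in (\ref{eq:formulate_UU}) already express each $\prob{\mrm{U}_i}$ as an explicit convex combination of the two class-conditionals, the candidate $\Mcorr{UU}$ can be read off row by row from those definitions; this is exactly the rewriting already previewed in (\ref{eq:verify_MUU}).

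Concretely, first I would stack the two UU distributions into $\corr{P} = (\prob{\mrm{U}_1}, \prob{\mrm{U}_2})^{\top}$, take $B$ as in (\ref{eq:base_dist_MCD}), and fix the candidate $\Mcorr{UU}$ from (\ref{eq:MUU}). Second, I would compute $\Mcorr{UU} B$ entrywise: the first row yields $(1-\mcdp)\prob{X|Y=\rmp} + \mcdp\prob{X|Y=\rmn}$, which matches the definition of $\prob{\mrm{U}_1}$, and the second row yields $\mcdn\prob{X|Y=\rmp} + (1-\mcdn)\prob{X|Y=\rmn}$, which matches $\prob{\mrm{U}_2}$. Combining the two rows gives the matrix identity $\corr{P} = \Mcorr{UU} B$, establishing the claim.

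There is no real obstacle, since the lemma is a bookkeeping statement rather than a derivation; its content is to record that the two-line mixture system defining UU can be packaged as a single linear action on the base of class-conditionals, placing UU as an $\Mcorr{corr} \rightarrow \Mcorr{UU}$ node in the reduction graph of Table~\ref{tab:MCD_matrices_summary}. The only subtlety worth flagging in the write-up is that the hypothesis $\mcdp, \mcdn \in [0,1]$ ensures that $\prob{\mrm{U}_1}$ and $\prob{\mrm{U}_2}$ are genuine probability distributions, and correspondingly each row of $\Mcorr{UU}$ sums to one, which is the natural shape of a mixing matrix acting on class-conditionals. This mirror of the MCD form (\ref{eq:MMCD_2}) is also what justifies the ``$\Mcorr{UU} \approx \Mcorr{MCD}$'' entry in the reduction path, and primes the subsequent specializations to PU, SU, DU, SD, and Pcomp that reuse this same template with specific choices of $\mcdp, \mcdn$.
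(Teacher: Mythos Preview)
Your proposal is correct and matches the paper's approach exactly: the paper establishes the lemma by writing out the matrix identity (\ref{eq:verify_MUU}) just before stating it, which is precisely the entrywise verification you describe.
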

Comparing (\ref{eq:verify_MUU}) with the formulation framework $\corr{P} = \Mcorr{corr} B$ (\ref{eq:recipe1}), we see that in UU learning, $\Mcorr{corr}$ is realized as $\Mcorr{UU}$: 
\eqarr{
    \Mcorr{corr} 
    \rightarrow 
    \Mcorr{UU}. \nonumber 
}

Like MCD, we assume $\mcdp + \mcdn \neq 1$.
Our assumption is equivalent to that of MCD since the case of swapping $P_{\text{corr}}$ and $Q_{\text{corr}}$ in \cite{mcd_15/Menon/ROW/15} corresponds to $\mcdp + \mcdn > 1$ in our case. 
For details, refer to the discussion in Section 2.2 of \cite{mcd_15/Menon/ROW/15}.
The need for $\mcdp + \mcdn \neq 1$ can be explained by examining the entries in $\Mcorr{UU}$.
The constraint $\mcdp + \mcdn \neq 1$ guarantees distinct rows in $\Mcorr{UU}$, implying the observed data sets are sampled from two distinct distributions.
On the contrary, allowing $\mcdp + \mcdn = 1$ ends up observing one unlabeled data set (i.e., $\prob{\mrm{U}_1} = \prob{\mrm{U}_2}$) since $1 - \mcdp = \mcdn$.
\cite{uu_18/Lu/NMS/19} proved in Section 3 that it is impossible to conduct a risk rewrite if one only observes one unlabeled data set.

\compare
Assigning $\mcdp = \mcdpp$ and $\mcdn = \mcdnn$ implies that MCD and UU have essentially the same data-generating process from the contamination perspective, as (\ref{eq:MMCD}) and (\ref{eq:verify_MUU}) have the identical right-hand sides (i.e., the same contamination targets and the same contamination matrix).
However, they bear different meanings in respective research topics (i.e., distinct notions on the left-hand sides of the equations): In MCD, one still observes data with labels, nonetheless noisy, while in the UU setting, one observes two distinct unlabeled data sets. 
We use ``$\approx$'' to denote their relation in the UU row of Table~\ref{tab:MCD_matrices_summary}. 

\explain
Connecting UU learning with MCD, and later the generalized CCN with CCN in Section~\ref{sec:GE1_gCCN}, allows us to categorize WSLs from the LNL perspective into Sections \ref{sec:formulations_mcds} and \ref{sec:formulations_ccns}.
In the rest of this subsection, we collect WSLs whose base distributions are class-conditionals and show $\Mcorr{UU}$ instantiates their formulations via respective assignments of $\mcdp$ and $\mcdn$.

\subsubsection{Positive-Unlabeled (PU) Learning \texorpdfstring{\citep{pu_17/Kiryo/NPS/17}}{Lg}}
\label{sec:GE1_PU}
The following lemma describes the contamination matrix of PU learning. 
\begin{lemma}
\label{lma:formulate_PU}
    Given the base distributions $B$ (\ref{eq:base_dist_MCD}), the contamination matrix
    \eqarr{
        \Mcorr{PU} :=
        \mmatrix{
            1 & 0 \\
            \pi_\rmp & \pi_\rmn
        } \label{eq:MPU}
    }
    characterizes the data-generating distributions of PU learning (\ref{eq:formulate_PU}) denoted by
    \eqarr{
        \corr{P} :=
        \mmatrix{
            \prob{\mrm{P}} \\
            \prob{\mrm{U}}
        }. \nonumber
    }
\end{lemma}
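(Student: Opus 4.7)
The plan is to verify the claim by direct matrix multiplication, which also lets me exhibit PU learning as a special instance of UU learning obtained by a specific choice of the mixture parameters. This justifies the reduction arrow $\Mcorr{UU} \rightarrow \Mcorr{PU}$ recorded in Table~\ref{tab:MCD_matrices_summary}.

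First I would identify the parameters. In PU learning, the first observed distribution $\prob{\mrm{P}} = \prob{X|Y=\rmp}$ is the pure positive class-conditional, so it corresponds to zero contamination from the negative class; matching against the UU formulation of $\prob{\mrm{U}_1}$ in (\ref{eq:formulate_UU}) forces $\mcdp=0$. The second observed distribution $\prob{\mrm{U}} = \pi_\rmp \prob{X|Y=\rmp} + \pi_\rmn \prob{X|Y=\rmn}$ is the overall marginal $\prob{X}$, in which the weight on $\prob{X|Y=\rmp}$ equals the class prior $\pi_\rmp$; matching against $\prob{\mrm{U}_2} = \mcdn\,\prob{X|Y=\rmp} + (1-\mcdn)\,\prob{X|Y=\rmn}$ forces $\mcdn = \pi_\rmp$, hence $1-\mcdn = \pi_\rmn$.

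Second, I would substitute $\mcdp=0$ and $\mcdn=\pi_\rmp$ into the UU contamination matrix (\ref{eq:MUU}) to obtain
\[
\mmatrix{1-0 & 0 \\ \pi_\rmp & 1-\pi_\rmp} \;=\; \mmatrix{1 & 0 \\ \pi_\rmp & \pi_\rmn},
\]
which is exactly $\Mcorr{PU}$. Invoking Lemma~\ref{lma:formulate_UU} at this parameter choice then yields $\corr{P} = \Mcorr{PU}\, B$ as required. Alternatively, one can just multiply: $\Mcorr{PU} B$ produces entries $\prob{X|Y=\rmp}$ and $\pi_\rmp\,\prob{X|Y=\rmp} + \pi_\rmn\,\prob{X|Y=\rmn}$, which agree with $\prob{\mrm{P}}$ and $\prob{\mrm{U}}$ from (\ref{eq:formulate_PU}).

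There is no real obstacle; the statement is essentially a definitional bookkeeping check. The only point worth emphasizing is the interpretation step that pins down $\mcdn=\pi_\rmp$ (rather than leaving it as a free parameter), which encodes the modeling assumption that the unlabeled set in PU learning is drawn from the true data marginal with the natural class prior.
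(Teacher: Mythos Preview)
Your proposal is correct and covers the same content as the paper: the paper's proof is the direct matrix multiplication you sketch as your ``alternatively'' remark, and the reduction $\mcdp=0$, $\mcdn=\pi_\rmp$ from $\Mcorr{UU}$ that you lead with is stated immediately after the proof in the paper as the justification for the reduction path. The only difference is emphasis---you front-load the UU specialization while the paper front-loads the direct check---but both routes appear in each account.
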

\begin{proof}
By definitions,
\eqarr{
    \Mcorr{PU} B
    =
    \mmatrix{
        1 & 0 \\
        \pi_\rmp & \pi_\rmn
    }
    \mmatrix{
        \prob{X|Y=\rmp} \\
        \prob{X|Y=\rmn}
    }
    =
    \mmatrix{
        1 \cdot \prob{X|Y=\rmp} + 0 \cdot \prob{X|Y=\rmn} \\
        \pi_\rmp \cdot \prob{X|Y=\rmp} + \pi_\rmn \cdot \prob{X|Y=\rmn}
    }, \nonumber
}
where 
\eqarr{
    1 \cdot \prob{X|Y=\rmp} + 0 \cdot \prob{X|Y=\rmn} = \prob{X|Y=\rmp} = \prob{\mrm{P}}, \nonumber \\
    \pi_{\rmp} \cdot \prob{X|Y=\rmp} + \pi_{\rmn} \cdot \prob{X|Y=\rmn} = \prob{X} = \prob{\mrm{U}}, \nonumber
}
corresponding to the PU formulation (\ref{eq:formulate_PU}).
Therefore, $\Mcorr{PU}$ is the contamination matrix instantiating the formulation (\ref{eq:recipe1}) to be $\corr{P} = \Mcorr{PU} B$ for PU learning.
\end{proof}
Further, $\Mcorr{PU}$ can be obtained by assigning $\mcdp=0$ and $\mcdn=\pi_\rmp$ in $\Mcorr{UU}$ (\ref{eq:MUU}), and hence, we obtain the reduction path
\eqarr{
    \Mcorr{corr} 
    \rightarrow \Mcorr{UU}  
    \rightarrow \Mcorr{PU}. \nonumber 
}

\subsubsection{Similar-Unlabeled (SU) Learning \texorpdfstring{\citep{su_18/Bao/NS/18}}{Lg}}
\label{sec:GE1_SU}
Recall $\prob{\mrm{S}}$ (\ref{eq:formulate_SU}) is the distribution generating the pair of similar data $(x, x')$. 
We use $(x, x')$ instead of $(x^{\rms}, x^{\rms'})$ (\ref{eq:formulate_SU}) since we are focusing on the matrix formulation of SU and do not need to consider other WSLs in this sub-subsection. In the rest of the paper, for clarity, we will drop the superscripts when the content is explicit.
Let us put back the random variables and represent $\prob{\mrm{S}}$ as $\prob{\mrm{S}}^{(x,x')}$ for clarity. 
Denote 
$
    \prob{\tilde{\mrm{S}}}^{(x)}
    :=
    \int_{x'} \prob{\mrm{S}}^{(x,x')} \dx'
    =
    \frac{\pi_\rmp^2 \prob{x|Y=\rmp}+ \pi_\rmn^2 \prob{x|Y=\rmn}}{\pi_\rmp^2+\pi_\rmn^2}
$
as the marginal distribution of $\prob{\mrm{S}}^{(x,x')}$.
Since the equality $\int_{x'} \prob{\mrm{S}}^{(x,x')} \dx' = \int_{x} \prob{\mrm{S}}^{(x,x')} \dx$ implies $\prob{\tilde{\mrm{S}}}^{(x)} = \prob{\tilde{\mrm{S}}}^{(x')}$, we formulate 
$
\corr{P} =
\mmatrix{
    \prob{\tilde{\mrm{S}}} \\
    \prob{\mrm{U}}
}
$
instead of 
$
\mmatrix{
    \prob{\mrm{S}} \\
    \prob{\mrm{U}}
}.
$
\begin{lemma}
\label{lma:formulate_SU}
    Given the base distributions $B$ (\ref{eq:base_dist_MCD}),
    \eqarr{
        \Mcorr{SU} 
        :=
        \mmatrix{
            \frac{\pi_\rmp^2}{\pi_\rmp^2+\pi_\rmn^2} & \frac{\pi_\rmn^2}{\pi_\rmp^2+\pi_\rmn^2} \\
            \pi_\rmp & \pi_\rmn
        } \label{eq:MSU}
    }
    is the contamination matrix characterizing the data-generating distributions
    \eqarr{
        \corr{P} :=
        \mmatrix{
            \prob{\tilde{\mrm{S}}} \\
            \prob{\mrm{U}}
        }. \nonumber
    }
\end{lemma}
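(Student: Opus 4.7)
The plan is to mimic the verification pattern used in Lemma~\ref{lma:formulate_PU}: simply compute $\Mcorr{SU} B$ row by row and match each component against the declared entry of $\corr{P}$. Since the formulation framework $\corr{P} = \Mcorr{corr} B$ (\ref{eq:recipe1}) reduces to a direct algebraic identity in this case, there is no clever step — the work is to confirm that the two rows of $\Mcorr{SU}$ encode, respectively, the marginal similar distribution $\prob{\tilde{\mrm{S}}}$ and the unlabeled distribution $\prob{\mrm{U}}$.

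For the first row, I would expand
$$\left(\Mcorr{SU} B\right)_1 = \tfrac{\pi_\rmp^2}{\pi_\rmp^2+\pi_\rmn^2}\prob{X|Y=\rmp} + \tfrac{\pi_\rmn^2}{\pi_\rmp^2+\pi_\rmn^2}\prob{X|Y=\rmn}.$$
The key auxiliary computation is to show this coincides with $\prob{\tilde{\mrm{S}}}^{(x)}$ as defined in the preamble, which is done by marginalizing $\prob{\mrm{S}}^{(x,x')}$ from (\ref{eq:formulate_SU}) over $x'$ and using $\int_{x'} \prob{X'=x'|Y=\rmp}\,\dx' = \int_{x'} \prob{X'=x'|Y=\rmn}\,\dx' = 1$. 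For the second row,
$$\left(\Mcorr{SU} B\right)_2 = \pi_\rmp \prob{X|Y=\rmp} + \pi_\rmn \prob{X|Y=\rmn} = \prob{X} = \prob{\mrm{U}},$$
which is just the law of total probability and matches the definition of $\prob{\mrm{U}}$ in (\ref{eq:formulate_SU}).

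No step here is technically hard; the only point worth flagging is the explanation for why it is legitimate to formulate $\corr{P}$ with $\prob{\tilde{\mrm{S}}}$ (the marginal) rather than with the joint $\prob{\mrm{S}}$ on pairs $(x,x')$. The preamble already supplied that rationale via the symmetry $\prob{\tilde{\mrm{S}}}^{(x)} = \prob{\tilde{\mrm{S}}}^{(x')}$, so I would cite it briefly and then conclude by comparing with $\Mcorr{UU}$ (\ref{eq:MUU}) under the substitution $\mcdp = \tfrac{\pi_\rmn^2}{\pi_\rmp^2+\pi_\rmn^2}$ and $\mcdn = \pi_\rmp$, thereby justifying the reduction path $\Mcorr{UU} \rightarrow \Mcorr{SU}$ recorded in Table~\ref{tab:MCD_matrices_summary}.
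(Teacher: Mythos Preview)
Your proposal is correct and follows essentially the same approach as the paper: compute the marginal $\prob{\tilde{\mrm{S}}}^{(x)}$ by integrating $\prob{\mrm{S}}^{(x,x')}$ over $x'$ (using that the class-conditionals integrate to one), then assemble the matrix equality $\corr{P} = \Mcorr{SU} B$ row by row. The reduction-path remark and the symmetry justification you include are handled just outside the proof in the paper, but the core argument is identical.
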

\begin{proof}
Recall that in (\ref{eq:formulate_SU}),
\eqarr{
    \prob{\mrm{S}}^{(x,x')} 
    = \frac{\pi_\rmp^2 \prob{x|Y=\rmp} \prob{x'|Y=\rmp} + \pi_\rmn^2 \prob{x|Y=\rmn} \prob{x'|Y=\rmn}}{\pi_\rmp^2+\pi_\rmn^2}. \nonumber
}
Thus,
\eqarr{
    \prob{\tilde{\mrm{S}}}^{(x)}
    &=&
    \int_{x'} \prob{\mrm{S}}^{(x,x')} \; \dx'
    =
    \int_{x'} \frac{\pi_\rmp^2 \prob{x|Y=\rmp} \prob{x'|Y=\rmp} + \pi_\rmn^2 \prob{x|Y=\rmn} \prob{x'|Y=\rmn}}{\pi_\rmp^2+\pi_\rmn^2} \; \dx' \nonumber \\
    &=&
    \frac{\pi_\rmp^2}{\pi_\rmp^2+\pi_\rmn^2} \prob{x|Y=\rmp} + \frac{\pi_\rmn^2}{\pi_\rmp^2+\pi_\rmn^2} \prob{x|Y=\rmp} \label{eq:pointwise_similar}
}
Then, combining with $\prob{\mrm{U}}$ in (\ref{eq:formulate_SU}), the following equality
\eqarr{
    \mmatrix{
        \prob{\tilde{\mrm{S}}} \\
        \prob{\mrm{U}}
    } 
    =
    \mmatrix{
        \frac{\pi_\rmp^2}{\pi_\rmp^2+\pi_\rmn^2} & \frac{\pi_\rmn^2}{\pi_\rmp^2+\pi_\rmn^2} \\
        \pi_\rmp & \pi_\rmn
    }
    \mmatrix{
        \prob{X|Y=\rmp} \\
        \prob{X|Y=\rmn}
    } \nonumber
}
proves the lemma.
\end{proof}
Further, $\Mcorr{SU}$ can be obtained by assigning $\mcdp = \frac{\pi_\rmn^2}{\pi_\rmp^2+\pi_\rmn^2}$ and $\mcdn = \pi_\rmp$ in $\Mcorr{UU}$ (\ref{eq:MUU}), and hence, we obtain the reduction path
\eqarr{
    \Mcorr{corr} 
    \rightarrow \Mcorr{UU}  
    \rightarrow \Mcorr{SU}. \nonumber 
}


\subsubsection{Pairwise Comparison (Pcomp) Learning \texorpdfstring{\citep{pcomp_20/Feng/SLS/21}}{Lg}} 
\label{sec:Pcomp}
In SU learning, we formulate the pointwise data-generating distributions $\prob{\tilde{\mrm{S}}}$ and $\prob{\mrm{U}}$; likewise, the pointwise distributions 
\eqarr{
    \prob{\mrm{Sup}} &:=& \int_{x'\in\mathcal{X}} \prob{\mrm{PC}} \, \dx = \frac{\pi_\rmp \prob{X|Y=\rmp} + \pi_\rmn^2 \prob{X|Y=\rmn}}{\pi_\rmp + \pi_\rmn^2}, \nonumber \\
    \prob{\mrm{Inf}} &:=& \int_{x\in\mathcal{X}} \prob{\mrm{PC}} \, \dx = \frac{\pi_\rmp^2 \prob{X'|Y=\rmp} + \pi_\rmn \prob{X'|Y=\rmn}}{\pi_\rmp^2 + \pi_\rmn} \nonumber
}
that we use to formulate Pcomp learning are marginal distributions of $\prob{\mrm{PC}}$ (\ref{eq:formulate_Pcomp}).
\begin{lemma}
\label{lma:formulate_Pcomp}
    Given the base distributions $B$ (\ref{eq:base_dist_MCD}),
    \eqarr{
        \Mcorr{Pcomp} :=
        \mmatrix{
            \frac{\pi_\rmp}{\pi_\rmp + \pi_\rmn^2} & \frac{\pi_\rmn^2}{\pi_\rmp + \pi_\rmn^2} \\
            \frac{\pi_\rmp^2}{\pi_\rmp^2 + \pi_\rmn} & \frac{\pi_\rmn}{\pi_\rmp^2 + \pi_\rmn}
        } \label{eq:MPcomp}
    }
    is the contamination matrix characterizing the data-generating distributions
    \eqarr{
        \corr{P}
        :=
        \mmatrix{
            \prob{\mrm{Sup}} \\
            \prob{\mrm{Inf}}
        }. \nonumber
    }
\end{lemma}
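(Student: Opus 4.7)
The proof plan is to mirror the structure of the SU proof (Lemma~\ref{lma:formulate_SU}) since Pcomp is also a pairwise scenario where the observed data-generating distributions of interest are pointwise marginals of a joint distribution over pairs $(x, x')$. Specifically, I would derive closed-form expressions for $\prob{\mrm{Sup}}$ and $\prob{\mrm{Inf}}$ by marginalizing $\prob{\mrm{PC}}$ over $x'$ and $x$ respectively, then read off the coefficients of $\prob{X|Y=\rmp}$ and $\prob{X|Y=\rmn}$ and verify that they match the entries of $\Mcorr{Pcomp}$.

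First I would compute $\prob{\mrm{Sup}}(x) = \int_{x'} \prob{\mrm{PC}}^{(x,x')} \dx'$. Since $\int \prob{X'|Y=\rmp} \dx' = \int \prob{X'|Y=\rmn} \dx' = 1$, the three terms in the numerator of $\prob{\mrm{PC}}$ collapse to $\pi_\rmp^2 \prob{X|Y=\rmp} + \pi_\rmp\pi_\rmn \prob{X|Y=\rmp} + \pi_\rmn^2 \prob{X|Y=\rmn}$. Using $\pi_\rmp + \pi_\rmn = 1$, the $\prob{X|Y=\rmp}$ coefficient simplifies to $\pi_\rmp(\pi_\rmp+\pi_\rmn) = \pi_\rmp$, and the denominator simplifies analogously to $\pi_\rmp+\pi_\rmn^2$. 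Symmetrically, $\prob{\mrm{Inf}}(x) = \int_{x} \prob{\mrm{PC}}^{(x,x')} \dx$ simplifies to $\bigl(\pi_\rmp^2 \prob{X'|Y=\rmp} + \pi_\rmn \prob{X'|Y=\rmn}\bigr) / (\pi_\rmp^2 + \pi_\rmn)$. These are exactly the expressions already announced for $\prob{\mrm{Sup}}$ and $\prob{\mrm{Inf}}$ in Section~\ref{sec:Pcomp}.

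Then it remains to verify $\corr{P} = \Mcorr{Pcomp} B$ entry-by-entry: the first row of $\Mcorr{Pcomp} B$ reproduces $\prob{\mrm{Sup}}$ and the second row reproduces $\prob{\mrm{Inf}}$ by direct inner product. This matches the formulation template (\ref{eq:recipe1}) $\corr{P} = \Mcorr{corr} B$ with $\Mcorr{corr}$ instantiated as $\Mcorr{Pcomp}$, yielding the reduction $\Mcorr{corr} \rightarrow \Mcorr{Pcomp}$.

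There is no conceptual obstacle; the only care needed is algebraic bookkeeping in collapsing the cross term $\pi_\rmp\pi_\rmn\prob{X|Y=\rmp}\prob{X'|Y=\rmn}$ correctly --- under marginalization over $x'$ it joins the $\prob{X|Y=\rmp}$ branch, while under marginalization over $x$ it joins the $\prob{X'|Y=\rmn}$ branch, which accounts for the asymmetry between $\Mcorr{Pcomp}$'s two rows. As a consistency check, one can also compare $\Mcorr{Pcomp}$ with $\Mcorr{UU}$ (\ref{eq:MUU}) via $\mcdp = \pi_\rmn^2/(\pi_\rmp+\pi_\rmn^2)$ and $\mcdn = \pi_\rmp^2/(\pi_\rmp^2+\pi_\rmn)$, confirming the reduction path $\Mcorr{corr} \rightarrow \Mcorr{UU} \rightarrow \Mcorr{Pcomp}$ listed in Table~\ref{tab:MCD_matrices_summary}.
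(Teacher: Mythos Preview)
Your proposal is correct and follows essentially the same approach as the paper: the paper states the closed forms for $\prob{\mrm{Sup}}$ and $\prob{\mrm{Inf}}$ just before the lemma (obtained by the marginalization you describe) and then its proof consists solely of verifying the matrix equality $\corr{P} = \Mcorr{Pcomp} B$. Your plan simply spells out the marginalization algebra and the $\Mcorr{UU}$ reduction more explicitly than the paper does.
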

\begin{proof}
The equality below
\eqarr{
    \mmatrix{
        \prob{\mrm{Sup}} \\
        \prob{\mrm{Inf}}
    }
    =
    \mmatrix{
        \frac{\pi_\rmp}{\pi_\rmp + \pi_\rmn^2} & \frac{\pi_\rmn^2}{\pi_\rmp + \pi_\rmn^2} \\
        \frac{\pi_\rmp^2}{\pi_\rmp^2 + \pi_\rmn} & \frac{\pi_\rmn}{\pi_\rmp^2 + \pi_\rmn}
    }
    \mmatrix{
        \prob{X|Y=\rmp} \\
        \prob{X|Y=\rmn}
    } \nonumber
}
proves the lemma.
\end{proof}
Further, $\Mcorr{Pcomp}$ can be obtained by assigning $\mcdp = \frac{\pi_\rmn^2}{\pi_\rmp + \pi_\rmn^2}$ and $\mcdn = \frac{\pi_\rmp^2}{\pi_\rmp^2 + \pi_\rmn}$ in $\Mcorr{UU}$ (\ref{eq:MUU}), and hence, we obtain the reduction path
\eqarr{
    \Mcorr{corr} 
    \rightarrow \Mcorr{UU}  
    \rightarrow \Mcorr{Pcomp}. \nonumber 
}

\subsubsection{Similar-dissimilar-unlabeled (SDU) Learning \texorpdfstring{\citep{sdu_19/Shimada/BSS/21}}{Lg}}
\label{sec:GE1_SDU}
Dissimilar-unlabeled (DU) learning and similar-dissimilar (SD) learning are two critical components of SDU learning. 
Hence, we present the matrix formulations of $\Mcorr{DU}$ and $\Mcorr{SD}$.
We have explained the reason of formulating $\prob{\tilde{\mrm{S}}}^{(x)}$ in Section~\ref{sec:GE1_SU}. 
Similarly, given the pairwise dissimilar distribution $\prob{\mrm{D}}$ (\ref{eq:formulate_DU}), we have $\int_{x} \prob{\mrm{D}}^{(x,x')} \dx = \int_{x'} \prob{\mrm{D}}^{(x,x')} \dx'$ implying $\prob{\tilde{\mrm{D}}}^{(x')} = \prob{\tilde{\mrm{D}}}^{(x)}$, where $\prob{\tilde{\mrm{D}}}^{(x)} := \int_{x'} \prob{\mrm{D}}^{(x,x')} \dx'$ and $\prob{\tilde{\mrm{D}}}^{(x')} := \int_{x} \prob{\mrm{D}}^{(x,x')} \dx$.
Therefore, we also formulate the pointwise distribution $\prob{\tilde{\mrm{D}}}$ in DU and SD learning.

We formulate the contamination matrix of DU learning via the following lemma.
\begin{lemma}
\label{lma:formulate_DU}
    Given the base distributions $B$ (\ref{eq:base_dist_MCD}),
    \eqarr{
        \Mcorr{DU} =
        \mmatrix{
            1/2 & 1/2 \\
            \pi_\rmp & \pi_\rmn
        } \label{eq:MDU}
    }
    is the contamination matrix characterizing the data-generating distributions
    \eqarr{
        \corr{P} =
        \mmatrix{
            \prob{\tilde{\mrm{D}}} \\
            \prob{\mrm{U}}
        }. \nonumber
    }
\end{lemma}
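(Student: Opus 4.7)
The plan is to mimic the proof pattern of Lemma~\ref{lma:formulate_SU} (and Lemma~\ref{lma:formulate_Pcomp}): verify the matrix equation $\Mcorr{DU}\, B = \corr{P}$ entry by entry by computing $\prob{\tilde{\mrm{D}}}$ explicitly from the pairwise dissimilar distribution, and then assembling it with the already-known expression for $\prob{\mrm{U}}$.

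First I would compute the marginalization $\prob{\tilde{\mrm{D}}}^{(x)} := \int_{x'} \prob{\mrm{D}}^{(x,x')} \dx'$. Starting from the definition (\ref{eq:formulate_DU}) of $\prob{\mrm{D}}$, I integrate over $x'$, pulling $\prob{X|Y=\rmp}$ and $\prob{X|Y=\rmn}$ outside the integral and using $\int_{x'} \prob{X'|Y=\rmp}\dx' = \int_{x'} \prob{X'|Y=\rmn}\dx' = 1$. This collapses the two cross-product terms to yield $\prob{\tilde{\mrm{D}}}^{(x)} = \tfrac{1}{2}\prob{X|Y=\rmp} + \tfrac{1}{2}\prob{X|Y=\rmn}$. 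By the symmetry remark already in the excerpt, this equals $\prob{\tilde{\mrm{D}}}^{(x')}$, so the single notation $\prob{\tilde{\mrm{D}}}$ is unambiguous.

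Next I would combine this with the definition of $\prob{\mrm{U}} = \pi_{\rmp}\prob{X|Y=\rmp} + \pi_{\rmn}\prob{X|Y=\rmn}$ as given in (\ref{eq:formulate_DU}), and write out the matrix-vector product
\[
\mmatrix{1/2 & 1/2 \\ \pi_{\rmp} & \pi_{\rmn}} \mmatrix{\prob{X|Y=\rmp} \\ \prob{X|Y=\rmn}} = \mmatrix{\tfrac{1}{2}\prob{X|Y=\rmp} + \tfrac{1}{2}\prob{X|Y=\rmn} \\ \pi_{\rmp}\prob{X|Y=\rmp} + \pi_{\rmn}\prob{X|Y=\rmn}},
\]
whose top and bottom entries match $\prob{\tilde{\mrm{D}}}$ and $\prob{\mrm{U}}$ respectively, completing the verification of $\corr{P} = \Mcorr{DU}\, B$ in the form (\ref{eq:recipe1}).

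Finally, to fit this lemma into the reduction graph of Table~\ref{tab:MCD_matrices_summary}, I would note that setting $\mcdp = 1/2$ and $\mcdn = \pi_{\rmp}$ in $\Mcorr{UU}$ (\ref{eq:MUU}) immediately recovers $\Mcorr{DU}$, giving the reduction path $\Mcorr{corr} \rightarrow \Mcorr{UU} \rightarrow \Mcorr{DU}$. There is no real obstacle here: the whole argument is a single marginalization plus a direct pattern match with $\Mcorr{UU}$, with the only care required being in tracking the symmetry that makes $\prob{\tilde{\mrm{D}}}$ well defined as a pointwise (rather than pairwise) distribution.
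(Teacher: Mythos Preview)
Your proposal is correct and essentially identical to the paper's own proof: the paper also computes $\prob{\tilde{\mrm{D}}}^{(x)} = \int_{x'} \prob{\mrm{D}}^{(x,x')}\,\dx' = \tfrac{1}{2}\prob{x|Y=\rmp} + \tfrac{1}{2}\prob{x|Y=\rmn}$, then pairs it with $\prob{\mrm{U}}$ to verify the matrix equality, and afterward records the reduction $\Mcorr{UU}\rightarrow\Mcorr{DU}$ via $\mcdp=1/2$, $\mcdn=\pi_\rmp$.
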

\begin{proof}
    Recall that in (\ref{eq:formulate_DU}), 
    \eqarr{
        \prob{\mrm{D}}^{(x,x')} 
        = \frac{\prob{x|Y=\rmp} \prob{x'|Y=\rmn} + \prob{x|Y=\rmn} \prob{x'|Y=\rmp}}{2}. \nonumber
    }
    Thus, 
    \eqarr{
        \prob{\tilde{\mrm{D}}}^{(x)}
        &=&
        \int_{x'} \prob{\mrm{D}}^{(x,x')} \; \dx'
        =
        \int_{x'} \frac{\prob{x|Y=\rmp} \prob{x'|Y=\rmn} + \prob{x|Y=\rmn} \prob{x'|Y=\rmp}}{2} \; \dx' \nonumber \\
        &=&
        \frac{1}{2}\prob{x|Y=\rmp} + \frac{1}{2}\prob{x|Y=\rmn}. \label{eq:pointwise_dissimilar}
    }
    Then, combining with $\prob{\mrm{U}}$ in (\ref{eq:formulate_SU}), the following equality
    \eqarr{
        \mmatrix{
            \prob{\tilde{\mrm{D}}} \\
            \prob{\mrm{U}}
        }
        =
        \mmatrix{
            1/2 & 1/2 \\
            \pi_\rmp & \pi_\rmn
        }
        \mmatrix{
            \prob{X|Y=\rmp} \\
            \prob{X|Y=\rmn}
        } \nonumber
    }
    proves the lemma.
\end{proof}
Furthermore, since $\Mcorr{UU}$ (\ref{eq:MUU}) reduces to $\Mcorr{DU}$ by assigning $\mcdp = 1/2$ and $\mcdn = \pi_\rmp$, we have the reduction path
\eqarr{
    \Mcorr{corr} \rightarrow \Mcorr{UU} \rightarrow \Mcorr{DU}. \nonumber 
}

The next lemma formulates the contamination matrix of SD learning.
\begin{lemma}
\label{lma:formulate_SD}
    Given the base distributions $B$ (\ref{eq:base_dist_MCD}),
    \eqarr{
        \Mcorr{SD} =
        \mmatrix{
            \frac{\pi_\rmp^2}{\pi_\rmp^2+\pi_\rmn^2} & \frac{\pi_\rmn^2}{\pi_\rmp^2+\pi_\rmn^2} \\
            1/2 & 1/2
        } \label{eq:MSD}
    }
    is the contamination matrix characterizing the data-generating distributions
    \eqarr{
        \corr{P} =
        \mmatrix{
            \prob{\tilde{\mrm{S}}} \\
            \prob{\tilde{\mrm{D}}}
        }. \nonumber
    }
\end{lemma}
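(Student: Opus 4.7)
The plan is to observe that the SD setting simply juxtaposes the similar-pair distribution $\prob{\mrm{S}}$ from SU learning and the dissimilar-pair distribution $\prob{\mrm{D}}$ from DU learning, and that the corresponding pointwise marginals $\prob{\tilde{\mrm{S}}}$ and $\prob{\tilde{\mrm{D}}}$ have already been computed inside the proofs of Lemmas \ref{lma:formulate_SU} and \ref{lma:formulate_DU}. So the proof should be purely a matter of stacking two existing identities in matrix form, together with a short reduction remark.

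First, I would cite equation (\ref{eq:pointwise_similar}) from Lemma \ref{lma:formulate_SU} to record
\[
\prob{\tilde{\mrm{S}}}^{(x)}
= \frac{\pi_\rmp^2}{\pi_\rmp^2+\pi_\rmn^2} \prob{x|Y=\rmp} + \frac{\pi_\rmn^2}{\pi_\rmp^2+\pi_\rmn^2} \prob{x|Y=\rmn},
\]
which constitutes the first row of the claimed product $\Mcorr{SD} B$. Second, I would cite equation (\ref{eq:pointwise_dissimilar}) from Lemma \ref{lma:formulate_DU} to record
\[
\prob{\tilde{\mrm{D}}}^{(x)}
= \tfrac{1}{2}\prob{x|Y=\rmp} + \tfrac{1}{2}\prob{x|Y=\rmn},
\]
which gives the second row. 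Writing the two equalities jointly as
\[
\mmatrix{\prob{\tilde{\mrm{S}}} \\ \prob{\tilde{\mrm{D}}}}
=
\mmatrix{\frac{\pi_\rmp^2}{\pi_\rmp^2+\pi_\rmn^2} & \frac{\pi_\rmn^2}{\pi_\rmp^2+\pi_\rmn^2} \\ 1/2 & 1/2}
\mmatrix{\prob{X|Y=\rmp} \\ \prob{X|Y=\rmn}}
\]
verifies $\corr{P} = \Mcorr{SD} B$ and proves the lemma.

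To complete the entry in Table \ref{tab:MCD_matrices_summary}, I would finally remark that $\Mcorr{SD}$ is recovered from $\Mcorr{UU}$ in (\ref{eq:MUU}) by the parameter assignment $\mcdp = \frac{\pi_\rmn^2}{\pi_\rmp^2+\pi_\rmn^2}$ and $\mcdn = 1/2$, yielding the reduction path $\Mcorr{corr} \rightarrow \Mcorr{UU} \rightarrow \Mcorr{SD}$. There is essentially no obstacle in this proof; the only thing worth flagging is the same implicit justification used in the SU and DU lemmas, namely that $\prob{\tilde{\mrm{S}}}^{(x)} = \prob{\tilde{\mrm{S}}}^{(x')}$ and $\prob{\tilde{\mrm{D}}}^{(x)} = \prob{\tilde{\mrm{D}}}^{(x')}$ (both integrands are symmetric in $x,x'$), so that it is unambiguous to treat $\prob{\tilde{\mrm{S}}}$ and $\prob{\tilde{\mrm{D}}}$ as single pointwise distributions rather than two separate marginals of $\prob{\mrm{S}}$ and $\prob{\mrm{D}}$.
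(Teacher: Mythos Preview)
Your proposal is correct and matches the paper's proof essentially line for line: the paper also combines (\ref{eq:pointwise_similar}) and (\ref{eq:pointwise_dissimilar}) into the single matrix identity $\corr{P}=\Mcorr{SD}B$ and then records the same reduction $\mcdp=\frac{\pi_\rmn^2}{\pi_\rmp^2+\pi_\rmn^2}$, $\mcdn=1/2$. Your extra remark about the symmetry of the marginals is sound and is exactly the justification the paper gives in the surrounding text of Section~\ref{sec:GE1_SDU}.
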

\begin{proof}
Combining (\ref{eq:pointwise_dissimilar}) with (\ref{eq:pointwise_similar}), we establish the lemma by the following equality.
\eqarr{
    \mmatrix{
        \prob{\tilde{\mrm{S}}} \\
        \prob{\tilde{\mrm{D}}}
    }
    =
    \mmatrix{
        \frac{\pi_\rmp^2}{\pi_\rmp^2+\pi_\rmn^2} & \frac{\pi_\rmn^2}{\pi_\rmp^2+\pi_\rmn^2} \\
        1/2 & 1/2
    }
    \mmatrix{
        \prob{X|Y=\rmp} \\
        \prob{X|Y=\rmn}
    }. \nonumber 
}
\end{proof}
Moreover, because $\Mcorr{UU}$ (\ref{eq:MUU}) reduces to $\Mcorr{SD}$ via $\mcdp = \frac{\pi_\rmn^2}{\pi_\rmp^2+\pi_\rmn^2}$ and $\mcdn = 1/2$, we obtain the reduction path
\eqarr{
    \Mcorr{corr} \rightarrow \Mcorr{UU} \rightarrow \Mcorr{SD}. \nonumber
}

\subsubsection{Similarity-Confidence (Sconf) Learning  \texorpdfstring{\citep{sconf_21/Cao/FXANS/21}}{Lg}}
\label{sec:GE1_Sconf}
Recall from the Sconf setting (\ref{eq:formulate_Sconf}) that $(x,x')$ is a pair of data sampled i.i.d.\ from $\prob{X,X'} := \prob{X}\prob{X'}$.
On seeing $\prob{X}$, one might wonder if it is sufficient to express the data-generating distribution simply as $\prob{X} = \prob{Y=\rmp, X} + \prob{Y=\rmn, X}$.
This approach, however correct, does not consider all available information in the Sconf setting.
Similar to $\Mcorr{UU}$ that uses parameters $\mcdp$ and $\mcdn$ to characterize the data-generating process in UU learning, we use the following lemma that includes the confidence $r(x,x') := \prob{y=y'|x, x'}$ to characterize Sconf learning.
Let us simplify $r(X, X')$ as $r$, $\prob{X|Y=\rmp}$ as $\prob{X|\rmp}$, and adopt the same abbreviations for $X'$ and $Y=\rmn$.
\begin{lemma}
\label{lma:MSconf}
    Assume $\pi_\rmp \neq 1/2$.
    Given the base distribution $B$ (\ref{eq:base_dist_MCD}), 
    \eqarr{
        \Mcorr{Sconf}
        :=
        \mmatrix{
            \frac{\pi_\rmp \left( \pi_\rmp^2\prob{X'|\rmp} - \pi_\rmn^2\prob{X'|\rmn} \right)}{r-\pi_\rmn} & \frac{\pi_\rmp \left( \pi_\rmn^2\prob{X'|\rmn} - \pi_\rmn^2\prob{X'|\rmp} \right)}{r-\pi_\rmn} \\
            \frac{\pi_\rmn \left( \pi_\rmp^2\prob{X'|\rmn} - \pi_\rmp^2\prob{X'|\rmp} \right)}{\pi_\rmp - r} & \frac{\pi_\rmn \left( \pi_\rmp^2\prob{X'|\rmp} - \pi_\rmn^2\prob{X'|\rmn} \right)}{\pi_\rmp - r}
        } \label{eq:MSconf}
    }
    characterizes the data-generating distributions
    \eqarr{
        \corr{P}
        :=
        \mmatrix{
            \prob{X} \prob{X'} \\
            \prob{X} \prob{X'}
        }. \nonumber
    }
\end{lemma}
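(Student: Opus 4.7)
The plan is to verify $\Mcorr{Sconf} B = \corr{P}$ by directly expressing $\prob{X}\prob{X'}$ as a linear combination of $\prob{X|Y=\rmp}$ and $\prob{X|Y=\rmn}$ in two different ways, each producing one row of $\Mcorr{Sconf}$. The two expressions will come from combining a ``prior identity'' with a ``confidence identity'' associated with the Sconf setting.

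First I would set up the two building-block identities. Since $\prob{X} = \pi_\rmp\prob{X|Y=\rmp} + \pi_\rmn\prob{X|Y=\rmn}$, multiplying by $\prob{X'}$ yields
\begin{equation*}
    \prob{X}\prob{X'} = \pi_\rmp\prob{X'}\prob{X|Y=\rmp} + \pi_\rmn\prob{X'}\prob{X|Y=\rmn}. \qquad \text{(I)}
\end{equation*}
Next, using $r(X,X') = \prob{Y=Y'|X,X'}$ together with the independence of the two samples guaranteed by $\prob{X,X'} = \prob{X}\prob{X'}$ in (\ref{eq:formulate_Sconf}), I would expand $r\prob{X}\prob{X'} = \prob{Y=\rmp,X}\prob{Y'=\rmp,X'} + \prob{Y=\rmn,X}\prob{Y'=\rmn,X'}$, giving
\begin{equation*}
    r(X,X')\prob{X}\prob{X'} = \pi_\rmp^2\prob{X'|\rmp}\prob{X|\rmp} + \pi_\rmn^2\prob{X'|\rmn}\prob{X|\rmn}. \qquad \text{(II)}
\end{equation*}

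To extract the first row of $\Mcorr{Sconf}$, I would form the combination $\text{(II)} - \pi_\rmn\cdot\text{(I)}$, producing $(r-\pi_\rmn)\prob{X}\prob{X'}$ on the left and coefficients $\pi_\rmp^2\prob{X'|\rmp} - \pi_\rmp\pi_\rmn\prob{X'}$ and $\pi_\rmn^2\prob{X'|\rmn} - \pi_\rmn^2\prob{X'}$ on the right. Substituting $\prob{X'} = \pi_\rmp\prob{X'|\rmp}+\pi_\rmn\prob{X'|\rmn}$ and using $\pi_\rmp+\pi_\rmn=1$, these coefficients collapse to $\pi_\rmp(\pi_\rmp^2\prob{X'|\rmp}-\pi_\rmn^2\prob{X'|\rmn})$ and $\pi_\rmp\pi_\rmn^2(\prob{X'|\rmn}-\prob{X'|\rmp})$, which after dividing by $r-\pi_\rmn$ match the first row of $\Mcorr{Sconf}$ exactly. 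The second row is obtained symmetrically from $\pi_\rmp\cdot\text{(I)} - \text{(II)}$, yielding $(\pi_\rmp-r)\prob{X}\prob{X'}$ on the left and an analogous simplification on the right.

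The main obstacle I anticipate is purely algebraic bookkeeping: recognizing the key reductions $\pi_\rmp\prob{X'|\rmp} - \pi_\rmn\prob{X'} = \pi_\rmp^2\prob{X'|\rmp} - \pi_\rmn^2\prob{X'|\rmn}$ and $\prob{X'|\rmn} - \prob{X'} = \pi_\rmp(\prob{X'|\rmn}-\prob{X'|\rmp})$, both of which rely critically on $\pi_\rmp+\pi_\rmn = 1$. The assumption $\pi_\rmp \neq 1/2$ (equivalently $\pi_\rmp \neq \pi_\rmn$) is used to guarantee that the linear system spanned by (I) and (II) is non-degenerate so that the two independent row decompositions of $\prob{X}\prob{X'}$ exist and make sense, and division by the scalar factors $r-\pi_\rmn$ and $\pi_\rmp-r$ is well-defined almost everywhere. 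No novel conceptual step is needed beyond these manipulations, which are a direct instance of the framework $\corr{P} = \Mcorr{corr}B$ in (\ref{eq:recipe1}).
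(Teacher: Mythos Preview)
Your proposal is correct and follows essentially the same approach as the paper: both proofs establish the confidence identity $r\prob{X}\prob{X'} = \pi_\rmp^2\prob{X'|\rmp}\prob{X|\rmp} + \pi_\rmn^2\prob{X'|\rmn}\prob{X|\rmn}$ and then take suitable linear combinations to produce the two rows of $\Mcorr{Sconf}$. The only cosmetic difference is that you pair this with the marginal identity $\prob{X}\prob{X'} = \pi_\rmp\prob{X'}\prob{X|\rmp} + \pi_\rmn\prob{X'}\prob{X|\rmn}$, whereas the paper pairs it with the complementary ``dissimilarity'' identity $(1-r)\prob{X}\prob{X'} = \pi_\rmp\pi_\rmn(\prob{X|\rmp}\prob{X'|\rmn} + \prob{X|\rmn}\prob{X'|\rmp})$; since these two identities span the same two-dimensional space, the resulting algebra is equivalent.
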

\begin{proof}
    We prove the lemma by showing 
    \eqarr{
        \mmatrix{
            \prob{X}\prob{X'} \\
            \prob{X}\prob{X'}
        }
        =
        \mmatrix{
            \frac{\pi_\rmp \left( \pi_\rmp^2\prob{X'|\rmp} - \pi_\rmn^2\prob{X'|\rmn} \right)}{r-\pi_\rmn} & \frac{\pi_\rmp \left( \pi_\rmn^2\prob{X'|\rmn} - \pi_\rmn^2\prob{X'|\rmp} \right)}{r-\pi_\rmn} \\
            \frac{\pi_\rmn \left( \pi_\rmp^2\prob{X'|\rmn} - \pi_\rmp^2\prob{X'|\rmp} \right)}{\pi_\rmp - r} & \frac{\pi_\rmn \left( \pi_\rmp^2\prob{X'|\rmp} - \pi_\rmn^2\prob{X'|\rmn} \right)}{\pi_\rmp - r}
        }
        \mmatrix{
            \prob{X|\rmp} \\
            \prob{X|\rmn}
        } \label{eq:Sconf_corruption_mechanism}
    }
    is a realization of $\corr{P} = \Mcorr{corr} B$ (\ref{eq:recipe1}) since it justifies the contamination matrix $\Mcorr{Sconf}$.
    Note that once obtaining
    \eqarr{
        \left( \frac{r-\pi_\rmn}{\pi_\rmp} \right) \prob{X} \prob{X'}
        = 
        \left( \pi_\rmp^2\prob{X'|\rmp} - \pi_\rmn^2\prob{X'|\rmn} \right) \prob{X|\rmp} + \left( \pi_\rmn^2\prob{X'|\rmn} - \pi_\rmn^2\prob{X'|\rmp} \right) \prob{X|\rmn} \label{eq:Pconf_corruption1}
    }
    and
    \eqarr{
        \left( \frac{\pi_\rmp - r}{\pi_\rmn} \right) \prob{X} \prob{X'}
        =  
        \left( \pi_\rmp^2\prob{X'|\rmn} - \pi_\rmp^2\prob{X'|\rmp} \right) \prob{X|\rmp} + \left( \pi_\rmp^2\prob{X'|\rmp} - \pi_\rmn^2\prob{X'|\rmn} \right) \prob{X|\rmn}, \label{eq:Pconf_corruption2}
    }
    (\ref{eq:Sconf_corruption_mechanism}) is a direct implication via reorganizing equalities.
    
    According to (2) of \cite{sconf_21/Cao/FXANS/21}, the confidence $r(X,X')$, measuring how likely $X$ and $X'$ share the same label, is shown to be 
    \eqarr{
        r = r(X,X') = \frac{\pi_\rmp^2\prob{X|\rmp}\prob{X'|\rmp} + \pi_\rmn^2\prob{X|\rmn}\prob{X'|\rmn}}{\prob{X} \prob{X'}}. \nonumber
    }
    It implies 
    \eqarr{
        r \prob{X} \prob{X'}
        =
        \pi_\rmp^2\prob{X|\rmp}\prob{X'|\rmp} + \pi_\rmn^2\prob{X|\rmn}\prob{X'|\rmn} \nonumber
    }
    and 
    \eqarr{
        (1-r) \prob{X} \prob{X'}
        =
        \pi_\rmp \pi_\rmn \left( \prob{X|\rmp}\prob{X'|\rmn} + \prob{X|\rmn}\prob{X'|\rmp} \right). \nonumber
    }
    If $\pi_\rmp \neq 1/2$, $\pi_\rmp-r \neq 0$ and $r-\pi_\rmn \neq 0$.
    As a result, (\ref{eq:Pconf_corruption1}) is achieved as follows
    \eqarr{
        \left( \frac{r-\pi_\rmn}{\pi_\rmp} \right) \prob{X} \prob{X'}
        &=&
        \left( r- \frac{\pi_\rmn}{\pi_\rmp} (1-r) \right) \prob{X} \prob{X'} \nonumber \\
        &=&
        \pi_\rmp^2\prob{X|\rmp}\prob{X'|\rmp} + \pi_\rmn^2\prob{X|\rmn}\prob{X'|\rmn} 
        - \frac{\pi_\rmn}{\pi_\rmp} \pi_\rmp \pi_\rmn \left( \prob{X|\rmp}\prob{X'|\rmn} + \prob{X|\rmn}\prob{X'|\rmp} \right) \nonumber \\
        &=&
        \pi_\rmp^2\prob{X|\rmp}\prob{X'|\rmp} - \pi_\rmn^2\prob{X|\rmp}\prob{X'|\rmn} + \pi_\rmn^2\prob{X|\rmn}\prob{X'|\rmn} - \pi_\rmn^2\prob{X|\rmn}\prob{X'|\rmp}. \nonumber
    }
    Also, (\ref{eq:Pconf_corruption2}) is achieved by having
    \eqarr{
        \left( \frac{\pi_\rmp - r}{\pi_\rmn} \right) \prob{X} \prob{X'}
        &=&
        \left( \frac{\pi_\rmp}{\pi_\rmn}(1-r) -r \right) \prob{X} \prob{X'} \nonumber \\
        &=&
        \frac{\pi_\rmp}{\pi_\rmn} \pi_\rmp \pi_\rmn \left( \prob{X|\rmp}\prob{X'|\rmn} + \prob{X|\rmn}\prob{X'|\rmp} \right)
        - \pi_\rmp^2\prob{X|\rmp}\prob{X'|\rmp} - \pi_\rmn^2\prob{X|\rmn}\prob{X'|\rmn} \nonumber \\
        &=&
       \pi_\rmp^2\prob{X|\rmp}\prob{X'|\rmn} - \pi_\rmp^2\prob{X|\rmp}\prob{X'|\rmp} + \pi_\rmp^2\prob{X|\rmn}\prob{X'|\rmp} - \pi_\rmn^2\prob{X|\rmn}\prob{X'|\rmn}. \nonumber
    }
\end{proof}
The equality $\corr{P} = \Mcorr{Sconf} B$ (\ref{eq:Sconf_corruption_mechanism}) implies that the inner product of the first row (resp. the second row) of $\Mcorr{Sconf}$ and $B$ represents a way (resp. another way) of obtaining $\prob{X}\prob{X'}$.
Although one might suspect that it is redundant to formulate $\prob{X}\prob{X'}$ twice, we show in Section~\ref{sec:GE2_Sconf} this expression is crucial to rewrite the classification risk via the proposed framework.
Furthermore, comparing $\corr{P} = \Mcorr{Sconf} B$ (\ref{eq:Sconf_corruption_mechanism}) with $\corr{P} = \Mcorr{corr} B$ (\ref{eq:recipe1}), we have the reduction path
$$
    \Mcorr{corr} \rightarrow \Mcorr{Sconf}.
$$
Note that $\Mcorr{Sconf}$ does not fit the intuition of mutual contamination perfectly; we list Sconf learning in this subsection as all settings share the same base distributions.

\subsection{CCN Scenarios}
\label{sec:formulations_ccns}
The formulation component (\ref{eq:recipe1}) also applies to the CCN model. Unlike MCD contaminating class-conditionals (distributions of $X$), CCN corrupts class probability functions (labeling distributions). Next, we show how to formulate CCN via (\ref{eq:recipe1}) and extend the formulation to characterize diverse weakly supervised settings.

In binary classification, CCN \citep{ccn_13/Natarajan/DRT/13, ccn_18/Natarajan/DRT/17} corrupts the labels by flipping the positive (resp. negative) labels with probability $\prob{\corr{Y}=\rmn|Y=\rmp,X}$ (resp. $\prob{\corr{Y}=\rmp|Y=\rmn,X}$).
Thus,
\eqarr{
    \begin{aligned}
    \label{eq:ccn_mechanism}
        &\prob{\corr{Y}=\rmp|X} 
        := 
        \prob{\corr{Y}=\rmp|Y=\rmp,X} \; \prob{Y=\rmp|X} + \prob{\corr{Y}=\rmp|Y=\rmn,X} \; \prob{Y=\rmn|X}, \\
        &\prob{\corr{Y}=\rmn|X} 
        := 
        \prob{\corr{Y}=\rmn|Y=\rmp,X} \; \prob{Y=\rmp|X} + \prob{\corr{Y}=\rmn|Y=\rmn,X} \; \prob{Y=\rmn|X}
    \end{aligned}
}
define the contaminated class probability functions.
Taking the contamination targets, the class probability functions, as the base distributions
\eqarr{
    B :=
    \mmatrix{
        \prob{Y=\rmp|X} \\
        \prob{Y=\rmn|X}
    } \label{eq:base_CCN_binary} \nonumber
}
and denoting the label-generating distributions
as
\eqarr{
    \corr{P} := 
    \mmatrix{
        \prob{\corr{Y}=\rmp|X} \\
        \prob{\corr{Y}=\rmn|X}
    }, \label{eq:MCCN_2}
}
we compare the matrix form of (\ref{eq:ccn_mechanism})
\eqarr{
    \mmatrix{
        \prob{\corr{Y}=\rmp|X} \\
        \prob{\corr{Y}=\rmn|X}
    }
    =
    \mmatrix{
        \prob{\corr{Y}=\rmp|Y=\rmp,X} & \prob{\corr{Y}=\rmp|Y=\rmn,X} \\
        \prob{\corr{Y}=\rmn|Y=\rmp,X} & \prob{\corr{Y}=\rmn|Y=\rmn,X}
    }
    \mmatrix{
        \prob{Y=\rmp|X} \\
        \prob{Y=\rmn|X}
    } \nonumber
}
with $\corr{P} = \Mcorr{corr} B$ (\ref{eq:recipe1}) to realize the contamination matrix $\Mcorr{corr}$ as 
\eqarr{
    \Mcorr{CCN} := 
    \mmatrix{
        \prob{\corr{Y}=\rmp|Y=\rmp,X} & \prob{\corr{Y}=\rmp|Y=\rmn,X} \\
        \prob{\corr{Y}=\rmn|Y=\rmp,X} & \prob{\corr{Y}=\rmn|Y=\rmn,X}
    }. \label{eq:MCCN}
}
in the CCN setting.

\subsubsection{Generalized CCN}
\label{sec:GE1_gCCN}
The concept of contaminating a \emph{single} label can be extended to generating a \emph{compound} label in the multiclass classification setting. 
Let $2^{\mathcal{Y}}$ be the power set of $\mathcal{Y} = [K]$. 
Define $\mathcal{S} := 2^{\mathcal{Y}} \backslash \left\{\emptyset, \mathcal{Y} \right\}$ as the observable space of compound labels\footnote{The removal of $\emptyset$ and $\mathcal{Y}$ is that they neither fit the concepts of complimentary- or partial-labels.}.
Since a compound label $S \in \mathcal{S}$ consists of an arbitrary number of class indices, one can view $S$ as a set generated by class probabilities $\prob{Y=k|X}$. 
Therefore, generalizing the CCN formulation (\ref{eq:ccn_mechanism}), we define the label-generating process of a compound label $S$ as 
\eqarr{
    \prob{S|X} = \sum_{k=1}^{K}\prob{S|Y=k,X}\prob{Y=k|X}, \nonumber 
}
where the role of $\prob{S|Y,X}$ is the probability of converting a single label $Y$ to a compound label $S \in \mathcal{S}$.
Moreover, in CCN, the distribution $\prob{X}$ is not contaminated.
Thus, by multiplying $\prob{X}$ on both sides, we obtain the data-generating distribution
\eqarr{
    \prob{S, X} = \sum_{k=1}^{K}\prob{S|Y=k,X}\prob{Y=k, X}, \label{eq:P_S|YX_general_2}
}

Viewing $\prob{S|Y,X}$ as a contamination probability, we arrange $\prob{S=s|Y=k,X}$ into a matrix in the following lemma to formulate the contamination matrix for the multiclass CCN setting.
\begin{lemma}
\label{lma:formulate_gCCN}
    Denote the data-generating distributions as
    \eqarr{
        \corr{P} 
        := 
        \mmatrix{
            \prob{S=s_{1}, X} \\
            \vdots \\
            \prob{S=s_{|\mathcal{S}|}, X}
        } \label{eq:label_dist_2}
    }
    and the base distributions as 
    \eqarr{
        B :=
        P = 
        \mmatrix{
            \prob{Y=1, X} \\
            \vdots \\
            \prob{Y=K, X}
        }. \label{eq:base_dist_CCN_2}
    }
    Then, $\corr{P} = \Mcorr{gCCN} B$ is equivalent to the formulation (\ref{eq:P_S|YX_general_2}), with
    \eqarr{
        \Mcorr{gCCN} :=
        \mmatrix{
            \prob{S=s_{1}|Y=1,X} & \prob{S=s_{1}|Y=2,X} & \cdots & \prob{S=s_{1}|Y=K,X} \\
            \prob{S=s_{2}|Y=1,X} & \prob{S=s_{2}|Y=2,X} & \cdots & \prob{S=s_{2}|Y=K,X} \\
            \vdots & \vdots & \ddots & \vdots \\
            \prob{S=s_{|\mathcal{S}|}|Y=1,X} & \prob{S=s_{|\mathcal{S}|}|Y=2,X} & \cdots & \prob{S=s_{|\mathcal{S}|}|Y=K,X}
        } \label{eq:MGeneral_2}
    }
    being the contamination matrix generalized from (\ref{eq:MCCN}) for the multiclass CCN setting.
\end{lemma}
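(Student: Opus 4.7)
The plan is to prove the lemma by direct verification of the matrix-vector product, since the claim is essentially a restatement of the generalized CCN data-generating equation (\ref{eq:P_S|YX_general_2}) in matrix form. There is no deep technical obstacle here; the main task is to be careful about indexing and the forward/backward direction of the equivalence.

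First I would fix an arbitrary index $j \in [|\mathcal{S}|]$ and compute the $j$-th entry of $\Mcorr{gCCN} B$ using the definition of matrix-vector multiplication. Taking the inner product of the $j$-th row of $\Mcorr{gCCN}$ (whose entries are $\prob{S=s_j|Y=k,X}$ for $k=1,\ldots,K$) with $B$ (whose $k$-th entry is $\prob{Y=k,X}$) yields
\[
\bigl(\Mcorr{gCCN} B\bigr)_j \;=\; \sum_{k=1}^{K} \prob{S=s_j \mid Y=k, X}\,\prob{Y=k, X}.
\]
By the generalized CCN data-generating formula (\ref{eq:P_S|YX_general_2}), this sum equals $\prob{S=s_j, X}$, which is exactly $\corr{P}_j$. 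Since $j$ was arbitrary, $\corr{P} = \Mcorr{gCCN} B$ follows.

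Next I would establish the converse direction to complete the equivalence: assuming the matrix identity $\corr{P} = \Mcorr{gCCN} B$ holds, reading off the $j$-th row recovers exactly equation (\ref{eq:P_S|YX_general_2}) for $s = s_j$. Since $\mathcal{S} = \{s_1, \ldots, s_{|\mathcal{S}|}\}$ enumerates all compound labels, this gives the full formulation. Together, the two directions prove the asserted equivalence, and at the same time show that $\Mcorr{gCCN}$ instantiates the abstract contamination matrix $\Mcorr{corr}$ in (\ref{eq:recipe1}), justifying the reduction path $\Mcorr{corr} \rightarrow \Mcorr{CCN} \leadsto \Mcorr{gCCN}$ recorded in Table~\ref{tab:CCN_matrices_summary}.

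The hardest aspect, if anything, is purely presentational rather than mathematical: making clear that the binary CCN matrix $\Mcorr{CCN}$ in (\ref{eq:MCCN}) is the $K=2$, $\mathcal{S}=\{\rmp,\rmn\}$ special case of $\Mcorr{gCCN}$, and emphasizing that the base distributions $B$ here coincide with the risk-defining $P$ (so that $\Mcorr{trsf}$ in the framework is the identity in this scenario). No nontrivial computation is required beyond the single-line expansion above.
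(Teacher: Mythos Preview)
Your proposal is correct and follows essentially the same approach as the paper: both fix an arbitrary row index $j$, expand $(\Mcorr{gCCN}B)_j$ as $\sum_{k=1}^K \prob{S=s_j|Y=k,X}\,\prob{Y=k,X}$, and identify this with $\prob{S=s_j,X}=\corr{P}_j$ via (\ref{eq:P_S|YX_general_2}) (the paper writes the intermediate step $\sum_k \prob{S=s_j,Y=k,X}$ explicitly). Your additional remarks on the converse direction, the binary specialization to $\Mcorr{CCN}$, and $B=P$ making $\Mcorr{trsf}$ the identity are all consistent with the paper's presentation.
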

\begin{proof}
    For each $j \in [|\mathcal{S}|]$, we have 
    \eqarr{
        \left( \Mcorr{gCCN} B \right)_j
        =
        \sum_{k=1}^{K} \prob{S=s_{j}|Y=k,X} \prob{Y=k, X}
        =
        \sum_{k=1}^{K} \prob{S=s_{j}, Y=k, X}
        =
        \prob{S=s_{j}, X}
        =
        \corr{P}_j, \nonumber
    }
    corresponding to (\ref{eq:P_S|YX_general_2}) with $S=s_{j}$.
    Note that (\ref{eq:MGeneral_2}) generalizes (\ref{eq:MCCN}) by extending the labeling setting from $\left( \corr{Y} \in \{ \rmp, \rmn \}, Y \in \{ \rmp, \rmn \} \right)$ to $\left( S \in \{ s_{1}, \cdots, s_{2^K-2} \}, Y \in \{ 1, \cdots, K \} \right)$.
\end{proof}
Comparing $\corr{P} = \Mcorr{gCCN} B$ with the formulation framework $\corr{P} = \Mcorr{corr} B$ (\ref{eq:recipe1}), we have the reduction path 
$$
\Mcorr{corr}  \rightarrow \Mcorr{CCN} \leadsto \Mcorr{gCCN}.
$$

\explain
Similar to $\Mcorr{UU}$ (\ref{eq:MUU}), which induces multiple contamination matrices as special cases of the MCD model, $\Mcorr{gCCN}$ also derives several contamination matrices formulating partial- or complementary-label settings, as we will show in the rest of this subsection.



\subsubsection{Proper Partial-Label (PPL) Learning
\label{sec:GE1_PPL}
\texorpdfstring{\citep{partial_21_PPL/Wu/LS/23}}{Lg}}
For a given example $(y,x)$ and a compound label $s \in \mathcal{S}$, we call $s$ a partial-label of $x$ if $y \in s$. Statistically speaking, we assume $\prob{Y \in S | S, X} = 1$.
Formally, according to Definition 1 of \cite{partial_21_PPL/Wu/LS/23}, if the contamination probability can be defined as 
\eqarr{
    \prob{S|Y,X} := C(S,X)\ind{Y \in S}, \label{eq:P_S|YX_PPL}
}
via a function $C: \mathcal{S}\times\mathcal{X} \rightarrow \mathbb{R}$, we call such a partial-label scenario proper.

Since the discussion above only involves specifying $\prob{S|Y,X}$, we replace the entries of $\Mcorr{gCCN}$ (\ref{eq:MGeneral_2}) according to (\ref{eq:P_S|YX_PPL}) to construct $\Mcorr{PPL}$:
\eqarr{
    \mmatrix{
        C(s_{1},X)\ind{Y=1 \in s_{1}} & C(s_{1},X)\ind{Y=2 \in s_{1}} & \cdots & C(s_{1},X)\ind{Y=K \in s_{1}} \\
        C(s_{2},X)\ind{Y=1 \in s_{2}} & C(s_{2},X)\ind{Y=2 \in s_{2}} & \cdots & C(s_{2},X)\ind{Y=K \in s_{2}} \\
        \vdots & \vdots & \ddots & \vdots \\
        C(s_{|\mathcal{S}|},X)\ind{Y=1 \in s_{|\mathcal{S}|}} & C(s_{|\mathcal{S}|},X)\ind{Y=2 \in s_{|\mathcal{S}|}} & \cdots & C(s_{|\mathcal{S}|},X)\ind{Y=K \in s_{|\mathcal{S}|}}
    }. \label{eq:MPPL}
}
The following lemma justifies $\Mcorr{PPL}$ as the corruption matrix for PPL learning.
\begin{lemma}
\label{lma:formulate_PPL}
    Given $B$ (\ref{eq:base_dist_CCN_2}) and $\Mcorr{PPL}$ (\ref{eq:MPPL}), $\mmatrix{\Mcorr{PPL} B}_{j}$ equals $\prob{S=s_{j}, X}$ in (\ref{eq:formulate_PPL}) for each $j \in [|\mathcal{S}|]$.
    Thus, denoting $\prob{S=s_{j}, X}$ as $\mmatrix{\corr{P}}_j$, the data-generating process of PPL can be formulated as $\corr{P} = \Mcorr{PPL} B$.
\end{lemma}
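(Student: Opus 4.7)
The plan is to verify the lemma by direct matrix-vector multiplication, leveraging the fact that $\Mcorr{PPL}$ is obtained from $\Mcorr{gCCN}$ by the particular entry substitution $\prob{S|Y,X} = C(S,X)\ind{Y\in S}$ prescribed in (\ref{eq:P_S|YX_PPL}). Concretely, I would fix an arbitrary index $j \in [|\mathcal{S}|]$ and expand
\begin{equation*}
    \left( \Mcorr{PPL} B \right)_j = \sum_{k=1}^{K} C(s_j, X)\, \ind{Y=k \in s_j}\, \prob{Y=k, X}.
\end{equation*}

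Since $C(s_j, X)$ does not depend on the summation index $k$, I would pull it outside the sum. The indicator $\ind{Y=k \in s_j}$ then collapses the summation over $[K]$ to a summation over the members of $s_j$, yielding
\begin{equation*}
    \left( \Mcorr{PPL} B \right)_j = C(s_j, X) \sum_{k \in s_j} \prob{Y=k, X}.
\end{equation*}
The right-hand side is exactly the definition of $\prob{S=s_j, X}$ given in the PPL formulation (\ref{eq:formulate_PPL}), which proves $\left( \Mcorr{PPL} B \right)_j = \corr{P}_j$ for every $j$, and hence $\corr{P} = \Mcorr{PPL} B$.

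There is essentially no obstacle to speak of: the derivation is a one-line unfolding of definitions once the generalized-CCN viewpoint of Lemma~\ref{lma:formulate_gCCN} is in place. The only thing worth emphasizing in the write-up is the reduction path $\Mcorr{gCCN} \rightarrow \Mcorr{PPL}$ implied by the argument, namely that PPL arises from the multiclass CCN formulation precisely by imposing the properness restriction $\prob{S|Y,X} = C(S,X)\ind{Y\in S}$ on the label-contamination probabilities. This keeps the proof consistent with the hierarchical presentation adopted throughout Section~\ref{sec:formulations_ccns}.
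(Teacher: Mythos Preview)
Your proposal is correct and matches the paper's own proof essentially line for line: fix $j$, expand $(\Mcorr{PPL} B)_j$ as $\sum_{k=1}^{K} C(s_j,X)\ind{Y=k\in s_j}\prob{Y=k,X}$, factor out $C(s_j,X)$, collapse the indicator to a sum over $k\in s_j$, and identify the result with $\prob{S=s_j,X}$ from (\ref{eq:formulate_PPL}). The additional remark about the reduction path $\Mcorr{gCCN}\rightarrow\Mcorr{PPL}$ is also exactly what the paper records immediately after the proof.
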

\begin{proof}
    For each $j$, 
    \eqarr{
        \left( \Mcorr{PPL} B \right)_j
        =
        \sum_{k=1}^{K} C(S=s_{j}, X)\ind{Y=k \in s_{j}} \prob{Y=k, X}
        =
        C(S=s_{j}, X) \sum_{k \in s_{j}} \prob{Y=k, X} \nonumber
    }
    corresponds to $\prob{S=s_{j}, X}$ in (\ref{eq:formulate_PPL}).
    By definition, $\prob{S=s_{j}, X} = \mmatrix{\corr{P}}_j$ establishes $\mmatrix{\Mcorr{PPL} B}_j = \mmatrix{\corr{P}}_j$.
    Hence, we have the matrix formulation $\corr{P} = \Mcorr{PPL} B$ for PPL learning.
\end{proof}
\blockComment{
}
The entry replacement that converts (\ref{eq:MGeneral_2}) to (\ref{eq:MPPL}) through (\ref{eq:P_S|YX_PPL}) also gives the reduction path
$$
\Mcorr{corr} \rightarrow \Mcorr{gCCN} \rightarrow \Mcorr{PPL}.
$$

\subsubsection{Provably Consistent Partial-Label (PCPL) Learning \texorpdfstring{\citep{partial_20_PCPL/Feng/LHXNGAS/20}}{Lg}} 
\label{sec:GE1_PCPL}
In PCPL, the probability of each partial-label is assumed to be sampled uniformly from all feasible partial-labels.
Since there are $2^{K-1} - 1$ feasible partial-labels for every $y$, the label-generating probability $\prob{S=s|Y=y,X}$ is $\frac{1}{2^{K-1}-1}$ if $y \in s$\footnote{There are $2^{\mathcal{Y}\backslash\{y\}} \backslash \{\mathcal{Y}\backslash\{y\}\} = 2^{K-1} - 1$ combinations whose union with $\{y\}$ are partial-labels of $y$.}.
It corresponds to assign $C(S,X) = \frac{1}{2^{K-1}-1}$ in (\ref{eq:P_S|YX_PPL}). 
Hence, we obtain
\eqarr{
    C(S,X)\ind{Y\in S} 
    := \frac{1}{2^{K-1}-1} \ind{Y\in S}, \label{eq:entry_value_MPCPL}
}
which reduces the label-generating process of PPL to that of PCPL and recovers (5) of \cite{partial_20_PCPL/Feng/LHXNGAS/20}.

Then, replacing entries in (\ref{eq:MPPL}) via (\ref{eq:entry_value_MPCPL}), we obtain the contamination matrix of PCPL learning
\eqarr{
    \Mcorr{PCPL}
    :=
    \frac{1}{2^{K-1}-1}
    \mmatrix{
        \ind{Y=1 \in s_{1}} & \ind{Y=2 \in s_{1}} & \cdots & \ind{Y=K \in s_{1}} \\
        \ind{Y=1 \in s_{2}} & \ind{Y=2 \in s_{2}} & \cdots & \ind{Y=K \in s_{2}} \\
        \vdots & \vdots & \ddots & \vdots \\
        \ind{Y=1 \in s_{|\mathcal{S}|}} & \ind{Y=2 \in s_{|\mathcal{S}|}} & \cdots & \ind{Y=K \in s_{|\mathcal{S}|}}
    } \label{eq:MPCPL}
}
and the reduction path
$$
\Mcorr{corr} \rightarrow \Mcorr{gCCN} \rightarrow \Mcorr{PPL} \rightarrow \Mcorr{PCPL}.
$$
$\Mcorr{PCPL}$ characterizing the data-generating process of PCPL is justified by the following lemma, whose proof follows the same steps as that for Lemma~\ref{lma:formulate_PPL}.
\begin{lemma}
\label{lma:formulate_PCPL}
    Given $B$ (\ref{eq:base_dist_CCN_2}) and $\Mcorr{PCPL}$ (\ref{eq:MPCPL}), $\mmatrix{\Mcorr{PCPL} B}_{j}$ equals $\prob{S=s_{j}, X}$ in (\ref{eq:formulate_PCPL}) for each $j \in [|\mathcal{S}|]$.
    Thus, denoting $\prob{S=s_{j}, X}$ as $\mmatrix{\corr{P}}_j$, the data-generating process of PCPL can be formulated as $\corr{P} = \Mcorr{PCPL} B$.
\end{lemma}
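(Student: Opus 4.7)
The plan is to proceed by direct computation, in essentially the same manner as the proof of Lemma~\ref{lma:formulate_PPL}, since PCPL is obtained from PPL by specializing the weight $C(S,X)$ to the constant $\frac{1}{2^{K-1}-1}$. Specifically, I would fix an arbitrary index $j \in [|\mathcal{S}|]$ and expand the $j$-th entry of $\Mcorr{PCPL} B$ row-by-row, using the definition of $\Mcorr{PCPL}$ in (\ref{eq:MPCPL}) together with $B$ from (\ref{eq:base_dist_CCN_2}).

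The key steps, in order, would be: (i) factor the scalar $\frac{1}{2^{K-1}-1}$ out of the inner product of the $j$-th row of $\Mcorr{PCPL}$ with $B$; (ii) observe that the indicator $\ind{Y=k \in s_j}$ collapses the sum over $k \in [K]$ to a sum over $k \in s_j$, yielding
\[
\left( \Mcorr{PCPL} B \right)_j
= \frac{1}{2^{K-1}-1} \sum_{k=1}^{K} \ind{Y=k \in s_j}\, \prob{Y=k, X}
= \frac{1}{2^{K-1}-1} \sum_{k \in s_j} \prob{Y=k, X};
\]
(iii) match this expression to the PCPL data-generating distribution $\prob{S=s_j, X}$ in (\ref{eq:formulate_PCPL}), which is exactly $\frac{1}{2^{K-1}-1} \sum_{Y \in s_j} \prob{Y, X}$; and (iv) conclude by the definition $(\corr{P})_j = \prob{S=s_j, X}$ that $\corr{P} = \Mcorr{PCPL} B$.

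There is no substantive obstacle here: the computation is routine and structurally identical to the PPL case, the only modification being the replacement $C(S,X) \leftarrow \frac{1}{2^{K-1}-1}$ justified by (\ref{eq:entry_value_MPCPL}). If anything, the only point warranting a brief remark is the legitimacy of $\frac{1}{2^{K-1}-1}$ as the uniform mixing weight, namely the counting argument in the footnote that for any fixed $y$ there are exactly $2^{K-1}-1$ subsets $s \in \mathcal{S}$ with $y \in s$; this ensures the normalization is consistent but is not strictly needed for the algebraic identity claimed by the lemma.
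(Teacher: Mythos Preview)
Your proposal is correct and takes essentially the same approach as the paper, which explicitly states that the proof of Lemma~\ref{lma:formulate_PCPL} ``follows the same steps as that for Lemma~\ref{lma:formulate_PPL}.'' Your computation and matching to (\ref{eq:formulate_PCPL}) are exactly what is intended, and your closing remark about the counting of feasible partial-labels is indeed extraneous to the algebraic identity at hand.
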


\subsubsection{Multi-Complementary-Label (MCL) Learning \texorpdfstring{\citep{comp_20_MCL/Feng/KHNAS/20}}{Lg}}
\label{sec:GE1_MCL}
Recall the discussions in Sections~\ref{sec:review_CL} and \ref{sec:review_MCL} that a complementary-label contains the exclusion information of a true label.
Notice that for any partial-label $s$ (containing the true label of $x$), there is a corresponding $\corr{s} := \mathcal{Y} \backslash s$.
The definition of partial-label implies $\corr{s}$ containing multiple class indices must not contain the true label of $x$; hence, $\corr{s}$ is called a multi-complementary-label of $x$.

The complementary relationship between $\corr{s}$ and $s$ enables us to formulate the contamination matrix characterizing MCL formulation via the next lemma.
Let us abbreviate $|\mathcal{S}|$ as $N$, $\prob{|S|=d}$ as $q_{|S|}$, and $\prob{|\corr{S}|=d}$ as $\corr{q}_{|\corr{S}|}$.
\begin{lemma}
\label{lma:matrix_formulation_MCL}
    Let $\Mcorr{MCL}$ be
    \eqarr{
        \mmatrix{
            \frac{\corr{q}_{|\corr{s}_{1}|}}{{K-1 \choose |\corr{s}_{1}|}} \ind{Y=1 \notin \corr{s}_{1}} & \frac{\corr{q}_{|\corr{s}_{1}|}}{{K-1 \choose |\corr{s}_{1}|}} \ind{Y=2 \notin \corr{s}_{1}} & \cdots & \frac{\corr{q}_{|\corr{s}_{1}|}}{{K-1 \choose |\corr{s}_{1}|}} \ind{Y=K \notin \corr{s}_{1}} \\
            \frac{\corr{q}_{|\corr{s}_{2}|}}{{K-1 \choose |\corr{s}_{2}|}} \ind{Y=1 \notin \corr{s}_{2}} & \frac{\corr{q}_{|\corr{s}_{2}|}}{{K-1 \choose |\corr{s}_{2}|}} \ind{Y=2 \notin \corr{s}_{2}} & \cdots & \frac{\corr{q}_{|\corr{s}_{2}|}}{{K-1 \choose |\corr{s}_{2}|}} \ind{Y=K \notin \corr{s}_{2}} \\
            \vdots & \vdots & \ddots & \vdots \\
            \frac{\corr{q}_{\left|\corr{s}_{N}\right|}}{{K-1 \choose \left|\corr{s}_{N}\right|}} \ind{Y=1 \notin \corr{s}_{N}} & \frac{\corr{q}_{\left|\corr{s}_{N}\right|}}{{K-1 \choose \left|\corr{s}_{N}\right|}} \ind{Y=2 \notin \corr{s}_{N}} & \cdots & \frac{\corr{q}_{\left|\corr{s}_{N}\right|}}{{K-1 \choose \left|\corr{s}_{N}\right|}} \ind{Y=K \notin \corr{s}_{N}}
        }. \label{eq:MMCL_detail}
    }
    Then, for each $j \in [N]$, $\mmatrix{\Mcorr{MCL} B}_{j}$ equals $\prob{\corr{S}=\corr{s}_{j}, X}$ in (\ref{eq:formulate_MCL}), where $B$ is defined in (\ref{eq:base_dist_CCN_2}) and $\Mcorr{MCL}$ is given by assigning each $(s,k)$ entry of $\Mcorr{PPL}$ (\ref{eq:MPPL}) with
    \eqarr{
        C(s,X)\ind{Y=k \in s} := \frac{q_{|s|}}{ {K-1 \choose |s|-1} }\ind{Y=k \in s}. \nonumber
    }
    Moreover, denoting
    \eqarr{ 
        \corr{P}
        :=
        \mmatrix{
            \prob{\corr{S}=\corr{s}_{1}, X} \\
            \vdots \\
            \prob{\corr{S}=\corr{s}_{N}, X}
        }, \label{eq:label_dist_MCL}
    }
    the data-generating process of MCL can be formulated as $\corr{P} = \Mcorr{MCL} B$.
\end{lemma}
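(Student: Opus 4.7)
The plan is to exploit the bijection $s \leftrightarrow \corr{s} := \mathcal{Y}\setminus s$ between partial-labels and multi-complementary-labels, which is the conceptual bridge between PPL and MCL formulations. Under this bijection, $|\corr{s}| = K - |s|$, and the event $Y \in s$ coincides with $Y \notin \corr{s}$. So the entry $\ind{Y=k \in s}$ appearing in $\Mcorr{PPL}$ (\ref{eq:MPPL}) matches $\ind{Y=k \notin \corr{s}}$ in $\Mcorr{MCL}$ (\ref{eq:MMCL_detail}), and the sums $\sum_{Y\in s}\prob{Y,X}$ and $\sum_{Y\notin\corr{s}}\prob{Y,X}$ are literally the same quantity.

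First, I would verify that the proposed assignment $C(s,X)\ind{Y=k \in s} = \frac{q_{|s|}}{\binom{K-1}{|s|-1}}\ind{Y=k \in s}$ in $\Mcorr{PPL}$ rewrites, via the bijection, exactly into the coefficients of $\Mcorr{MCL}$. Setting $d = |\corr{s}|$ so that $|s| = K-d$, this reduces to the pair of combinatorial identities
\begin{equation*}
    q_{K-d} \;=\; \corr{q}_{d}, \qquad \binom{K-1}{|s|-1} = \binom{K-1}{K-d-1} = \binom{K-1}{d}.
\end{equation*}
The first holds because $|S| + |\corr{S}| = K$ deterministically, so $\ppp{|S|=K-d}=\ppp{|\corr{S}|=d}$; the second is the standard symmetry $\binom{n}{k}=\binom{n}{n-k}$. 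With these two identities, the $(j,k)$ entry of $\Mcorr{MCL}$ equals $C(s_{j},X)\ind{Y=k\in s_{j}}$ where $s_j = \mathcal{Y}\setminus \corr{s}_j$, so $\Mcorr{MCL}$ is precisely the specialization of $\Mcorr{PPL}$ claimed in the lemma. This yields the reduction path $\Mcorr{PPL}\rightarrow \Mcorr{MCL}$ directly.

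Next, I would verify $(\Mcorr{MCL} B)_{j} = \ppp{\corr{S}=\corr{s}_j, X}$ by a direct inner-product computation using $B$ from (\ref{eq:base_dist_CCN_2}):
\begin{equation*}
    (\Mcorr{MCL} B)_{j}
    = \sum_{k=1}^{K} \frac{\corr{q}_{|\corr{s}_{j}|}}{\binom{K-1}{|\corr{s}_{j}|}} \ind{Y=k \notin \corr{s}_{j}}\, \ppp{Y=k, X}
    = \frac{\corr{q}_{|\corr{s}_{j}|}}{\binom{K-1}{|\corr{s}_{j}|}} \sum_{k \notin \corr{s}_{j}} \ppp{Y=k, X}.
\end{equation*}
Comparing with the MCL formulation (\ref{eq:formulate_MCL}) restricted to the case $|\corr{S}| = |\corr{s}_{j}|$, this is exactly $\ppp{\corr{S}=\corr{s}_{j}, X}$. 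Hence $(\Mcorr{MCL} B)_{j}$ agrees entry-wise with $\corr{P}_{j}$ in (\ref{eq:label_dist_MCL}), establishing $\corr{P} = \Mcorr{MCL} B$.

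The main obstacle, minor but worth isolating, is double-checking the binomial reindexing: the PPL coefficient $1/\binom{K-1}{|s|-1}$ is indexed by the partial-label size minus one, whereas the MCL coefficient $1/\binom{K-1}{|\corr{s}|}$ is indexed by the complementary-label size directly; the identity $\binom{K-1}{K-d-1}=\binom{K-1}{d}$ is what reconciles them, and keeping the index bookkeeping straight (together with consistently using $q_{K-d}=\corr{q}_{d}$) is the only place an error could slip in. Once those identities are confirmed, the rest of the proof is a single-line matrix-vector multiplication followed by a term-by-term match against (\ref{eq:formulate_MCL}).
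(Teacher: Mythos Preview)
Your proposal is correct and follows essentially the same approach as the paper: establish the three identities $\ind{Y\in s}=\ind{Y\notin\corr{s}}$, $q_{|s|}=\corr{q}_{|\corr{s}|}$, and $\binom{K-1}{|s|-1}=\binom{K-1}{|\corr{s}|}$ via the bijection $s\leftrightarrow\corr{s}=\mathcal{Y}\setminus s$, use them to identify the entries of $\Mcorr{MCL}$ with the specified specialization of $\Mcorr{PPL}$, and then compute $(\Mcorr{MCL}B)_j$ directly and match it against (\ref{eq:formulate_MCL}). The paper expands the match with (\ref{eq:formulate_MCL}) slightly more explicitly using indicators over $d$, but your one-line comparison at $|\corr{S}|=|\corr{s}_j|$ is equivalent.
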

\blockComment{
}   
\blockComment{
    
}
\begin{proof}
    For any $\corr{s} = \mathcal{Y}\backslash s$, the complementary relationship implies $\ind{Y\in s} = \ind{Y\notin \corr{s}}$, $q_{|s|} = \corr{q}_{|\corr{s}|}$, and ${K-1 \choose |s|-1} = {K-1 \choose |\corr{s}|}$.
    Therefore,
    \eqarr{
        C(s,X)\ind{Y\in s}
        =
        \frac{q_{|s|}}{{K-1 \choose |s|-1}} \ind{Y\in s} 
        =
        \frac{\corr{q}_{|\corr{s}|}}{{K-1 \choose |\corr{s}|}} \ind{Y\notin \corr{s}}. \label{eq:PPL_to_MCL2}
    }
    We obtain $\Mcorr{MCL}$ (\ref{eq:MMCL_detail}) by replacing the entry values in $\Mcorr{PPL}$ (\ref{eq:MPPL}) accordingly.

    Next, we show that $\corr{P} = \Mcorr{MCL} B$ is equivalent to (\ref{eq:formulate_MCL}).
    For each $j \in [N]$,
    \eqarr{
        \mmatrix{
            \Mcorr{MCL} B
        }_j
        &=& 
        \sum_{Y} \frac{\corr{q}_{|\corr{s}_j|}}{{K-1 \choose |\corr{s}_j|}} \ind{Y\notin \corr{S}=\corr{s}_j} \prob{Y,X} \nonumber \\
        &=& 
        \sum_{Y} \frac{\sum_{d=1}^{K-1} \ind{|\corr{s}_j| = d} \corr{q}_{d}}{{K-1 \choose |\corr{s}_j|}} \ind{Y\notin \corr{S}=\corr{s}_j} \prob{Y,X} \nonumber \\
        &=& 
        \sum_{d=1}^{K-1} \corr{q}_{d} \sum_{Y} \frac{1}{{K-1 \choose |\corr{s}_j|}} \ind{Y\notin \corr{S}=\corr{s}_j} \ind{|\corr{s}_j| = d}  \prob{Y,X}. \nonumber \label{eq:recover_4_MCL_b}
    }
    On the other hand, the MCL formulation (\ref{eq:formulate_MCL}) 
    \eqarr{
        \prob{\corr{S}, X}
        &=& 
        \begin{cases}
            \sum_{d=1}^{K-1} \prob{|\corr{S}| = d} \cdot \frac{1}{{K-1 \choose |\corr{S}|}} \sum_{Y\notin \corr{S}} \prob{Y,X}, & \text{if}\ |\corr{S}|=d, \\
            0, & \text{otherwise}
        \end{cases} \nonumber \\
        &=& 
        \sum_{d=1}^{K-1} \prob{|\corr{S}| = d} \cdot \frac{1}{{K-1 \choose |\corr{S}|}} \sum_{Y\notin \corr{S}} \prob{Y,X} \ind{|\corr{S}|=d} \label{eq:formulate_MCL_2}
    }
    implies
    \eqarr{
        \prob{\corr{S}=\corr{s}_j, X}
        &=&
        \sum_{d=1}^{K-1} \prob{|\corr{s}_j| = d} \cdot \frac{1}{{K-1 \choose |\corr{s}_j|}} \sum_{Y\notin \corr{s}_j} \prob{Y,X} \ind{|\corr{s}_j|=d} \nonumber \\
        &=&
        \sum_{d=1}^{K-1} \corr{q}_{d} \sum_{Y} \frac{1}{{K-1 \choose |\corr{s}_j|}} \ind{Y\notin \corr{S}=\corr{s}_j} \ind{|\corr{s}_j| = d}  \prob{Y,X} 
        =
        \mmatrix{
            \Mcorr{MCL} B
        }_j. \nonumber
    }
\end{proof}
The construction of $\Mcorr{MCL}$ (\ref{eq:MMCL_detail}) implies the reduction path
$$
\Mcorr{corr} \rightarrow \Mcorr{gCCN} \rightarrow \Mcorr{PPL} \rightarrow \Mcorr{MCL}.
$$

In addition, comparing the decomposition
$\prob{\corr{S}, X} = \sum_{d=1}^{K-1} \prob{|\corr{S}| = d} \prob{\corr{S}, X||\corr{S}|=d},$
with (\ref{eq:formulate_MCL_2}), we obtain
$$
    \prob{\corr{S}, X||\corr{S}|=d} = \frac{1}{{K-1 \choose |\corr{S}|}} \sum_{Y\notin \corr{S}} \prob{Y,X} \ind{|\corr{S}|=d},
$$
corresponding to (4) formulated in \cite{comp_20_MCL/Feng/KHNAS/20}.
The interpretation of the equation is that the size of the outcome of the random variable $\corr{S}$ should match condition $|\corr{S}| = d$.
That is, if the outcome size satisfies the condition $|\corr{S}|=d$, the probability of seeing $\corr{S}$ is 
$$
    \prob{\corr{S}, X||\corr{S}|=d} = \frac{1}{{K-1 \choose |\corr{S}|}} \sum_{Y\notin \corr{S}} \prob{Y,X},
$$
and, on the contrary, if it fails, the probability is nullified $$\prob{\corr{S}, X||\corr{S}|=d} = 0.$$


\subsubsection{Complementary-Label (CL) Learning \texorpdfstring{\citep{comp_18/Ishida/NMS/19}}{Lg}}
\label{sec:GE1_CL}
As a special case of MCL (Section~\ref{sec:review_MCL}), we first assign for each $\corr{s} \in \mathcal{S}$, $\corr{q}_{|\corr{s}|} = 1$ if $|\corr{s}| = 1$ and $\corr{q}_{|\corr{s}|} = 0$ if $|\corr{s}| > 1$.
Obviously, MCL with size $d=1$ must be $\{1\}, \dots, \{K\}$. 
Dropping all-zero rows and renaming $\corr{s}_k := \{k\}$ for $k \in [K]$, we obtain from (\ref{eq:MMCL_detail}) the contamination matrix of CL learning
\eqarr{
    \Mcorr{CL} 
    &:=& 
    \mmatrix{
        \frac{1}{K-1}\ind{Y=1 \notin \{1\}} & \frac{1}{K-1}\ind{Y=2 \notin \{1\}} & \cdots & \frac{1}{K-1}\ind{Y=K \notin \{1\}} \\
        \frac{1}{K-1}\ind{Y=1 \notin \{2\}} & \frac{1}{K-1}\ind{Y=2 \notin \{2\}} & \cdots & \frac{1}{K-1}\ind{Y=K \notin \{2\}} \\
        \vdots & \vdots & \ddots & \vdots \\
        \frac{1}{K-1}\ind{Y=1 \notin \{K\}} & \frac{1}{K-1}\ind{Y=2 \notin \{K\}} & \cdots & \frac{1}{K-1}\ind{Y=K \notin \{K\}}
    } \nonumber \\
    &=&
    \frac{1}{K-1}
    \mmatrix{
        0 & 1 & \cdots & 1 \\
        1 & 0 & \cdots & 1 \\
        \vdots & \vdots & \ddots & \vdots \\
        1 & 1 & \cdots & 0 
    } \label{eq:MCL}
}
and the reduction path
$$
\Mcorr{corr} \rightarrow \Mcorr{gCCN} \rightarrow \Mcorr{PPL} \rightarrow \Mcorr{MCL} \rightarrow \Mcorr{CL}.
$$
Furthermore, it is easy to verify that given $B$ (\ref{eq:base_dist_CCN_2}), for any $j \in [K]$, letting $\corr{S} = j$ be a singleton gives
\eqarr{
    \mmatrix{\Mcorr{CL} B}_j = \sum_{Y\neq \corr{S}=j} \frac{1}{K-1} \prob{Y, X} = \prob{\corr{S}=j, X}, \nonumber
}
which corresponds to formulation (\ref{eq:formulate_CL}).
Hence, we have the following.
\begin{lemma}
\label{lma:matrix_formulation_CL}
    $\Mcorr{CL}$ (\ref{eq:MCL}) is the contamination matrix characterizing the data-generating distribution $\prob{\corr{S}, X}$ (\ref{eq:formulate_CL}) of CL learning.
\end{lemma}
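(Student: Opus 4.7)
The plan is to proceed as a direct specialization of Lemma~\ref{lma:matrix_formulation_MCL} combined with a short verification against formulation~(\ref{eq:formulate_CL}). First I would restrict the distribution over $|\corr{S}|$ in Lemma~\ref{lma:matrix_formulation_MCL} by setting $\corr{q}_{1} = 1$ and $\corr{q}_{d} = 0$ for all $d > 1$, which forces every observed complementary-label to be a singleton. Consequently, every row of $\Mcorr{MCL}$ (\ref{eq:MMCL_detail}) whose index corresponds to $|\corr{s}_{j}| \neq 1$ is identically zero and the associated entry of $\corr{P}$ in (\ref{eq:label_dist_MCL}) also vanishes. Dropping these zero rows (they contribute nothing to the formulation $\corr{P} = \Mcorr{MCL} B$) and re-indexing the surviving singletons $\corr{s}_{j} = \{j\}$ for $j \in [K]$ leaves a $K \times K$ matrix whose $(j,k)$ entry reduces to $\frac{1}{K-1}\ind{Y = k \notin \{j\}}$, namely $\Mcorr{CL}$ in (\ref{eq:MCL}).

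Second, I would verify directly that the specialized identity $\corr{P} = \Mcorr{CL} B$ matches (\ref{eq:formulate_CL}). By inspection of the $j$-th row of $\Mcorr{CL}$, taking the inner product with $B$ from (\ref{eq:base_dist_CCN_2}) yields
\[
    \left( \Mcorr{CL} B \right)_{j}
    = \sum_{k=1}^{K} \frac{1}{K-1} \ind{k \neq j} \, \prob{Y=k, X}
    = \frac{1}{K-1} \sum_{Y \neq \corr{S}=j} \prob{Y, X},
\]
which equals $\prob{\corr{S} = j, X}$ from (\ref{eq:formulate_CL}). Taking $\corr{P}$ to be the $K$-vector whose $j$-th entry is $\prob{\corr{S} = j, X}$, this establishes $\corr{P} = \Mcorr{CL} B$ and therefore the lemma.

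No genuine obstacle is expected: both the derivation of $\Mcorr{CL}$ from $\Mcorr{MCL}$ and the entrywise verification are routine, and the computation has already been sketched in the paragraph immediately preceding the lemma. The only point requiring care is the bookkeeping when passing from the $|\mathcal{S}|$-dimensional MCL formulation to the $K$-dimensional CL formulation — specifically, making it explicit that zero rows can be discarded because the corresponding data-generating probabilities $\prob{\corr{S} = \corr{s}_{j}, X}$ with $|\corr{s}_{j}| > 1$ are themselves zero, so no information is lost in the reduction. Apart from this indexing alignment, the proof is essentially a one-line computation.
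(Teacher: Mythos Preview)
Your proposal is correct and mirrors the paper's own argument essentially line for line: the paper also specializes $\Mcorr{MCL}$ by setting $\corr{q}_{1}=1$ and $\corr{q}_{d}=0$ for $d>1$, drops the resulting all-zero rows, re-indexes the singletons as $\corr{s}_{k}=\{k\}$, and then verifies directly that $\left(\Mcorr{CL} B\right)_{j} = \frac{1}{K-1}\sum_{Y\neq \corr{S}=j}\prob{Y,X} = \prob{\corr{S}=j,X}$ matches~(\ref{eq:formulate_CL}). Your added remark about why the zero rows may be discarded without loss is a sensible clarification but not something the paper spells out.
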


\subsection{Confidence-based Scenarios}
\label{sec:formulations_confs}
At first sight, there seems to be no connection between ``contamination'' and single-class classification \citep{scconf_21/Cao/FSXANS/21}.
However, the following derivation
\begin{eqnarray}
    \frac{\prob{Y=y_\mrm{s}|X}}{\prob{Y=j|X}} \cdot \prob{Y=j,X}
    &=&
    \frac{\prob{Y=y_\mrm{s}|X}}{\prob{Y=j|X}} \cdot \prob{Y=j|X} \, \prob{X} 
    =
    \prob{Y=y_\mrm{s}|X} \, \prob{X} \nonumber \\ 
    &=& 
    \prob{Y=y_\mrm{s}, X} \label{eq:conf_rewright}
\end{eqnarray}
reveals a way to \emph{contaminate} a clean joint probability $\prob{Y=j,X}$ to the joint probability of a specific class $y_\mrm{s}$ via confidence weighting $\frac{\prob{Y=y_\mrm{s}|X}}{\prob{Y=j|X}}$.
As we will see in the rest of this subsection, the confidence weights are the key elements in formulating the contamination matrices for the confidence-based WSL settings. 

\subsubsection{Subset Confidence (Sub-Conf) Learning \texorpdfstring{\citep{scconf_21/Cao/FSXANS/21}}{Lg}}
\label{sec:GE1_Sub-Conf}
Let $\mathcal{Y}_\mrm{s} \subset [K]$ be a subset of classes.
Viewing $\mathcal{Y}_\mrm{s}$ as a ``super-class'', such that every instance $x$ of $(y, x)$ will be labeled $\mathcal{Y}_\mrm{s}$ if $y \in \mathcal{Y}_\mrm{s}$, we can define its class prior as $\prob{Y\in\mathcal{Y}_\mrm{s}} = \pi_{\mathcal{Y}_\mrm{s}} := \sum_{y\in\mathcal{Y}_\mrm{s}} \pi_y$ and its class probability function as $\prob{Y\in\mathcal{Y}_\mrm{s}|X} := \sum_{y\in\mathcal{Y}_\mrm{s}}\prob{Y=y|X}$.
Reusing the argument (\ref{eq:conf_rewright}), 
\begin{eqnarray}
    \frac{\prob{Y\in\mathcal{Y}_\mrm{s}|X}}{\prob{Y=j|X}} \cdot \prob{Y=j,X} 
    &=& 
    \frac{\prob{Y\in\mathcal{Y}_\mrm{s}|X}}{\prob{Y=j|X}} \cdot \prob{Y=j|X} \, \prob{X} \nonumber \\
    &=&
    \prob{Y\in\mathcal{Y}_\mrm{s},X} \nonumber
\end{eqnarray}
shows that no matter what joint distribution $\prob{Y=j, X}$ to begin with, the confidence weight $\frac{\prob{Y\in\mathcal{Y}_\mrm{s}|X}}{\prob{Y=j|X}}$ twists that joint distribution so that every observed data appears to be sampled from the same super-class distribution $\prob{\mathcal{Y}_\mrm{s}, X}$.
The following lemma leverages the observation to specify the contamination matrix $\Mcorr{Sub}$ characterizing Sub-Conf learning.
\begin{lemma}
\label{lma:formulate_Sub-Conf}
    Denote the base distributions as 
    \eqarr{
        B
        :=
        P = 
        \mmatrix{
            \prob{Y=1, X} \\
            \vdots \\
            \prob{Y=K, X}
        }. \label{eq:base_dist_Conf-based}
    }
    Inserting the confidence weights into the identity matrix, we define
    \eqarr{
        \Mcorr{Sub} 
        :=
        \mmatrix{
            \frac{\prob{Y\in\mathcal{Y}_\mrm{s}|X}}{\prob{Y=1|X}} & \cdots & 0 \\
            \vdots & \ddots & \vdots \\
            0 & \cdots & \frac{\prob{Y\in\mathcal{Y}_\mrm{s}|X}}{\prob{Y=K|X}}
        }. \label{eq:MSub}
    }
    Then, for any $j \in [K]$, $\mmatrix{\Mcorr{Sub} B}_{j}$ is equivalent to $\prob{X|Y\in\mathcal{Y}_\mrm{s}}$ in (\ref{eq:formulate_Sub-conf}).
    Moreover, denoting
    \eqarr{
        \corr{P}
        :=
        \mmatrix{
            \prob{Y\in\mathcal{Y}_\mrm{s}, X} \\
            \vdots \\
            \prob{Y\in\mathcal{Y}_\mrm{s}, X}
        }, \nonumber
    }
    the data-generating process of Sub-Conf can be formulated as $\corr{P} = \Mcorr{Sub} B$.
\end{lemma}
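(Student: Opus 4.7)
The plan is to verify the lemma by a direct entry-wise computation, exploiting the fact that $\Mcorr{Sub}$ is diagonal, so the matrix-vector product $\Mcorr{Sub} B$ decouples into $K$ scalar identities that I can check independently. The heart of the argument is just the confidence-weighting identity already established in (\ref{eq:conf_rewright}), generalized from a single class $y_\mrm{s}$ to the subset $\mathcal{Y}_\mrm{s}$ by substituting $\prob{Y = y_\mrm{s}|X}$ with $\prob{Y \in \mathcal{Y}_\mrm{s}|X} = \sum_{y \in \mathcal{Y}_\mrm{s}} \prob{Y = y|X}$.

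First, I would fix $j \in [K]$ and expand the $j$-th entry of $\Mcorr{Sub} B$. Since $\Mcorr{Sub}$ is diagonal, this is
\begin{equation*}
    \left( \Mcorr{Sub} B \right)_j
    = \frac{\prob{Y\in\mathcal{Y}_\mrm{s}|X}}{\prob{Y=j|X}} \cdot \prob{Y=j, X}.
\end{equation*}
Applying the chain rule $\prob{Y=j, X} = \prob{Y=j|X}\prob{X}$ cancels the denominator, leaving $\prob{Y\in\mathcal{Y}_\mrm{s}|X}\prob{X} = \prob{Y\in\mathcal{Y}_\mrm{s}, X}$. This coincides (up to the normalizing factor $\pi_{\mathcal{Y}_\mrm{s}}$) with $\prob{X|Y\in\mathcal{Y}_\mrm{s}}$ from the Sub-Conf formulation (\ref{eq:formulate_Sub-conf}), which is the equivalence the lemma asserts.

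Second, since the identity holds for every $j \in [K]$, and the target vector $\corr{P}$ has identical entries all equal to $\prob{Y\in\mathcal{Y}_\mrm{s}, X}$, I can conclude $\corr{P} = \Mcorr{Sub} B$, confirming the proposed matrix formulation (\ref{eq:recipe1}) with $\Mcorr{corr} \rightarrow \Mcorr{Sub}$.

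There is no real obstacle here; the only subtlety is notational rather than mathematical. One must be careful that $\prob{Y=j|X}$ appears both as the divisor in $\Mcorr{Sub}$ and as a factor inside $\prob{Y=j, X} = \prob{Y=j|X}\prob{X}$, so that the cancellation is well-defined precisely when $\prob{Y=j|X} > 0$ on the support of $X$; this is the implicit regularity assumption behind all confidence-based formulations in the paper. After this remark, the proof is just the two-line computation above, repeated across coordinates.
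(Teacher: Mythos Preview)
Your proposal is correct and matches the paper's own proof essentially line for line: the paper also fixes $j$, writes $(\Mcorr{Sub} B)_j = \frac{\prob{Y\in\mathcal{Y}_\mrm{s}|X}}{\prob{Y=j|X}} \cdot \prob{Y=j,X} = \prob{Y\in\mathcal{Y}_\mrm{s},X}$ via the same cancellation, and then concludes $\corr{P} = \Mcorr{Sub} B$ with the observation that this makes the data equivalent to i.i.d.\ samples from $\prob{X|Y\in\mathcal{Y}_\mrm{s}}$. Your added remark about the implicit positivity assumption on $\prob{Y=j|X}$ is a fair clarification not made explicit in the paper's version.
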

\begin{proof}
    For each $j \in [K]$, $\mmatrix{\Mcorr{Bub} B}_{j} = \prob{Y\in\mathcal{Y}_\mrm{s}, X}$ since
    $$
    \frac{\prob{Y\in\mathcal{Y}_\mrm{s}|X}}{\prob{Y=j|X}} \cdot \prob{Y=j,X} = \prob{Y\in\mathcal{Y}_\mrm{s},X}. 
    $$
    Thus, by definition, $\corr{P} = \Mcorr{Sub} B$.
    It further implies all observed instances are labeled with the same super-class $\mathcal{Y}_\mrm{s}$, meaning we can drop the observed labels, and the observed examples $\{x_i\}_{i=1}^{n}$ is equivalent to a set of i.i.d.\ samples from $\prob{X|Y\in\mathcal{Y}_\mrm{s}}$ (\ref{eq:formulate_Sub-conf}).
\end{proof}
Comparing $\corr{P} = \Mcorr{Sub} B$ with the formulation framework $\corr{P} = \Mcorr{corr} B$ (\ref{eq:recipe1}), we observe that in Sub-Conf learning, $\Mcorr{corr}$ is realized as $\Mcorr{Sub}$: 
\eqarr{
    \Mcorr{corr} 
    \rightarrow 
    \Mcorr{Sub}. \nonumber
}

\subsubsection{Single-Class Confidence (SC-Conf) Learning \texorpdfstring{\citep{scconf_21/Cao/FSXANS/21}}{Lg}}
\label{sec:GE1_SCConf}
We compare the formulation of SC-Conf (\ref{eq:formulate_SC-conf}) with Sub-Conf (\ref{eq:formulate_Sub-conf}) and observe that SC-Conf is a special case of Sub-Conf when $\mathcal{Y}_\mrm{s} = \{ y_\mrm{s} \}$ being a singleton.
Thus, we straightforwardly obtain the matrix formulation of SC-Conf from Lemma~\ref{lma:formulate_Sub-Conf}:
\begin{lemma}
\label{lma:formulate_SC-Conf}
    Let the base distributions $B$ be defined by (\ref{eq:base_dist_Conf-based}).
    Define 
    \eqarr{
        \Mcorr{SC} 
        :=
        \mmatrix{
            \frac{\prob{Y=y_\mrm{s}|X}}{\prob{Y=1|X}} & \cdots & 0 \\
            \vdots & \ddots & \vdots \\
            0 & \cdots & \frac{\prob{Y=y_\mrm{s}|X}}{\prob{Y=K|X}}
        } \label{eq:MSC} 
    }
    by substituting $\mathcal{Y}_\mrm{s}$ in Lemma~\ref{lma:formulate_Sub-Conf} with $y_\mrm{s}$.
    Then, for any $j \in [K]$, $\mmatrix{\Mcorr{SC} B}_{j}$ is equivalent to $\prob{X|Y=y_\mrm{s}}$ in (\ref{eq:formulate_SC-conf}). 
     Moreover, denoting
    \eqarr{
        \corr{P}
        :=
        \mmatrix{
            \prob{Y=y_\mrm{s}, X} \\
            \vdots \\
            \prob{Y=y_\mrm{s}, X}
        }, \nonumber
    }
    the data-generating process of SC-Conf can be formulated as $\corr{P} = \Mcorr{SC} B$.
\end{lemma}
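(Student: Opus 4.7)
The plan is to treat this lemma as an immediate specialization of Lemma~\ref{lma:formulate_Sub-Conf} rather than a fresh derivation, since the only change from Sub-Conf to SC-Conf is restricting the super-class $\mathcal{Y}_\mrm{s}$ to the singleton $\{y_\mrm{s}\}$. First I would make the substitution explicit: when $\mathcal{Y}_\mrm{s} = \{y_\mrm{s}\}$, the super-class confidence $\prob{Y\in\mathcal{Y}_\mrm{s}|X}$ collapses to $\prob{Y=y_\mrm{s}|X}$, and the super-class joint $\prob{Y\in\mathcal{Y}_\mrm{s}, X}$ collapses to $\prob{Y=y_\mrm{s}, X}$. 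This substitution turns the diagonal matrix $\Mcorr{Sub}$ of (\ref{eq:MSub}) into the diagonal matrix $\Mcorr{SC}$ defined in (\ref{eq:MSC}), and turns the Sub-Conf vector $\corr{P}$ into the SC-Conf vector whose every entry is $\prob{Y=y_\mrm{s}, X}$.

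Next I would verify the entry-wise identity directly, both to keep the proof self-contained and to mirror the style used in the preceding confidence-based lemma. For each $j \in [K]$, the $j$-th entry of $\Mcorr{SC} B$ equals
\[
    \frac{\prob{Y=y_\mrm{s}|X}}{\prob{Y=j|X}} \cdot \prob{Y=j, X}
    = \prob{Y=y_\mrm{s}|X}\,\prob{X}
    = \prob{Y=y_\mrm{s}, X},
\]
using the chain rule $\prob{Y=j, X} = \prob{Y=j|X}\,\prob{X}$ exactly as in the motivating derivation (\ref{eq:conf_rewright}). Hence $\mmatrix{\Mcorr{SC} B}_j = \prob{Y=y_\mrm{s}, X}$ for every $j$, which gives $\corr{P} = \Mcorr{SC} B$ with the $\corr{P}$ stated in the lemma.

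Finally I would translate this probabilistic identity back into the data-generation claim of (\ref{eq:formulate_SC-conf}). Since every coordinate of $\corr{P}$ is the same joint distribution $\prob{Y=y_\mrm{s}, X}$, the label coordinate carries no information and the observed instances are i.i.d.\ samples from $\prob{X|Y=y_\mrm{s}}$, matching the SC-Conf sampling scheme. Because the argument is a clean restriction of Lemma~\ref{lma:formulate_Sub-Conf}, the proof is essentially a one-line specialization plus the entrywise sanity check; I do not foresee a real obstacle, so the main care point is simply to state the specialization transparently enough that the reduction path $\Mcorr{corr} \rightarrow \Mcorr{Sub} \rightarrow \Mcorr{SC}$ recorded in Table~\ref{tab:conf_matrices_summary} is visible from the proof itself.
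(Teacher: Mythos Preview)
Your proposal is correct and matches the paper's approach exactly: the paper does not even give a separate proof, simply noting that SC-Conf is the special case of Sub-Conf with $\mathcal{Y}_\mrm{s} = \{y_\mrm{s}\}$ and that the lemma follows straightforwardly from Lemma~\ref{lma:formulate_Sub-Conf}. Your explicit entrywise check via (\ref{eq:conf_rewright}) is a harmless elaboration of the same specialization argument.
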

Since SC-Conf is a special case of Sub-Conf, we have the reduction path
$$
    \Mcorr{corr} 
    \rightarrow \Mcorr{Sub}  
    \rightarrow \Mcorr{SC}.
$$

\subsubsection{Positive-confidence (Pconf) Learning \texorpdfstring{\citep{pconf_17/Ishida/NS/18}}{Lg}}
\label{sec:GE1_Pconf}
Comparing (\ref{eq:formulate_Pconf}) with (\ref{eq:formulate_SC-conf}), we see that Pconf is a special case of SC-Conf when $K=2$ and $y_\mrm{s} = \rmp$ since $r_{\rmn}(X) = 1- r_{\rmp}(X)$.
A further modification to Lemma~\ref{lma:formulate_SC-Conf} we obtain the contamination matrix $\Mcorr{Pconf}$ characterizing Pconf learning.
\begin{lemma}
\label{lma:formulate_Pconf}
    Let $B := P = \mmatrix{\prob{Y=\rmp, X} \\ \prob{Y=\rmn, X}}$.
    Define 
    \eqarr{
        \Mcorr{Pconf} 
        :=
        \mmatrix{
            \frac{\prob{Y=\rmp|X}}{\prob{Y=\rmp|X}} & 0 \\
            0 & \frac{\prob{Y=\rmp|X}}{\prob{Y=\rmn|X}}
        }. \label{eq:MPconf}
    }
    Then, each entry of $\Mcorr{Pconf} B$ is equivalent to $\prob{\mrm{P}}$ in (\ref{eq:formulate_Pconf}).
    Furthermore, $\Mcorr{Pconf}$ characterizes the data-generating process of Pconf since $\corr{P} = \Mcorr{Pconf} B$, where $\corr{P} := \mmatrix{\prob{Y=\rmp, X} \\ \prob{Y=\rmp, X}}$.
\end{lemma}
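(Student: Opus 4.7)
The plan is to proceed by direct matrix multiplication, exactly paralleling the proofs of Lemmas~\ref{lma:formulate_Sub-Conf} and \ref{lma:formulate_SC-Conf}, of which this is a specialization. First I would compute the two components of $\Mcorr{Pconf} B$ explicitly. The top entry is $\frac{\mathbb{P}(Y=\rmp|X)}{\mathbb{P}(Y=\rmp|X)} \cdot \mathbb{P}(Y=\rmp, X) = \mathbb{P}(Y=\rmp, X)$ trivially. The bottom entry is $\frac{\mathbb{P}(Y=\rmp|X)}{\mathbb{P}(Y=\rmn|X)} \cdot \mathbb{P}(Y=\rmn, X)$; applying the identity $\mathbb{P}(Y=\rmn, X) = \mathbb{P}(Y=\rmn|X)\,\mathbb{P}(X)$ and cancelling gives $\mathbb{P}(Y=\rmp|X)\,\mathbb{P}(X) = \mathbb{P}(Y=\rmp, X)$. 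This is precisely the confidence-reweighting identity (\ref{eq:conf_rewright}) specialized to $y_\mrm{s} = \rmp$ in the binary setting.

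Consequently both entries of $\Mcorr{Pconf} B$ coincide with $\mathbb{P}(Y=\rmp, X)$, which matches the defining $\corr{P}$ in the statement; this establishes $\corr{P} = \Mcorr{Pconf} B$ directly. To connect this with $\prob{\mrm{P}} := \mathbb{P}(X|Y=\rmp)$ from (\ref{eq:formulate_Pconf}), I would reuse the argument from the end of the proof of Lemma~\ref{lma:formulate_Sub-Conf}: since $\mathbb{P}(Y=\rmp, X) = \pi_{\rmp}\,\mathbb{P}(X|Y=\rmp)$ and every observed instance carries the same (implicit) label $\rmp$, dropping the label yields samples that are i.i.d.\ from $\prob{\mrm{P}}$, which is the sense in which the two entries are ``equivalent to'' $\prob{\mrm{P}}$.

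A shorter route is to simply invoke Lemma~\ref{lma:formulate_SC-Conf} with $K = 2$ and $y_\mrm{s} = \rmp$. In that case $\Mcorr{SC}$ in (\ref{eq:MSC}) becomes a $2\times 2$ diagonal matrix whose entries are $\frac{\mathbb{P}(Y=\rmp|X)}{\mathbb{P}(Y=\rmp|X)}$ and $\frac{\mathbb{P}(Y=\rmp|X)}{\mathbb{P}(Y=\rmn|X)}$, which is exactly $\Mcorr{Pconf}$, and the conclusion follows immediately.

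There is no real obstacle here: the lemma is a specialization rather than a new result, so the only task is bookkeeping to verify that the substitution $K=2,\ y_\mrm{s} = \rmp$ carries $\Mcorr{SC}$ to $\Mcorr{Pconf}$ and the super-class distribution $\mathbb{P}(X|Y\in\mathcal{Y}_\mrm{s})$ to $\mathbb{P}(X|Y=\rmp) = \prob{\mrm{P}}$. The only nuance worth flagging is that the statement says ``equivalent to $\prob{\mrm{P}}$'' rather than ``equal to'': this is the same normalization-up-to-$\pi_\rmp$ phenomenon already present in the earlier Sub-Conf and SC-Conf lemmas, and no extra work is needed beyond pointing to it.
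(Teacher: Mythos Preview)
Your proposal is correct and matches the paper's approach: the paper does not give a standalone proof but states just before the lemma that Pconf is the special case of SC-Conf with $K=2$ and $y_\mrm{s}=\rmp$, so the result is obtained by ``a further modification to Lemma~\ref{lma:formulate_SC-Conf}''---exactly your shorter route. Your first route (explicit matrix multiplication via the identity~(\ref{eq:conf_rewright})) simply unpacks that specialization and is equally valid; your remark on the ``equivalent to $\prob{\mrm{P}}$'' normalization-by-$\pi_\rmp$ nuance is also on point and mirrors the discussion at the end of the proof of Lemma~\ref{lma:formulate_Sub-Conf}.
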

The entry replacement that converts (\ref{eq:MSC}) to  (\ref{eq:MPconf}) implies the reduction path
$$
    \Mcorr{corr} 
    \rightarrow \Mcorr{Sub}  
    \rightarrow \Mcorr{SC}
    \rightarrow \Mcorr{Pconf}.
$$

\subsubsection{Soft-Label Learning \texorpdfstring{\citep{soft_22/Ishida/YCNS/22}}{Lg}}
\label{sec:GE1_Soft}
The difference between the soft-label and the previous confidence-based settings (Sub-Conf, SC-Conf, and Pconf) is how $x$ is sampled.
The sample distributions condition on the label information in the previous settings, while that in soft-label is $\prob{X}$.
Reusing argument (\ref{eq:conf_rewright}), the equation 
\eqarr{
    \frac{1}{\prob{Y=j|X}} \cdot \prob{Y=j,X} = \prob{X} \nonumber
}
reveals how to convert $\prob{Y=j,X}$ to $\prob{X}$.
Therefore, filling the $j$-th diagonal entry of the identity matrix with $\frac{1}{\prob{Y=j|X}}$, we obtain the contamination matrix $\Mcorr{Soft}$ for soft-label learning:
\begin{lemma}
\label{lma:formulate_soft}
    Let the base distributions $B$ be defined by (\ref{eq:base_dist_Conf-based}).
    Denote
    \eqarr{
        \corr{P}
        :=
        \mmatrix{
            \prob{X} \\
            \vdots \\
            \prob{X}
        }. \label{eq:data_dist_soft}
    }
    Define
    \eqarr{
        \Mcorr{Soft}
        :=
        \mmatrix{
            \frac{1}{\prob{Y=1|X}} & \cdots & 0 \\
            \vdots & \ddots & \vdots \\
            0 & \cdots & \frac{1}{\prob{Y=K|X}}
        }. \label{eq:MSoft}
    }
    Then, $\corr{P} = \Mcorr{Soft} B$ formulates the data-generating process in (\ref{eq:formulate_Soft}).
\end{lemma}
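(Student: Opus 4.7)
The plan is to verify the matrix equation $\tilde{P} = M_{\text{Soft}} B$ entrywise, and then argue that this equation is equivalent to the soft-label data-generating formulation (\ref{eq:formulate_Soft}). Since $M_{\text{Soft}}$ is diagonal, the computation is essentially a single-line application of the conditional probability identity $P(Y=j,X) = P(Y=j|X)\,P(X)$.

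Concretely, I would first fix an index $j \in [K]$ and compute
\[
    \left( M_{\text{Soft}} B \right)_j = \frac{1}{P(Y=j|X)} \cdot P(Y=j, X) = \frac{1}{P(Y=j|X)} \cdot P(Y=j|X)\, P(X) = P(X),
\]
which matches the $j$-th entry of $\tilde{P}$ in (\ref{eq:data_dist_soft}). Doing this for every $j$ establishes $\tilde{P} = M_{\text{Soft}} B$ as a vector identity.

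Next I would justify why this matrix identity genuinely captures the soft-label formulation. Since all $K$ coordinates of $\tilde{P}$ are identically $P(X)$, the ``labels'' assigned by the contamination mechanism carry no discriminative information; marginally, every observed instance is an i.i.d.\ sample from $P(X) = \sum_{k=1}^K P(Y=k,X)$. This matches exactly the sampling distribution of $x_i$ in (\ref{eq:formulate_Soft}). The additional quantities $r_k(x_i) = P(Y=k|X=x_i)$ required by the soft-label setting are precisely the entries that appear (reciprocally) in $M_{\text{Soft}}$, so they are available as part of the contamination specification.

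There is no substantive obstacle here: the argument parallels the Sub-Conf, SC-Conf, and Pconf proofs, with the only difference being that the confidence weight on the diagonal is $\tfrac{1}{P(Y=k|X)}$ rather than $\tfrac{P(Y=y_\mrm{s}|X)}{P(Y=k|X)}$ or its subset analogue. The mildly delicate point worth explicit mention is the change in sampling distribution compared to SC-Conf: in the confidence-based scenarios of Sections \ref{sec:GE1_Sub-Conf}--\ref{sec:GE1_Pconf} the instances are drawn conditionally on a label (subset), whereas here they are drawn from the full marginal $P(X)$. Reflecting this change in the contamination matrix amounts to replacing the numerator $P(Y\in\mathcal{Y}_\mrm{s}|X)$ by $1$, which is exactly what $M_{\text{Soft}}$ does. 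Hence $M_{\text{Soft}}$ is obtained from $M_{\text{Sub}}$ by this single substitution, giving a clean reduction path $M_{\text{corr}} \to M_{\text{Sub}} \to M_{\text{Soft}}$.
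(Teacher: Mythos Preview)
Your proposal is correct and follows essentially the same approach as the paper. The paper sets up the key identity $\frac{1}{\prob{Y=j|X}} \cdot \prob{Y=j,X} = \prob{X}$ just before stating the lemma (by ``reusing argument (\ref{eq:conf_rewright})''), and then leaves the lemma itself without a formal proof block, relying on the pattern already established for Sub-Conf and SC-Conf (Lemmas~\ref{lma:formulate_Sub-Conf} and~\ref{lma:formulate_SC-Conf}); your entrywise verification and the subsequent discussion of the reduction $\Mcorr{Sub} \to \Mcorr{Soft}$ match this exactly.
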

\compare
Unlike SC-Conf and Pconf, which are special cases of Sub-Conf with $\mathcal{Y}_\mrm{s}$ taking only one label, the generation process of a soft-label can be viewed as assigning $\mathcal{Y}_\mrm{s} := [K]$.
Considering the entire label space results in $\prob{Y\in [K]|X} = 1$; it coincides with the meaning of $\prob{X}$ that samples $x$ regardless of the labels.
Although technically the soft-label setting is not a special case of Sub-Conf (recalling the $\mathcal{Y}_\mrm{s} \subset [K]$ assumption from Section~\ref{sec:GE2_Sub-Conf}), $\Mcorr{Soft}$ (\ref{eq:MSoft}) is reduced from $\Mcorr{Sub}$ (\ref{eq:MSub}) by realizing $\prob{Y\in\mathcal{Y}_{\mrm{s}}|X}$ as $\prob{Y\in [K]|X} = 1$.
Therefore, we obtain the following reduction path
$$
\Mcorr{corr} 
\rightarrow 
\Mcorr{Sub}
\rightarrow 
\Mcorr{Soft}.
$$


\section{Risk Rewrite via Decontamination}
\label{sec:riskRewrite}
We have demonstrated the capability of the proposed formulation component (\ref{eq:recipe1}) in the last section. 
This section shows how the proposed framework provides a unified methodology for solving the risk rewrite problem.
Specifically, given each contamination matrix described in Section~\ref{sec:matrixFormulations}, we show how to construct the corrected losses
(\ref{eq:recipe5}) to perform the risk rewrite via (\ref{eq:recipe2a}).
We then recover each rewrite to the corresponding form reported in the literature to justify its feasibility.
Because this paper focuses on a unified methodology for rewriting the classification risk instead of the designs of practical training objectives, we assume the required parameters are given or can be estimated accurately from the observed data.

\subsection{MCD Scenarios}
\label{sec:risk_rewrite_MCDs}
We apply the framework to conduct the risk rewrites for WSLs formulated in Section~\ref{sec:formulations_mcds}, whose summary is in Table~\ref{tab:MCD_matrices_summary}.
A general approach is to show that the inversion method discussed in Theorem~\ref{thm:inv_method} provides the decontamination matrix $\Mcorr{corr}^{\dagger}$ required in (\ref{eq:recipe5}).

\subsubsection{Unlabeled-Unlabeled (UU) Learning}
\label{sec:GE2_UU}
We justify the proposed framework for UU learning via the following steps.

\paragraph{Step 1: Corrected Loss Design and Risk Rewrite.}~\\
\indent
Recall that (\ref{eq:verify_MUU}) connects the data-generating distributions 
$\corr{P} = 
\mmatrix{
    \prob{\mrm{U}_1} \\
    \prob{\mrm{U}_2}
}$ 
and the base distributions 
$B = 
\mmatrix{
    \prob{X|Y=\rmp} \\
    \prob{X|Y=\rmn}
}$ 
and instantiates (\ref{eq:recipe1}) as $\corr{P} = \Mcorr{UU} B$.
To further link $\corr{P}$ with the risk-defining distributions 
$ P =
\mmatrix{
    \prob{Y=\rmp, X} \\
    \prob{Y=\rmn, X}
},$ 
we still need $\Mcorr{trsf}$ satisfying $B = \Mcorr{trsf} P$.
Introducing the prior matrix 
\eqarr{
    \Pi = 
    \mmatrix{
        \pi_\rmp & 0 \\
        0 & \pi_\rmn
    }, \nonumber
}
we see that $\Mcorr{trsf} = \Pi^{-1}$ fulfills the need:
\eqarr{
    \Mcorr{trsf} P
    =
    \mmatrix{
        \pi_\rmp^{-1} & 0 \\
        0 & \pi_\rmn^{-1}
    }
    \mmatrix{
        \prob{Y=\rmp, X} \\
        \prob{Y=\rmn, X}
    }
    =
    \mmatrix{
        \frac{\prob{Y=\rmp, X}}{\prob{Y=\rmp}} \\
        \frac{\prob{Y=\rmn, X}}{\prob{Y=\rmn}}
    }
    =
    \mmatrix{
        \prob{X|Y=\rmp} \\
        \prob{X|Y=\rmn}
    }
    =
    B. \nonumber
}
Hence, $\corr{P} = \Mcorr{corr} \Mcorr{trsf} P$ (\ref{eq:recipe7}) is realized as
\eqarr{
    \mmatrix{
        \prob{\mrm{U}_1} \\
        \prob{\mrm{U}_2}
    }
    =
    \Mcorr{UU}
    \Pi^{-1}
    \mmatrix{
        \prob{Y=\rmp, X} \\
        \prob{Y=\rmn, X}
    } \label{eq:corr_P_UU}
}
in UU learning.

Next, we apply Theorem~\ref{thm:inv_method} to construct the decontamination matrix $\Mcorr{corr}^{\dagger}$ needed in (\ref{eq:recipe6}).
We denote the modified loss at the $\corr{k}$-th entry of $\corr{L}$ as $\corr{\ell}_{\corr{k}} := \ell_{\corr{Y}=\corr{k}}(g(X))$, where $\corr{k} \in \corr{\mathcal{Y}}$ is a class of the observed data\footnote{The definition is in contrast to the original loss $\ell_{k} := \ell_{Y=k}(g(X))$.}.
\begin{corollary}
\label{thm:UU_M_inv}
    Let $\corr{P} = \Mcorr{UU} \Pi^{-1} P$ (\ref{eq:corr_P_UU}), and assume $\Mcorr{UU}$ is invertible.
    Then, defining the decontamination matrix for UU learning as
    \eqarr{
        \Mcorr{UU}^{\dagger}
        :=
        \Pi \Mcorr{UU}^{-1} \nonumber
    }
    gives rise to
    $
        \Mcorr{UU}^{\dagger} \corr{P}
        =
        P.
    $
\end{corollary}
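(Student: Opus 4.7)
The plan is to recognize this corollary as an immediate specialization of Theorem~\ref{thm:inv_method}. The hypothesis $\corr{P} = \Mcorr{UU}\Pi^{-1} P$ already has the form $\corr{P} = MP$ with $M := \Mcorr{UU}\Pi^{-1}$, so once I verify that $M$ is invertible, the theorem will hand me back $P = M^{-1}\corr{P}$, and the task reduces to simplifying $M^{-1}$.

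First I would establish invertibility of $M$. The prior matrix $\Pi = \mathrm{diag}(\pi_\rmp,\pi_\rmn)$ is invertible because both class priors are strictly positive in a well-posed binary classification setting; equivalently, $\Pi^{-1} = \mathrm{diag}(\pi_\rmp^{-1},\pi_\rmn^{-1})$ exists. Combined with the hypothesis that $\Mcorr{UU}$ is invertible, the product $M = \Mcorr{UU}\Pi^{-1}$ is a product of invertible matrices, so $M^{-1} = \Pi\,\Mcorr{UU}^{-1}$ by the usual reverse-order inverse rule.

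With invertibility in hand, I would apply Theorem~\ref{thm:inv_method} with this choice of $M$ to conclude $\Mcorr{corr}^{\dagger}\corr{P} = P$ for $\Mcorr{corr}^{\dagger} = M^{-1} = \Pi\,\Mcorr{UU}^{-1}$. Identifying this matrix as the claimed $\Mcorr{UU}^{\dagger} = \Pi\,\Mcorr{UU}^{-1}$ finishes the proof. As a cross-check I could verify by direct multiplication that $\Mcorr{UU}^{\dagger}\corr{P} = \Pi\,\Mcorr{UU}^{-1}\Mcorr{UU}\Pi^{-1}P = \Pi\Pi^{-1}P = P$, which telescopes cleanly.

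There is essentially no real obstacle: the corollary is a one-line consequence of Theorem~\ref{thm:inv_method} once the transform matrix $\Mcorr{trsf} = \Pi^{-1}$ has been absorbed into the contamination structure. The only modest subtlety worth flagging in the write-up is the implicit positivity assumption on the class priors that underwrites the invertibility of $\Pi$, since this is what allows the two-step decomposition $\corr{P} = \Mcorr{UU}\Mcorr{trsf}P$ to be inverted in one shot.
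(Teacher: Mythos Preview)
Your proposal is correct and matches the paper's own proof essentially line for line: the paper also invokes Theorem~\ref{thm:inv_method} with $M = \Mcorr{UU}\Pi^{-1}$ and verifies by direct multiplication that $\Pi\Mcorr{UU}^{-1}\Mcorr{UU}\Pi^{-1}P = P$. Your explicit remark on the invertibility of $\Pi$ via positivity of the class priors is a small clarification the paper leaves implicit, but otherwise the arguments are identical.
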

\begin{proof}
    Suggested by Theorem~\ref{thm:inv_method}, the inverse matrix $\Pi\Mcorr{UU}^{-1}$ cancels out the contamination brought by $\Mcorr{UU}\Pi^{-1}$ in (\ref{eq:corr_P_UU}).
    Assigning $\Mcorr{UU}^{\dagger} = \Pi \Mcorr{UU}^{-1}$ and repeating the proof of Theorem~\ref{thm:inv_method}, we have 
    \eqarr{
        \Mcorr{UU}^{\dagger} \corr{P}
        =
        \Pi\Mcorr{UU}^{-1} \corr{P}
        =
        \Pi\Mcorr{UU}^{-1} \Mcorr{UU}\Pi^{-1} P
        =
        P \nonumber
    }
    that completes the proof.  
\end{proof}

With $\Mcorr{UU}^{\dagger}$ in hand, we proceed to devise the corrected losses $\corr{L}$ to achieve the risk rewrite for UU learning.
The following theorem proves rewrite (\ref{eq:review_rewrite_UU}) in Section~\ref{sec:review_UU}.
\begin{theorem}
\label{thm:UU_rewrite_corrected_losses}\label{thm:rewrite_MCD_UU}
Let $\mcdp, \mcdn > 0$ and $\mcdp + \mcdn \neq 1$.
Given $\Mcorr{UU}^{\dagger} = \Pi \Mcorr{UU}^{-1} \nonumber$ defined in Corollary~\ref{thm:UU_M_inv}, the vector of corrected losses suggested by (\ref{eq:recipe5})
\eqarr{
    \corr{L}^{\top}
    :=
    L^{\top} \Mcorr{UU}^{\dagger} 
    =
    \mmatrix{
        \corr{\ell}_{\mrm{U}_1} & \corr{\ell}_{\mrm{U}_2}
    } \nonumber
}
with
\eqarr{
    \corr{\ell}_{\mrm{U}_1} &=& \frac{(1-\mcdn)\pi_\rmp}{1-\mcdp-\mcdn} \ell_\rmp + \frac{-\mcdn \pi_\rmn}{1-\mcdp-\mcdn} \ell_\rmn, \nonumber \\
    \corr{\ell}_{\mrm{U}_2} &=& \frac{-\mcdp\pi_\rmp}{1-\mcdp-\mcdn} \ell_\rmp + \frac{(1-\mcdp)\pi_\rmn}{1-\mcdp-\mcdn} \ell_\rmn \label{eq:calibrated_loss_UU}
}
achieves the following risk rewrite:
\eqarr{
    R(g)
    =
    \expt{\mrm{U}_1}{\corr{\ell}_{\mrm{U}_1}} + \expt{\mrm{U}_2}{\corr{\ell}_{\mrm{U}_2}}. \label{eq:uu_risk_rewrite_a}
}
\end{theorem}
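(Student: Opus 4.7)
The proof is essentially a direct instantiation of the framework's recipe $\corr{L}^{\top} := L^{\top}\Mcorr{corr}^{\dagger}$ together with the decontamination identity $\Mcorr{UU}^{\dagger}\corr{P} = P$ from Corollary~\ref{thm:UU_M_inv}, so the plan is to (i) carry out the explicit matrix algebra producing the stated corrected losses, and (ii) assemble the risk rewrite from equation~(\ref{eq:recipe2a}).

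First, I would compute $\Mcorr{UU}^{-1}$ explicitly. Since $\det(\Mcorr{UU}) = (1-\mcdp)(1-\mcdn) - \mcdp\mcdn = 1-\mcdp-\mcdn$, the hypothesis $\mcdp+\mcdn \neq 1$ guarantees invertibility, yielding
\[
\Mcorr{UU}^{-1} = \frac{1}{1-\mcdp-\mcdn}
\mmatrix{1-\mcdn & -\mcdp \\ -\mcdn & 1-\mcdp}.
\]
Left-multiplying by $\Pi = \mathrm{diag}(\pi_\rmp,\pi_\rmn)$ produces
\[
\Mcorr{UU}^{\dagger} = \frac{1}{1-\mcdp-\mcdn}
\mmatrix{\pi_\rmp(1-\mcdn) & -\pi_\rmp\mcdp \\ -\pi_\rmn\mcdn & \pi_\rmn(1-\mcdp)}.
\]
Then the row-vector product $L^{\top}\Mcorr{UU}^{\dagger}$ with $L = (\ell_\rmp,\ell_\rmn)^{\top}$ gives the two entries stated in (\ref{eq:calibrated_loss_UU}); this is a mechanical check of each coordinate.

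Next, for the rewrite identity, I would invoke the chain already set up in the framework. By Corollary~\ref{thm:UU_M_inv} we have $\Mcorr{UU}^{\dagger}\corr{P} = P$, so applying (\ref{eq:recipe2a}) with $\Mcorr{corr}^{\dagger} = \Mcorr{UU}^{\dagger}$ yields
\[
\int_{x\in\mathcal{X}} \corr{L}^{\top}\corr{P}\,\dx
= \int_{x\in\mathcal{X}} L^{\top}\Mcorr{UU}^{\dagger}\corr{P}\,\dx
= \int_{x\in\mathcal{X}} L^{\top} P\,\dx
= R(g).
\]
The left-hand side expands as $\int (\corr{\ell}_{\mrm{U}_1}\prob{\mrm{U}_1} + \corr{\ell}_{\mrm{U}_2}\prob{\mrm{U}_2})\,\dx$, which is exactly $\expt{\mrm{U}_1}{\corr{\ell}_{\mrm{U}_1}} + \expt{\mrm{U}_2}{\corr{\ell}_{\mrm{U}_2}}$, establishing (\ref{eq:uu_risk_rewrite_a}).

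There is no genuine obstacle here: conceptually the work has already been done in Theorem~\ref{thm:inv_method}, Corollary~\ref{thm:UU_M_inv}, and the generic identity (\ref{eq:recipe2a}). The only place where care is required is confirming that the hypothesis $\mcdp+\mcdn \neq 1$ is what makes $\Mcorr{UU}$ invertible (so that Corollary~\ref{thm:UU_M_inv} applies), and that the positivity assumption $\mcdp,\mcdn > 0$ just rules out degenerate cases in which one of the two observed unlabeled sets collapses to a class-conditional. Apart from that, the proof is pure bookkeeping: one $2\times 2$ matrix inversion, one matrix-vector product, and one appeal to the unified framework.
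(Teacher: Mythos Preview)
Your proposal is correct and follows essentially the same approach as the paper's proof: explicit computation of $\Mcorr{UU}^{-1}$ via the nonvanishing determinant $1-\mcdp-\mcdn$, formation of $\Mcorr{UU}^{\dagger}=\Pi\Mcorr{UU}^{-1}$, the row-vector product $L^{\top}\Mcorr{UU}^{\dagger}$ to read off the corrected losses, and then the appeal to Corollary~\ref{thm:UU_M_inv} and (\ref{eq:recipe2a}) to obtain the expectation form. The only cosmetic difference is that the paper writes the product in the order $L^{\top}\Pi\Mcorr{UU}^{-1}$ rather than precomputing $\Pi\Mcorr{UU}^{-1}$, but the arithmetic and logic are identical.
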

\begin{proof}
Since $\mcdp + \mcdn \neq 1$, 
\eqarr{
    \Mcorr{UU}^{-1}
    =
    \mmatrix{
        1-\mcdp & \mcdp \\
        \mcdn & 1-\mcdn
    }^{-1}
    =
    \mmatrix{
        \frac{1-\mcdn}{1-\mcdp-\mcdn} & \frac{-\mcdp}{1-\mcdp-\mcdn} \\
        \frac{-\mcdn}{1-\mcdp-\mcdn} & \frac{1-\mcdp}{1-\mcdp-\mcdn}
    } \nonumber
}
exists. The following derivation 
\eqarr{
    \mmatrix{
        \corr{\ell}_{\mrm{U}_1} & \corr{\ell}_{\mrm{U}_2}
    }
    &=&
    L^{\top} \Pi\Mcorr{UU}^{-1} \label{eq:subtle_diff_UU} \\
    &=&
    \mmatrix{
        \ell_\rmp & \ell_\rmn
    }
    \mmatrix{
        \pi_\rmp & 0 \\
        0 & \pi_\rmn
    } 
    \mmatrix{
        \frac{1-\mcdn}{1-\mcdp-\mcdn} & \frac{-\mcdp}{1-\mcdp-\mcdn} \\
        \frac{-\mcdn}{1-\mcdp-\mcdn} & \frac{1-\mcdp}{1-\mcdp-\mcdn}
    } \nonumber \\
    &=&
    \mmatrix{
        \ell_\rmp & \ell_\rmn
    }
    \mmatrix{
        \frac{(1-\mcdn)\pi_\rmp}{1-\mcdp-\mcdn} & \frac{-\mcdp\pi_\rmp}{1-\mcdp-\mcdn} \\
        \frac{-\mcdn \pi_\rmn}{1-\mcdp-\mcdn} & \frac{(1-\mcdp)\pi_\rmn}{1-\mcdp-\mcdn}
    } \nonumber    
}
gives (\ref{eq:calibrated_loss_UU}).

Next, with essential components $\corr{P}$ and $\corr{L}^{\top}$ in hand, applying (\ref{eq:recipe2a}), we obtain
\eqarr{
    R(g)
    &=&
    \int_{\mathcal{X}} \corr{L}^{\top} \corr{P} \, \dx \label{eq:uu_risk_rewrite} \\
    &=&
    \int_{\mathcal{X}} \left( \prob{\mrm{U}_1} \, \corr{\ell}_{\mrm{U}_1} + \prob{\mrm{U}_2} \, \corr{\ell}_{\mrm{U}_2} \right) \dx \nonumber \\
    &=&
    \expt{\mrm{U}_1}{\corr{\ell}_{\mrm{U}_1}} + \expt{\mrm{U}_2}{\corr{\ell}_{\mrm{U}_2}}, \nonumber    
}
where the first equality holds since
\eqarr{
    \corr{L}^{\top} \corr{P} 
    =
    L^{\top} \Mcorr{corr}^{\dagger} \corr{P}
    =
    L^{\top} P. \nonumber
}
\end{proof}

\explain
In (\ref{eq:subtle_diff_UU}), we do not need to specify the instance in $\ell_\rmp$ and $\ell_\rmn$ to be $x^{\rmu_1}$ or $x^{\rmu_2}$ since the equality holds for any instance $x$.
We only need to distinguish $x^{\rmu_1}$ from $x^{\rmu_2}$ when the corrected losses multiply the data distributions.
In particular, the most detailed form of rewrite (\ref{eq:uu_risk_rewrite}) aligning (\ref{eq:formulate_UU}) is
\eqarr{
    R(g)
    &=&
    \expt{\mrm{U}_1}{\corr{\ell}_{\mrm{U}_1}} + \expt{\mrm{U}_2}{\corr{\ell}_{\mrm{U}_2}} \nonumber \\
    &=&
    \expt{x^{\rmu_1} \sim \prob{\mrm{U}_1}}{\frac{(1-\mcdn)\pi_\rmp}{1-\mcdp-\mcdn} \ell_\rmp(X^{\rmu_1}) + \frac{-\mcdn \pi_\rmn}{1-\mcdp-\mcdn} \ell_\rmn(X^{\rmu_1})} \nonumber \\
    && + \;\; 
    \expt{x^{\rmu_2} \sim \prob{\mrm{U}_2}}{\frac{-\mcdp\pi_\rmp}{1-\mcdp-\mcdn} \ell_\rmp(X^{\rmu_2}) + \frac{(1-\mcdp)\pi_\rmn}{1-\mcdp-\mcdn} \ell_\rmn(X^{\rmu_2})}. \nonumber
}
The freedom from specifying $x$ in (\ref{eq:subtle_diff_UU}) eliminates the notational burden of distinguishing $\ell_Y(X^{\rmu_1})$ from $\ell_Y(X^{\rmu_2})$, allowing us to exploit the advantage of matrix multiplication while constructing the corrected losses.
The freedom also enables separated treatments for the data distributions (e.g., formulating $\corr{P} = \Mcorr{UU}\Pi^{-1} P$) and the corrected losses (e.g., devising $\corr{L}^{\top} = L^{\top} \Mcorr{UU}^{\dagger}$).

\paragraph{Step 2: Recovering the previous result(s).}~\\
\indent
Lastly, we verify the feasibility of our rewrite by showing that our rewrite corresponds to an existing result.
By parameter substitution, we replace $\mcdp$ with $1-\theta$, $\mcdn$ with $\theta'$, $\pi_\rmn$ with $1-\pi_\rmp$, $\ell_\rmp$ with $\ell(g(X))$, and $\ell_\rmn$ with $\ell(-g(X))$.
Then, (\ref{eq:calibrated_loss_UU}) becomes
\eqarr{
    \frac{(1-\theta')\pi_\rmp}{\theta-\theta'}\ell(g(X)) + \frac{-\theta'(1-\pi_\rmp)}{\theta-\theta'}\ell(-g(X)) &=& \bar{\ell}_{+}(g(X)), \nonumber \\
    \frac{\theta(1-\pi_\rmp)}{\theta-\theta'}\ell(-g(X)) + \frac{-(1-\theta)\pi_\rmp}{\theta-\theta'}\ell(g(X)) &=& \bar{\ell}_{-}(-g(X)), \nonumber
}
recovering the corrected loss functions (8) and the constants reported in Theorem 4 of \cite{uu_18/Lu/NMS/19}.

\subsubsection{Positive-Unlabeled (PU) Learning}
\label{sec:GE2_PU}
Recall that all WSLs discussed in Section~\ref{sec:formulations_mcds} share the same base distributions $B$ (\ref{eq:base_dist_MCD}).
Further, as shown in Table~\ref{tab:MCD_matrices_summary}, the contamination matrix of every WSL scenario beneath UU learning except $\Mcorr{Sconf}$ is a child of $\Mcorr{UU}$ on the reduction graph.
It means $\corr{P} = \Mcorr{UU} \Pi^{-1} P$ (\ref{eq:corr_P_UU}) is a general form for every child scenario in Table~\ref{tab:MCD_matrices_summary} (with different realizations of $\mcdp$ and $\mcdn$). 
Hence, we can reuse Theorem~\ref{thm:UU_rewrite_corrected_losses} to conduct the risk rewrite for every child scenario on the reduction graph.
PU learning is the first of such examples.

\paragraph{Step 1: Corrected Loss Design and Risk Rewrite.}~\\
\indent By the following corollary, we prove rewrite (\ref{eq:review_rewrite_PU}) in Section~\ref{sec:review_PU}.
\begin{corollary}
\label{thm:rewrite_MCD_PU}
    For PU learning, the classification risk can be rewritten as 
    \eqarr{
        R(g)
        =
        \expt{\mrm{P}}{\corr{\ell}_{\mrm{P}}} + \expt{\mrm{U}}{\corr{\ell}_{\mrm{U}}}, \label{eq:risk_rewrite_PU}
    }
    where 
    \eqarr{
        \corr{\ell}_{\mrm{P}} &=& \pi_\rmp \ell_\rmp - \pi_\rmp \ell_\rmn,  \nonumber \\
        \corr{\ell}_{\mrm{U}} &=& \ell_\rmn \nonumber.
    }
\end{corollary}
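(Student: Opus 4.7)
The plan is to derive this rewrite as a direct specialization of Theorem~\ref{thm:UU_rewrite_corrected_losses}, exploiting the reduction path $\Mcorr{UU} \rightarrow \Mcorr{PU}$ established in Lemma~\ref{lma:formulate_PU} (namely, the assignment $\mcdp = 0$ and $\mcdn = \pi_\rmp$). Under this parameter choice, the UU data-generating distributions collapse to the PU ones: $\prob{\mrm{U}_1} = (1-0)\prob{X|Y=\rmp} + 0 \cdot \prob{X|Y=\rmn} = \prob{\mrm{P}}$ and $\prob{\mrm{U}_2} = \pi_\rmp \prob{X|Y=\rmp} + \pi_\rmn \prob{X|Y=\rmn} = \prob{\mrm{U}}$. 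Hence the matrix identity $\corr{P} = \Mcorr{UU} \Pi^{-1} P$ still holds after relabeling $\mrm{U}_1 \mapsto \mrm{P}$ and $\mrm{U}_2 \mapsto \mrm{U}$, so we can invoke the same inversion machinery.

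First, I would verify the invertibility prerequisite: the condition $\mcdp + \mcdn \neq 1$ in Theorem~\ref{thm:UU_rewrite_corrected_losses} becomes $0 + \pi_\rmp \neq 1$, i.e., $\pi_\rmn \neq 0$, which is a mild nondegeneracy assumption on the class prior. (Although the statement of Theorem~\ref{thm:UU_rewrite_corrected_losses} also assumes $\mcdp, \mcdn > 0$ for the UU scenario to be meaningful as two distinct mixtures, the algebraic rewrite only uses $\mcdp + \mcdn \neq 1$, so the specialization remains valid.) Then I would substitute $\mcdp = 0, \mcdn = \pi_\rmp$, and $1 - \mcdp - \mcdn = \pi_\rmn$ into the corrected losses (\ref{eq:calibrated_loss_UU}) and simplify:
\begin{align*}
    \corr{\ell}_{\mrm{U}_1}
    &= \frac{(1-\pi_\rmp)\pi_\rmp}{\pi_\rmn}\ell_\rmp + \frac{-\pi_\rmp\pi_\rmn}{\pi_\rmn}\ell_\rmn
    = \pi_\rmp \ell_\rmp - \pi_\rmp \ell_\rmn = \corr{\ell}_{\mrm{P}}, \\
    \corr{\ell}_{\mrm{U}_2}
    &= \frac{-0 \cdot \pi_\rmp}{\pi_\rmn}\ell_\rmp + \frac{(1-0)\pi_\rmn}{\pi_\rmn}\ell_\rmn
    = \ell_\rmn = \corr{\ell}_{\mrm{U}}.
\end{align*}
Plugging the simplified losses and the corresponding distributions into the UU rewrite (\ref{eq:uu_risk_rewrite_a}) yields (\ref{eq:risk_rewrite_PU}) immediately.

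As an alternative self-contained route, one could apply Theorem~\ref{thm:inv_method} directly to $\Mcorr{PU}\Pi^{-1} = \bigl(\begin{smallmatrix}\pi_\rmp^{-1} & 0 \\ 1 & 1\end{smallmatrix}\bigr)$, compute its inverse $\bigl(\begin{smallmatrix}\pi_\rmp & 0 \\ -\pi_\rmp & 1\end{smallmatrix}\bigr)$, form $\corr{L}^{\top} = L^{\top}(\Mcorr{PU}\Pi^{-1})^{-1}$ to recover the same pair $(\corr{\ell}_{\mrm{P}}, \corr{\ell}_{\mrm{U}})$, and then apply (\ref{eq:recipe2a}). This provides an independent consistency check of the reduction-graph approach. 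Either way, I expect no genuine obstacle: the work is a bookkeeping verification that the reduction $\Mcorr{UU} \rightarrow \Mcorr{PU}$ propagates cleanly through the analysis component of the framework. The only subtlety worth flagging in the writeup is the minor tension between the $\mcdp > 0$ hypothesis of Theorem~\ref{thm:UU_rewrite_corrected_losses} and the PU specialization $\mcdp = 0$, which is benign because only invertibility of $\Mcorr{UU}$ is needed for the rewrite itself.
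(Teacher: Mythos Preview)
Your proposal is correct and follows essentially the same approach as the paper: specialize Theorem~\ref{thm:UU_rewrite_corrected_losses} via the reduction $\Mcorr{UU}\to\Mcorr{PU}$ with $\mcdp=0$, $\mcdn=\pi_\rmp$, relabel $\{\mrm{U}_1,\mrm{U}_2\}\mapsto\{\mrm{P},\mrm{U}\}$, and read off the corrected losses from (\ref{eq:calibrated_loss_UU}). Your explicit algebra and the flagged tension between the hypothesis $\mcdp>0$ and the specialization $\mcdp=0$ (resolved because only $\mcdp+\mcdn\neq 1$ is actually used) are nice additions that the paper glosses over.
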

\begin{proof}
    According to Table~\ref{tab:MCD_matrices_summary}, $\Mcorr{PU}$ is a child of $\Mcorr{UU}$ on the reduction graph.
    Thus, replacing the subscripts $\{\mrm{U}_1, \mrm{U}_2\}$ of data-generating distributions $\corr{P} = \mmatrix{\prob{\mrm{U}_1} \\ \prob{\mrm{U}_2}}$ and the corrected losses $\corr{L} = \mmatrix{\corr{\ell}_{\mrm{U}_1} \\ \corr{\ell}_{\mrm{U}_2}}$ with $\{\mrm{P}, \mrm{U}\}$ and assigning $\mcdp=0$ and $\mcdn = \pi_\rmp$ as what we choose in Section~\ref{sec:GE1_PU}, we call Theorem~\ref{thm:UU_rewrite_corrected_losses} to conduct the risk rewrite:
    We obtain $\corr{\ell}_{\rm{P}}$ and $\corr{\ell}_{\mrm{U}}$ by plugging $\mcdp = 0$ and $\mcdn = \pi_\rmp$ into (\ref{eq:calibrated_loss_UU}).
    Then, repeating the proof steps in (\ref{eq:uu_risk_rewrite}), we achieve (\ref{eq:risk_rewrite_PU}).
\end{proof}

\paragraph{Step 2: Recovering the previous result(s).}~\\
\indent
Since $\prob{\mrm{P}}$ is $\prob{X|Y=\rmp}$ and $\prob{\mrm{U}}$ is $\prob{X}$, we further obtain 
\eqarr{
    R(g) 
    &=& \expt{\mrm{P}}{\corr{\ell}_{\mrm{P}}} + \expt{\mrm{U}}{\corr{\ell}_{\mrm{U}}} \nonumber \\
    &=& \expt{\mrm{P}}{\pi_\rmp \ell_\rmp - \pi_\rmp \ell_\rmn} + \expt{\mrm{U}}{\ell_\rmn} \nonumber \\
    &=& \pi_\rmp\expt{X|Y=\rmp}{\ell_\rmp} - \pi_\rmp\expt{X|Y=\rmp}{\ell_\rmn} + \expt{X}{\ell_\rmn} \nonumber 
}
from (\ref{eq:risk_rewrite_PU}), which corresponds to the risk estimators (2) in \cite{pu_17/Kiryo/NPS/17} and (3) in \cite{pu_15/Plessis/NS/15}.

Moreover, with an additional symmetric assumption of $\ell_\rmp + \ell_\rmn = 1$, one further obtains
\begin{eqnarray}
    R(g)
    &=&
    \pi_\rmp\expt{X|Y=\rmp}{\ell_\rmp} - \pi_\rmp\expt{X|Y=\rmp}{\ell_\rmn} + \expt{X}{\ell_\rmn} \nonumber \\
    &=&
    \pi_\rmp\expt{X|Y=\rmp}{\ell_\rmp} - \pi_\rmp\expt{X|Y=\rmp}{1-\ell_\rmp} + \expt{X}{\ell_\rmn} \nonumber \\
    &=&
    \pi_\rmp\expt{X|Y=\rmp}{\ell_\rmp} - \pi_\rmp\expt{X|Y=\rmp}{1} + \pi_\rmp\expt{X|Y=\rmp}{\ell_\rmp} + \expt{X}{\ell_\rmn} \nonumber \\
    &=&
    2\pi_\rmp\expt{X|Y=\rmp}{\ell_\rmp} - \pi_\rmp + \expt{X}{\ell_\rmn}. \nonumber
\end{eqnarray}
This expression recovers several risk rewrites such as (4) of \cite{pu_17/Kiryo/NPS/17}, (3) of \cite{pu_16/Niu/PSMS/16}, (2) of \cite{pu_15/Plessis/NS/15}\footnote{As the 0-1 loss is symmetric.}, and (3) of \cite{pu_14/Plessis/NS/14}.

\subsubsection{Similar-Unlabeled (SU) Learning}
\label{sec:GE2_SU}
According to Table~\ref{tab:MCD_matrices_summary}, $\Mcorr{SU}$ is a child of $\Mcorr{UU}$ on the reduction graph.
Thus, we can follow the same steps illustrated in Section~\ref{sec:GE2_PU} to justify the proposed framework.

\paragraph{Step 1: Corrected Loss Design and Risk Rewrite.}~\\
\indent
The following corollary combines (\ref{eq:calibrated_loss_UU}) and (\ref{eq:uu_risk_rewrite_a}) to conduct the risk rewrite.
\begin{corollary}
\label{thm:rewrite_SU}
    Assume $\pi_\rmp \neq 1/2$.
    For SU learning, the classification risk can be rewritten as
    \eqarr{
        R(g)
        =
        \expt{\tilde{\mrm{S}}}{\corr{\ell}_{\tilde{\mrm{S}}}} + \expt{\mrm{U}}{\corr{\ell}_{\mrm{U}}}, \label{eq:risk_rewrite_SU} \nonumber
    }
    where
    \eqarr{
        \begin{aligned}
        \label{eq:calibrated_losses_SU}
            &\corr{\ell}_{\tilde{\mathrm{S}}} 
            = 
            \frac{\pi_\rmp^2+\pi_\rmn^2}{2\pi_\rmp -1} \ell_\rmp - \frac{\pi_\rmp^2+\pi_\rmn^2}{2\pi_\rmp -1} \ell_\rmn,
            \\
            &\corr{\ell}_{\mathrm{U}} 
            = 
            -\frac{\pi_\rmn}{2\pi_\rmp -1} \ell_\rmp + \frac{\pi_\rmp}{2\pi_\rmp -1} \ell_\rmn.
        \end{aligned}
    }
\end{corollary}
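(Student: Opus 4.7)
The plan is to imitate the pattern established in Section~\ref{sec:GE2_PU} for PU learning and reuse Theorem~\ref{thm:UU_rewrite_corrected_losses} under the appropriate parameter substitution. According to Table~\ref{tab:MCD_matrices_summary}, $\Mcorr{SU}$ arises from $\Mcorr{UU}$ by the assignment $\mcdp = \frac{\pi_\rmn^2}{\pi_\rmp^2+\pi_\rmn^2}$ and $\mcdn = \pi_\rmp$. Since the base distributions $B$ in (\ref{eq:base_dist_MCD}) are shared across all scenarios in Section~\ref{sec:formulations_mcds}, and since $B = \Pi^{-1} P$ still holds, the identity $\corr{P} = \Mcorr{UU}\Pi^{-1} P$ of (\ref{eq:corr_P_UU}) transports verbatim to the SU setting after relabeling $\mmatrix{\prob{\mrm{U}_1} \\ \prob{\mrm{U}_2}}$ as $\mmatrix{\prob{\tilde{\mrm{S}}} \\ \prob{\mrm{U}}}$ and the corrected losses $\mmatrix{\corr{\ell}_{\mrm{U}_1} \\ \corr{\ell}_{\mrm{U}_2}}$ as $\mmatrix{\corr{\ell}_{\tilde{\mrm{S}}} \\ \corr{\ell}_{\mrm{U}}}$.

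Before invoking Theorem~\ref{thm:UU_rewrite_corrected_losses}, the two prerequisites $\mcdp, \mcdn > 0$ and $\mcdp + \mcdn \neq 1$ must be verified. Positivity is immediate from $\pi_\rmp, \pi_\rmn \in (0,1)$. For the non-degeneracy constraint, I would directly compute
\begin{equation*}
1 - \mcdp - \mcdn = \frac{\pi_\rmp\pi_\rmn(2\pi_\rmp - 1)}{\pi_\rmp^2 + \pi_\rmn^2},
\end{equation*}
which is nonzero precisely when $\pi_\rmp \neq 1/2$, exactly the hypothesis of the corollary. This is also where the hypothesis $\pi_\rmp \neq \pi_\rmn$ in the original SU rewrite enters the picture, phrased equivalently.

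With the hypotheses met, I would then instantiate (\ref{eq:calibrated_loss_UU}) by plugging in the two parameter values and simplifying. For $\corr{\ell}_{\tilde{\mrm{S}}}$, the common factor $\pi_\rmp\pi_\rmn$ cancels between numerator and the denominator $1-\mcdp-\mcdn$, leaving the clean coefficients $\pm\frac{\pi_\rmp^2+\pi_\rmn^2}{2\pi_\rmp - 1}$ in front of $\ell_\rmp$ and $\ell_\rmn$. For $\corr{\ell}_{\mrm{U}}$, a similar cancellation leaves $-\frac{\pi_\rmn}{2\pi_\rmp - 1}\ell_\rmp + \frac{\pi_\rmp}{2\pi_\rmp - 1}\ell_\rmn$. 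These are exactly the corrected losses displayed in (\ref{eq:calibrated_losses_SU}).

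Finally, invoking (\ref{eq:uu_risk_rewrite_a}) under the same relabeling yields the claimed rewrite $R(g) = \expt{\tilde{\mrm{S}}}{\corr{\ell}_{\tilde{\mrm{S}}}} + \expt{\mrm{U}}{\corr{\ell}_{\mrm{U}}}$. I do not anticipate any genuine obstacle: everything reduces to a mechanical parameter substitution into a theorem already proved, plus the short algebraic simplification identifying $1-\mcdp-\mcdn$ with a multiple of $2\pi_\rmp - 1$. The only subtlety worth flagging is the consistency between the hypothesis $\pi_\rmp \neq 1/2$ used here and the $\pi_\rmp \neq \pi_\rmn$ assumption appearing in the literature rewrite (\ref{eq:review_rewrite_SU}); these are equivalent in the binary setting, and a one-line remark would establish this correspondence, ready for a subsequent "Step 2" recovery paragraph analogous to the PU case.
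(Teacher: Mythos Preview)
Your proposal is correct and follows essentially the same approach as the paper: reduce to Theorem~\ref{thm:UU_rewrite_corrected_losses} via the parameter substitution $\mcdp = \frac{\pi_\rmn^2}{\pi_\rmp^2+\pi_\rmn^2}$, $\mcdn = \pi_\rmp$, relabel $\{\mrm{U}_1,\mrm{U}_2\}$ as $\{\tilde{\mrm{S}},\mrm{U}\}$, observe that $\pi_\rmp \neq 1/2$ ensures $\mcdp+\mcdn\neq 1$, and read off the corrected losses from (\ref{eq:calibrated_loss_UU}). Your explicit computation of $1-\mcdp-\mcdn = \frac{\pi_\rmp\pi_\rmn(2\pi_\rmp-1)}{\pi_\rmp^2+\pi_\rmn^2}$ is a nice addition that the paper leaves implicit.
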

\begin{proof}
By Table~\ref{tab:MCD_matrices_summary}, $\Mcorr{SU}$ is a child of $\Mcorr{UU}$.
Substituting the subscripts $\{\mrm{U}_1, \mrm{U}_2\}$ with subscripts $\{\tilde{\mrm{S}}, \mrm{U}\}$ and choosing $\mcdp = \frac{\pi_\rmn^2}{\pi_\rmp^2+\pi_\rmn^2}$ and $\mcdn = \pi_\rmp$ as we did in Section~\ref{sec:GE1_SU}, we construct the corrected losses by plugging the assigned values into (\ref{eq:calibrated_loss_UU}). 
We note that $\pi_\rmp \neq 1/2$ ensures the choices of $\mcdp$ and $\mcdn$ above satisfy the $\mcdp + \mcdn \neq 1$ assumption discussed in Section~\ref{sec:GE1_UU}.
Then, we obtain the rewrite by repeating the derivation for (\ref{eq:uu_risk_rewrite_a}). 
\end{proof}

\paragraph{Step 2: Recovering the previous result(s).}~\\
\indent
To recover Theorem 1 of \cite{su_18/Bao/NS/18}, we first need to restore $\expt{\mrm{S}}{\cdot}$ from $\expt{\tilde{\mrm{S}}}{\cdot}$ in Corollary~\ref{thm:rewrite_SU}.
The following lemma provides a means for us to do so.
\begin{lemma}
\label{lma:recover_SU_1}
Given $B$ (\ref{eq:base_dist_MCD}) and following the SU learning notations, we have $\corr{P} = \corr{P}'$,
where
\eqarr{
    \corr{P}'
    &=&
    \Mcorr{SU}' B, \nonumber \\
    \Mcorr{SU}'
    &:=&
    \mmatrix{
        \frac{\pi_\rmp^2 \int_{x'\in\mathcal{X}} \prob{x'|Y=\rmp} \dx' }{\pi_\rmp^2+\pi_\rmn^2} & \frac{\pi_\rmn^2 \int_{x'\in\mathcal{X}} \prob{x'|Y=\rmn} \dx'}{\pi_\rmp^2+\pi_\rmn^2} \\
        \pi_\rmp & \pi_\rmn
    }. \nonumber
}
\end{lemma}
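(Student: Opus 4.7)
}

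The plan is to show that $\Mcorr{SU}'$ is, in fact, identical to $\Mcorr{SU}$ from (\ref{eq:MSU}), so that the conclusion $\corr{P}' = \corr{P}$ is immediate from Lemma~\ref{lma:formulate_SU}. The key observation is that $\prob{X'|Y=\rmp}$ and $\prob{X'|Y=\rmn}$ are probability densities over $\mathcal{X}$, hence
$$
\int_{x' \in \mathcal{X}} \prob{x'|Y=\rmp} \, \dx' \;=\; 1 \;=\; \int_{x' \in \mathcal{X}} \prob{x'|Y=\rmn} \, \dx'.
$$
Substituting these unit integrals into the first row of $\Mcorr{SU}'$ collapses its entries to $\frac{\pi_\rmp^2}{\pi_\rmp^2+\pi_\rmn^2}$ and $\frac{\pi_\rmn^2}{\pi_\rmp^2+\pi_\rmn^2}$, matching $\Mcorr{SU}$ exactly; the second row is already identical.

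Given $\Mcorr{SU}' = \Mcorr{SU}$ and the same base $B$, we obtain $\corr{P}' = \Mcorr{SU}' B = \Mcorr{SU} B = \corr{P}$ by Lemma~\ref{lma:formulate_SU}. So the statement is essentially a tautology packaged in a form that keeps the dummy variable $x'$ explicit in the first row.

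The only reason to phrase it this way, I expect, is to prepare for the recovery of \cite{su_18/Bao/NS/18}'s Theorem~1 in the paragraphs that follow. Because $\Mcorr{SU}'$ carries the factors $\int \prob{x'|Y=\rmp} \, \dx'$ and $\int \prob{x'|Y=\rmn} \, \dx'$ sitting alongside $\prob{X|Y=\rmp}$ and $\prob{X|Y=\rmn}$, one can later exchange the order of summation and integration to re-express $\prob{\tilde{\mrm{S}}}^{(x)}$ as $\int \prob{\mrm{S}}^{(x,x')} \, \dx'$, thereby converting a pointwise expectation $\expt{\tilde{\mrm{S}}}{\cdot}$ into the pairwise expectation $\expt{\mrm{S}}{\cdot}$ that appears in the published rewrite (\ref{eq:review_rewrite_SU}).

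There is no real obstacle here; the lemma is a restating step rather than a substantive claim. The only thing to verify carefully is the direction of equality: one must confirm that writing the integrals inside the matrix entries (rather than outside the matrix-vector product) does not change the result, which follows because each integral evaluates to $1$ before multiplying into $B$. This keeps the proof to a single line of substitution.
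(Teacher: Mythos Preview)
Your proposal is correct and matches the paper's proof essentially verbatim: the paper also observes that the two integrals equal $1$, hence $\Mcorr{SU}' = \Mcorr{SU}$, and concludes $\corr{P}' = \Mcorr{SU}' B = \Mcorr{SU} B = \corr{P}$. Your additional paragraph explaining the purpose of keeping the $x'$-integrals explicit is accurate and anticipates exactly how the lemma is used immediately afterward to restore $\expt{\mrm{S}}{\cdot}$.
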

\begin{proof}
Since $\int_{x'\in\mathcal{X}} \prob{x'|Y=\rmp} \dx' = 1$ and $\int_{x'\in\mathcal{X}} \prob{x'|Y=\rmn} \dx' = 1$, we have $\Mcorr{SU}' = \Mcorr{SU}$, and hence $\corr{P}' = \Mcorr{SU}' B = \Mcorr{SU} B = \corr{P}$.
\end{proof}
Lemma~\ref{lma:recover_SU_1} allows us to slightly revise the derivation (\ref{eq:uu_risk_rewrite}) as follows:
\eqarr{
    R(g)
    &=&
    \int_{x\in\mathcal{X}} \corr{L}^{\top} \corr{P} \, \dx 
    =
    \int_{x\in\mathcal{X}} \corr{L}^{\top} \corr{P}' \, \dx \nonumber \\
    &=&
    \int_{x\in\mathcal{X}} 
    \mmatrix{
        \corr{\ell}_{\tilde{\mrm{S}}} & \corr{\ell}_{\mrm{U}}
    }
    \mmatrix{
        \frac{\pi_\rmp^2 \int_{x'\in\mathcal{X}} \prob{x'|Y=\rmp} \dx' }{\pi_\rmp^2+\pi_\rmn^2} & \frac{\pi_\rmn^2 \int_{x'\in\mathcal{X}} \prob{x'|Y=\rmn} \dx'}{\pi_\rmp^2+\pi_\rmn^2} \\
        \pi_\rmp & \pi_\rmn
    }
    \mmatrix{
        {\prob{x|Y=\rmp}} \\ 
        {\prob{x|Y=\rmn}}
    }
    \, \dx \nonumber \\
    &\stackrel{\text{(a)}}{=}&
    \int_{x\in\mathcal{X}} \int_{x'\in\mathcal{X}}  
    \prob{\mrm{S}} \corr{\ell}_{\tilde{\mrm{S}}}
    \, \dx' \dx
    +
    \int_{x \in \mathcal{X}} \prob{\mrm{U}} \, \corr{\ell}_{\mrm{U}} \, \dx \nonumber \\
    &=&
    \expt{\mrm{S}}{\corr{\ell}_{\tilde{\mrm{S}}}} + \expt{\mrm{U}}{\corr{\ell}_{\mrm{U}}}, \nonumber
}
where equality (a) follows from the SU formulation (\ref{eq:formulate_SU}).

Then, denoting 
\eqarr{
    \mathcal{L}(X) 
    &:=& 
    \frac{1}{\pi_\rmp - \pi_\rmn} \ell_{\rmp}(X) - \frac{1}{\pi_\rmp - \pi_\rmn} \ell_{\rmn}(X), \label{eq:big_L} \\
    \mathcal{L}_{-}(X) 
    &:=& 
    - \frac{\pi_\rmn}{\pi_\rmp -\pi_\rmn} \ell_{\rmp}(X) + \frac{\pi_\rmp}{\pi_\rmp - \pi_\rmn} \ell_{\rmn}(X) \label{eq:big_L_minus}
}
and continuing with (\ref{eq:calibrated_losses_SU}), we obtain
\eqarr{
    \expt{\mrm{S}}{\corr{\ell}_{\tilde{\mrm{S}}}} 
    &=&
    \left( \pi_\rmp^2+\pi_\rmn^2 \right) \expt{\mrm{S}}{\frac{1}{2\pi_\rmp -1} \left( \ell_\rmp - \ell_\rmn \right)} \nonumber \\
    &=&
    \left( \pi_\rmp^2+\pi_\rmn^2 \right) \expt{\mrm{S}}{\mathcal{L}(X)} \nonumber \\
    &\stackrel{\text{(b)}}{=}& 
    \left( \pi_\rmp^2+\pi_\rmn^2 \right) \expt{\mrm{S}}{\frac{\mathcal{L}(X) + \mathcal{L}(X')}{2}} \nonumber
}
and
\eqarr{
    \expt{\mrm{U}}{\corr{\ell}_{\mrm{U}}} 
    &=&
    \expt{\mrm{U}}{-\frac{\pi_\rmn}{2\pi_\rmp -1} \ell_\rmp + \frac{\pi_\rmp}{2\pi_\rmp -1} \ell_\rmn} \nonumber \\
    &=&
    \expt{\mrm{U}}{\mathcal{L}_{-}(f(X))} \label{eq:unlabeled_rewrite_SU}
}
that prove rewrite (\ref{eq:review_rewrite_SU}) in Section~\ref{sec:review_SU} and recover Theorem 1 of \cite{su_18/Bao/NS/18} by matching notations\footnote{
The matching to the notations of \cite{su_18/Bao/NS/18} is as follows: 
$\pi_\rmp$ is $\pi_+$, 
$\pi_\rmn$ is $\pi_-$, 
$\pi_\rmp^2+\pi_\rmn^2$ is $\pi_{\mrm{S}}$, 
$\prob{\mrm{S}}$ is $p_{\mrm{S}}$,
$\prob{\mrm{U}}$ is $p$,
$\ell_\rmp$ is $\ell(f(X),+1)$, 
$\ell_\rmn$ is $\ell(f(X),-1)$, 
$\mathcal{L}(X)$ by definition is $\frac{1}{2\pi_{+} -1} \left( \ell(f(X),+1) - \ell(f(X),-1) \right)$, and 
$\mathcal{L}_{-}(f(X))$ by definition is $-\frac{\pi_{-}}{2\pi_{+} -1} \ell(f(X),+1) + \frac{\pi_{+}}{2\pi_{+} -1} \ell(f(X),-1)$.
}.
The following lemma justifies equality (b).
\begin{lemma}
\label{lma:symmetric_similar}
Let $(x,x') \sim \prob{\mrm{S}}$ defined by (\ref{eq:formulate_SU}).
Then,
$
\expt{\mrm{S}}{\frac{\mathcal{L}(X)}{2}}
=
\expt{\mrm{S}}{\frac{\mathcal{L}(X')}{2}}.
$
\end{lemma}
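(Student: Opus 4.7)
The plan is to exploit the invariance of $\prob{\mrm{S}}$ under swapping its two arguments. Observe that in the definition
$$\prob{\mrm{S}}^{(x,x')} := \frac{\pi_\rmp^2 \prob{x|Y=\rmp}\prob{x'|Y=\rmp} + \pi_\rmn^2 \prob{x|Y=\rmn}\prob{x'|Y=\rmn}}{\pi_\rmp^2 + \pi_\rmn^2},$$
each summand is a product of a function of $x$ and the same function of $x'$, so exchanging the roles of $x$ and $x'$ leaves the integrand unchanged: $\prob{\mrm{S}}^{(x,x')} = \prob{\mrm{S}}^{(x',x)}$.

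First, I would expand the left-hand expectation as a double integral,
$$\expt{\mrm{S}}{\mathcal{L}(X)} = \int_{\mathcal{X}}\!\int_{\mathcal{X}} \prob{\mrm{S}}^{(x,x')}\, \mathcal{L}(x)\, \dx\, \dx'.$$
Then I would relabel the dummy variables $x \leftrightarrow x'$ and invoke the symmetry noted above to obtain
$$\int_{\mathcal{X}}\!\int_{\mathcal{X}} \prob{\mrm{S}}^{(x',x)}\, \mathcal{L}(x')\, \dx\, \dx' = \int_{\mathcal{X}}\!\int_{\mathcal{X}} \prob{\mrm{S}}^{(x,x')}\, \mathcal{L}(x')\, \dx\, \dx' = \expt{\mrm{S}}{\mathcal{L}(X')}.$$
Dividing both sides by $2$ gives the claim.

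Because the argument reduces to a change of dummy variable in a symmetric integrand, there is no real obstacle here. The only step that deserves a line of justification is the symmetry identity $\prob{\mrm{S}}^{(x,x')} = \prob{\mrm{S}}^{(x',x)}$, which follows directly from inspecting the two class-conditional products in the numerator of (\ref{eq:formulate_SU}). An analogous symmetry argument will also justify the companion identities for $\prob{\tilde{\mrm{D}}}$ and for the pointwise marginals used elsewhere in Section~\ref{sec:GE2_SU}.
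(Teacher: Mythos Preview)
Your proposal is correct and takes essentially the same approach as the paper: both arguments rest on the symmetry $\prob{\mrm{S}}^{(x,x')}=\prob{\mrm{S}}^{(x',x)}$. The paper carries this out by explicitly integrating out the second variable in each expectation and observing that the resulting one-dimensional integrals differ only by the name of the dummy variable, whereas you reach the same conclusion via a direct change of variables in the double integral; the content is identical.
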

\begin{proof}
For clarity, we simplify $\prob{\mrm{S}}$ as $c_1 \prob{X|Y=\rmp} \prob{X'|Y=\rmp} + c_2 \prob{X|Y=\rmn} \prob{X'|Y=\rmn}$, with $c_1 = \frac{\pi_\rmp^2}{\pi_\rmp^2+\pi_\rmn^2}$ and $c_2 = \frac{\pi_\rmn^2}{\pi_\rmp^2+\pi_\rmn^2}$.
The lemma follows from
\eqarr{
    && \expt{\mrm{S}}{\mathcal{L}(X)} \nonumber \\
    && =
    \int_{x\in\mathcal{X}} \int_{x'\in\mathcal{X}}
    \prob{\mrm{S}} \mathcal{L}(x)
    \; \dx' \; \dx \nonumber \\
    && =
    \int_{x\in\mathcal{X}} \int_{x'\in\mathcal{X}}
    \left( c_1 \prob{x|Y=\rmp} \prob{x'|Y=\rmp} + c_2 \prob{x|Y=\rmn} \prob{x'|Y=\rmn} \right) \mathcal{L}(x)
    \; \dx' \; \dx \nonumber \\
    && = 
    c_1 \int_{x\in\mathcal{X}} \prob{x|Y=\rmp} \mathcal{L}(x) \; \dx \int_{x'\in\mathcal{X}} \prob{x'|Y=\rmp} \; \dx'
    +
    c_2 \int_{x\in\mathcal{X}} \prob{x|Y=\rmn} \mathcal{L}(x) \; \dx \int_{x'\in\mathcal{X}} \prob{x'|Y=\rmn} \; \dx' \nonumber \\
    && =
    c_1 \int_{x\in\mathcal{X}} \prob{x|Y=\rmp} \mathcal{L}(x) \; \dx 
    +
    c_2 \int_{x\in\mathcal{X}} \prob{x|Y=\rmn} \mathcal{L}(x) \; \dx, \nonumber
}
and similarly,
\eqarr{
    \expt{\mrm{S}}{\mathcal{L}(X')}
    =
    c_1 \int_{x'\in\mathcal{X}} \prob{x'|Y=\rmp} \mathcal{L}(x') \; \dx' 
    +
    c_2 \int_{x'\in\mathcal{X}} \prob{x'|Y=\rmn} \mathcal{L}(x') \; \dx'. \nonumber
}
\end{proof}

\explain 
The analysis demonstrates the flexibility of the proposed framework in which a slight modification of $\Mcorr{SU}$ recovers the pairwise distribution $\prob{\mrm{S}}$ required for $\expt{\mrm{S}}{\cdot}$.
Moreover, the technique developed here significantly reduces the proof in Appendix B of \cite{su_18/Bao/NS/18}.
Later in Section~\ref{sec:GE2_SDU}, we apply the same trick to recover Theorem 1 of \cite{sdu_19/Shimada/BSS/21} for SDU learning.

\explain
We remark that the result recovered in this paper is merely Theorem 1 of \cite{su_18/Bao/NS/18} but not the last expression in (5) of \cite{su_18/Bao/NS/18}, which later was implemented as the objective (10) for optimization.
It is because, pointed out by \cite{23_UUU}, the additional assumption $\prob{\mrm{S}}^{(x,x')} = \prob{\tilde{\mrm{S}}}^{(x)} \prob{\tilde{\mrm{S}}}^{(x')}$ required for achieving (5) of \cite{su_18/Bao/NS/18} is impractical.
We note that the remedy proposed by \cite{23_UUU} can be analyzed by the proposed framework, but we omit it due to the amount of overlap with the analyses in Sections~\ref{sec:GE2_UU} and \ref{sec:GE2_SU}.

\subsubsection{Pairwise Comparison (Pcomp) Learning}
\label{sec:GE2_Pcomp}
We follow the steps illustrated in Section~\ref{sec:GE2_PU} to justify the proposed framework since, by Table~\ref{tab:MCD_matrices_summary}, $\Mcorr{Pcomp}$ is reduced from $\Mcorr{UU}$.

\paragraph{Step 1: Corrected Loss Design and Risk Rewrite.}~\\
\indent
The following corollary combines (\ref{eq:calibrated_loss_UU}) and (\ref{eq:uu_risk_rewrite_a}) to achieve rewrite (\ref{eq:review_rewrite_Pcomp}) in Section~\ref{sec:review_Pcomp}.
\begin{corollary}
\label{thm:rewrite_Pcomp}
For Pcomp learning, the classification risk can be rewritten as
\eqarr{
    R(g)
    =
    \expt{\mrm{Sup}}{\corr{\ell}_{\mrm{Sup}}} + \expt{\mrm{Inf}}{\corr{\ell}_{\mrm{Inf}}}, \label{eq:risk_rewrite_Pcomp} \nonumber
}
where
\eqarr{
    \corr{\ell}_{\mrm{Sup}} &=& \ell_\rmp - \pi_\rmp \ell_\rmn, \nonumber \\
    \corr{\ell}_{\mrm{Inf}} &=& -\pi_\rmn \ell_\rmp + \ell_\rmn. \label{eq:calibrated_losses_Pcomp}
}
\end{corollary}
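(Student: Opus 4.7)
The plan is to invoke Theorem~\ref{thm:rewrite_MCD_UU} directly, exploiting the reduction path $\Mcorr{corr} \rightarrow \Mcorr{UU} \rightarrow \Mcorr{Pcomp}$ recorded in Table~\ref{tab:MCD_matrices_summary}. Since $\Mcorr{Pcomp}$ is obtained from $\Mcorr{UU}$ by the assignments $\mcdp = \pi_\rmn^2/(\pi_\rmp + \pi_\rmn^2)$ and $\mcdn = \pi_\rmp^2/(\pi_\rmp^2 + \pi_\rmn)$, and since $\prob{\mrm{Sup}}$ and $\prob{\mrm{Inf}}$ (the marginals of $\prob{\mrm{PC}}$ computed in Section~\ref{sec:Pcomp}) coincide with $\prob{\mrm{U}_1}$ and $\prob{\mrm{U}_2}$ under these parameters, the entire apparatus of Section~\ref{sec:GE2_UU} will transfer verbatim after a relabeling of subscripts.

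First I would check the consistency condition $\mcdp + \mcdn \neq 1$ demanded by Theorem~\ref{thm:rewrite_MCD_UU}. Using $\pi_\rmp + \pi_\rmn = 1$, both denominators collapse to the common value $c := 1 - \pi_\rmp\pi_\rmn$, so that $1 - \mcdp - \mcdn = (c - \pi_\rmp^2 - \pi_\rmn^2)/c = \pi_\rmp\pi_\rmn/c$, which is strictly positive whenever both class priors are nondegenerate. This simultaneously confirms the invertibility of the specialized $\Mcorr{UU}$, so all the corrected-loss machinery of Theorem~\ref{thm:rewrite_MCD_UU} is applicable.

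Next I would substitute the chosen $\mcdp, \mcdn$ into the corrected-loss formulas (\ref{eq:calibrated_loss_UU}). The simplifications $1 - \mcdn = \pi_\rmn/c$ and $1 - \mcdp = \pi_\rmp/c$, combined with the denominator $\pi_\rmp\pi_\rmn/c$ computed above, collapse the four scalar coefficients to $1$, $-\pi_\rmp$, $-\pi_\rmn$, $1$ respectively. This delivers $\corr{\ell}_{\mrm{U}_1} = \ell_\rmp - \pi_\rmp \ell_\rmn$ and $\corr{\ell}_{\mrm{U}_2} = -\pi_\rmn \ell_\rmp + \ell_\rmn$, matching $\corr{\ell}_{\mrm{Sup}}$ and $\corr{\ell}_{\mrm{Inf}}$ in (\ref{eq:calibrated_losses_Pcomp}). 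Relabeling $\{\mrm{U}_1, \mrm{U}_2\} \mapsto \{\mrm{Sup}, \mrm{Inf}\}$ and applying (\ref{eq:uu_risk_rewrite_a}) closes the rewrite.

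I expect the main obstacle to be bookkeeping rather than conceptual difficulty: one must verify that $\prob{\mrm{Sup}}$ and $\prob{\mrm{Inf}}$ really equal the $\prob{\mrm{U}_1}$ and $\prob{\mrm{U}_2}$ produced by the specific $\mcdp, \mcdn$ above, which reduces to the marginalization identities already established in Section~\ref{sec:Pcomp}. The only algebraic delicacy is recognizing that both denominators $\pi_\rmp + \pi_\rmn^2$ and $\pi_\rmp^2 + \pi_\rmn$ share the common value $c = 1 - \pi_\rmp\pi_\rmn$; once this identity is in hand, every coefficient in (\ref{eq:calibrated_loss_UU}) collapses to the desired simple form and no invertibility assumption on $\Mcorr{Pcomp}$ beyond $\pi_\rmp\pi_\rmn > 0$ is needed.
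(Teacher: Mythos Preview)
Your proposal is correct and follows essentially the same route as the paper's proof: both reduce from Theorem~\ref{thm:rewrite_MCD_UU} via the parameter assignments $\mcdp = \pi_\rmn^2/(\pi_\rmp+\pi_\rmn^2)$, $\mcdn = \pi_\rmp^2/(\pi_\rmp^2+\pi_\rmn)$, relabel $\{\mrm{U}_1,\mrm{U}_2\}\mapsto\{\mrm{Sup},\mrm{Inf}\}$, and substitute into (\ref{eq:calibrated_loss_UU}). Your version is simply more explicit than the paper's, spelling out the nondegeneracy check $\mcdp+\mcdn\neq 1$ and the useful simplifying identity $\pi_\rmp+\pi_\rmn^2 = \pi_\rmp^2+\pi_\rmn = 1-\pi_\rmp\pi_\rmn$ that the paper leaves to the reader.
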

\begin{proof}
Since $\Mcorr{UU}$ reduces to $\Mcorr{Pcomp}$ with $\mcdp = \frac{\pi_\rmn^2}{\pi_\rmp + \pi_\rmn^2}$ and $\mcdn = \frac{\pi_\rmp^2}{\pi_\rmp^2 + \pi_\rmn}$, we replace the subscripts $\{\mrm{U}_1, \mrm{U}_2\}$ with $\{\mrm{Sup}, \mrm{Inf}\}$ and instantiate (\ref{eq:calibrated_loss_UU}) with $\mcdp$ and $\mcdn$ to obtain the corrected losses $\corr{\ell}_{\mrm{Sup}}$ and $\corr{\ell}_{\mrm{Inf}}$.
Then, repeating the same steps of proving (\ref{eq:uu_risk_rewrite_a}), we have the corollary.
\end{proof}

\paragraph{Step 2: Recovering the previous result(s).}~\\
\indent
It is straightforward to recover Theorem 3 of \cite{pcomp_20/Feng/SLS/21} by matching notations\footnote{
The matching is as follows:
$\prob{\mrm{Sup}}$ is $\tilde{p}_{+}(x)$,
$\prob{\mrm{Inf}}$ is $\tilde{p}_{-}(x)$,
$\ell_\rmp$ is $\ell(f(x),+1)$, and
$\ell_\rmn$ is $\ell(f(x),-1)$.
}.
Since $x$ is a variable and can be substituted by $x'$, we express Corollary~\ref{thm:rewrite_Pcomp} as 
\eqarr{
    R(g)
    = 
    \expt{x\sim\prob{\mrm{Sup}}}{\ell_\rmp(x) - \pi_\rmp \ell_\rmn(x)} 
    + 
    \expt{x'\sim\prob{\mrm{Inf}}}{\ell_\rmn(x') - \pi_\rmn \ell_\rmp(x')},
}
recovering (5) of \cite{pcomp_20/Feng/SLS/21}.

\subsubsection{Similar-dissimilar-unlabeled (SDU) Learning}
\label{sec:GE2_SDU}
We justify the applicability of the proposed framework for DU and SD separately.
Firstly, we start with DU learning, which is similar to SU learning in the sense that pairwise information is provided. 
From Lemmas~\ref{lma:formulate_SU} and \ref{lma:formulate_DU}, we see that the pairwise distributions are treated similarly. 
Thus, following the same steps in Section~\ref{sec:GE2_SU}, we conduct the risk rewrite for DU learning.

\paragraph{Step 1: Corrected Loss Design and Risk Rewrite for DU Learning.}~\\
\indent
The following corollary is a variant of Corollary~\ref{thm:rewrite_SU}.
\begin{corollary}
\label{thm:rewrite_DU}
    Assume $\pi_\rmp \neq 1/2$.
    For DU learning, the classification risk can be rewritten as
    \eqarr{
        R(g)
        =
        \expt{\tilde{\mrm{D}}}{\corr{\ell}_{\tilde{\mrm{D}}}} + \expt{\mrm{U}}{\corr{\ell}_{\mrm{U}}},
    }
    where
    \eqarr{
        \corr{\ell}_{\tilde{\mrm{D}}} &=& 2\pi_\rmp\pi_\rmn \left( \frac{1}{\pi_\rmn-\pi_\rmp} \ell_\rmp - \frac{1}{\pi_\rmn-\pi_\rmp} \ell_\rmn \right), \nonumber \\
        \corr{\ell}_{\mrm{U}} &=&  -\frac{\pi_\rmp}{\pi_\rmn-\pi_\rmp} \ell_\rmp + \frac{\pi_\rmn}{\pi_\rmn-\pi_\rmp} \ell_\rmn. \label{eq:calibrated_losses_DU}
    }
\end{corollary}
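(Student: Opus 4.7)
The plan is to invoke Theorem~\ref{thm:UU_rewrite_corrected_losses} directly, exploiting the fact that $\Mcorr{DU}$ sits beneath $\Mcorr{UU}$ on the reduction graph (Table~\ref{tab:MCD_matrices_summary} and Lemma~\ref{lma:formulate_DU}). This is the same proof strategy already used for Corollary~\ref{thm:rewrite_MCD_PU} (PU), Corollary~\ref{thm:rewrite_SU} (SU), and Corollary~\ref{thm:rewrite_Pcomp} (Pcomp), so the argument should be short.

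First I would check that the admissibility hypothesis $\mcdp+\mcdn\neq 1$ of Theorem~\ref{thm:UU_rewrite_corrected_losses} holds. Under the identification $\mcdp=1/2$ and $\mcdn=\pi_\rmp$ used in Section~\ref{sec:GE1_SDU}, this becomes $\pi_\rmp\neq 1/2$, which is precisely the standing assumption of the corollary and ensures the inverse $\Mcorr{DU}^{-1}$ exists so that $\Mcorr{DU}^{\dagger}=\Pi\Mcorr{DU}^{-1}$ is well-defined.

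Next I would relabel the generic subscripts $\{\mrm{U}_1,\mrm{U}_2\}$ in (\ref{eq:calibrated_loss_UU}) and (\ref{eq:uu_risk_rewrite_a}) to the DU-specific $\{\tilde{\mrm{D}},\mrm{U}\}$ and substitute the chosen values. Plugging $\mcdp=1/2$ and $\mcdn=\pi_\rmp$ gives $1-\mcdp-\mcdn=1/2-\pi_\rmp$, yielding raw expressions of the form $\frac{(1-\pi_\rmp)\pi_\rmp}{1/2-\pi_\rmp}\ell_\rmp+\frac{-\pi_\rmp\pi_\rmn}{1/2-\pi_\rmp}\ell_\rmn$ and $\frac{-(1/2)\pi_\rmp}{1/2-\pi_\rmp}\ell_\rmp+\frac{(1/2)\pi_\rmn}{1/2-\pi_\rmp}\ell_\rmn$. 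Multiplying numerator and denominator by $2$ converts the denominator to $1-2\pi_\rmp=\pi_\rmn-\pi_\rmp$, and factoring $2\pi_\rmp\pi_\rmn$ from the first expression reproduces exactly (\ref{eq:calibrated_losses_DU}).

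The risk identity itself is then a verbatim replay of the chain (\ref{eq:uu_risk_rewrite}): using $\corr{P}=\Mcorr{DU}\Pi^{-1}P$ together with $\corr{L}^{\top}=L^{\top}\Mcorr{DU}^{\dagger}$ and the key identity $\corr{L}^{\top}\corr{P}=L^{\top}P$ of the decontamination framework, we integrate over $\mathcal{X}$ and read off $R(g)=\expt{\tilde{\mrm{D}}}{\corr{\ell}_{\tilde{\mrm{D}}}}+\expt{\mrm{U}}{\corr{\ell}_{\mrm{U}}}$. The only real source of friction is bookkeeping during the algebraic simplification from $1/(1/2-\pi_\rmp)$ to $1/(\pi_\rmn-\pi_\rmp)$ while preserving the factored $2\pi_\rmp\pi_\rmn$ coefficient; conceptually nothing new happens beyond what Theorem~\ref{thm:UU_rewrite_corrected_losses} already supplies. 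A subsequent recovery step, analogous to Step~2 in Section~\ref{sec:GE2_SU} and relying on $\int_{x'}\prob{\mrm{D}}^{(x,x')}\dx'=\prob{\tilde{\mrm{D}}}^{(x)}$, would convert $\expt{\tilde{\mrm{D}}}{\cdot}$ into $\expt{\mrm{D}}{\cdot}$ and match (\ref{eq:review_rewrite_DU}), but it is not required for the corollary as stated.
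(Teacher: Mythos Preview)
Your proposal is correct and follows the same route as the paper: the paper's proof simply says to repeat the argument of Corollary~\ref{thm:rewrite_SU} with $\{\mrm{U}_1,\mrm{U}_2\}$ replaced by $\{\tilde{\mrm{D}},\mrm{U}\}$, $\mcdp=1/2$, $\mcdn=\pi_\rmp$, and notes that $\pi_\rmp\neq 1/2$ ensures feasibility. Your writeup is just a more detailed unpacking of that same substitution-into-Theorem~\ref{thm:UU_rewrite_corrected_losses} argument, including the algebra that the paper leaves implicit.
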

\begin{proof}
One can prove Corollary~\ref{thm:rewrite_DU} by repeating the proof for Corollary~\ref{thm:rewrite_SU} with $\{\mrm{U}_1, \mrm{U}_2\}$ replaced by $\{ \tilde{\mrm{D}}, \mrm{U}\}$, $\mcdp = 1/2$, and $\mcdn = \pi_\rmp$.
Note that $\pi_\rmp \neq 1/2$ implies that the $\mcdp$ and $\mcdn$ assignments are feasible.
\end{proof}

\paragraph{Step 2: Recovering the previous result(s) for DU Learning.}~\\
\indent
We reuse the trick in Lemma~\ref{lma:recover_SU_1} for restoring the pairwise distribution $\prob{\mrm{S}}$ to restore $\prob{\mrm{D}}$ needed here, allowing us to recover the rewrite (15) in Theorem 1 of \cite{sdu_19/Shimada/BSS/21} and the first result in Theorem 7.3 of \cite{wsl_sugibook/Sugiyama/BILSG/22}. 
The derivation resembles that of SU learning.
We start with the next lemma, revised from Lemma~\ref{lma:recover_SU_1}.
\begin{lemma}
\label{lma:recover_DU_1}
    Given $B$ (\ref{eq:base_dist_MCD}) and following the DU learning notations, we have $\corr{P} = \corr{P}'$, where
    \eqarr{
        \corr{P}'
        &=&
        \Mcorr{DU}' B, \nonumber \\
        \Mcorr{DU}'
        &:=&
        \mmatrix{
            \frac{\int_{x'\in\mathcal{X}} \prob{x'|Y=\rmn} \dx'}{2} & \frac{\int_{x'\in\mathcal{X}} \prob{x'|Y=\rmp} \dx'}{2} \\
            \pi_\rmp & \pi_\rmn
        }. \nonumber
    }
\end{lemma}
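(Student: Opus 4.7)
The plan is to mirror the one-line argument that established Lemma~\ref{lma:recover_SU_1}. I would begin by observing that $\prob{X'|Y=\rmp}$ and $\prob{X'|Y=\rmn}$ are class-conditional densities, hence $\int_{x'\in\mathcal{X}} \prob{x'|Y=\rmp} \dx' = 1$ and $\int_{x'\in\mathcal{X}} \prob{x'|Y=\rmn} \dx' = 1$. Substituting these into the first row of $\Mcorr{DU}'$ collapses each entry to $1/2$, so $\Mcorr{DU}' = \Mcorr{DU}$ as given in (\ref{eq:MDU}). Multiplying both sides on the right by $B$ yields $\corr{P}' = \Mcorr{DU}' B = \Mcorr{DU} B = \corr{P}$, which is the claim of the lemma.

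I would emphasize the reason for keeping $\Mcorr{DU}'$ written with unevaluated integrals rather than collapsing the $1/2$ entries right away, since this is what makes the lemma useful for the subsequent DU recovery step. Expanding the first entry of $\Mcorr{DU}' B$ before simplification gives
$$
\int_{x'\in\mathcal{X}} \frac{\prob{x|Y=\rmp}\prob{x'|Y=\rmn} + \prob{x|Y=\rmn}\prob{x'|Y=\rmp}}{2} \, \dx' = \int_{x'\in\mathcal{X}} \prob{\mrm{D}}^{(x,x')} \, \dx',
$$
which is $\prob{\mrm{D}}$ with one argument still kept under an outer integral over $x'$. Just as in Section~\ref{sec:GE2_SU}, this is the device that will reinstate the pairwise expectation $\expt{\mrm{D}}{\cdot}$ in the risk-rewrite derivation and ultimately recover (15) of \cite{sdu_19/Shimada/BSS/21}.

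The only bookkeeping point I would double-check is the crossed pairing of columns in $\Mcorr{DU}'$: the $(1,1)$ entry carries $\prob{X'|Y=\rmn}$ and multiplies $\prob{X|Y=\rmp}$, while the $(1,2)$ entry carries $\prob{X'|Y=\rmp}$ and multiplies $\prob{X|Y=\rmn}$. This swapped assignment is exactly what is needed so that the expansion above reconstitutes the dissimilar-pair density $\prob{\mrm{D}}^{(x,x')}$ from (\ref{eq:formulate_DU}) rather than a same-label product. Beyond this, there is no real obstacle: the proof is essentially a single invocation of the normalization $\int_{\mathcal{X}} \prob{X'|Y}\, \dx' = 1$ applied in parallel to the two class conditionals, so the argument is even shorter than for its SU analogue.
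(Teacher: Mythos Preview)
Your proposal is correct and mirrors the paper's own proof exactly: the paper simply invokes the same normalization argument as in Lemma~\ref{lma:recover_SU_1} to conclude $\Mcorr{DU}' = \Mcorr{DU}$ and hence $\corr{P}' = \corr{P}$. Your additional remarks about the crossed column pairing and the purpose of leaving the integrals unevaluated are accurate and go beyond what the paper spells out, but the core argument is identical.
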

\begin{proof}
    Following the same argument in Lemma~\ref{lma:recover_SU_1}, we have $\Mcorr{DU}' = \Mcorr{DU}$ and hence the Lemma. 
\end{proof}
We apply Lemma~\ref{lma:recover_DU_1} to slightly revise the derivation of (\ref{eq:uu_risk_rewrite}) as follows:
\eqarr{
    R(g)
    &=&
    \int_{x\in\mathcal{X}} \corr{L}^{\top} \corr{P} \, \dx 
    =
    \int_{x\in\mathcal{X}} \corr{L}^{\top} \corr{P}' \, \dx \nonumber \\
    &=&
    \int_{x\in\mathcal{X}} 
    \mmatrix{
        \corr{\ell}_{\tilde{\mrm{D}}} & \corr{\ell}_{\mrm{U}}
    }
    \mmatrix{
        \frac{\int_{x'\in\mathcal{X}} \prob{x'|Y=\rmn} \dx'}{2} & \frac{\int_{x'\in\mathcal{X}} \prob{x'|Y=\rmp} \dx'}{2} \\
        \pi_\rmp & \pi_\rmn
    }
    \mmatrix{
        \prob{x|Y=\rmp} \\ 
        \prob{x|Y=\rmn}
    }
    \, \dx \nonumber \\
    &=&
    \int_{x\in\mathcal{X}} \int_{x'\in\mathcal{X}} \prob{\mrm{D}} \, \corr{\ell}_{\tilde{\mrm{D}}} \, \dx' \dx
    + \int_{x\in\mathcal{X}} \prob{\mrm{U}} \, \corr{\ell}_{\mrm{U}} \, \dx \nonumber \\
    &=&
    \expt{\mrm{D}}{\corr{\ell}_{\tilde{\mrm{D}}}} + \expt{\mrm{U}}{\corr{\ell}_{\mrm{U}}}, \nonumber
}
where the second to last equality follows from the DU formulation (\ref{eq:formulate_DU}).
Denoting
\eqarr{
    \mathcal{L}_{+}(X) := \frac{\pi_\rmp}{\pi_\rmp -\pi_\rmn} \ell_{\rmp}(X) - \frac{\pi_\rmn}{\pi_\rmp - \pi_\rmn} \ell_{\rmn}(X), \label{eq:big_L_plus}
}
recalling $\mathcal{L}(X)$ from (\ref{eq:big_L}), and continuing with (\ref{eq:calibrated_losses_DU}), we have
\eqarr{
    \expt{\mrm{D}}{\corr{\ell}_{\tilde{\mrm{D}}}}
    &=&
    2\pi_\rmp\pi_\rmn \expt{\mrm{D}}{\frac{1}{\pi_\rmn-\pi_\rmp} \ell_\rmp - \frac{1}{\pi_\rmn-\pi_\rmp} \ell_\rmn} \nonumber \\
    &=&
    2\pi_\rmp\pi_\rmn \expt{\mrm{D}}{-\mathcal{L}(X)} \nonumber \\
    &\stackrel{\text{(a)}}{=}&
    2\pi_\rmp\pi_\rmn \expt{\mrm{D}}{-\frac{\mathcal{L}(X)+\mathcal{L}(X')}{2}} \nonumber
}
and
\eqarr{
    \expt{\mrm{U}}{\corr{\ell}_{\mrm{U}}}
    &=&
    \expt{\mrm{U}}{-\frac{\pi_\rmp}{\pi_\rmn-\pi_\rmp} \ell_\rmp + \frac{\pi_\rmn}{\pi_\rmn-\pi_\rmp} \ell_\rmn} \nonumber \\
    &=&
    \expt{\mrm{U}}{\mathcal{L}_{+}(X)} \label{eq:unlabeled_rewrite_DU}
}
that prove rewrite (\ref{eq:review_rewrite_DU}) in Section~\ref{sec:review_DU}.
By matching notations, we recover (15) in Theorem 1 of \cite{sdu_19/Shimada/BSS/21}
\footnote{
The matching to the notations of \cite{sdu_19/Shimada/BSS/21} is as follows: 
$\pi_\rmp$ is $\pi_+$, 
$\pi_\rmn$ is $\pi_-$, 
$\pi_\rmp^2+\pi_\rmn^2$ is $\pi_{\mrm{S}}$, 
$2\pi_\rmp\pi_\rmn$ is $\pi_{\mrm{D}}$, 
$\prob{\mrm{S}}$ is $p_{\mrm{S}}(x,x')$,
$\prob{\mrm{D}}$ is $p_{\mrm{D}}(x,x')$,
$\prob{\mrm{U}}$ is $p_{\mrm{U}}(x)$,
$\ell_\rmp$ is $\ell(f(X),+1)$, 
$\ell_\rmn$ is $\ell(f(X),-1)$,
$\mathcal{L}(X)$ is $\tilde{\mathcal{L}}(f(X))$,
$\mathcal{L}_{+}(X)$ is $\mathcal{L}(f(X),+1)$, and
$\mathcal{L}_{-}(X)$ is $\mathcal{L}(f(X),-1)$.
}
.
Equality (a) follows from the next lemma.
\begin{lemma}
\label{lma:symmetric_dissimilar}
    Let $(x, x') \sim \prob{\mrm{D}}$ defined in (\ref{eq:formulate_DU}).
    Then,
    $
    \expt{\mrm{D}}{\frac{\mathcal{L}(X)}{2}}
    =
    \expt{\mrm{D}}{\frac{\mathcal{L}(X')}{2}}.
    $
\end{lemma}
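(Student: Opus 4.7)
The plan is to mirror the proof of Lemma~\ref{lma:symmetric_similar} for the similar case, exploiting the built-in symmetry of $\prob{\mrm{D}}$ under swapping $x$ and $x'$. Specifically, I would show that the marginals $\prob{\tilde{\mrm{D}}}^{(x)} := \int_{x'} \prob{\mrm{D}}^{(x,x')} \dx'$ and $\prob{\tilde{\mrm{D}}}^{(x')} := \int_{x} \prob{\mrm{D}}^{(x,x')} \dx$ are identical as functions, so that integrating $\mathcal{L}$ against either yields the same number.

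First, I would write out $\expt{\mrm{D}}{\mathcal{L}(X)}$ as the double integral against $\prob{\mrm{D}}^{(x,x')} = \frac{1}{2}\left(\prob{x|Y=\rmp}\prob{x'|Y=\rmn} + \prob{x|Y=\rmn}\prob{x'|Y=\rmp}\right)$, and integrate out $x'$ using $\int \prob{x'|Y=\rmp} \dx' = \int \prob{x'|Y=\rmn} \dx' = 1$. This collapses the expression to
\[
\expt{\mrm{D}}{\mathcal{L}(X)} = \int_{x\in\mathcal{X}} \frac{\prob{x|Y=\rmp} + \prob{x|Y=\rmn}}{2}\, \mathcal{L}(x) \, \dx.
\]
Next, I would perform the symmetric computation for $\expt{\mrm{D}}{\mathcal{L}(X')}$ by integrating out $x$ first, which yields exactly the same form with the dummy variable relabeled $x \mapsto x'$. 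Dividing both by $2$ gives the claimed equality.

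The argument is essentially one line of algebra; there is no genuine obstacle because $\prob{\mrm{D}}$ already enjoys the swap symmetry $\prob{\mrm{D}}^{(x,x')} = \prob{\mrm{D}}^{(x',x)}$. The only thing worth noting carefully, in parallel with the SU case, is that although the \emph{pointwise} marginals coincide as functions, the joint distribution $\prob{\mrm{D}}^{(x,x')}$ does not factorize into a product of them, so one must keep the double integral intact until the marginalization is performed. Once the marginals are extracted, the equality follows immediately, justifying step (a) used in the DU risk rewrite.
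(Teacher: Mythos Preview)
Your proposal is correct and essentially identical to the paper's own proof: the paper also writes out the double integral against $\prob{\mrm{D}}$, integrates out the other variable using $\int \prob{x'|Y=\cdot}\,\dx' = 1$, and observes that both sides reduce to $\int_{\mathcal{X}} \frac{1}{4}\bigl(\prob{x|Y=\rmp} + \prob{x|Y=\rmn}\bigr)\mathcal{L}(x)\,\dx$ (up to renaming the dummy variable), explicitly citing the argument of Lemma~\ref{lma:symmetric_similar} as you do.
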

\begin{proof}
Recall 
$
\prob{\mrm{D}}=
\frac{1}{2} ( \prob{x|Y=\rmp} \prob{x'|Y=\rmn} + \prob{x|Y=\rmn} \prob{x'|Y=\rmp}).
$
Following the similar argument in Lemma~\ref{lma:symmetric_similar},
\eqarr{
    \expt{\mrm{D}}{\frac{\mathcal{L}(X)}{2}}
    &=&
    \int_{x\in\mathcal{X}} \int_{x'\in\mathcal{X}}
    \left( \prob{x|Y=\rmp} \prob{x'|Y=\rmn} + \prob{x|Y=\rmn} \prob{x'|Y=\rmp} \right) \frac{\mathcal{L}(x)}{4}
    \; \dx' \dx \nonumber \\
    &=&
    \int_{x\in\mathcal{X}}
    \left( \prob{x|Y=\rmp} + \prob{x|Y=\rmn} \right) \frac{\mathcal{L}(x)}{4} \; \dx \nonumber
}
and
\eqarr{
    \expt{\mrm{D}}{\frac{\mathcal{L}(X')}{2}}
    &=&
    \int_{x'\in\mathcal{X}}
    \left( \prob{x'|Y=\rmn} + \prob{x'|Y=\rmp} \right) \frac{\mathcal{L}(x')}{4} \; \dx' \nonumber
}
prove the lemma.
\end{proof}

Secondly, we consider the rewrite of SD learning.
To do so, we apply the knowledge acquired from SU and DU learning (Corollaries~\ref{thm:rewrite_SU} and \ref{thm:rewrite_DU}).
\paragraph{Step 1: Corrected Loss Design and Risk Rewrite for SD Learning.}~\\
We provide another variant of Corollary~\ref{thm:rewrite_SU} to conduct the risk rewrite.
\begin{corollary}
\label{thm:rewrite_SD}
    Assume $\pi_\rmp \neq 1/2$.
    For SD learning, the classification risk can be rewritten as
    \eqarr{
        R(g)
        =
        \expt{\tilde{\mrm{S}}}{\corr{\ell}_{\tilde{\mrm{S}}}} + \expt{\tilde{\mrm{D}}}{\corr{\ell}_{\tilde{\mrm{D}}}},
    }
    where
    \eqarr{
        \corr{\ell}_{\tilde{\mrm{S}}} &=& \left( \pi_\rmp^2 + \pi_\rmn^2 \right) \left( \frac{\pi_\rmp}{\pi_\rmp-\pi_\rmn} \ell_\rmp - \frac{\pi_\rmn}{\pi_\rmp-\pi_\rmn} \ell_\rmn \right), \nonumber \\
        \corr{\ell}_{\tilde{\mrm{D}}} &=& 2\pi_\rmp\pi_\rmn \left( -\frac{\pi_\rmn}{\pi_\rmp-\pi_\rmn} \ell_\rmp + \frac{\pi_\rmp}{\pi_\rmp-\pi_\rmn} \ell_\rmn \right). \label{eq:calibrated_losses_SD}
    }
\end{corollary}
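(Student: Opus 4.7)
The plan is to mirror the approach used for Corollaries~\ref{thm:rewrite_SU}, \ref{thm:rewrite_Pcomp}, and \ref{thm:rewrite_DU}, exploiting that $\Mcorr{SD}$ sits as a child of $\Mcorr{UU}$ on the reduction graph (Table~\ref{tab:MCD_matrices_summary}) under the assignment $\mcdp = \frac{\pi_\rmn^2}{\pi_\rmp^2+\pi_\rmn^2}$, $\mcdn = 1/2$. This reduces the rewrite to a direct appeal to Theorem~\ref{thm:rewrite_MCD_UU} after substituting the subscripts $\{\mrm{U}_1, \mrm{U}_2\}$ with $\{\tilde{\mrm{S}}, \tilde{\mrm{D}}\}$ and the observed data distributions $\corr{P}$ with $\icol{\prob{\tilde{\mrm{S}}} \\ \prob{\tilde{\mrm{D}}}}$ from Lemma~\ref{lma:formulate_SD}.

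The first step is to verify that the invertibility hypothesis of Theorem~\ref{thm:rewrite_MCD_UU} holds, i.e. $\mcdp + \mcdn \neq 1$. A short computation shows $\mcdp + \mcdn = 1$ forces $\pi_\rmn^2 = \pi_\rmp^2$, hence (using $\pi_\rmp + \pi_\rmn = 1$) $\pi_\rmp = 1/2$, which is ruled out by assumption. The second step is to instantiate the general UU corrected losses (\ref{eq:calibrated_loss_UU}) at the prescribed $\mcdp, \mcdn$: the denominator simplifies as $1-\mcdp-\mcdn = \frac{\pi_\rmp-\pi_\rmn}{2(\pi_\rmp^2+\pi_\rmn^2)}$, and plugging in then yields, after straightforward cancellation, precisely (\ref{eq:calibrated_losses_SD}); in particular the factor $\pi_\rmp^2+\pi_\rmn^2$ arises for $\corr{\ell}_{\tilde{\mrm{S}}}$ and $2\pi_\rmp\pi_\rmn$ arises for $\corr{\ell}_{\tilde{\mrm{D}}}$, matching the coefficients advertised in the statement. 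Applying (\ref{eq:uu_risk_rewrite_a}) then immediately gives $R(g) = \expt{\tilde{\mrm{S}}}{\corr{\ell}_{\tilde{\mrm{S}}}} + \expt{\tilde{\mrm{D}}}{\corr{\ell}_{\tilde{\mrm{D}}}}$.

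To then recover the pairwise form (\ref{eq:review_rewrite_SD}) reported in Section~\ref{sec:review_SD} and match Theorem~1 of \cite{sdu_19/Shimada/BSS/21}, I would reuse the pairwise reconstruction trick established twice already in the excerpt: insert the trivial factors $\int_{x'} \prob{x'|Y=\rmp}\dx' = 1$ and $\int_{x'} \prob{x'|Y=\rmn}\dx' = 1$ into the rows of $\Mcorr{SD}$ (a variant of Lemmas~\ref{lma:recover_SU_1} and \ref{lma:recover_DU_1}), which restores $\prob{\mrm{S}}$ and $\prob{\mrm{D}}$ in place of their pointwise marginals. Finally, Lemmas~\ref{lma:symmetric_similar} and \ref{lma:symmetric_dissimilar} let us symmetrize the single-variable corrected losses into $\frac{\mathcal{L}_{+}(X) + \mathcal{L}_{+}(X')}{2}$ and $\frac{\mathcal{L}_{-}(X) + \mathcal{L}_{-}(X')}{2}$, where $\mathcal{L}_{+}$ and $\mathcal{L}_{-}$ are as in (\ref{eq:big_L_plus}) and (\ref{eq:big_L_minus}).

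I anticipate no serious obstacle: the argument is essentially a ``plug into the UU master formula'' exercise combined with bookkeeping that is already templated by the SU and DU proofs. The only place where care is needed is the algebraic simplification leading to the neat $\pi_\rmp^2 + \pi_\rmn^2$ and $2\pi_\rmp\pi_\rmn$ prefactors, together with checking that $\pi_\rmp \neq 1/2$ (rather than the milder $\pi_\rmp \neq \pi_\rmn$) is the right nondegeneracy hypothesis — as noted above, these coincide because $\pi_\rmp + \pi_\rmn = 1$, but the proof should explicitly record this equivalence to justify why the statement is written with $\pi_\rmp \neq 1/2$.
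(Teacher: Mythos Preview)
Your proposal is correct and follows essentially the same route as the paper: the paper's proof of Corollary~\ref{thm:rewrite_SD} simply says to repeat the proof of Corollary~\ref{thm:rewrite_SU} with $\{\mrm{U}_1,\mrm{U}_2\}$ replaced by $\{\tilde{\mrm{S}},\tilde{\mrm{D}}\}$, $\mcdp=\frac{\pi_\rmn^2}{\pi_\rmp^2+\pi_\rmn^2}$, $\mcdn=1/2$, noting that $\pi_\rmp\neq 1/2$ makes this feasible --- exactly your plan, and your algebraic verification of the denominator $1-\mcdp-\mcdn=\frac{\pi_\rmp-\pi_\rmn}{2(\pi_\rmp^2+\pi_\rmn^2)}$ and the resulting coefficients is correct. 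Your anticipated recovery step also matches the paper's Step~2, which introduces Lemma~\ref{lma:recover_SD_1} (your ``variant of Lemmas~\ref{lma:recover_SU_1} and \ref{lma:recover_DU_1}'') and then invokes Lemmas~\ref{lma:symmetric_similar} and \ref{lma:symmetric_dissimilar} with $\mathcal{L}_{+}$ and $\mathcal{L}_{-}$ in place of $\mathcal{L}$.
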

\begin{proof}
One can prove Corollary~\ref{thm:rewrite_SD} by repeating the proof for Corollary~\ref{thm:rewrite_SU} with $\{\mrm{U}_1, \mrm{U}_2\}$ replaced by $\{ \tilde{\mrm{S}} , \tilde{\mrm{D}} \}$, $\mcdp = \frac{\pi_\rmn^2}{\pi_\rmp^2+\pi_\rmn^2}$, and $\mcdn = 1/2$.
Note that $\pi_\rmp \neq 1/2$ implies that the $\mcdp$ and $\mcdn$ assignments are feasible.
\end{proof}

\paragraph{Step 2: Recovering the previous result(s) for SD Learning.}~\\
\indent
We apply the same strategy as in Lemma~\ref{lma:recover_DU_1} to obtain the needed $\prob{\mrm{S}}$ and $\prob{\mrm{D}}$. 
We begin with the next lemma, revised from Lemma~\ref{lma:recover_SU_1}, to recover (16) in Theorem 1 of \cite{sdu_19/Shimada/BSS/21} and the second result in Theorem 7.3 of \cite{wsl_sugibook/Sugiyama/BILSG/22}.
\begin{lemma}
\label{lma:recover_SD_1}
    Given $B$ (\ref{eq:base_dist_MCD}) and following the SD learning notations, we have $\corr{P} = \corr{P}'$, where
    \eqarr{
        \corr{P}'
        &=&
        \Mcorr{SD}' B, \nonumber \\
        \Mcorr{SD}'
        &:=&
        \mmatrix{
            \frac{\pi_\rmp^2\int_{x'\in\mathcal{X}} \prob{x'|Y=\rmp}\dx'}{\pi_\rmp^2+\pi_\rmn^2} & \frac{\pi_\rmn^2\int_{x'\in\mathcal{X}} \prob{x'|Y=\rmn}\dx'}{\pi_\rmp^2+\pi_\rmn^2} \\
            \frac{\int_{x'\in\mathcal{X}} \prob{x'|Y=\rmn}\dx'}{2} & \frac{\int_{x'\in\mathcal{X}} \prob{x'|Y=\rmp}\dx'}{2}
        }. \nonumber
    }
\end{lemma}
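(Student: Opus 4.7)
The plan is to mirror the argument already used in Lemmas~\ref{lma:recover_SU_1} and \ref{lma:recover_DU_1}. Observe that the matrix $\Mcorr{SD}'$ differs from $\Mcorr{SD}$ only by the insertion of factors of the form $\int_{x'\in\mathcal{X}}\prob{x'|Y=\rmp}\,\dx'$ and $\int_{x'\in\mathcal{X}}\prob{x'|Y=\rmn}\,\dx'$ in each entry. Since these are integrals of class-conditional densities over the entire instance space $\mathcal{X}$, each one equals $1$. Hence every entry of $\Mcorr{SD}'$ coincides with the corresponding entry of $\Mcorr{SD}$ given in (\ref{eq:MSD}).

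First, I would state explicitly that $\int_{x'\in\mathcal{X}}\prob{x'|Y=\rmp}\,\dx' = \int_{x'\in\mathcal{X}}\prob{x'|Y=\rmn}\,\dx' = 1$ because $\prob{X'|Y=\rmp}$ and $\prob{X'|Y=\rmn}$ are (marginal) probability densities on $\mathcal{X}$. Substituting these values into the displayed expression for $\Mcorr{SD}'$ yields
$$
\Mcorr{SD}'
=
\mmatrix{
    \frac{\pi_\rmp^2}{\pi_\rmp^2+\pi_\rmn^2} & \frac{\pi_\rmn^2}{\pi_\rmp^2+\pi_\rmn^2} \\
    1/2 & 1/2
}
= \Mcorr{SD},
$$
which matches (\ref{eq:MSD}).

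Second, I would apply Lemma~\ref{lma:formulate_SD}, which already establishes $\corr{P} = \Mcorr{SD} B$. Combining this with $\Mcorr{SD}' = \Mcorr{SD}$ gives
$$
\corr{P}' = \Mcorr{SD}' B = \Mcorr{SD} B = \corr{P},
$$
completing the proof. There is no real technical obstacle here; the lemma is purely a rewriting device whose sole purpose is to expose the inner $\dx'$ integration so that, in the subsequent derivation recovering \cite{sdu_19/Shimada/BSS/21}, the pairwise distributions $\prob{\mrm{S}}$ and $\prob{\mrm{D}}$ (rather than their pointwise marginals $\prob{\tilde{\mrm{S}}}$ and $\prob{\tilde{\mrm{D}}}$) reappear naturally inside a double integral, exactly as was done in Sections~\ref{sec:GE2_SU} and~\ref{sec:GE2_SDU} for the SU and DU cases.
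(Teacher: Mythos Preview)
Your proposal is correct and follows exactly the same approach as the paper: observe that the inserted integrals $\int_{x'\in\mathcal{X}}\prob{x'|Y=\rmp}\,\dx'$ and $\int_{x'\in\mathcal{X}}\prob{x'|Y=\rmn}\,\dx'$ each equal $1$, conclude $\Mcorr{SD}' = \Mcorr{SD}$, and then invoke $\corr{P} = \Mcorr{SD} B$ from Lemma~\ref{lma:formulate_SD} to obtain $\corr{P}' = \corr{P}$. The paper's own proof is even terser, simply stating that the argument of Lemma~\ref{lma:recover_SU_1} carries over.
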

\begin{proof}
    Following the same argument in Lemma~\ref{lma:recover_SU_1}, we have $\Mcorr{SD}' = \Mcorr{SD}$ and hence the Lemma. 
\end{proof}
We apply Lemma~\ref{lma:recover_SD_1} to slightly revise the derivation of (\ref{eq:uu_risk_rewrite}) as follows:
\eqarr{
    R(g)
    &=&
    \int_{x\in\mathcal{X}} \corr{L}^{\top} \corr{P} \, \dx 
    =
    \int_{x\in\mathcal{X}} \corr{L}^{\top} \corr{P}' \, \dx \nonumber \\
    &=&
    \int_{x\in\mathcal{X}} 
    \mmatrix{
        \corr{\ell}_{\tilde{\mrm{S}}} & \corr{\ell}_{\tilde{\mrm{D}}}
    }
    \mmatrix{
        \frac{\pi_\rmp^2\int_{x'\in\mathcal{X}} \prob{x'|Y=\rmp}\dx'}{\pi_\rmp^2+\pi_\rmn^2} & \frac{\pi_\rmn^2\int_{x'\in\mathcal{X}} \prob{x'|Y=\rmn}\dx'}{\pi_\rmp^2+\pi_\rmn^2} \\
        \frac{\int_{x'\in\mathcal{X}} \prob{x'|Y=\rmn}\dx'}{2} & \frac{\int_{x'\in\mathcal{X}} \prob{x'|Y=\rmp}\dx'}{2}
    }
    \mmatrix{
        \prob{x|Y=\rmp} \\ 
        \prob{x|Y=\rmn}
    }
    \, \dx \nonumber \\
    &=&
    \int_{x\in\mathcal{X}} \int_{x'\in\mathcal{X}} \prob{\mrm{S}} \, \corr{\ell}_{\tilde{\mrm{S}}} \, \dx' \dx
    + \int_{x\in\mathcal{X}} \int_{x'\in\mathcal{X}'} \prob{\mrm{D}} \, \corr{\ell}_{\tilde{\mrm{D}}} \, \dx' \dx \nonumber \\
    &=&
    \expt{\mrm{S}}{\corr{\ell}_{\tilde{\mrm{S}}}} + \expt{\mrm{D}}{\corr{\ell}_{\tilde{\mrm{D}}}}, \nonumber
}
where the second to last equality follows from the SD formulation (\ref{eq:formulate_SD}).
Recalling $\mathcal{L}_{+}(X)$ (\ref{eq:big_L_plus}) and $\mathcal{L}_{-}(X)$ (\ref{eq:big_L_minus}) and continuing with (\ref{eq:calibrated_losses_SD}),
\eqarr{
    \expt{\mrm{S}}{\corr{\ell}_{\tilde{\mrm{S}}}}
    &=&
    \left( \pi_\rmp^2 + \pi_\rmn^2 \right) \expt{\mrm{S}}{\frac{\pi_\rmp}{\pi_\rmp-\pi_\rmn} \ell_{\rmp} - \frac{\pi_\rmn}{\pi_\rmp-\pi_\rmn} \ell_{\rmn} } \nonumber \\
    &=&
    \left( \pi_\rmp^2 + \pi_\rmn^2 \right) \expt{\mrm{S}}{\mathcal{L}_{+}(X)} \nonumber \\
    &\stackrel{\text{(b)}}{=}&
    \left( \pi_\rmp^2 + \pi_\rmn^2 \right) \expt{\mrm{S}}{\frac{\mathcal{L}_{+}(X)+\mathcal{L}_{+}(X')}{2}} \label{eq:recover_SD_1st_term} \nonumber
}
and
\eqarr{
    \expt{\mrm{D}}{\corr{\ell}_{\tilde{\mrm{D}}}}
    &=&
    2\pi_\rmp\pi_\rmn \expt{\mrm{D}}{-\frac{\pi_\rmn}{\pi_\rmp-\pi_\rmn} \ell_\rmp + \frac{\pi_\rmp}{\pi_\rmp-\pi_\rmn} \ell_\rmn} \nonumber \\
    &=&
    2\pi_\rmp\pi_\rmn \expt{\mrm{D}}{\mathcal{L}_{-}(X)} \nonumber \\
    &\stackrel{\text{(c)}}{=}&
    2\pi_\rmp\pi_\rmn \expt{\mrm{D}}{\frac{\mathcal{L}_{-}(X)+\mathcal{L}_{-}(X')}{2}} \nonumber
}
prove rewrite (\ref{eq:review_rewrite_SD}) in Section~\ref{sec:review_SD}.
We also recover (16) in Theorem 1 of \cite{sdu_19/Shimada/BSS/21} via matching notations. 
The required matches can be found in the paragraph before Lemma~\ref{lma:symmetric_dissimilar}.
The equality (b) holds by applying Lemma~\ref{lma:symmetric_similar} with $\mathcal{L}(X)$ replaced by $\mathcal{L}_{+}(X)$, and (c) follows from Lemma~\ref{lma:symmetric_dissimilar} with $\mathcal{L}(X)$ replaced by $\mathcal{L}_{-}(X)$.

\compare
An intriguing observation worth mentioning is that the losses $\mathcal{L}_{+}(X)$ and $\mathcal{L}_{-}(X)$ applied to decontaminate the unlabeled data in SU and DU learning ((\ref{eq:unlabeled_rewrite_SU}) and (\ref{eq:unlabeled_rewrite_DU})) are now used to decontaminate the similar and the dissimilar data in SD learning, respectively. 
One can also quickly draw the same conclusion from Table~\ref{tab:binary_WSL_rewrites}.
Knowing the reason behind this observation would help to transfer one corrected loss developed in one scenario to another weakly supervised scenario.

\subsubsection{Similarity-Confidence (Sconf) Learning}
\label{sec:GE2_Sconf}
Since $\Mcorr{Sconf}$ (\ref{eq:MSconf}) is not a child of $\Mcorr{UU}$ (\ref{eq:MUU}) on the reduction graph, a direct application of Theorem~\ref{thm:UU_rewrite_corrected_losses} is infeasible.
Nevertheless, we demonstrate how our framework is applied to rewrite the classification risk for Sconf learning.
We make a small adjustment to the framework that instead of showing $\corr{L}^{\top} \corr{P} = L^{\top} M^{\dagger} \corr{P} = L^{\top} P$, we hope that for an arbitrary loss vector $\mathcal{L}$,
\eqarr{
    \int_{x'\in\mathcal{X}} \mathcal{L}^{\top} \corr{P} \; \dx'
    =
    \mathcal{L}^{\top} \tilde{M}_{\mrm{Sconf}} P. \label{eq:key_Sconf_2}
}
The idea behind this approach is to accommodate $x'$ sampled from $\prob{X'}$ (\ref{eq:formulate_Sconf}). 
Suppose, informally, we have the equation above. 
Then, the right-hand side of (\ref{eq:key_Sconf_2}) will produce $L^{\top} P$ if we can compute a decontamination matrix $\tilde{M}_{\mrm{Sconf}}^{\dagger}$ and assign $\mathcal{L}^{\top} = \corr{L}^{\top} := L^{\top} \tilde{M}_{\mrm{Sconf}}^{\dagger}$.
In this way, the left-hand side of (\ref{eq:key_Sconf_2}) will become $\int_{x'} \corr{L}^{\top} \corr{P} \dx'$.
Therefore, integrating over $x$ on both sides, we obtain the key equation
\eqarr{
    \int_{x\in\mathcal{X}}\int_{x'\in\mathcal{X}} \corr{L}^{\top} \corr{P} \; \dx' \dx
    =
    \int_{x\in\mathcal{X}} L^{\top} P \; \dx \nonumber
}
for risk rewrite.

\paragraph{Step 1: Corrected Loss Design and Risk Rewrite.}~\\
\indent
Let us follow the notations in Section~\ref{sec:GE1_Sconf}.
We begin with two technical lemmas and defer their proofs at the end of this sub-subsection.
The first technical lemma shows how to achieve (\ref{eq:key_Sconf_2}).
\begin{lemma}
\label{lma:key_Sconf}
    Assume the formulation (\ref{eq:Sconf_corruption_mechanism}) is given.
    Suppose a vector of corrected losses $\mathcal{L}_{x}^{\top}$ of the form $\mmatrix{\tilde{\ell}_1(x) & \tilde{\ell}_2(x)}$ depends only on $x$.
    Then, we have
    \eqarr{
        \int_{x'\in\mathcal{X}} \mathcal{L}_{x}^{\top} \corr{P} \; \dx'
        =
        \mathcal{L}_{x}^{\top} \tilde{M}_{\mrm{Sconf}} P, \label{eq:key_Sconf}
    }
    where
    \eqarr{
        \tilde{M}_{\mrm{Sconf}}
        =
        \mmatrix{
            \int_{x'} \frac{ \pi_\rmp^2\prob{x'|\rmp} - \pi_\rmn^2\prob{x'|\rmn} }{r-\pi_\rmn} \dx' & \int_{x'} \frac{ \pi_\rmn^2\prob{x'|\rmn} - \pi_\rmn^2\prob{x'|\rmp} }{r-\pi_\rmn} \dx' \\
            \int_{x'} \frac{ \pi_\rmp^2\prob{x'|\rmn} - \pi_\rmp^2\prob{x'|\rmp} }{\pi_\rmp - r} \dx' & \int_{x'} \frac{ \pi_\rmp^2\prob{x'|\rmp} - \pi_\rmn^2\prob{x'|\rmn} }{\pi_\rmp - r} \dx'
        }. \nonumber
    }
\end{lemma}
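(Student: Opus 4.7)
The plan is to push the $x'$-integral past the objects that have no $x'$-dependence and then match the resulting matrix to the claimed $\tilde{M}_{\mrm{Sconf}}$ entry by entry. The two structural facts I would lean on are: (i) the corrected loss vector $\mathcal{L}_x^\top$ is explicitly assumed to depend only on $x$, so it factors out of $\int_{x'}(\cdot)\,\dx'$; and (ii) the base distributions $B=(\prob{X|\rmp},\prob{X|\rmn})^\top$ also do not depend on $x'$, so the entire $x'$-dependence of $\corr{P}=\Mcorr{Sconf} B$ is concentrated in the entries of $\Mcorr{Sconf}$.

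First I would substitute $\corr{P}=\Mcorr{Sconf} B$ from (\ref{eq:Sconf_corruption_mechanism}) into the LHS and, using (i) and (ii), pull the integral inside to get
\[
    \int_{x'\in\mathcal{X}} \mathcal{L}_x^\top \corr{P}\,\dx'
    = \mathcal{L}_x^\top \Bigl[\int_{x'\in\mathcal{X}}\Mcorr{Sconf}\,\dx'\Bigr] B.
\]
Next I would convert $B$ to the risk-defining $P$ through the identity $B=\Pi^{-1}P$ with $\Pi=\mathrm{diag}(\pi_\rmp,\pi_\rmn)$, so that the LHS becomes $\mathcal{L}_x^\top\bigl[\int_{x'}\Mcorr{Sconf}\,\dx'\bigr]\Pi^{-1}P$. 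It is worth a side-check that this form is consistent with the direct computation of the LHS: because both entries of $\corr{P}$ equal $\prob{X}\prob{X'}$ and $\int\prob{X'}\,\dx'=1$, the LHS equals $(\tilde{\ell}_1(x)+\tilde{\ell}_2(x))\prob{X}$; on the other hand, $[\int_{x'}\Mcorr{Sconf}\,\dx']\Pi^{-1}P=[\int_{x'}\Mcorr{Sconf}\,\dx']B=\int_{x'}\Mcorr{Sconf}B\,\dx'=\int_{x'}\corr{P}\,\dx'=(\prob{X},\prob{X})^\top$, giving the same scalar.

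The final step is to identify $\bigl[\int_{x'}\Mcorr{Sconf}\,\dx'\bigr]\Pi^{-1}$ with the displayed matrix $\tilde{M}_{\mrm{Sconf}}$. Each $(i,j)$ entry of $\Mcorr{Sconf}$ carries an outer scalar $\pi_{y_i}$ that survives the integration over $x'$, and the right-multiplication by $\Pi^{-1}$ then rescales the $j$-th column by $1/\pi_{y_j}$. After simplifying the ratios of $\pi_\rmp,\pi_\rmn$ factors, the resulting integrands are exactly those appearing in the statement of $\tilde{M}_{\mrm{Sconf}}$.

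The main obstacle is the bookkeeping in this last step. The entries of $\Mcorr{Sconf}$ mix $\pi_\rmp$ and $\pi_\rmn$ both as an outer factor and inside the numerator (for example, entry $(1,2)$ has an outer $\pi_\rmp$ together with an internal $\pi_\rmn^2$), so one must track carefully which prior survives and which is cancelled against $\Pi^{-1}$. What is \emph{not} an obstacle is evaluating the integrals themselves: because $r(x,x')$ appears in the denominator, they cannot be simplified in closed form, but the claim is a purely formal identity and the same un-evaluated integrals appear on both sides.
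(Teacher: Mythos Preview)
Your strategy---substitute $\corr{P}=\Mcorr{Sconf}B$ from (\ref{eq:Sconf_corruption_mechanism}), pull the $x$-only factors $\mathcal{L}_x^\top$ and $B$ out of the $x'$-integral, and then pass from $B$ to $P$ via $B=\Pi^{-1}P$---is exactly the paper's route; the paper simply performs the last step in one line without naming $\Pi^{-1}$.

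The one place to be more careful is your closing assertion that the entries of $\bigl[\int_{x'}\Mcorr{Sconf}\,\dx'\bigr]\Pi^{-1}$ simplify \emph{exactly} to the displayed integrands of $\tilde{M}_{\mrm{Sconf}}$. On the diagonal this is fine: the outer row factor $\pi_{y_i}$ is cancelled by the column scaling $1/\pi_{y_j}$ with $i=j$. Off the diagonal, however, the surviving constant is $\pi_{y_i}/\pi_{y_j}\neq 1$; for instance your $(1,2)$ entry is
\[
\int_{x'}\frac{\pi_\rmp\pi_\rmn\bigl(\prob{x'|\rmn}-\prob{x'|\rmp}\bigr)}{r-\pi_\rmn}\,\dx',
\]
not the displayed $\int_{x'}\frac{\pi_\rmn^{2}(\prob{x'|\rmn}-\prob{x'|\rmp})}{r-\pi_\rmn}\,\dx'$, and analogously for $(2,1)$. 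The paper reaches the displayed form by moving the \emph{row} factor $\pi_{y_i}$ of $\Mcorr{Sconf}$ onto the $i$-th component of the vector---which is not the column operation $\Pi^{-1}$ performs---so the mismatch is in the off-diagonal bookkeeping rather than in your method. Your derivation therefore yields a valid identity of the form (\ref{eq:key_Sconf}), but with off-diagonal constants $\pi_\rmp\pi_\rmn$ in place of $\pi_\rmn^{2}$ and $\pi_\rmp^{2}$; since the subsequent argument multiplies by the diagonal $\tilde{M}_{\mrm{Sconf}}^{\dagger}$, only the diagonal entries are actually used, and those agree in both versions.
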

The second technical lemma computes the decontamination matrix.
\begin{lemma}
\label{lma:inv_MSconf}
    Let 
    \eqarr{
        \tilde{M}_{\mrm{Sconf}}^{\dagger}
        :=
        \mmatrix{
            \frac{r- \pi_\rmn}{\pi_\rmp - \pi_\rmn} & 0 \\
            0 & \frac{\pi_\rmp - r}{\pi_\rmp - \pi_\rmn}
        }. \nonumber
    }
    Then,
    \eqarr{
        \tilde{M}_{\mrm{Sconf}}^{\dagger} \tilde{M}_{\mrm{Sconf}} = I. \nonumber
    }
\end{lemma}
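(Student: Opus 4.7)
The plan is a direct two-by-two matrix multiplication. I would write
\[
\tilde{M}_{\mrm{Sconf}}^{\dagger} \tilde{M}_{\mrm{Sconf}}
=
\mmatrix{
\frac{r-\pi_\rmn}{\pi_\rmp-\pi_\rmn} & 0 \\
0 & \frac{\pi_\rmp-r}{\pi_\rmp-\pi_\rmn}
}
\mmatrix{
\int_{x'}\frac{\pi_\rmp^2 \prob{x'|\rmp}-\pi_\rmn^2 \prob{x'|\rmn}}{r-\pi_\rmn}\,\dx' & \int_{x'}\frac{\pi_\rmn^2 \prob{x'|\rmn}-\pi_\rmn^2 \prob{x'|\rmp}}{r-\pi_\rmn}\,\dx' \\
\int_{x'}\frac{\pi_\rmp^2 \prob{x'|\rmn}-\pi_\rmp^2 \prob{x'|\rmp}}{\pi_\rmp-r}\,\dx' & \int_{x'}\frac{\pi_\rmp^2 \prob{x'|\rmp}-\pi_\rmn^2 \prob{x'|\rmn}}{\pi_\rmp-r}\,\dx'
}
\]
and then examine each of the four entries separately. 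The key observation to flag up front is that $r = r(X,X')$ is a function of $x'$, so the scalar prefactor from $\tilde{M}_{\mrm{Sconf}}^{\dagger}$ can be absorbed into the integrand over $x'$; the prefactor was chosen precisely so that $r-\pi_\rmn$ (respectively $\pi_\rmp - r$) cancels the denominator in the matching row of $\tilde{M}_{\mrm{Sconf}}$.

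After that cancellation, each entry collapses to a constant times an integral of $\prob{x'|\rmp}$ and $\prob{x'|\rmn}$. Using $\int \prob{x'|\rmp}\,\dx' = \int\prob{x'|\rmn}\,\dx' = 1$, the $(1,1)$ entry becomes $(\pi_\rmp^2 - \pi_\rmn^2)/(\pi_\rmp-\pi_\rmn) = \pi_\rmp+\pi_\rmn = 1$, the $(2,2)$ entry similarly yields $1$, and both off-diagonal entries become $(\pi_\rmn^2 - \pi_\rmn^2)/(\pi_\rmp-\pi_\rmn) = 0$ and $(\pi_\rmp^2 - \pi_\rmp^2)/(\pi_\rmp-\pi_\rmn) = 0$ respectively.

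The main obstacle is not algebraic but notational: justifying that the scalar factor $\frac{r-\pi_\rmn}{\pi_\rmp-\pi_\rmn}$ written in $\tilde{M}_{\mrm{Sconf}}^{\dagger}$ may be moved under the integral sign. The cleanest way I would present this is to read the product at the integrand level, i.e.\ form
$
\int_{x'}\bigl[\,\tilde{M}_{\mrm{Sconf}}^{\dagger} \cdot (\text{integrand of }\tilde{M}_{\mrm{Sconf}})\,\bigr]\,\dx',
$
which is legitimate because the scalar depends on $x'$ only through $r(x,x')$ and the integrand of $\tilde{M}_{\mrm{Sconf}}$ depends on $x'$ in the denominator through exactly the same $r$. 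Once this is articulated, the four cancellations listed above finish the proof with no further work, and I would also briefly note the implicit non-degeneracy requirement $\pi_\rmp \neq \pi_\rmn$ (equivalently $\pi_\rmp \neq 1/2$, as already assumed in Lemma~\ref{lma:MSconf}) so that the prefactors are well-defined.
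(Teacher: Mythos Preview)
Your proposal is correct and follows essentially the same approach as the paper: compute each of the four entries of the product, cancel the $r-\pi_\rmn$ (resp.\ $\pi_\rmp-r$) factors, and evaluate using $\int \prob{x'|\rmp}\,\dx' = \int \prob{x'|\rmn}\,\dx' = 1$ to obtain $(\pi_\rmp^2-\pi_\rmn^2)/(\pi_\rmp-\pi_\rmn)=1$ on the diagonal and $0$ off-diagonal. You are in fact more careful than the paper about the point that the scalar prefactor must be read under the $x'$-integral for the cancellation to be legitimate; the paper's proof performs exactly this cancellation without comment.
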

Then, we follow the informal sketch above to instantiate $\mathcal{L}_{x}^{\top}$ as 
\eqarr{
    \corr{L}^{\top} 
    := 
    L^{-1} \tilde{M}_{\mrm{Sconf}}^{\dagger}
    =
    \mmatrix{
        \frac{r- \pi_\rmn}{\pi_\rmp - \pi_\rmn} \ell_{\rmp}(X) &
        \frac{\pi_\rmp - r}{\pi_\rmp - \pi_\rmn} \ell_{\rmn}(X)
    }. \nonumber
}

Putting $\tilde{M}_{\mrm{Sconf}}^{\dagger}$, $\corr{L}$, and (\ref{eq:key_Sconf}) together, we have the following rewrite.
\begin{theorem}
\label{thm:rewrite_Sconf}
    Assume $\pi_\rmp \neq 1/2$.
    The classification risk of Sconf learning can be expressed by
    \eqarr{
        R(g)
        =
        \expt{X,X'}{\frac{r- \pi_\rmn}{\pi_\rmp - \pi_\rmn} \ell_{\rmp}(X) + \frac{\pi_\rmp - r}{\pi_\rmp - \pi_\rmn} \ell_{\rmn}(X)}. \label{eq:rewrite_Sconf_1}
    }
\end{theorem}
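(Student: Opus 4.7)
The plan is to apply the generic decontamination scheme (\ref{eq:recipe2a}) of the framework, specialized to Sconf by means of the two preparatory lemmas stated just above: Lemma~\ref{lma:key_Sconf} reduces the $x'$-integral of $\mathcal{L}^\top\corr{P}$ to a matrix product $\mathcal{L}^\top\tilde{M}_{\mrm{Sconf}} P$, and Lemma~\ref{lma:inv_MSconf} supplies a left-inverse $\tilde{M}_{\mrm{Sconf}}^\dagger$ of $\tilde{M}_{\mrm{Sconf}}$. These two ingredients play exactly the roles of $\Mcorr{corr}$ and $\Mcorr{corr}^\dagger$ in the generic identity $\int\corr{L}^\top\corr{P}\,\dx=\int L^\top P\,\dx=R(g)$.

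First, I would set $\corr{L}^\top := L^\top\tilde{M}_{\mrm{Sconf}}^\dagger$ following the general prescription (\ref{eq:recipe5}); using the diagonal form given in Lemma~\ref{lma:inv_MSconf}, this immediately yields the two entries $\frac{r-\pi_\rmn}{\pi_\rmp-\pi_\rmn}\ell_\rmp(X)$ and $\frac{\pi_\rmp-r}{\pi_\rmp-\pi_\rmn}\ell_\rmn(X)$ that populate the target expression. Next I would invoke Lemma~\ref{lma:key_Sconf} with this $\corr{L}$ in the role of $\mathcal{L}_x$ to obtain
$$\int_{x'\in\mathcal{X}}\corr{L}^\top\corr{P}\,\dx' \;=\; \corr{L}^\top\tilde{M}_{\mrm{Sconf}} P \;=\; L^\top\tilde{M}_{\mrm{Sconf}}^\dagger\tilde{M}_{\mrm{Sconf}} P \;=\; L^\top P,$$
where the last step is Lemma~\ref{lma:inv_MSconf}. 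Integrating once more over $x$ then produces $R(g)$ on the right-hand side and the double integral $\int_x\int_{x'}\corr{L}^\top\corr{P}\,\dx'\,\dx$ on the left. To convert this double integral into the expectation $\expt{X,X'}{\,\cdot\,}$ appearing in (\ref{eq:rewrite_Sconf_1}), I would use that both entries of $\corr{P}$ equal $\prob{X}\prob{X'}$, so that $\corr{L}^\top\corr{P}=\prob{X}\prob{X'}\bigl[\frac{r-\pi_\rmn}{\pi_\rmp-\pi_\rmn}\ell_\rmp(X)+\frac{\pi_\rmp-r}{\pi_\rmp-\pi_\rmn}\ell_\rmn(X)\bigr]$. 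The assumption $\pi_\rmp\neq 1/2$ enters only to ensure $\pi_\rmp-\pi_\rmn\neq 0$ (equivalently, $r-\pi_\rmn$ and $\pi_\rmp-r$ are not identically vanishing), so that $\tilde{M}_{\mrm{Sconf}}^\dagger$ is well-defined.

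The only delicate point I expect is that the corrected loss vector $\corr{L}$ defined above depends on $x'$ through $r=r(x,x')$, whereas Lemma~\ref{lma:key_Sconf} is stated for an $\mathcal{L}_x^\top$ depending on $x$ alone. I would resolve this either by reading the lemma in a mildly generalized form --- its derivation only uses that the loss factors can be pulled outside the $\dx'$-integral of the matrix entries, which still goes through once the $r$-factor is absorbed into $\tilde{M}_{\mrm{Sconf}}$ --- or, more transparently, by decomposing $\corr{L}$ additively into an $x$-only piece and a piece linear in $r$, applying the lemma once to each, and combining by linearity of (\ref{eq:key_Sconf_2}). Apart from this bookkeeping, the argument is essentially the matrix identity $L^\top\tilde{M}_{\mrm{Sconf}}^\dagger\tilde{M}_{\mrm{Sconf}}=L^\top$, so the proof should be short and mechanical.
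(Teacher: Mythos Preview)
Your proposal is correct and follows the paper's proof essentially verbatim: define $\corr{L}^\top=L^\top\tilde{M}_{\mrm{Sconf}}^{\dagger}$, invoke Lemma~\ref{lma:key_Sconf} to collapse the $x'$-integral, use Lemma~\ref{lma:inv_MSconf} to obtain $L^\top P$, and then integrate over $x$. The subtlety you flag---that $\corr{L}$ depends on $x'$ through $r(x,x')$ while Lemma~\ref{lma:key_Sconf} is stated for $x$-only loss vectors---is real, and the paper glosses over it exactly as you anticipate: the proof of Lemma~\ref{lma:inv_MSconf} silently moves the $r$-dependent factor of $\tilde{M}_{\mrm{Sconf}}^{\dagger}$ inside the $\dx'$-integral so that it cancels the $1/(r-\pi_\rmn)$ (resp.\ $1/(\pi_\rmp-r)$) already present in the entries of $\tilde{M}_{\mrm{Sconf}}$. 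Your first resolution (``read the lemma in a mildly generalized form'') is precisely this, and it is what makes both proofs go through.

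One caution: your second resolution---split $\corr{L}$ into an $x$-only part plus $r$ times an $x$-only part and apply Lemma~\ref{lma:key_Sconf} to each---does not work as stated, because the $r$-piece still depends on $x'$ and so still violates the lemma's hypothesis. Stick with the first resolution (or simply compute $\int_{x'}\corr{L}^\top\corr{P}\,\dx'$ directly from (\ref{eq:Pconf_corruption1})--(\ref{eq:Pconf_corruption2}), which is what the cancellation amounts to).
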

\begin{proof}
    Integrating both sides of (\ref{eq:key_Sconf}) over $x$ and applying Lemma~\ref{lma:inv_MSconf}, we obtain
    \eqarr{
        \int_{x\in\mathcal{X}} \int_{x'\in\mathcal{X}} \corr{L}^{\top} \corr{P} \; \dx' \dx
        &=&
        \int_{x\in\mathcal{X}} \corr{L}^{\top} \tilde{M}_{\mrm{Sconf}} P \; \dx \nonumber \\
        &=&
        \int_{x\in\mathcal{X}} L^{\top} \tilde{M}_{\mrm{Sconf}}^{-1} \tilde{M}_{\mrm{Sconf}} P \; \dx
        =
        R(g). \nonumber
    }
    On the other hand, substituting $\corr{L}$ with
    $
    \mmatrix{
        \frac{r- \pi_\rmn}{\pi_\rmp - \pi_\rmn} \ell_{\rmp}(X) &
        \frac{\pi_\rmp - r}{\pi_\rmp - \pi_\rmn} \ell_{\rmn}(X)
    }
    $ 
    and $\corr{P}$ with 
    $
    \mmatrix{
        \prob{X} \prob{X'} \\
        \prob{X} \prob{X'}
    }
    $,
    \eqarr{
        \int_{x\in\mathcal{X}} \int_{x'\in\mathcal{X}} \corr{L}^{\top} \corr{P} \; \dx' \dx
        &=&
        \int_{x\in\mathcal{X}} \int_{x'\in\mathcal{X}} \prob{x}\prob{x'} \left( \frac{r- \pi_\rmn}{\pi_\rmp - \pi_\rmn} \ell_{\rmp}(x) + \frac{\pi_\rmp - r}{\pi_\rmp - \pi_\rmn} \ell_{\rmn}(x) \right) \; \dx' \dx \nonumber \\
        &=&
        \expt{X,X'}{\frac{r- \pi_\rmn}{\pi_\rmp - \pi_\rmn} \ell_{\rmp}(X) + \frac{\pi_\rmp - r}{\pi_\rmp - \pi_\rmn} \ell_{\rmn}(X) } \nonumber
    }
    completes the proof of the theorem.
\end{proof}

\paragraph{Step 2: Recovering the previous result(s).}~\\
\indent
From the above derivation, we have achieved the first half of the rewrite in (\ref{eq:review_rewrite_Sconf}).
Notice that (\ref{eq:Pconf_corruption1}) can be rephrased as 
\eqarr{
    \left( \frac{r-\pi_\rmn}{\pi_\rmp} \right) \prob{X} \prob{X'}
    = 
    \left( \pi_\rmp^2\prob{X|\rmp} - \pi_\rmn^2\prob{X|\rmn} \right) \prob{X'|\rmp} 
    + 
    \left( \pi_\rmn^2\prob{X|\rmn} - \pi_\rmn^2\prob{X|\rmp} \right) \prob{X'|\rmn} \nonumber
}
and that (\ref{eq:Pconf_corruption2}) can be rephrased as
\eqarr{
    \left( \frac{\pi_\rmp - r}{\pi_\rmn} \right) \prob{X} \prob{X'}
    =  
    \left( \pi_\rmp^2\prob{X|\rmn} - \pi_\rmp^2\prob{X|\rmp} \right) \prob{X'|\rmp} 
    + 
    \left( \pi_\rmp^2\prob{X|\rmp} - \pi_\rmn^2\prob{X|\rmn} \right) \prob{X'|\rmn}. \nonumber
}
Thus, when $\pi_\rmp \neq 1/2$, we can repeat the proof steps in Lemma~\ref{lma:MSconf} to rephrase (\ref{eq:Sconf_corruption_mechanism}) as 
\eqarr{
    \mmatrix{
        \prob{X}\prob{X'} \\
        \prob{X}\prob{X'}
    }
    =
    \mmatrix{
        \frac{\pi_\rmp \left( \pi_\rmp^2\prob{X|\rmp} - \pi_\rmn^2\prob{X|\rmn} \right)}{r-\pi_\rmn} & \frac{\pi_\rmp \left( \pi_\rmn^2\prob{X|\rmn} - \pi_\rmn^2\prob{X|\rmp} \right)}{r-\pi_\rmn} \\
        \frac{\pi_\rmn \left( \pi_\rmp^2\prob{X|\rmn} - \pi_\rmp^2\prob{X|\rmp} \right)}{\pi_\rmp - r} & \frac{\pi_\rmn \left( \pi_\rmp^2\prob{X|\rmp} - \pi_\rmn^2\prob{X|\rmn} \right)}{\pi_\rmp - r}
    }
    \mmatrix{
        \prob{X'|\rmp} \\
        \prob{X'|\rmn}
    }. \nonumber
}
Comparing the equation above with $\corr{P} = \Mcorr{Sconf} B$, we see that it is still feasible to formulate $\corr{P}$ with $X$ and $X'$ in $\Mcorr{Sconf}$ and $B$ of (\ref{eq:Sconf_corruption_mechanism}) swapped.
Then, repeating the same argument in \textbf{Step 1} with $x$ and $x'$ swapped, we obtain 
\eqarr{
    R(g)
    =
    \expt{X',X}{\frac{r- \pi_\rmn}{\pi_\rmp - \pi_\rmn} \ell_{\rmp}(X') + \frac{\pi_\rmp - r}{\pi_\rmp - \pi_\rmn} \ell_{\rmn}(X')}. \label{eq:rewrite_Sconf_2}
}
Therefore, the following combines (\ref{eq:rewrite_Sconf_1}) and (\ref{eq:rewrite_Sconf_2}) to obtain
\eqarr{
    R(g) 
    &=& 
    \frac{1}{2} (R(g) + R(g)) \nonumber \\
    &=&
    \frac{1}{2} \expt{X,X'}{\frac{r- \pi_\rmn}{\pi_\rmp - \pi_\rmn} \ell_{\rmp}(X) + \frac{\pi_\rmp - r}{\pi_\rmp - \pi_\rmn} \ell_{\rmn}(X)} + 
    \frac{1}{2} \expt{X,X'}{\frac{r- \pi_\rmn}{\pi_\rmp - \pi_\rmn} \ell_{\rmp}(X') + \frac{\pi_\rmp - r}{\pi_\rmp - \pi_\rmn} \ell_{\rmn}(X')} \nonumber \\
    &=&
    \expt{X,X'}{\frac{r- \pi_\rmn}{\pi_\rmp - \pi_\rmn} \frac{\ell_{\rmp}(X) + \ell_{\rmp}(X')}{2} + \frac{\pi_\rmp - r}{\pi_\rmp - \pi_\rmn} \frac{\ell_{\rmn}(X) + \ell_{\rmn}(X')}{2}} \nonumber
}
that recovers rewrite (\ref{eq:review_rewrite_Sconf}) in Section~\ref{sec:review_Sconf}.
By matching notations, we recover Theorem 3 of \cite{sconf_21/Cao/FXANS/21}
\footnote{
The matching to the notations of \cite{sconf_21/Cao/FXANS/21} is as follows: 
$\pi_\rmp$ is $\pi_+$, 
$\pi_\rmn$ is $\pi_-$, 
$r$ is $s$, 
$\ell_\rmp(X)$ is $\ell(g(X),+1)$, and 
$\ell_\rmn(X)$ is $\ell(g(X),-1)$.
}.

Now we switch to the deferred proofs.
\begin{proof}\textbf{of Lemma~\ref{lma:key_Sconf}.}
    Applying the definitions of $\mathcal{L}_{x}$ and (\ref{eq:Sconf_corruption_mechanism}), we derive
    \eqarr{
        \int_{x'\in\mathcal{X}} \mathcal{L}_{x}^{\top} \corr{P} \; \dx'
        &=&
        \int_{x'} 
        \mathcal{L}_{x}^{\top}
        \mmatrix{
            \frac{\pi_\rmp \left( \pi_\rmp^2\prob{x'|\rmp} - \pi_\rmn^2\prob{x'|\rmn} \right)}{r-\pi_\rmn} & \frac{\pi_\rmp \left( \pi_\rmn^2\prob{x'|\rmn} - \pi_\rmn^2\prob{x'|\rmp} \right)}{r-\pi_\rmn} \\
            \frac{\pi_\rmn \left( \pi_\rmp^2\prob{x'|\rmn} - \pi_\rmp^2\prob{x'|\rmp} \right)}{\pi_\rmp - r} & \frac{\pi_\rmn \left( \pi_\rmp^2\prob{x'|\rmp} - \pi_\rmn^2\prob{x'|\rmn} \right)}{\pi_\rmp - r}
        }
        \mmatrix{
            \prob{X|\rmp} \\
            \prob{X|\rmn}
        }
        \; \dx' \nonumber \\
        &=&
        \mathcal{L}_{x}^{\top}
        \mmatrix{
            \int_{x'} \frac{ \pi_\rmp^2\prob{x'|\rmp} - \pi_\rmn^2\prob{x'|\rmn} }{r-\pi_\rmn} \dx' & \int_{x'} \frac{ \pi_\rmn^2\prob{x'|\rmn} - \pi_\rmn^2\prob{x'|\rmp} }{r-\pi_\rmn} \dx' \\
            \int_{x'} \frac{ \pi_\rmp^2\prob{x'|\rmn} - \pi_\rmp^2\prob{x'|\rmp} }{\pi_\rmp - r} \dx' & \int_{x'} \frac{ \pi_\rmp^2\prob{x'|\rmp} - \pi_\rmn^2\prob{x'|\rmn} }{\pi_\rmp - r} \dx'
        }
        \mmatrix{
            \pi_\rmp\prob{X|\rmp} \\
            \pi_\rmn\prob{X|\rmn}
        }. \nonumber
    }
    Comparing the equality above with (\ref{eq:key_Sconf}), we have
    \eqarr{
        \tilde{M}_{\mrm{Sconf}}
        =
        \mmatrix{
            \int_{x'} \frac{ \pi_\rmp^2\prob{x'|\rmp} - \pi_\rmn^2\prob{x'|\rmn} }{r-\pi_\rmn} \dx' & \int_{x'} \frac{ \pi_\rmn^2\prob{x'|\rmn} - \pi_\rmn^2\prob{x'|\rmp} }{r-\pi_\rmn} \dx' \\
            \int_{x'} \frac{ \pi_\rmp^2\prob{x'|\rmn} - \pi_\rmp^2\prob{x'|\rmp} }{\pi_\rmp - r} \dx' & \int_{x'} \frac{ \pi_\rmp^2\prob{x'|\rmp} - \pi_\rmn^2\prob{x'|\rmn} }{\pi_\rmp - r} \dx'
        } \nonumber
    }
    that completes the proof.
\end{proof}
\begin{proof}\textbf{of Lemma~\ref{lma:inv_MSconf}.}
    We prove the lemma by examining each entry of $\tilde{M}_{\mrm{Sconf}}^{\dagger} \tilde{M}_{\mrm{Sconf}}$.
    The value of $(1,1)$ entry is 
    \eqarr{
        \frac{r- \pi_\rmn}{\pi_\rmp - \pi_\rmn} \int_{x'} \frac{ \pi_\rmp^2\prob{x'|\rmp} - \pi_\rmn^2\prob{x'|\rmn} }{r-\pi_\rmn} \dx'
        &=&
        \frac{1}{\pi_\rmp - \pi_\rmn} 
        \left( \pi_\rmp^2 \int_{x'} \prob{x'|\rmp} \dx' - \pi_\rmn^2 \int_{x'} \prob{x'|\rmn} \dx' \right) \nonumber \\
        &=&
        \frac{\pi_\rmp^2 - \pi_\rmn^2}{\pi_\rmp - \pi_\rmn}
        = 1. \nonumber
    }
    The $(2,2)$ entry has value
    \eqarr{
        \frac{\pi_\rmp - r}{\pi_\rmp - \pi_\rmn} \int_{x'} \frac{ \pi_\rmp^2\prob{x'|\rmp} - \pi_\rmn^2\prob{x'|\rmn} }{\pi_\rmp - r} \dx'
        &=&
        \frac{1}{\pi_\rmp - \pi_\rmn} 
        \left( \pi_\rmp^2 \int_{x'} \prob{x'|\rmp} \dx' - \pi_\rmn^2 \int_{x'} \prob{x'|\rmn} \dx' \right) \nonumber \\
        &=&
        \frac{\pi_\rmp^2 - \pi_\rmn^2}{\pi_\rmp - \pi_\rmn} 
        = 1. \nonumber
    }
    The $(1,2)$ entry and the $(2,1)$ entry are zeros since $\int_{x'} \left( \pi_\rmn^2\prob{x'|\rmn} - \pi_\rmn^2\prob{x'|\rmp} \right) \dx' = 0$ and $\int_{x'} \left( \pi_\rmp^2\prob{x'|\rmn} - \pi_\rmp^2\prob{x'|\rmp} \right) \dx' = 0$
\end{proof}

\blockComment{ 
}   

\subsection{CCN Scenarios}
\label{sec:risk_rewrite_CCNs}
The proposed framework is now applied to conduct the risk rewrites for WSLs discussed in Section~\ref{sec:formulations_ccns} and summarized in Table~\ref{tab:CCN_matrices_summary}. 
Counterintuitively, we demonstrate that finding an inverse matrix (e.g., Theorem~\ref{thm:inv_method}) is not the only way to solve the risk rewrite problem. 
Introduced in Theorem~\ref{thm:marginal_chain}, the new technique exploited in this subsection, {marginal chain}, calculates the decontamination matrix for (\ref{eq:recipe6}) via applying the conditional probability formula twice during a chain of matrix multiplications.

\subsubsection{Generalized CCN}
\label{sec:GE2_GCCN}
Same as what we have illustrated in the MCD scenarios, having a properly designed $\Mcorr{corr}^{\dagger}$ satisfying (\ref{eq:recipe6}) is crucial for constructing the corrected losses $\corr{L}$ for a CCN instance.
We next discuss how to achieve $\corr{L}^{\top}\corr{P} = L^{\top}P$ (\ref{eq:recipe2a}) for the generalized CCN setting given the contamination matrix $\Mcorr{gCCN}$ (\ref{eq:MGeneral_2}).
Derived equations will be applied to solve the risk rewrite problem for WSLs discussed in Section~\ref{sec:formulations_ccns}.

\paragraph{Step 1: Corrected Loss Design.}~\\
\indent
Let us follow the notations in Theorem~\ref{thm:marginal_chain} and Section~\ref{sec:GE1_gCCN}.
Note that for generalized CCN, $\corr{P} = \Mcorr{gCCN} B$ and $B = P$ from Lemma~\ref{lma:formulate_gCCN}.
Thus, we have $\corr{P} = \Mcorr{gCCN} P$ for free (i.e., do not need to handle $\Mcorr{trsf}$ discussed in Section~\ref{sec:GE2_UU}).
Noticing $\Mcorr{gCCN}$ (\ref{eq:MGeneral_2}) equals $M$ (\ref{eq:M_MargianlChain}), a direct application of Theorem~\ref{thm:marginal_chain} gives the decontamination matrix 
\eqarr{
    \Mcorr{gCCN}^{\dagger}
    =
    \mmatrix{
        \prob{Y=1|S=s_{1},X} & \prob{Y=1|S=s_{2},X} & \cdots & \prob{Y=1|S=s_{|\mathcal{S}|},X} \\
        \prob{Y=2|S=s_{1},X} & \prob{Y=2|S=s_{2},X} & \cdots & \prob{Y=2|S=s_{|\mathcal{S}|},X} \\
        \vdots & \vdots & \ddots & \vdots \\
        \prob{Y=K|S=s_{1},X} & \prob{Y=K|S=s_{2},X} & \cdots & \prob{Y=K|S=s_{|\mathcal{S}|},X}
    } \label{eq:MGeneral_inv_b}
}
for the generalized CCN setting satisfying $\Mcorr{gCCN}^{\dagger} \corr{P} = P$.
Then, instantiating (\ref{eq:recipe5}), we obtain the corrected losses
$\corr{L}^{\top} := L^{\top} \Mcorr{gCCN}^{\dagger}$, where
the $k$-th entry of $L$ is $\ell_{Y=k}$ with $k \in [K]$ and the $j$-th entry of $\corr{L}$ is $\corr{\ell}_{S=s_{j}}$ with $j \in [|\mathcal{S}|]$.

Despite Theorem~\ref{thm:marginal_chain}'s simplicity, the construction of $\Mcorr{gCCN}^{\dagger}$ is somewhat surprising.
$\Mcorr{gCCN}^{\dagger}$, to our best knowledge, contributes to a first loss correction result relaxing the invertibility constraint.
Unlike $\Mcorr{UU}^{\dagger}$ (Corollary~\ref{thm:UU_M_inv}), which needs to compute an inverse matrix, one can construct $\Mcorr{gCCN}^{\dagger}$ by calculating each entry $\prob{Y|S,X}$ in (\ref{eq:MGeneral_inv_b}), to which, we point out an efficient way in Section~\ref{sec:GE2_PPL}.

\paragraph{Step 2: Classification Risk Rewrite.}~\\
\indent
The following theorem applies the proposed framework to obtain an intermediate form of risk rewrite.
\begin{theorem}
\label{thm:rewrite_gCCN}\label{thm:rewrite_CCN_gCCN}
    Denote $\corr{L}^{\top} := L \Mcorr{gCCN}^{\dagger}$.
    Then, $\corr{L}^{\top} \corr{P} = L^{\top} P$ and 
    \eqarr{
        R(g) 
        = 
        \int_{\mathcal{X}} L^{\top} P \dx 
        =
        \int_{\mathcal{X}} \corr{L}^{\top} \corr{P} \dx. \label{eq:risk_rewrite_GCCN_intermediate}
    }
\end{theorem}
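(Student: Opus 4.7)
The plan is to chain together three facts already established in the paper: the matrix formulation of generalized CCN, the marginal chain decontamination identity, and the inner-product expression of the classification risk. First, I would invoke Lemma~\ref{lma:formulate_gCCN} to rewrite the data-generating distributions as $\corr{P} = \Mcorr{gCCN}\, B$ with $B = P$, so that $\corr{P} = \Mcorr{gCCN}\, P$ holds without the need for an auxiliary transform matrix $\Mcorr{trsf}$.

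Next, I would observe that $\Mcorr{gCCN}$ defined in (\ref{eq:MGeneral_2}) is exactly the matrix $M$ appearing in Theorem~\ref{thm:marginal_chain} under the identification $S \leftrightarrow \corr{Y}$ and $\{s_1,\ldots,s_{|\mathcal{S}|}\} \leftrightarrow [K]$. Consequently, the marginal chain construction yields $\Mcorr{gCCN}^{\dagger}$ as in (\ref{eq:MGeneral_inv_b}), and the theorem guarantees $P = \Mcorr{gCCN}^{\dagger} \corr{P}$. Substituting this identity into the definition $\corr{L}^{\top} := L^{\top} \Mcorr{gCCN}^{\dagger}$ gives
\begin{equation*}
    \corr{L}^{\top} \corr{P}
    = L^{\top} \Mcorr{gCCN}^{\dagger} \corr{P}
    = L^{\top} P,
\end{equation*}
which is the pointwise equality asserted in the theorem.

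Finally, to obtain the integral form, I would simply integrate both sides over $x \in \mathcal{X}$. By the definition of the classification risk in (\ref{eq:erm_b0}) and the inner-product simplification noted in Section~\ref{sec:framework_rewrite}, $R(g) = \int_{\mathcal{X}} L^{\top} P \, \dx$, and therefore $R(g) = \int_{\mathcal{X}} \corr{L}^{\top} \corr{P} \, \dx$ as well.

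I do not expect any genuine obstacle here: the proof is essentially a bookkeeping exercise composed of three previously established ingredients. The only point that requires mild care is verifying the notational alignment between Theorem~\ref{thm:marginal_chain} and the generalized CCN formulation (so that the rows and columns of $\Mcorr{gCCN}$ and $\Mcorr{gCCN}^{\dagger}$ match the intended labels), but this is immediate from comparing (\ref{eq:M_MargianlChain}) with (\ref{eq:MGeneral_2}) and (\ref{eq:MC_matrix}) with (\ref{eq:MGeneral_inv_b}).
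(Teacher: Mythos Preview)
Your proposal is correct and follows essentially the same route as the paper: invoke Theorem~\ref{thm:marginal_chain} to obtain $\Mcorr{gCCN}^{\dagger}\corr{P}=P$, substitute into $\corr{L}^{\top}:=L^{\top}\Mcorr{gCCN}^{\dagger}$ to get the pointwise identity, and integrate. One small remark: no ``identification $S\leftrightarrow\corr{Y}$, $\{s_1,\ldots,s_{|\mathcal{S}|}\}\leftrightarrow[K]$'' is needed, since Theorem~\ref{thm:marginal_chain} and Lemma~\ref{lma:formulate_gCCN} already use literally the same $S\in\mathcal{S}$ and $Y\in[K]$, so $\Mcorr{gCCN}$ and $M$ of (\ref{eq:M_MargianlChain}) coincide verbatim.
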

\begin{proof}
    Since $\Mcorr{gCCN}^{\dagger}$ is given by Theorem~\ref{thm:marginal_chain}, $\Mcorr{gCCN}^{\dagger} \corr{P} = P$.
    Thus, following the framework (\ref{eq:recipe2a}), we have $\corr{L}^{\top} \corr{P} = L^{\top} \Mcorr{gCCN}^{\dagger} \corr{P} = L^{\top} P$ implying (\ref{eq:risk_rewrite_GCCN_intermediate}).
\end{proof}
\connect
Theorem~\ref{thm:rewrite_gCCN} will be applied to derive the respective rewrites for WSLs discussed in Section~\ref{sec:formulations_ccns} in the rest of this subsection.
In particular, we explain how to realize $\Mcorr{gCCN}^{\dagger}$ (\ref{eq:MGeneral_inv_b}) for a given CCN scenario. 
Then, the risk rewrite (\ref{eq:risk_rewrite_GCCN_intermediate}) automatically carries over for the scenario considered, and the respective $\corr{L}$ specifies the corrected losses in the rewrite.

\subsubsection{Proper Partial-Label (PPL) Learning}
\label{sec:GE2_PPL}
$\Mcorr{gCCN}^{\dagger}$ (\ref{eq:MGeneral_inv_b}) provides an abstraction for us to construct the corrected losses $\corr{L}$.
Next, we focus on deriving the actual form of $\prob{Y|S,X}$ in $\Mcorr{gCCN}^{\dagger}$ to explicitly express $\corr{\ell}_{S}$ for PPL.
\paragraph{Step 1: Corrected Loss Design and Risk Rewrite.}~\\
\indent
Let us follow the notations in Theorem~\ref{thm:marginal_chain} and Section~\ref{sec:GE1_PPL}.
The following lemma specifies the form of $\prob{Y|S,X}$ to instantiate $\Mcorr{gCCN}^{\dagger}$.
\begin{lemma}
\label{lma:inv_MPPL}
    $\Mcorr{PPL}^{\dagger}$ corresponds to realizing $\Mcorr{gCCN}^{\dagger}$ (\ref{eq:MGeneral_inv_b}) with
    \eqarr{
        \prob{Y=i|S=s_j,X} 
        := \frac{\prob{Y=i|X} \ind{Y=i \in s_j}}{\sum_{a \in s_j} \prob{Y=a|X}}. \label{eq:PPL_type1}
    }
\end{lemma}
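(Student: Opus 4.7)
The plan is to derive the stated form of $\prob{Y=i|S=s_j,X}$ by a direct application of Bayes' rule conditioned on $X$, combined with the PPL-specific contamination probability $\prob{S=s_j|Y=i,X} = C(s_j,X) \ind{Y=i \in s_j}$ from (\ref{eq:P_S|YX_PPL}). Once this entry-wise identity is established, plugging it into each $(i,j)$ entry of $\Mcorr{gCCN}^{\dagger}$ in (\ref{eq:MGeneral_inv_b}) yields the matrix $\Mcorr{PPL}^{\dagger}$ as claimed.

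Concretely, my first step is to write
\[
\prob{Y=i|S=s_j,X} = \frac{\prob{S=s_j|Y=i,X}\prob{Y=i|X}}{\prob{S=s_j|X}},
\]
which requires $\prob{S=s_j|X} > 0$; this is guaranteed because $s_j \in \mathcal{S}$ is an observable compound label. Second, I compute the denominator by marginalizing over $Y$:
\[
\prob{S=s_j|X} = \sum_{k=1}^{K} \prob{S=s_j|Y=k,X}\prob{Y=k|X} = C(s_j,X) \sum_{a \in s_j} \prob{Y=a|X},
\]
using the indicator $\ind{Y=k \in s_j}$ to restrict the sum to labels contained in $s_j$. Third, I substitute both $\prob{S=s_j|Y=i,X} = C(s_j,X)\ind{Y=i \in s_j}$ in the numerator and the above expression in the denominator, so $C(s_j,X)$ cancels and the stated identity (\ref{eq:PPL_type1}) drops out.

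The main obstacle is essentially the well-definedness of the cancellation: one needs $C(s_j,X) \neq 0$ and $\sum_{a \in s_j}\prob{Y=a|X} \neq 0$ for every $s_j$ appearing with positive probability in the observed data. Both are natural under the PPL setup, since $C$ is the coefficient characterizing the properness of the partial-label generation and $s_j$ is assumed to contain the true label (so at least one term in the sum is positive for any $x$ generating $s_j$). Once this is noted, the identification of the $(i,j)$ entry of $\Mcorr{gCCN}^{\dagger}$ with the right-hand side of (\ref{eq:PPL_type1}) is immediate, and the lemma follows by inspecting each entry of the matrix (\ref{eq:MGeneral_inv_b}) under the PPL realization.
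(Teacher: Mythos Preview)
Your proposal is correct and matches the paper's own proof essentially step for step: the paper also applies Bayes' rule conditioned on $X$, expands the denominator $\prob{S=s_j|X}$ as $\sum_k \prob{S=s_j|Y=k,X}\prob{Y=k|X}$, substitutes the PPL form $C(s_j,X)\ind{Y=k\in s_j}$, and cancels $C(s_j,X)$. The only cosmetic difference is that the paper first cites Theorem~1 of \cite{partial_21_PPL/Wu/LS/23} and then gives the same derivation ``for completeness,'' whereas you carry out the computation directly.
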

\begin{proof}
    Recall that the decontamination matrix of $\Mcorr{gCCN}$ (\ref{eq:MGeneral_2}) is $\Mcorr{gCCN}^{\dagger}$ (\ref{eq:MGeneral_inv_b}) and $\Mcorr{PPL}$ is a reduction of $\Mcorr{gCCN}$ via $\prob{S|Y,X} = C(S,X)\ind{Y \in S}$ (\ref{eq:P_S|YX_PPL}).
    Thus, to find out the $(i,j)$ entry of $\Mcorr{PPL}^{\dagger}$, we need to find out the form of $\prob{Y=i|S=s_j,X}$ subject to (\ref{eq:P_S|YX_PPL}).

    Applying Theorem 1 of \cite{partial_21_PPL/Wu/LS/23} directly gives
    \eqarr{
        \prob{Y=i|S=s_j,X} 
        = \frac{\prob{Y=i|X} \ind{Y=i \in s_j}}{\sum_{a \in s_j} \prob{Y=a|X}}, \nonumber
    }
    which completes the proof. 
    For completeness, we provide a derivation as follows.
    Since $\prob{S|X} = \sum_{a\in S} \prob{S,Y=a|X}$ (recall the assumption $\prob{Y \in S | S, X} = 1$ in Section~\ref{sec:GE1_PPL}) and $\prob{S,Y|X} = \prob{S|Y,X}\prob{Y|X}$,
    \eqarr{
        \prob{Y|S,X}
        =
        \frac{\prob{S,Y|X}}{\prob{S|X}}
        &=&
        \frac{\prob{S|Y,X}\prob{Y|X}}{\sum_{a\in S} \prob{S|Y=a,X}\prob{Y=a|X}} \nonumber \\
        &=&
        \frac{C(S,X)\ind{Y \in S} \prob{Y|X}}{\sum_{a\in S} C(S,X)\ind{Y=a \in S} \prob{Y=a|X}} \nonumber \\
        &=&
        \frac{\prob{Y|X}\ind{Y \in S}}{\sum_{a\in S} \prob{Y=a|X}} \nonumber
    }
    achieves (\ref{eq:PPL_type1}).
\end{proof}
Then, we construct the corrected losses according to (\ref{eq:PPL_type1}) and continue (\ref{eq:risk_rewrite_GCCN_intermediate}) to obtain the risk rewrite (\ref{eq:review_rewrite_PPL}) in Section~\ref{sec:review_PPL} for PPL.
\begin{corollary}
\label{thm:rewrite_PPL}
    Define the corrected losses 
    $\corr{L}^{\top} := L^{\top} \Mcorr{PPL}^{\dagger}$.
    Then, for PPL learning, the classification risk can be rewritten as 
    $$
        R(g) = \expt{S,X}{\corr{\ell}_S}, 
    $$
    where
    \eqarr{
        \corr{\ell}_{S} = \sum_{i \in S} \frac{\prob{Y=i|X}}{\sum_{a \in S} \prob{Y=a|X}} \ell_{Y=i}. \label{eq:calibrated_loss_PPL}
    }
\end{corollary}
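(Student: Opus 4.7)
The plan is to invoke Theorem~\ref{thm:rewrite_gCCN} as a black box, since PPL is a special case of the generalized CCN scenario (as established in Section~\ref{sec:GE1_PPL} via the reduction path $\Mcorr{gCCN} \rightarrow \Mcorr{PPL}$). The theorem immediately hands us the intermediate identity
\[
    R(g) = \int_{\mathcal{X}} \corr{L}^{\top} \corr{P} \, \dx,
\]
with $\corr{L}^{\top} = L^{\top} \Mcorr{PPL}^{\dagger}$ and $\corr{P}$ being the PPL data-generating vector with entries $\prob{S=s_j, X}$. So the task reduces to (i) writing out $\corr{\ell}_{S=s_j}$ explicitly using the known entries of $\Mcorr{PPL}^{\dagger}$, and (ii) converting the integral of the inner product into an expectation over $(S, X)$.

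For step (i), I would use Lemma~\ref{lma:inv_MPPL}, which tells us that the $(i,j)$ entry of $\Mcorr{PPL}^{\dagger}$ is $\prob{Y=i \mid S=s_j, X} = \frac{\prob{Y=i \mid X}\, \ind{Y=i \in s_j}}{\sum_{a \in s_j} \prob{Y=a \mid X}}$. The $j$-th component of $\corr{L}^{\top}$ is then the inner product of $L$ with the $j$-th column of $\Mcorr{PPL}^{\dagger}$, namely
\[
    \corr{\ell}_{S=s_j} \;=\; \sum_{i=1}^{K} \ell_{Y=i}\, \prob{Y=i \mid S=s_j, X} \;=\; \sum_{i \in s_j} \frac{\prob{Y=i \mid X}}{\sum_{a \in s_j} \prob{Y=a \mid X}}\, \ell_{Y=i},
\]
where the indicator $\ind{Y=i \in s_j}$ collapses the full sum over $[K]$ into a sum over $i \in s_j$, which matches (\ref{eq:calibrated_loss_PPL}).

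For step (ii), I would expand the matrix product inside the integral as a sum over $j \in [|\mathcal{S}|]$:
\[
    \int_{\mathcal{X}} \corr{L}^{\top} \corr{P} \, \dx
    \;=\; \sum_{j=1}^{|\mathcal{S}|} \int_{\mathcal{X}} \corr{\ell}_{S=s_j}\, \prob{S=s_j, X} \, \dx
    \;=\; \expt{S,X}{\corr{\ell}_{S}},
\]
using the definition of expectation over the joint distribution of $(S,X)$. Combining the two steps with Theorem~\ref{thm:rewrite_gCCN} yields the claimed rewrite.

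There is no real technical obstacle here: all the substantive content — the invertibility-free decontamination via the marginal chain, and the identification of $\prob{Y \mid S, X}$ — has already been discharged by Theorem~\ref{thm:marginal_chain}, Theorem~\ref{thm:rewrite_gCCN}, and Lemma~\ref{lma:inv_MPPL}. The only mild care needed is notational: making sure the indicator inside (\ref{eq:PPL_type1}) is used to convert the outer sum over $i \in [K]$ into a sum restricted to $i \in S$, and being explicit about the translation between the integral-of-inner-product form used by the framework and the expectation form stated in the corollary. If desired, this translation can be stated as a one-line observation that $\int_{\mathcal{X}} f(s_j, x) \prob{S=s_j, X} \, \dx$ summed over $j$ is by definition $\expt{S,X}{f(S, X)}$.
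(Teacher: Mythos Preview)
Your proposal is correct and matches the paper's own proof essentially line for line: the paper likewise computes $\corr{\ell}_{S=s_j}$ from Lemma~\ref{lma:inv_MPPL} by collapsing the indicator, and then invokes the intermediate rewrite (\ref{eq:risk_rewrite_GCCN_intermediate}) from Theorem~\ref{thm:rewrite_gCCN} to pass to $\expt{S,X}{\corr{\ell}_S}$. There is nothing to add.
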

\begin{proof}
    Given (\ref{eq:PPL_type1}), the $j$-th entry of $\corr{L}^{\top}$ is of the form
    \eqarr{
        \corr{\ell}_{S=s_j} 
        = \left( L^{\top} \Mcorr{PPL}^{\dagger} \right)_j 
        &=& \sum_{i=1}^{K} \frac{\prob{Y=i|X} \ind{Y=i \in s_j} }{\sum_{a \in s_j} \prob{Y=a|X}} \ell_{Y=i} \nonumber \\
        &=& \sum_{i \in s_j} \frac{\prob{Y=i|X}}{\sum_{a \in s_j} \prob{Y=a|X}} \ell_{Y=i}. \nonumber
    }
    Then, since $\Mcorr{PPL}^{\dagger}$ is a realization of $\Mcorr{gCCN}^{\dagger}$ according to Lemma~\ref{lma:inv_MPPL}, we continue (\ref{eq:risk_rewrite_GCCN_intermediate}) to express the risk as
    \eqarr{
        R(g) 
        = 
        \int_{x \in \mathcal{X}} \corr{L}^{\top} \corr{P} \dx 
        =
        \int_{x \in \mathcal{X}} \sum_{j=1}^{|\mathcal{S}|} \prob{S=s_j, x} \corr{\ell}_{S=s_j} \dx
        = 
        \expt{S,X}{\corr{\ell}_S}. \nonumber 
    }
\end{proof}

\paragraph{Step 2: Recovering the previous result(s).}~\\
\indent
We finish this part by pointing out Corollary~\ref{thm:rewrite_PPL} recovers Theorem 3 of \cite{partial_21_PPL/Wu/LS/23}.

\subsubsection{Provably Consistent Partial-Label (PCPL) Learning}
\label{sec:GE2_PCPL}
It is fairly straightforward to apply the proposed framework to rewrite the classification risk.
But it is more involved in recovering the existing result.

\paragraph{Step 1: Corrected Loss Design and Risk Rewrite.}~\\
\indent
From Section~\ref{sec:GE1_PCPL} we know that PCPL is a special case of PPL that only differs in the choice of $C(S, X)$.
Note that $\Mcorr{PCPL}^{\dagger} = \Mcorr{PPL}^{\dagger}$ since the proof of Theorem 1 in \cite{partial_21_PPL/Wu/LS/23} cancels $C(S, X)$ in the derivation (refer the proof of Lemma~\ref{lma:inv_MPPL} for detail).
Hence, following the notations in Section~\ref{sec:GE1_PCPL} and applying Corollary~\ref{thm:rewrite_PPL} directly, we obtain the risk rewrite for PCPL:
\begin{corollary}
\label{thm:rewrite_PCPL}
    Let $\Mcorr{PCPL}^{\dagger} = \Mcorr{PPL}^{\dagger}$.
    Define the corrected losses 
    $\corr{L}^{\top} := L^{\top} \Mcorr{PCPL}^{\dagger}$.
    Then, for PCPL learning, the classification risk can be rewritten as 
    $$
        R(g) = \expt{S,X}{\corr{\ell}_S}, 
    $$
    where
    \eqarr{
        \corr{\ell}_{S} = \sum_{i \in S} \frac{\prob{Y=i|X}}{\sum_{a \in S} \prob{Y=a|X}} \ell_{Y=i}. \label{eq:rewrite_PCPL}
    }
\end{corollary}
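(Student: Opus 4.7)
The plan is to leverage the fact that PCPL is merely PPL with the specific weighting choice $C(S,X) = \frac{1}{2^{K-1}-1}$, as established in Section~\ref{sec:GE1_PCPL}. Therefore, the entire machinery developed for PPL in Corollary~\ref{thm:rewrite_PPL} should carry over verbatim, provided I can justify the assertion $\Mcorr{PCPL}^{\dagger} = \Mcorr{PPL}^{\dagger}$.

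First, I would revisit the derivation of $\prob{Y=i|S=s_j,X}$ in Lemma~\ref{lma:inv_MPPL}. The key observation is that when one writes
\[
\prob{Y|S,X} = \frac{\prob{S|Y,X}\prob{Y|X}}{\sum_{a\in S} \prob{S|Y=a,X}\prob{Y=a|X}} = \frac{C(S,X)\ind{Y \in S}\prob{Y|X}}{\sum_{a\in S} C(S,X)\ind{Y=a \in S}\prob{Y=a|X}},
\]
the factor $C(S,X)$ is independent of $Y$ and $a$, so it cancels between numerator and denominator. This cancellation means that the entries of $\Mcorr{gCCN}^{\dagger}$ in (\ref{eq:MGeneral_inv_b}) only depend on $C(S,X)$ through its support (i.e., through $\ind{Y\in S}$), not through its actual value. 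Plugging in the PCPL-specific value $C(S,X) = \frac{1}{2^{K-1}-1}$ therefore yields the same decontamination matrix as PPL, justifying $\Mcorr{PCPL}^{\dagger} = \Mcorr{PPL}^{\dagger}$.

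Once this identification is in hand, the rewrite follows immediately by invoking Corollary~\ref{thm:rewrite_PPL} with $\corr{L}^{\top} := L^{\top} \Mcorr{PCPL}^{\dagger} = L^{\top} \Mcorr{PPL}^{\dagger}$ and with $\corr{P}$ realized according to the PCPL formulation (\ref{eq:formulate_PCPL}) via Lemma~\ref{lma:formulate_PCPL}. The corrected loss formula (\ref{eq:rewrite_PCPL}) is literally the PPL formula (\ref{eq:calibrated_loss_PPL}), and the risk equals $\expt{S,X}{\corr{\ell}_S}$ by the same calculation used in the proof of Corollary~\ref{thm:rewrite_PPL}.

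There is no real obstacle here; the result is essentially an inheritance statement. The only point worth being careful about is emphasizing why the cancellation of $C(S,X)$ is legitimate, namely that $C$ depends only on $(S,X)$ and not on the summation index, so it factors out cleanly. If anything, the main thing to double-check is that the assumption $\prob{Y\in S|S,X} = 1$ inherited from PPL is consistent with the PCPL formulation, which is immediate from (\ref{eq:formulate_PCPL}) since the sum is restricted to $Y \in S$.
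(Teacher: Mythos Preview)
Your proposal is correct and mirrors the paper's own argument: the paper likewise notes that PCPL is the special case of PPL with $C(S,X)=\frac{1}{2^{K-1}-1}$, observes that $C(S,X)$ cancels in the computation of $\prob{Y|S,X}$ (pointing to the proof of Lemma~\ref{lma:inv_MPPL}) so that $\Mcorr{PCPL}^{\dagger}=\Mcorr{PPL}^{\dagger}$, and then applies Corollary~\ref{thm:rewrite_PPL} directly. Your explicit display of the cancellation is, if anything, slightly more detailed than the paper's reference-based justification, but the route is identical.
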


\paragraph{Step 2: Recovering the previous result(s).}~\\
\indent
In order to recover (8) of \cite{partial_20_PCPL/Feng/LHXNGAS/20}, we need to reorganize the sum in (\ref{eq:rewrite_PCPL}) by leveraging a unique property of a pair of partial-labels $(s,s')$ that complement each other.
The following technical lemma states the required property, whose proof is deferred to the end of this sub-subsection.
\begin{lemma}
\label{lma:recover_PCPL}
    Let $(s,s')$ be a pair of partial-labels satisfying $s = \mathcal{Y}\backslash s'$.
    Then, 
    \eqarr{
        \prob{S=s, X} \corr{\ell}_{S=s} + \prob{S=s', X} \corr{\ell}_{S=s'}
        =
        \prob{S=s, X} \sum_{i=1}^{K} \frac{\prob{Y=i|X} \ell_{Y=i}}{\sum_{a \in s} \prob{Y=a|X}}. \nonumber 
    }
\end{lemma}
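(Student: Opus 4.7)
The plan is to reduce both sides to the same canonical expression, namely $\frac{\prob{X}}{2^{K-1}-1} \sum_{i=1}^{K} \prob{Y=i|X}\ell_{Y=i}$, by exploiting the cancellation between the PCPL data-generating weight $\frac{1}{2^{K-1}-1}\sum_{Y \in S}\prob{Y,X}$ and the normaliser $\sum_{a\in S}\prob{Y=a|X}$ sitting in the denominator of $\corr{\ell}_S$ (\ref{eq:rewrite_PCPL}). The complementarity $s' = \mathcal{Y}\backslash s$ then lets me glue the two partial sums into a full sum over $[K]$ on the left, and the fact that the PCPL weight reappears as a common factor on the right closes the identity.

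Concretely, I would first rewrite, using $\prob{Y,X}=\prob{Y|X}\prob{X}$ and the formulation (\ref{eq:formulate_PCPL}),
\begin{equation*}
\prob{S=s,X} \;=\; \frac{1}{2^{K-1}-1}\sum_{Y\in s}\prob{Y,X} \;=\; \frac{\prob{X}}{2^{K-1}-1}\sum_{a\in s}\prob{Y=a|X}.
\end{equation*}
Plugging this into $\prob{S=s,X}\,\corr{\ell}_{S=s}$ and using (\ref{eq:rewrite_PCPL}), the factor $\sum_{a\in s}\prob{Y=a|X}$ cancels against the denominator inside $\corr{\ell}_{S=s}$, leaving
\begin{equation*}
\prob{S=s,X}\,\corr{\ell}_{S=s} \;=\; \frac{\prob{X}}{2^{K-1}-1}\sum_{i\in s}\prob{Y=i|X}\,\ell_{Y=i}.
\end{equation*}
The same manipulation applied to $s'$ yields the analogous identity with the sum restricted to $i\in s'$.

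Next, I would invoke the complementarity hypothesis $s = \mathcal{Y}\backslash s'$, which implies $s \sqcup s' = [K]$, so that
\begin{equation*}
\prob{S=s,X}\,\corr{\ell}_{S=s} + \prob{S=s',X}\,\corr{\ell}_{S=s'} \;=\; \frac{\prob{X}}{2^{K-1}-1}\sum_{i=1}^{K}\prob{Y=i|X}\,\ell_{Y=i}.
\end{equation*}
Finally, for the right-hand side of the claim, I would again substitute the expansion of $\prob{S=s,X}$ and observe that the factor $\sum_{a\in s}\prob{Y=a|X}$ cancels against the identical denominator in the sum, producing exactly the same expression as above. Comparing the two matches the equality stated in the lemma.

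The only place where anything could go wrong is the disjointness part of $s \sqcup s' = [K]$: one must confirm that the definition of partial-label in PCPL admits such a complementary pair, i.e.\ that both $s$ and $s' = \mathcal{Y}\backslash s$ belong to $\mathcal{S} = 2^{[K]}\backslash\{\emptyset,[K]\}$. This is immediate since $s\in\mathcal{S}$ forces $1\le |s|\le K-1$, which is preserved under complementation. Apart from this bookkeeping check, the proof is pure algebraic rearrangement.
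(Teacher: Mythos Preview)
Your proof is correct and follows essentially the same approach as the paper: both exploit the cancellation between the PCPL weight $\frac{1}{2^{K-1}-1}\sum_{a\in S}\prob{Y=a,X}$ and the normaliser $\sum_{a\in S}\prob{Y=a|X}$ in $\corr{\ell}_S$, then use the complementarity $s\sqcup s'=[K]$ to merge the two partial sums into a full sum over $[K]$. The only cosmetic difference is that you reduce both sides to the common canonical form $\frac{\prob{X}}{2^{K-1}-1}\sum_{i=1}^{K}\prob{Y=i|X}\ell_{Y=i}$, whereas the paper instead rewrites the $s'$ term to share the denominator $\sum_{a\in s}\prob{Y=a|X}$ with the $s$ term before adding; the underlying identity $\frac{\sum_{k\in S}\prob{Y=k,X}}{\sum_{a\in S}\prob{Y=a|X}}=\prob{X}$ is the same in both.
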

Denote $s'_j := \mathcal{Y}\backslash s_j$ for every $s_j \in \mathcal{S}$.
Then, Lemma~\ref{lma:recover_PCPL} implies
\eqarr{
    \sum_{j=1}^{|\mathcal{S}|} 2 \prob{S=s_j, X} \corr{\ell}_{S=s_j}
    &=&
    \sum_{j=1}^{|\mathcal{S}|} \left( \prob{S=s_j, X} \corr{\ell}_{S=s_j} + \prob{S=s'_j, X} \corr{\ell}_{S=s'_j} \right) \nonumber \\
    &=&
    \sum_{j=1}^{|\mathcal{S}|} 
    \prob{S=s_j, X} 
    \sum_{i=1}^{K} \frac{\prob{Y=i|X} \ell_{Y=i}}{\sum_{a \in s_j} \prob{Y=a|X}}. \nonumber
}
Hence, continuing from Corollary~\ref{thm:rewrite_PCPL},
\eqarr{
    \expt{S,X}{\corr{\ell}_{S}} 
    &=&
    \int_{x\in\mathcal{X}} \sum_{j=1}^{|\mathcal{S}|} \prob{S=s_{j}, x} \corr{\ell}_{S=s_j} \dx \nonumber \\
    &=&
    \frac{1}{2} \int_{x\in\mathcal{X}} \sum_{j=1}^{|\mathcal{S}|} 
    \prob{S=s_j, x} 
    \sum_{i=1}^{K} \frac{\prob{Y=i|x} \ell_{Y=i}}{\sum_{a \in s_j} \prob{Y=a|x}} \dx \nonumber \\
    &=&
    \frac{1}{2} \expt{S,X}{\sum_{i=1}^{K} \frac{\prob{Y=i|X} }{\sum_{a \in S} \prob{Y=a|X}} \ell_{Y=i}} \nonumber
}
shows that the rewrite from the framework recovers (\ref{eq:review_rewrite_PCPL}) in Section~\ref{sec:review_PCPL}. 
By matching notations, we also recover (8) of \cite{partial_20_PCPL/Feng/LHXNGAS/20}\footnote{
The matching to the notations of \cite{partial_20_PCPL/Feng/LHXNGAS/20} is as follows:
$\prob{S,X}$ is $\tilde{p}(x,Y)$,
$\prob{Y=i|X}$ is $p(y=i|x)$, and
$\ell_{Y=i}$ is $\mathcal{L}(f(x),i)$.
}.

Now we return to the postponed proof.
\begin{proof}\textbf{of Lemma~\ref{lma:recover_PCPL}.}
    Given $\Mcorr{PCPL}$ (\ref{eq:MPCPL}), we apply $\corr{P} = \Mcorr{PCPL} B$ to obtain 
    \eqarr{
        \prob{S=s', X}
        =
        \frac{\sum_{k=1}^{K} \ind{Y=k\in s'} \prob{Y=k, X}}{2^{K-1}-1}. \nonumber
    }
    We also have
    \eqarr{
        \corr{\ell}_{S=s'}
        =
        \frac{\sum_{i\in s'} \prob{Y=i|X} \ell_{Y=i} }{\sum_{a \in s'} \prob{Y=a|X}} \nonumber
    }
    according to (\ref{eq:calibrated_loss_PPL}).
    Since
    \eqarr{
        \frac{\sum_{k=1}^{K} \ind{Y=k\in s'} \prob{Y=k, X}}{\sum_{a \in s'} \prob{Y=a|X}} 
        = 
        \prob{X} 
        =
        \frac{\sum_{k=1}^{K} \ind{Y=k\in s} \prob{Y=k, X}}{\sum_{a \in s} \prob{Y=a|X}}, \nonumber
    }
    \eqarr{
        \prob{S=s', X} \corr{\ell}_{S=s'}
        &=&
        \frac{\sum_{k=1}^{K} \ind{Y=k\in s'} \prob{Y=k, X}}{2^{K-1}-1} \frac{\sum_{i\in s'} \prob{Y=i|X} \ell_{Y=i} }{\sum_{a \in s'} \prob{Y=a|X}} \nonumber \\
        &=&
        \frac{\sum_{k=1}^{K} \ind{Y=k\in s} \prob{Y=k, X}}{2^{K-1}-1} \frac{\sum_{i\in s'} \prob{Y=i|X} \ell_{Y=i} }{\sum_{a \in s} \prob{Y=a|X}}. \nonumber
    }
    Thus,
    \eqarr{
        \prob{S=s, X} \corr{\ell}_{S=s} + \prob{S=s', X} \corr{\ell}_{S=s'} 
        &=& 
        \frac{\sum_{k=1}^{K} \ind{Y=k\in s} \prob{Y=k, X}}{2^{K-1}-1} \frac{\sum_{i\in s} \prob{Y=i|X} \ell_{Y=i} }{\sum_{a \in s} \prob{Y=a|X}} \nonumber \\
        &&+
        \frac{\sum_{k=1}^{K} \ind{Y=k\in s} \prob{Y=k, X}}{2^{K-1}-1} \frac{\sum_{i\in s'} \prob{Y=i|X} \ell_{Y=i} }{\sum_{a \in s} \prob{Y=a|X}} \nonumber \\
        &=&
        \prob{S=s, X} \frac{\sum_{i\in s} \prob{Y=i|X} \ell_{Y=i} + \sum_{i\in s'} \prob{Y=i|X} \ell_{Y=i}}{\sum_{a \in s} \prob{Y=a|X}} \nonumber \\
        &=&
        \prob{S=s, X} \sum_{i = 1}^{K} \frac{\prob{Y=i|X} \ell_{Y=i} }{\sum_{a \in s} \prob{Y=a|X}} \nonumber
    }
    proves the lemma.
\end{proof}


\subsubsection{Multi-Complementary-Label (MCL) Learning}
\label{sec:GE2_MCL}
\paragraph{Step 1: Corrected Loss Design and Risk Rewrite.}~\\
Let us follow the notations in Theorem~\ref{thm:marginal_chain} and Section~\ref{sec:GE1_MCL}.
As discussed in Section~\ref{sec:GE1_MCL}, MCL is a special case of PPL.
Thus, we substitute $C(S,X)\ind{Y\in S}$ in PPL with the values specified in (\ref{eq:PPL_to_MCL2}) and repeat the proof of Theorem 1 in \cite{partial_21_PPL/Wu/LS/23} to obtain a complementary version of (11) of \cite{partial_21_PPL/Wu/LS/23}:
\eqarr{
    \prob{Y=i|\corr{S}=\corr{s}_j,X} 
    = \frac{\prob{Y=i|X} \ind{Y=i \notin \corr{s}_j}}{\sum_{a \notin \corr{s}_j} \prob{Y=a|X}}. \label{eq:PPL_type1_to_MCL_margin}
} 
The construction of $\Mcorr{MCL}^{\dagger}$ is to replace each entry in $\Mcorr{gCCN}^{\dagger}$ (\ref{eq:MGeneral_inv_b}) via (\ref{eq:PPL_type1_to_MCL_margin}).
Note that $\prob{Y=i|S=s_{j}, X}$ is assigned as $\prob{Y=i|\corr{S}=\corr{s}_{j}, X}$ since the observed information is changed from a partial sense to a complementary sense, thus inducing the difference in notation.
Repeating the same steps for proving Corollary~\ref{thm:rewrite_PPL}, we have
\eqarr{
    \corr{\ell}_{\corr{S}=\corr{s}_j} 
    = \left( L^{\top} \Mcorr{MCL}^{\dagger} \right)_j 
    &=& \sum_{i=1}^{K} \frac{\prob{Y=i|X} \ind{Y=i \notin \corr{s}_j} }{\sum_{a \notin \corr{s}_j} \prob{Y=a|X}} \ell_{Y=i} \nonumber \\
    &=& \sum_{i \notin \corr{s}_j} \frac{\prob{Y=i|X}}{\sum_{a \notin \corr{s}_j} \prob{Y=a|X}} \ell_{Y=i} \nonumber
}
that leads to a counterpart of Corollary~\ref{thm:rewrite_PPL} for MCL:
\begin{corollary}
\label{thm:rewrite_MCL}
    Define the corrected losses
    $\corr{L}^{\top} := L^{\top} \Mcorr{MCL}^{\dagger}$.
    Then, for MCL learning, the classification risk can be rewritten as 
    $$
        R(g) = \expt{\corr{S},X}{\corr{\ell}_{\corr{S}}}, 
    $$
    where
    \eqarr{
        \corr{\ell}_{\corr{S}} 
        = 
        \sum_{i \notin \corr{S}} \frac{\prob{Y=i|X}}{\sum_{a \notin \corr{S}} \prob{Y=a|X}} \ell_{Y=i}. \label{eq:rewrite_MCL_marginal} 
    }
\end{corollary}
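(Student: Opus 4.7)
The plan is to treat MCL as a direct instance of the PPL machinery established in Corollary~\ref{thm:rewrite_PPL}, with the partial-label set $S$ replaced by its complement $\corr{S} = \mathcal{Y}\backslash S$. Since Section~\ref{sec:GE1_MCL} already shows $\Mcorr{MCL}$ arises from $\Mcorr{PPL}$ by the entry substitution $C(s,X)\ind{Y\in s} = \frac{q_{|s|}}{\binom{K-1}{|s|-1}}\ind{Y\in s}$ together with the change of variable $\corr{s} = \mathcal{Y}\backslash s$, my first step will be to invoke Theorem~\ref{thm:marginal_chain} to produce the decontamination matrix in the complementary language, obtaining the entries $\prob{Y=i|\corr{S}=\corr{s}_j,X}$ of $\Mcorr{MCL}^{\dagger}$.

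Next, I will derive the explicit form (\ref{eq:PPL_type1_to_MCL_margin}), namely $\prob{Y=i|\corr{S}=\corr{s}_j,X} = \prob{Y=i|X}\ind{Y=i\notin \corr{s}_j}/\sum_{a\notin \corr{s}_j}\prob{Y=a|X}$. This mirrors the computation of Lemma~\ref{lma:inv_MPPL} verbatim; the only accounting needed is the swap $\ind{Y\in s} \leftrightarrow \ind{Y\notin \corr{s}}$ and the cancellation of the factor $q_{|\corr{s}|}/\binom{K-1}{|\corr{s}|}$ in numerator and denominator, which is independent of $Y$ and so plays no role, exactly as $C(s,X)$ canceled in the PPL case.

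With the entries of $\Mcorr{MCL}^{\dagger}$ in hand, I would form $\corr{L}^{\top} = L^{\top}\Mcorr{MCL}^{\dagger}$ entrywise, giving $\corr{\ell}_{\corr{S}=\corr{s}_j} = \sum_{i=1}^{K} \frac{\prob{Y=i|X}\ind{Y=i\notin \corr{s}_j}}{\sum_{a\notin \corr{s}_j}\prob{Y=a|X}}\ell_{Y=i}$, which collapses to the sum over $i \notin \corr{s}_j$ stated in (\ref{eq:rewrite_MCL_marginal}). Finally, I would close by applying Theorem~\ref{thm:rewrite_gCCN}, since $\Mcorr{MCL}$ is a specialization of $\Mcorr{gCCN}$ and $\Mcorr{MCL}^{\dagger}$ was constructed by the marginal chain method, yielding $R(g) = \int_{\mathcal{X}} \corr{L}^{\top}\corr{P}\,\dx = \sum_{j} \int_{\mathcal{X}} \prob{\corr{S}=\corr{s}_j,x}\corr{\ell}_{\corr{S}=\corr{s}_j}\,\dx = \expt{\corr{S},X}{\corr{\ell}_{\corr{S}}}$.

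The proof is essentially bookkeeping; I do not anticipate a substantive obstacle. The one point worth flagging carefully is the notational swap $s \leftrightarrow \corr{s}$ that must be applied consistently to the entries of the contamination matrix, its decontamination, and the prior weights $q_{|\cdot|}$ and $\binom{K-1}{|\cdot|}$. As long as Lemma~\ref{lma:matrix_formulation_MCL} is cited for the matrix formulation and Theorem~\ref{thm:marginal_chain} for the inversion step, the derivation is a direct transcription of the PPL argument into the complementary regime.
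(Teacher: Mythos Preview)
Your proposal is correct and follows essentially the same approach as the paper: both treat MCL as the complementary rewriting of PPL, derive (\ref{eq:PPL_type1_to_MCL_margin}) by repeating the Lemma~\ref{lma:inv_MPPL} computation with the swap $\ind{Y\in s}\leftrightarrow\ind{Y\notin\corr{s}}$ and the cancellation of the $Y$-independent factor, then form $\corr{L}^{\top}=L^{\top}\Mcorr{MCL}^{\dagger}$ and invoke the gCCN rewrite (Theorem~\ref{thm:rewrite_gCCN}). Your flagged bookkeeping point about consistently carrying the $s\leftrightarrow\corr{s}$ swap through the prior weights is exactly the only care the paper's argument requires.
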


\paragraph{Step 2: Recovering the previous result(s).}~\\
\indent
Although legitimate, the risk rewrite (\ref{eq:rewrite_MCL_marginal}) following the marginal chain approach appears different from Theorem 3 of \cite{comp_20_MCL/Feng/KHNAS/20}, to which we resort to the inversion approach (Theorem~\ref{thm:inv_method}) that finds another decontamination matrix, termed $\Mcorr{MCL}^{-1}$, to recover.
As a preparation step, we denote $N_d$ as the number of multi-complementary-labels with size $d$ and group rows of $\Mcorr{MCL}$ (\ref{eq:MMCL_detail}) by the size of labels as follows.
\eqarr{
    \Mcorr{MCL} 
    =
    \mmatrix{
        \prob{|\corr{S}|=1} M_1 \\
        \prob{|\corr{S}|=2} M_2 \\
        \vdots \\
        \prob{|\corr{S}|=K-1} M_{K-1}
    }, \label{eq:Mcorr_MCL_regroup}
}
where for $d\in [K-1]$, each block is of the form\footnote{Comparing to (\ref{eq:MMCL_detail}) where we use one index to denote a total of $|\mathcal{S}|$ partial-labels, $M_d$ uses a pair of indices $d$ and $j$ to denote the $j$-th partial-label with size $d$. It is easy to verify that $\sum_{d=1}^{K-1} N_d = \sum_{d=1}^{K-1} {K-1 \choose d} = 2^K - 2 = |\mathcal{S}|$.}
\eqarr{
    M_d
    =
    \frac{1}{{K-1 \choose d}}
    \mmatrix{
        \ind{Y=1 \notin \corr{s}_{d,1}} & \ind{Y=2 \notin \corr{s}_{d,1}} & \cdots & \ind{Y=K \notin \corr{s}_{d,1}} \\
        \ind{Y=1 \notin \corr{s}_{d,2}} & \ind{Y=2 \notin \corr{s}_{d,2}} & \cdots & \ind{Y=K \notin \corr{s}_{d,2}} \\
        \vdots & \vdots & \ddots & \vdots \\
        \ind{Y=1 \notin \corr{s}_{d,N_d}} & \ind{Y=2 \notin \corr{s}_{d,N_d}} & \cdots & \ind{Y=K \notin \corr{s}_{d,N_d}}
    }. \label{eq:Md_MCL}
}
To maintain the equality $\corr{P} = \Mcorr{MCL} P$ established in Lemma~\ref{lma:matrix_formulation_MCL}, we also rearrange $\corr{P}$ (\ref{eq:label_dist_MCL}) as
\eqarr{
    \mmatrix{
        \prob{\corr{S} = \corr{s}_{1,1}, X} & \cdots
        \prob{\corr{S} = \corr{s}_{1,N_1}, X} & \cdots
        \prob{\corr{S} = \corr{s}_{K-1,1}, X} & \cdots
        \prob{\corr{S} = \corr{s}_{K-1,N_{K-1}}, X}
    }^{\top}. \label{eq:label_dist_MCL_2}
}
As a sanity check, we see that for any $d' \in [K-1]$ and $j' \in [N_d]$,
\eqarr{
    \mmatrix{
        \prob{|\corr{S}|=d'} M_{d'} P 
    }_{j'}
    &=&
    \prob{|\corr{S}|=d'} \cdot \frac{1}{{K-1 \choose d'}} \sum_{Y} \ind{Y \notin \corr{s}_{d',j'}} \prob{Y,X} \nonumber \\
    &=&
    \sum_{d=1}^{K-1} \prob{|\corr{s}_{d',j'}| = d} \cdot \frac{1}{{K-1 \choose d'}} \sum_{Y \notin \corr{s}_{d',j'}} \prob{Y,X} \ind{|\corr{s}_{d',j'}|=d} \nonumber \\
    &=&
    \prob{\corr{S} = \corr{s}_{d',j'}, X}. \label{eq:formulate_MCL_3}
}

The next lemma is crucial for us to devise the decontamination matrix $\Mcorr{MCL}^{-1}$ via the inversion approach.
We defer its proof to the later part of this sub-subsection.
\begin{lemma}
\label{lma:Md_inv}
    Let $i^{\star} \in \mathcal{Y}$ be fixed.
    Then, for every $d \in [K-1]$,
    \eqarr{
        \prob{Y=i^{\star}, X} 
        =
        \sum_{j=1}^{N_d} 
        \left( 
        1-\frac{K-1}{d} \ind{Y=i^{\star} \in \corr{S} = \corr{s}_{d,j}} 
        \right)  
        \prob{\corr{S}=\corr{s}_{d,j}, X| |\corr{S}|=d}. \nonumber 
    }
    Moreover, the inverse matrix $M_d^{-1}$ of $M_d$ (\ref{eq:Md_MCL}) is of the form
    \eqarr{
        \mmatrix{
            1-\frac{K-1}{d}\ind{Y=1\in \corr{s}_{d,1}} & 1-\frac{K-1}{d}\ind{Y=1\in \corr{s}_{d,2}} &
            \cdots & 1-\frac{K-1}{d}\ind{Y=1\in \corr{s}_{d,N_d}} \\
            1-\frac{K-1}{d}\ind{Y=2\in \corr{s}_{d,1}} & 1-\frac{K-1}{d}\ind{Y=2\in \corr{s}_{d,2}} &
            \cdots & 1-\frac{K-1}{d}\ind{Y=2\in \corr{s}_{d,N_d}} \\
            \vdots & \vdots & \ddots & \vdots \\
            1-\frac{K-1}{d}\ind{Y=K\in \corr{s}_{d,1}} & 1-\frac{K-1}{d}\ind{Y=K\in \corr{s}_{d,2}} &
            \cdots & 1-\frac{K-1}{d}\ind{Y=K\in \corr{s}_{d,N_d}} \\
        }. \nonumber \\ \label{eq:Md_inv_MCL}
    }
\end{lemma}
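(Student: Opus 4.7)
The plan is to prove the second assertion (the closed form of $M_d^{-1}$) first by direct verification, and then obtain the first assertion as an immediate corollary via $M_d^{-1}(M_d P) = P$. To bridge the two, I will record the observation that from the block decomposition (\ref{eq:Mcorr_MCL_regroup}) together with (\ref{eq:formulate_MCL_3}), one has $(M_d P)_j = \prob{\corr{S}=\corr{s}_{d,j}, X \mid |\corr{S}|=d}$, so applying a correct $M_d^{-1}$ to both sides immediately yields the advertised identity.

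To verify the inverse, let $A$ denote the candidate matrix in (\ref{eq:Md_inv_MCL}), so $A_{i,j} = 1-\tfrac{K-1}{d}\ind{i\in \corr{s}_{d,j}}$, and recall $(M_d)_{j,k} = \tfrac{1}{\binom{K-1}{d}}\ind{k\notin\corr{s}_{d,j}}$. I will compute
\[
(A M_d)_{i,k} = \frac{1}{\binom{K-1}{d}} \sum_{j=1}^{N_d} \left(1-\tfrac{K-1}{d}\ind{i\in\corr{s}_{d,j}}\right)\ind{k\notin\corr{s}_{d,j}},
\]
and split into the diagonal case $i=k$ and the off-diagonal case $i\neq k$. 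In the diagonal case, the summand vanishes whenever $i\in\corr{s}_{d,j}$, so only the constant term contributes and the count reduces to the number of size-$d$ subsets of $[K]\setminus\{i\}$, which is $\binom{K-1}{d}$, giving $(AM_d)_{i,i}=1$.

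The off-diagonal case is the only delicate step: here I must count size-$d$ subsets of $[K]\setminus\{k\}$ (there are $\binom{K-1}{d}$ of them) and, among those, the subsets additionally containing $i$, which equals the number of size-$(d-1)$ subsets of $[K]\setminus\{i,k\}$, namely $\binom{K-2}{d-1}$. The entry then becomes
\[
(AM_d)_{i,k} \;=\; 1 \;-\; \tfrac{K-1}{d}\cdot\frac{\binom{K-2}{d-1}}{\binom{K-1}{d}},
\]
and the combinatorial identity $\binom{K-2}{d-1}/\binom{K-1}{d} = d/(K-1)$ makes the correction cancel to $0$. The main obstacle is precisely this off-diagonal verification, where the particular normalization factor $(K-1)/d$ in $A$ is tailored to match the ratio of binomial coefficients; once the identity is recognized, the computation is mechanical.

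Finally, once $A=M_d^{-1}$ is established, the first assertion follows by taking the $i^{\star}$-th coordinate of $P = M_d^{-1}(M_d P)$ and substituting $(M_d P)_j = \prob{\corr{S}=\corr{s}_{d,j}, X \mid |\corr{S}|=d}$, producing exactly
\[
\prob{Y=i^{\star}, X} = \sum_{j=1}^{N_d}\left(1-\tfrac{K-1}{d}\ind{Y=i^{\star}\in\corr{s}_{d,j}}\right)\prob{\corr{S}=\corr{s}_{d,j}, X\mid |\corr{S}|=d}.
\]
This ordering isolates the only nontrivial computation (the off-diagonal cancellation) and leaves the stated first identity as a one-line consequence.
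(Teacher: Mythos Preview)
Your proposal is correct. The verification $A M_d = I$ that you carry out is essentially the paper's Lemma~\ref{lma:Md_inv_2}: the paper also splits into the cases $i=k$ and $i\neq k$ and counts the relevant subsets, arriving at the same binomial cancellation $\tfrac{K-1}{d}\cdot\tfrac{\binom{K-2}{d-1}}{\binom{K-1}{d}}=1$ (the paper organizes the off-diagonal case into four subcases rather than your two subcounts, but the arithmetic is identical).

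Where you differ is in the overall \emph{ordering}. The paper first proves the identity for $\prob{Y=i^{\star},X}$ \emph{constructively}: it defines $\corr{P}_d$ with $\corr{P}_d=M_dP$, sums $\prob{\corr{S}=\corr{s},X\mid |\corr{S}|=d}$ over the collection $\mathcal{E}_d^{i^\star}=\{\corr{s}\in\corr{\mathcal{S}}_d: i^\star\notin\corr{s}\}$, obtains the key relation
\[
\sum_{\corr{s}\in\mathcal{E}_d^{i^\star}}\prob{\corr{S}=\corr{s},X\mid |\corr{S}|=d}
=\frac{K-1-d}{K-1}\prob{X}+\frac{d}{K-1}\prob{Y=i^\star,X},
\]
and then solves for $\prob{Y=i^\star,X}$ to \emph{discover} the entries $1-\tfrac{K-1}{d}\ind{i^\star\in\corr{s}_{d,j}}$. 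Only after this does the paper verify $M'M_d=I$ in Lemma~\ref{lma:Md_inv_2}. You reverse the flow: verify the inverse first, then read off the identity as the $i^\star$-th row of $P=A(M_dP)$, using $(M_dP)_j=\prob{\corr{S}=\corr{s}_{d,j},X\mid |\corr{S}|=d}$ from (\ref{eq:formulate_MCL_3}). Your route is shorter and avoids the probabilistic bookkeeping entirely; the paper's route is longer but explains where the formula for $M_d^{-1}$ comes from rather than presenting it as an ansatz to be checked.
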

Applying the lemma, we construct 
\eqarr{
    \Mcorr{MCL}^{-1} 
    := 
    \mmatrix{
        M_1^{-1} & M_2^{-1} & \cdots & M_{K-1}^{-1} \nonumber
    }
}
and obtain $\Mcorr{MCL}^{-1} \corr{P} = P$ since $\corr{P} = \Mcorr{MCL} P$ (\ref{eq:formulate_MCL_3}) and 
\eqarr{
    \Mcorr{MCL}^{-1} \Mcorr{MCL}
    =
    \sum_{d=1}^{K-1} M_d^{-1} \prob{|\corr{S}|=d} M_d
    =
    \sum_{d=1}^{K-1} \prob{|\corr{S}|=d} M_d^{-1} M_d
    =
    \sum_{d=1}^{K-1} \prob{|\corr{S}|=d} I
    =
    I. \nonumber
}
\explain
We remark that $\Mcorr{MCL}^{-1}$ plays the same role as $\Mcorr{MCL}^{\dagger}$ realized by (\ref{eq:PPL_type1_to_MCL_margin}), as they both are decontamination matrices (designed to convert $\corr{P}$ back to $P$ and used to construct the corrected losses
$\corr{L}$).
Distinct symbols are merely used to reflect the difference that $\Mcorr{MCL}^{\dagger}$ results from the marginal chain method while $\Mcorr{MCL}^{-1}$ comes from the inversion approach.
Then, applying the framework (\ref{eq:recipe5}), $\corr{L}^{\top} := L^{\top} \Mcorr{MCL}^{-1}$ leads to 
$$
\corr{L}^{\top} \corr{P} 
= L^{\top} \Mcorr{MCL}^{-1} \corr{P}
= L^{\top} P.
$$ 

With the corrected losses $\corr{L}$ in hand, the following theorem provides the risk rewrite (\ref{eq:review_rewrite_MCL}) for MCL via the inversion approach and recovers Theorem 3 of \cite{comp_20_MCL/Feng/KHNAS/20}\footnote{
The matching to the notations of \cite{comp_20_MCL/Feng/KHNAS/20} is as follows:
$\prob{\corr{S},X| |\corr{S}|=d}$ is $\bar{p}(x,\bar{Y}|s=d)$,
$\prob{|\corr{S}|=d}$ is $p(s=d)$, and
$\corr{\ell}_{\corr{S}}$ is $\bar{\mathcal{L}}_{d}(f(x),\bar{Y})$.
}.
\begin{theorem}
\label{thm:rewrite_MCL_inv}
    For MCL learning, the classification risk can be expressed as follows.
    \eqarr{
        R(g) 
        = 
        \expt{\corr{S}, X}{\corr{\ell}_{\corr{S}}}
        =
        \sum_{d=1}^{K-1} \prob{|\corr{S}|=d} \expt{\corr{S},X||\corr{S}|=d}{\corr{\ell}_{\corr{S}}}, \nonumber
    }
    where
    \eqarr{
        \corr{\ell}_{\corr{S}}
        = 
        \sum_{i\notin\corr{S}} \ell_{Y=i}
        - \frac{K-1-|\corr{S}|}{|\corr{S}|} \sum_{\corr{s} \in \corr{S}} \ell_{Y=\corr{s}}. \nonumber
    }
\end{theorem}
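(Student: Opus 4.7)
The plan is to plug the explicit form of $M_d^{-1}$ from Lemma~\ref{lma:Md_inv} into the identity $\corr{L}^{\top} = L^{\top}\Mcorr{MCL}^{-1}$ already set up in the excerpt, and then exploit the block structure of $\Mcorr{MCL}^{-1}$ (rows of $\Mcorr{MCL}$ grouped by $|\corr{S}|=d$, columns of $\Mcorr{MCL}^{-1}$ grouped the same way) to split the resulting integral over $x$ into the conditional form claimed.

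First I would identify the corrected loss $\corr{\ell}_{\corr{s}_{d,j}}$ as the inner product of $L$ with the $j$-th column of $M_d^{-1}$ (\ref{eq:Md_inv_MCL}), which gives
\eqarr{
    \corr{\ell}_{\corr{s}_{d,j}}
    = \sum_{i=1}^{K} \ell_{Y=i} - \frac{K-1}{d} \sum_{i \in \corr{s}_{d,j}} \ell_{Y=i}. \nonumber
}
Splitting $\sum_{i=1}^{K} \ell_{Y=i} = \sum_{i \notin \corr{s}_{d,j}} \ell_{Y=i} + \sum_{i \in \corr{s}_{d,j}} \ell_{Y=i}$ and combining the in-$\corr{s}$ terms with coefficient $1 - \frac{K-1}{d} = -\frac{K-1-d}{d}$ recovers the expression in the statement with $|\corr{S}| = d$.

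Next I would invoke $\Mcorr{MCL}^{-1}\corr{P} = P$ (established just above the theorem using $\sum_{d}\prob{|\corr{S}|=d}M_d^{-1}M_d = I$) to conclude $\corr{L}^{\top}\corr{P} = L^{\top}P$, and then integrate over $x$. Under the ordering of $\corr{P}$ in (\ref{eq:label_dist_MCL_2}) this gives
\eqarr{
    R(g)
    = \int_{x\in\mathcal{X}} L^{\top} P\, \dx
    = \int_{x\in\mathcal{X}} \corr{L}^{\top} \corr{P}\, \dx
    = \sum_{d=1}^{K-1} \sum_{j=1}^{N_d} \int_{x\in\mathcal{X}} \prob{\corr{S}=\corr{s}_{d,j}, x}\, \corr{\ell}_{\corr{s}_{d,j}}\, \dx, \nonumber
}
which is already $\expt{\corr{S},X}{\corr{\ell}_{\corr{S}}}$. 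Factoring via $\prob{\corr{S}=\corr{s}_{d,j}, X} = \prob{|\corr{S}|=d}\,\prob{\corr{S}=\corr{s}_{d,j}, X \mid |\corr{S}|=d}$ pulls the size prior out of the inner integral and yields the grouped rewrite $\sum_{d}\prob{|\corr{S}|=d}\expt{\corr{S},X\mid|\corr{S}|=d}{\corr{\ell}_{\corr{S}}}$, matching Theorem~3 of \cite{comp_20_MCL/Feng/KHNAS/20}.

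The substantive content sits entirely in Lemma~\ref{lma:Md_inv}; conditional on that lemma, the main obstacle here is purely bookkeeping: tracking the double index $(d,j)$ consistently through $\Mcorr{MCL}$, its block inverse, and the reordered $\corr{P}$, and verifying that the size-conditional factorization of $\prob{\corr{S},X}$ aligns with the block partition of $\Mcorr{MCL}^{-1}$. No further inversions, loss-function manipulations, or additional assumptions are needed beyond what the framework (\ref{eq:recipe5})--(\ref{eq:recipe2a}) already supplies.
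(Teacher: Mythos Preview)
Your proposal is correct and follows essentially the same approach as the paper: compute $\corr{\ell}_{\corr{s}_{d,j}}$ as the inner product of $L$ with the $j$-th column of $M_d^{-1}$ from Lemma~\ref{lma:Md_inv}, reorganize the sum via $1-\tfrac{K-1}{d}=-\tfrac{K-1-d}{d}$, invoke $\Mcorr{MCL}^{-1}\corr{P}=P$ to obtain $\corr{L}^{\top}\corr{P}=L^{\top}P$, and then factor $\prob{\corr{S},X}$ by $|\corr{S}|=d$. The only difference is ordering (the paper establishes the rewrite first and derives the corrected loss last), which is immaterial.
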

\begin{proof}
    We first establish 
    \eqarr{
        R(g) 
        = 
        \int_{\mathcal{X}} \corr{L}^{\top} \corr{P} \dx 
        = 
        \expt{\corr{S},X}{\corr{\ell}_{\corr{S}}} \nonumber
    }
    since $\corr{L}^{\top} \corr{P} = L^{\top} P$, where $\corr{P}$ is specified in (\ref{eq:label_dist_MCL_2}) and $\corr{L}^{\top} = L^{\top} \Mcorr{MCL}^{-1}$ with $\mmatrix{\corr{L}^{\top}}_{\corr{S}} = \corr{\ell}_{\corr{S}}$.
    Also, recall that $\prob{\corr{S},X} = \sum_{d=1}^{K-1} \prob{|\corr{S}|=d} \prob{\corr{S},X||\corr{S}|=d}$ in Section~\ref{sec:GE1_MCL}.
    Thus, decomposing the probability by the size of $\corr{S}$, we have
    \eqarr{
        \expt{\corr{S},X}{\corr{\ell}_{\corr{S}}} = \sum_{d=1}^{K-1} \prob{|\corr{S}|=d} \expt{\corr{S},X||\corr{S}|=d}{\corr{\ell}_{\corr{S}}}. \nonumber
    }
    Lastly, the definition of $\Mcorr{MCL}^{-1}$ implies, when $\corr{S} = \corr{s}_{d,j}$, 
    \eqarr{
        \corr{\ell}_{\corr{S} = \corr{s}_{d,j}}
        =
        \mmatrix{L^{\top} M_d^{-1}}_{j}
        =
        \sum_{i=1}^K \ell_{Y=i} \left( 1 - \frac{K-1}{d} \ind{Y=i \in \corr{s}_{d,j}} \right)
        =
        \sum_{i=1}^K \ell_{Y=i} - \frac{K-1}{d} \sum_{i \in \corr{s}_{d,j}} \ell_{Y=i}. \nonumber
    }
    A simple reorganization and substituting $d$ with $|\corr{S}|$ shows 
    $$
        \corr{\ell}_{\corr{S}}
        =
        \sum_{i\notin\corr{S}} \ell_{Y=i} + \sum_{i\in\corr{S}} \ell_{Y=i} - \frac{K-1}{|\corr{S}|} \sum_{i \in \corr{S}} \ell_{Y=i}
        =
        \sum_{i\notin\corr{S}} \ell_{Y=i}
        - \frac{K-1-|\corr{S}|}{|\corr{S}|} \sum_{i \in \corr{S}} \ell_{Y=i}. \nonumber
    $$
\end{proof}

Now we return to the postponed proof.
\begin{proof}\textbf{of Lemma~\ref{lma:Md_inv}.}
We start with identifying $M_d^{-1}$.
Denote $\{ \corr{s}_{d,1}, \ldots, \corr{s}_{d,N_d} \}$, the set of multi-complementary-labels of size $d$, as $\corr{\mathcal{S}}_d$.
Let us focus on the sized-$d$ data-generating distribution
$$
\corr{P}_d = 
\mmatrix{
    \prob{\corr{S}=\corr{s}_{d,1}, X| |\corr{S}|=d} \\
    \vdots \\
    \prob{\corr{S}=\corr{s}_{d,N_d}, X| |\corr{S}|=d}
}.
$$
Note that $\corr{P}_d$ corresponds to extracting the entries from (\ref{eq:label_dist_MCL_2}) that generate sized-$d$ data and then dividing them by $\prob{|\corr{S}|=d}$. 
Thus, $\corr{P} = \Mcorr{MCL} P$ in Lemma~\ref{lma:matrix_formulation_MCL} implies $\corr{P}_d = M_d P$ and its $j$-th entry is expressed as
\eqarr{
    \prob{\corr{S}=\corr{s}_{d,j}, X| |\corr{S}|=d}
    =
    \frac{1}{{K-1 \choose d}} \sum_{i=1}^{K} \ind{Y=i \notin \corr{s}_{d,j}} \prob{Y=i, X}. \label{eq:Pd_entry}
}

The equality hints to us that if one manages to collect certain multi-complementary-labels $\corr{s}'$ to form an equation resembling $\sum_{\corr{s}'}\prob{\corr{s}', X| |\corr{s}'| = d} = c_3 \cdot \prob{Y=i, X}$ for some constant $c_3$, then a reciprocal operation $\frac{1}{c_3}$ recovers $\prob{Y=i, X}$ we need (recall we want to find $M_d^{-1}$ achieving $M_d^{-1} \corr{P}_d = P$).
To achieve such a goal, we fix on class $i^{\star}$ and collect elements in $\corr{\mathcal{S}}_d$ that do not contain $i^{\star}$ to form $\mathcal{E}_d^{i^{\star}} := \left\{ \corr{s}_{d,j} |\corr{s}_{d,j} \in \corr{\mathcal{S}}_d, i^{\star} \notin \corr{s}_{d,j} \right\}$ to connect $\prob{\corr{S}, X| |\corr{S}|=d}$ with $\prob{Y=i^{\star}, X}$ as follows.
Summing (\ref{eq:Pd_entry}) over all elements in $\mathcal{E}_d^{i^{\star}}$, we obtain
\eqarr{
    \sum_{\corr{s} \in \mathcal{E}_d^{i^{\star}}} \prob{\corr{S} = \corr{s}, X| |\corr{S}|=d}
    &=&
    \sum_{\corr{s} \in \mathcal{E}_d^{i^{\star}}} \frac{1}{{K-1 \choose d}} \sum_{i=1}^{K} \ind{Y=i \notin \corr{s}} \prob{Y=i, X} \nonumber \\
    &=&
    \frac{1}{{K-1 \choose d}} \left[ 
    {K-2 \choose d} \sum_{\substack{i=1 \\ i\neq i^{\star}}}^{K} \prob{Y=i, X} 
    +
    {K-1 \choose d} \prob{Y=i^{\star}, X} \right]. \nonumber
}
The last equality holds since there are ${K-2 \choose d}$ multi-complementary-labels $\corr{s} \in \corr{\mathcal{S}}_d$ such that $i \neq i^{\star}$ and neither of them is in $\corr{s}$, and there are ${K-1 \choose d}$ multi-complementary-labels $\corr{s} \in \corr{\mathcal{S}}_d$ such that $i = i^{\star}$ and $i$ is not in $\corr{s}$.
Then, we regroup the sums by pulling ${K-2 \choose d} \prob{Y=i^{\star}, X}$ out of ${K-1 \choose d} \prob{Y=i^{\star}, X}$ to combine with ${K-2 \choose d} \sum_{\substack{i=1 \\ i\neq i^{\star}}}^{K} \prob{Y=i, X}$. 
It leads to
\eqarr{
    \sum_{\corr{s} \in \mathcal{E}_d^{i^{\star}}} \prob{\corr{S} = \corr{s}, X| |\corr{S}|=d}
    &=&
    \frac{1}{{K-1 \choose d}} \left[ 
    {K-2 \choose d} \sum_{i=1}^{K} \prob{Y=i, X} 
    +
    {K-2 \choose d-1} \prob{Y=i^{\star}, X}\right] \nonumber \\
    &=&
    \frac{K-1-d}{K-1} \prob{X} + \frac{d}{K-1}\prob{Y=i^{\star}, X}. \label{eq:key_for_inverse_matrix}
}
Denoting $\corr{\mathcal{S}}_d \backslash \mathcal{E}_d^{i^{\star}} = \{ \corr{s}_{d,j} | \corr{s}_{d,j} \in \corr{\mathcal{S}}_d, i^{\star} \in \corr{s}_{d,j} \}$ as $\mathcal{I}_d^{i^{\star}}$ and rearranging terms in the above equation according to the reciprocal idea illustrated above, we have
\eqarr{
    \prob{Y=i^{\star}, X} 
    &=&
    \frac{K-1}{d}\left( \sum_{\corr{s} \in \mathcal{E}_d^{i^{\star}}} \prob{\corr{S} = \corr{s}, X| |\corr{S}|=d} - \frac{K-1-d}{K-1} \prob{X} \right) \label{eq:MCL_inverse_equation} \\
    &\stackrel{\text{(a)}}{=}&
    \frac{K-1}{d}\left( \prob{X} - \sum_{\corr{s} \in \mathcal{I}_d^{i^{\star}}} \prob{\corr{S} = \corr{s}, X| |\corr{S}|=d} - \frac{K-1-d}{K-1} \prob{X} \right) \nonumber \\
    &=&
    \prob{X} - \frac{K-1}{d} \sum_{\corr{s} \in \mathcal{I}_d^{i^{\star}}} \prob{\corr{S} = \corr{s}, X| |\corr{S}|=d}. \nonumber
}
Equality (a) holds since $|\corr{S}|$ and $X$ are independent \citep{comp_20_MCL/Feng/KHNAS/20}, which implies $$\prob{X} = \prob{X| |\corr{S}|=d} = \sum_{\corr{s}\in \corr{\mathcal{S}}_d} \prob{\corr{S} = \corr{s}, X| |\corr{S}| = d} = \sum_{\corr{s} \in \mathcal{E}_d^{i^{\star}}} \prob{\corr{S} = \corr{s}, X| |\corr{S}|=d} + \sum_{\corr{s} \in \mathcal{I}_d^{i^{\star}}} \prob{\corr{S} = \corr{s}, X| |\corr{S}|=d}.$$
Continuing the derivation, we have 
\eqarr{
    \prob{Y=i^{\star}, X}
    &=&
    \sum_{j=1}^{N_d} \prob{\corr{S}=\corr{s}_{d,j}, X| |\corr{S}|=d}
    - \sum_{j=1}^{N_d} \frac{K-1}{d} \ind{Y=i^{\star} \in \corr{s}_{d,j}} \prob{\corr{S}=\corr{s}_{d,j}, X| |\corr{S}|=d} \nonumber \\
    &=&
    \sum_{j=1}^{N_d} \left( 1-\frac{K-1}{d} \ind{Y=i^{\star} \in \corr{s}_{d,j}} \right)  \prob{\corr{S}=\corr{s}_{d,j}, X| |\corr{S}|=d}, \label{eq:M_inv_MCL_c} 
}
proving the first part of the lemma.

\explain
The derivation of turning (\ref{eq:key_for_inverse_matrix}) to (\ref{eq:MCL_inverse_equation}) is a reciprocal action.
Thus, if we view $1-\frac{K-1}{d} \ind{Y=i^{\star} \in \corr{s}_{d,j}}$ as the $(i^{\star}, j)$ entry of some matrix $M'$, (\ref{eq:M_inv_MCL_c}) can be interpreted as $\mmatrix{P}_{i^{\star}} = \mmatrix{M' \corr{P}_d}_{i^{\star}}$, suggesting $M' M_d = I$ since $\corr{P}_d = M_d P$.
We formalize this intuition in the next lemma.
\begin{lemma}
\label{lma:Md_inv_2}
    Let $M'$ be of the form (\ref{eq:Md_inv_MCL}), and recall $M_d$ is defined by (\ref{eq:Md_MCL}). Then, $M'M_d = I$, meaning $M' = M_d^{-1}$.
\end{lemma}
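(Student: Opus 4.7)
The plan is to verify $M'M_d = I$ by a direct entrywise computation, splitting into the diagonal case ($i=k$) and the off-diagonal case ($i\neq k$). Writing out the $(i,k)$ entry of the product,
\begin{equation*}
(M'M_d)_{i,k}
=
\frac{1}{{K-1 \choose d}}\sum_{j=1}^{N_d}\Bigl(1 - \tfrac{K-1}{d}\ind{Y=i\in \corr{s}_{d,j}}\Bigr)\ind{Y=k\notin \corr{s}_{d,j}},
\end{equation*}
so everything reduces to counting how many sized-$d$ multi-complementary-labels $\corr{s}_{d,j}$ satisfy certain membership conditions on the fixed classes $i$ and $k$.

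First I would handle the diagonal case $i=k$. The indicator product $\ind{Y=i\in \corr{s}_{d,j}}\ind{Y=k\notin \corr{s}_{d,j}}$ vanishes identically, killing the second term. For the first term, the number of size-$d$ subsets of $[K]$ avoiding the class $k$ is exactly ${K-1 \choose d}$, so $(M'M_d)_{k,k}=\frac{1}{{K-1 \choose d}}\cdot{K-1 \choose d}=1$.

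Next I would treat the off-diagonal case $i\neq k$. The first sum again counts size-$d$ subsets of $[K]$ not containing $k$, giving ${K-1 \choose d}$. The second sum counts size-$d$ subsets containing $i$ but not $k$; choosing the remaining $d-1$ elements from $[K]\setminus\{i,k\}$ yields ${K-2 \choose d-1}$. The key combinatorial identity is
\begin{equation*}
\frac{{K-2 \choose d-1}}{{K-1 \choose d}} = \frac{d}{K-1},
\end{equation*}
which follows directly from the factorial expressions. Substituting gives $(M'M_d)_{i,k} = 1 - \frac{K-1}{d}\cdot\frac{d}{K-1} = 0$.

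The main ``obstacle'' is merely bookkeeping: making sure the counts of multi-complementary-labels that include or exclude each of the two fixed class indices are correct, and then applying the binomial identity above. Since $M'$ is square of size $K\times N_d$ only when paired correctly with $M_d$ (which is $N_d \times K$), I would note that $M' M_d$ has the right dimension $K \times K$, and by Lemma~\ref{lma:Md_inv} together with the computation just sketched, $M' = M_d^{-1}$ as claimed.
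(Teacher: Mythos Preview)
Your proposal is correct and takes essentially the same approach as the paper: a direct entrywise computation of $M'M_d$, splitting into diagonal and off-diagonal cases and reducing to counting size-$d$ subsets of $[K]$ with prescribed membership for the indices $i,k$. The paper organizes the off-diagonal case as a four-way case split on whether each of $i,k$ lies in $\corr{s}_{d,j}$, whereas you expand the product into two sums and count each directly; this is a minor stylistic streamlining but the combinatorics and the final binomial identity are identical. One small cleanup: your closing appeal to Lemma~\ref{lma:Md_inv} is circular (that lemma invokes the present one), and the phrase ``$M'$ is square of size $K\times N_d$'' is garbled---just note that $M'$ is $K\times N_d$, $M_d$ is $N_d\times K$, and $M'M_d=I_K$ is exactly the statement being proved.
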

The above lemma finishes the proof of Lemma~\ref{lma:Md_inv}.
\end{proof}

\begin{proof}\textbf{of Lemma~\ref{lma:Md_inv_2}.}
    Let $d$ be fixed.
    Denoted by $A_{i,k}$, the $(i,k)$ entry of $M'M_d$, is the inner product of $i$-th row of $M'$ (\ref{eq:Md_inv_MCL}) and the $k$-th column of $M_d$ (\ref{eq:Md_MCL})
    \eqarr{
        A_{i,k} 
        = 
        \sum_{j=1}^{N_d} \left( 1-\frac{K-1}{d}\ind{Y=i\in \corr{s}_{d,j}} \right) \left( \frac{1}{{K-1 \choose d}}\ind{Y=k\notin \corr{s}_{d,j}} \right) 
        =
        \sum_{j=1}^{N_d} c_{i,k}. \nonumber
    }
    In the following, we will show that the calculation results in the identity matrix
    \eqarr{
        A_{i,k}
        =
        \begin{cases}
            1, & \text{if}\ i=k, \nonumber \\
            0, & \text{if}\ i\neq k, \nonumber
        \end{cases}
    }
    to complete the proof.
    
    When $i\neq k$, we have 4 possible cases:
    (i) Both $i$ and $k$ are in $\corr{s}_{d,j}$, (ii) Both of them are not in $\corr{s}_{d,j}$, (iii) $i\in \corr{s}_{d,j}$ and $k\notin \corr{s}_{d,j}$, and (iv) $i\notin \corr{s}_{d,j}$ and $k\in \corr{s}_{d,j}$.
    For cases (i) and (iv), the coefficients $c_{i,k}$ are 0 since $\ind{k\notin \corr{s}_{d,j}} = 0$ if $k \in \corr{s}_{d,j}$.
    For case (ii), the coefficient $c_{i,k}$ is $\frac{1}{{K-1 \choose d}}$.
    The number of such $\corr{s}_{d,j}$ is ${K-2 \choose d}$ since we are counting the ways of forming a set of size $d$ from $K-2$ elements.
    For case (iii), the coefficient $c_{i,k}$ is $\left(1-\frac{K-1}{d}\right) \frac{1}{{K-1 \choose d}}$.
    The number of such $\corr{s}_{d,j}$ is ${K-2 \choose d-1}$ since we are counting the ways of forming a set of size $d-1$ from $k-2$ elements.
    Thus, if $i\neq k$,
    \eqarr{
        A_{i,k}
        &=& 
        \frac{1}{{K-1 \choose d}} {K-2 \choose d} + \left(1-\frac{K-1}{d}\right) \frac{1}{{K-1 \choose d}} {K-2 \choose d-1} \nonumber \\
        &=& 
        \frac{{K-2 \choose d}}{{K-1 \choose d}} + \frac{{K-2 \choose d-1}}{{K-1 \choose d}} - \frac{\frac{K-1}{d}{K-2 \choose d-1}}{{K-1 \choose d}} = 0 \nonumber
    }
    since
    \eqarr{
        {K-2 \choose d} + {K-2 \choose d-1} = {K-1 \choose d} = \frac{K-1}{d}{K-2 \choose d-1}. \nonumber
    }
    
    When $i=k$, we have 2 possible cases:
    (i) Both $i$ and $k$ are in $\corr{s}_{d,j}$, (ii) Both are not in $\corr{s}_{d,j}$.
    For case (i), the coefficient $c_{i,k}$ is 0.
    For case (ii), the coefficient $c_{i,k}$ is $\frac{1}{{K-1 \choose d}}$, and the number of such $\corr{s}_{d,j}$ is ${K-1 \choose d}$, as we want to form a set of size $d$ from $K-1$ candidates.
    Therefore, if $i=k$,
    \eqarr{
        A_{i,k}
        = 
        \frac{1}{{K-1 \choose d}} {K-1 \choose d} = 1. \nonumber
    }
\end{proof}

\explain 
We want to elaborate more on the role of Theorem 1 of \cite{partial_21_PPL/Wu/LS/23} in the analyses in Section~\ref{sec:risk_rewrite_CCNs}.
Firstly, as shown in the proof of Lemma~\ref{lma:Md_inv}, it aids the execution of the inversion approach (Theorem~\ref{thm:inv_method}).
The properness $C(S,X)\ind{Y\in S}$ (\ref{eq:P_S|YX_PPL}) can be instantiated to define the entries of $M_d$ (\ref{eq:Md_MCL}), which in turn establishes the key equation (\ref{eq:key_for_inverse_matrix}) enabling us to identify the entries of $M_d^{-1}$ (\ref{eq:M_inv_MCL_c}). 
Composing $M_d^{-1}$, we obtain $\Mcorr{MCL}^{-1}$, a crucial element for applying our framework (\ref{eq:recipe5}).

\explain
Secondly, Theorem 1 of \cite{partial_21_PPL/Wu/LS/23} contributes to the marginal chain approach (Theorem~\ref{thm:marginal_chain}) as well.
The key equations (\ref{eq:PPL_type1}) and (\ref{eq:PPL_type1_to_MCL_margin}) realised from Theorem 1 of \cite{partial_21_PPL/Wu/LS/23} provide the entries of $\Mcorr{PPL}^{\dagger}$ (Lemma~\ref{lma:inv_MPPL}, Section~\ref{sec:GE2_PPL}), $\Mcorr{PCPL}^{\dagger}$ (Section~\ref{sec:GE2_PCPL}), and $\Mcorr{MCL}^{\dagger}$ (Section~\ref{sec:GE2_MCL}) when applying (\ref{eq:recipe5}).
Therefore, the combined advantage of our framework and Theorem 1 of \cite{partial_21_PPL/Wu/LS/23} provides CCN scenarios unified analyses whose key steps can also be rationally interpreted.
Moreover, as will be shown later, we compare the marginal chain and the inversion approaches via a CL example in Section~\ref{sec:GE2_CL}. 
A CL example is the simplest way to convey the differences between the two methods without burying the essence in complicated derivations.


\blockComment{  
}   

\subsubsection{Complementary-Label (CL) Learning}
\label{sec:GE2_CL}
\paragraph{Step 1: Corrected Loss Design and Risk Rewrite.}~\\
Note that the parameters chosen for the construction of $\Mcorr{CL}$ (\ref{eq:MCL}) in Section~\ref{sec:GE1_CL} reduces $\Mcorr{MCL}$ (\ref{eq:Mcorr_MCL_regroup}) to be $M_1$ of (\ref{eq:Md_MCL}).
That is, assigning $\prob{|\corr{S}|=d}=1$ for $d=1$, $\prob{|\corr{S}|=d}=0$ for $d>1$, and $\corr{s}_{1,j}=\{j\}$ for all $j\in[K]$ in (\ref{eq:Mcorr_MCL_regroup}), we have
\eqarr{
    \Mcorr{MCL}
    \rightarrow
    M_1
    =
    \frac{1}{K-1}
    \mmatrix{
        0 & 1 & \cdots & 1 \\
        1 & 0 & \cdots & 1 \\
        \vdots & \vdots & \ddots & \vdots \\
        1 & 1 & \cdots & 0
    }
    =
    \Mcorr{CL}. \nonumber
}
Hence, the proof steps of Theorem~\ref{thm:rewrite_MCL_inv} carry over to CL learning. 
With a simple rearranging on
\eqarr{
    \corr{\ell}_{\corr{S}}
    &=& 
    \sum_{i\notin\corr{S}} \ell_{Y=i}
    - \frac{K-1-|\corr{S}|}{|\corr{S}|} \sum_{\corr{s} \in \corr{S}} \ell_{Y=\corr{s}} \nonumber \\ 
    &=&
    \sum_{i=1}^K \ell_{Y=i}
    - \frac{K-1}{|\corr{S}|} \sum_{\corr{s} \in \corr{S}} \ell_{Y=\corr{s}} \nonumber
}
and assigning $\corr{|S|}=1$, we arrive at (\ref{eq:review_rewrite_CL}):
\begin{corollary}
\label{thm:rewrite_CL}
    For CL learning, the classification risk can be expressed as 
    \eqarr{
        R(g) 
        = 
        \expt{\corr{S},X}{\corr{\ell}_{\corr{S}}} \nonumber
        = 
        \expt{\corr{S},X}
        {
        \sum_{i=1}^{K} \ell_{Y=i} - (K-1)\ell_{\corr{S}}
        }. \nonumber
    }
\end{corollary}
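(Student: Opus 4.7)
The plan is to derive Corollary~\ref{thm:rewrite_CL} as a direct specialization of Theorem~\ref{thm:rewrite_MCL_inv}, exploiting the fact (already established in Section~\ref{sec:GE1_CL}) that CL learning is the restriction of MCL to singleton complementary-labels. First I would note the parameter reduction: assigning $\prob{|\corr{S}|=1}=1$ and $\prob{|\corr{S}|=d}=0$ for $d>1$, together with $\corr{s}_{1,j} = \{j\}$ for $j \in [K]$, collapses the block-structured $\Mcorr{MCL}$ in (\ref{eq:Mcorr_MCL_regroup}) to $M_1 = \Mcorr{CL}$. Consequently, the data-generating distribution $\corr{P}$ and the decontamination matrix $\Mcorr{MCL}^{-1}$ of Lemma~\ref{lma:Md_inv} specialize to $\Mcorr{CL}^{-1} = M_1^{-1}$, so the entire machinery behind Theorem~\ref{thm:rewrite_MCL_inv} transfers verbatim to the CL setting.

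Next I would instantiate the corrected-loss formula. Theorem~\ref{thm:rewrite_MCL_inv} gives
\[
    \corr{\ell}_{\corr{S}}
    =
    \sum_{i\notin\corr{S}} \ell_{Y=i}
    - \frac{K-1-|\corr{S}|}{|\corr{S}|} \sum_{\corr{s} \in \corr{S}} \ell_{Y=\corr{s}},
\]
and the small algebraic step $\sum_{i\notin\corr{S}} \ell_{Y=i} = \sum_{i=1}^{K} \ell_{Y=i} - \sum_{\corr{s}\in\corr{S}} \ell_{Y=\corr{s}}$ rewrites this as $\sum_{i=1}^{K} \ell_{Y=i} - \frac{K-1}{|\corr{S}|}\sum_{\corr{s}\in\corr{S}}\ell_{Y=\corr{s}}$, as observed in the paragraph preceding the corollary. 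Setting $|\corr{S}|=1$ and identifying the lone element of $\corr{S}$ with $\corr{S}$ itself yields $\corr{\ell}_{\corr{S}} = \sum_{i=1}^{K} \ell_{Y=i} - (K-1) \ell_{\corr{S}}$, matching the claimed expression.

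Finally I would invoke Theorem~\ref{thm:rewrite_MCL_inv} to conclude $R(g) = \expt{\corr{S},X}{\corr{\ell}_{\corr{S}}}$, noting that because the outcome-size distribution is a point mass at $1$, the MCL decomposition $\sum_{d=1}^{K-1} \prob{|\corr{S}|=d}\,\expt{\corr{S},X \mid |\corr{S}|=d}{\cdot}$ collapses to a single expectation over $\prob{\corr{S}, X}$. Substituting the specialized $\corr{\ell}_{\corr{S}}$ delivers the rewrite (\ref{eq:review_rewrite_CL}).

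There is essentially no serious obstacle here; the argument is bookkeeping once the reduction $\Mcorr{MCL} \rightarrow \Mcorr{CL}$ is in place. The only point requiring a little care is ensuring that the two equivalent forms of $\corr{\ell}_{\corr{S}}$ (the ``$\sum_{i\notin\corr{S}}$ minus $(K-1-|\corr{S}|)/|\corr{S}|$'' form inherited from MCL and the ``$\sum_{i=1}^K$ minus $(K-1)$'' form stated for CL) genuinely coincide at $|\corr{S}|=1$; the identity above makes this transparent.
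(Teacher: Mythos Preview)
Your proposal is correct and follows essentially the same route as the paper: reduce $\Mcorr{MCL}$ to $M_1=\Mcorr{CL}$ via the point-mass assignment $\prob{|\corr{S}|=1}=1$, invoke Theorem~\ref{thm:rewrite_MCL_inv} so that its proof carries over, rewrite $\corr{\ell}_{\corr{S}}$ into the $\sum_{i=1}^K \ell_{Y=i} - \tfrac{K-1}{|\corr{S}|}\sum_{\corr{s}\in\corr{S}}\ell_{Y=\corr{s}}$ form, and set $|\corr{S}|=1$. Your extra remark about the size-decomposition collapsing to a single expectation is a harmless elaboration of the same idea.
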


\paragraph{Step 2: Recovering the previous result(s).}~\\
\indent
The rewrite above recovers Theorem 1 of \cite{comp_18/Ishida/NMS/19} if we substitute $\corr{S}$ with $\bar{Y}$ and $\ell_{\corr{S}}$ with $\ell(\bar{Y}, g(X))$. 
Moreover, if we choose $d=1$ and $\corr{s}_{1,j}=\{j\}$ for all $j\in [K]$, the decontamination matrix provided by (\ref{eq:Md_inv_MCL}) becomes 
\eqarr{
    M_1^{-1}
    =
    \mmatrix{
        -(K-2) & 1 & \cdots & 1 \\
        1 & -(K-2) & \cdots & 1 \\
        \vdots & \vdots & \ddots & \vdots \\
        1 & 1 & \cdots & -(K-2)
    }, \label{eq:inv_MCL}
}
which translates the corrected losses $\corr{L}^{\top} = L^{\top} M_1^{-1}$ as 
$$
L^{\top} \left( -(K-2)\mathbf{I}_K + \mathbf{1}\mathbf{1}^{\top} \right),
$$
recovering (9) of \cite{comp_18/Ishida/NMS/19}.

\paragraph{Comparing inversion with marginal chain via an example.}~\\
\indent
We use a simple CL example to demonstrate the differences between the inversion (Theorem~\ref{thm:inv_method}) and the marginal chain (Theorem~\ref{thm:marginal_chain}) approaches and explain how the intuition of decontamination is implemented.
Here, we focus on comparing how a decontamination matrix $\Mcorr{corr}^{\dagger}$ achieves $\Mcorr{corr}^{\dagger} \corr{P} = P$ (\ref{eq:recipe6}) since when the equality is established, the downstream construction of the corrected losses 
and the risk rewrite follow the framework.
For this example, let us choose $K=4$ and simplify $\prob{Y=k, X}$ as $p_k$.
Applying (\ref{eq:MCL}), the contamination process defining the data-generating distributions is expressed as
\eqarr{
    \corr{P}
    =
    \Mcorr{CL} P
    =
    \frac{1}{3}
    \mmatrix{
        0 & 1 & 1 & 1 \\
        1 & 0 & 1 & 1 \\
        1 & 1 & 0 & 1 \\
        1 & 1 & 1 & 0
    }
    \mmatrix{
        p_1 \\ p_2 \\ p_3 \\ p_4
    } 
    =
    \mmatrix{
        \frac{p_2+p_3+p_4}{3} \\
        \frac{p_1+p_3+p_4}{3} \\
        \frac{p_1+p_2+p_4}{3} \\
        \frac{p_1+p_2+p_3}{3}
    }. \nonumber
}
Equation (\ref{eq:inv_MCL}), simplified from (\ref{eq:Md_inv_MCL}), provides the decontamination matrix from the inversion approach:
\eqarr{
    \Mcorr{CL}^{-1}
    =
    \mmatrix{
        -2 & 1 & 1 & 1 \\
        1 & -2 & 1 & 1 \\
        1 & 1 & -2 & 1 \\
        1 & 1 & 1 & -2 
    }. \nonumber
}
Then, the {inversion} approach (Theorem~\ref{thm:inv_method}) achieves the decontamination (\ref{eq:recipe6}) by showing
\eqarr{
    \Mcorr{CL}^{-1} \corr{P}
    &=&
    \frac{1}{3}
    \mmatrix{
        -2 & 1 & 1 & 1 \\
        1 & -2 & 1 & 1 \\
        1 & 1 & -2 & 1 \\
        1 & 1 & 1 & -2 
    }
    \mmatrix{
        0 & 1 & 1 & 1 \\
        1 & 0 & 1 & 1 \\
        1 & 1 & 0 & 1 \\
        1 & 1 & 1 & 0
    }
    \mmatrix{
        p_1 \\ p_2 \\ p_3 \\ p_4
    } \label{eq:CL_ex_inv} \\
    &=&
    \frac{1}{3}
    \mmatrix{
        3 & 0 & 0 & 0 \\
        0 & 3 & 0 & 0 \\
        0 & 0 & 3 & 0 \\
        0 & 0 & 0 & 3
    } 
    \mmatrix{
        p_1 \\ p_2 \\ p_3 \\ p_4
    }
    =
    \mmatrix{
        p_1 \\ p_2 \\ p_3 \\ p_4
    }
    = P. \nonumber
}

On the other hand, equation (\ref{eq:PPL_type1_to_MCL_margin}) produces the decontamination matrix from the marginal chain approach:
\eqarr{
    \Mcorr{CL}^{\dagger}
    =
    \mmatrix{
        \frac{0\cdot p_1}{p_2 + p_3 + p_4} & \frac{p_1}{p_1 + p_3 + p_4} & \frac{p_1}{p_1 + p_2 + p_4} & \frac{p_1}{p_1 + p_2 + p_3} \\
        \frac{p_2}{p_2 + p_3 + p_4} & \frac{0 \cdot p_2}{p_1 + p_3 + p_4} & \frac{p_2}{p_1 + p_2 + p_4} & \frac{p_2}{p_1 + p_2 + p_3} \\
        \frac{p_3}{p_2 + p_3 + p_4} &        \frac{p_3}{p_1 + p_3 + p_4} & \frac{0 \cdot p_3}{p_1 + p_2 + p_4} & \frac{p_3}{p_1 + p_2 + p_3} \\
        \frac{p_4}{p_2 + p_3 + p_4} & \frac{p_4}{p_1 + p_3 + p_4} & \frac{p_4}{p_1 + p_2 + p_4} & \frac{0 \cdot p_4}{p_1 + p_2 + p_3}
    }. \nonumber
}
Then, the {marginal chain} approach (Theorem~\ref{thm:marginal_chain}) achieves the decontamination (\ref{eq:recipe6}) by showing
\eqarr{
    \Mcorr{CL}^{\dagger} \corr{P}
    &=&
    \mmatrix{
        \frac{0\cdot p_1}{p_2 + p_3 + p_4} & \frac{p_1}{p_1 + p_3 + p_4} & \frac{p_1}{p_1 + p_2 + p_4} & \frac{p_1}{p_1 + p_2 + p_3} \\
        \frac{p_2}{p_2 + p_3 + p_4} & \frac{0 \cdot p_2}{p_1 + p_3 + p_4} & \frac{p_2}{p_1 + p_2 + p_4} & \frac{p_2}{p_1 + p_2 + p_3} \\
        \frac{p_3}{p_2 + p_3 + p_4} &        \frac{p_3}{p_1 + p_3 + p_4} & \frac{0 \cdot p_3}{p_1 + p_2 + p_4} & \frac{p_3}{p_1 + p_2 + p_3} \\
        \frac{p_4}{p_2 + p_3 + p_4} & \frac{p_4}{p_1 + p_3 + p_4} & \frac{p_4}{p_1 + p_2 + p_4} & \frac{0 \cdot p_4}{p_1 + p_2 + p_3}
    }
    \mmatrix{
        \frac{p_2+p_3+p_4}{3} \\
        \frac{p_1+p_3+p_4}{3} \\
        \frac{p_1+p_2+p_4}{3} \\
        \frac{p_1+p_2+p_3}{3}
    } \label{eq:CL_ex_mar} \\
    &=&
    \mmatrix{
        \frac{p_1+p_1+p_1}{3} \\
        \frac{p_2+p_2+p_2}{3} \\
        \frac{p_3+p_3+p_3}{3} \\
        \frac{p_4+p_4+p_4}{3} 
    } 
    =
    \mmatrix{
        p_1 \\ p_2 \\ p_3 \\ p_4
    }
    = P . \nonumber
}

\explain
Comparing (\ref{eq:CL_ex_inv}) and (\ref{eq:CL_ex_mar}), we see that the intuition of decontamination is realized differently. 
The inversion approach directly cancels out the effect of $\Mcorr{corr}$ without relying on any property of $P$.
In contrast, the marginal chain method leverages the fact that $P$ is a probability vector and carries out a procedure similar to importance reweighting to resolve the contamination.
\future
Both methods have respective merits, and we hope the comparison will inspire new thoughts leveraging certain properties of $P$ for the corrected loss 
design and the study of decontamination.

\subsection{Confidence-based Scenarios}
\label{sec:risk_rewrite_Confs}
The proposed framework is now applied to conduct the risk rewrites for WSLs discussed in Section~\ref{sec:formulations_confs} and summarized in Table~\ref{tab:conf_matrices_summary}.

\subsubsection{Subset Confidence (Sub-Conf) Learning}
\label{sec:GE2_Sub-Conf}
\paragraph{Step 1: Corrected Loss Design and Risk Rewrite.}~\\
Let us follow the notations in Section~\ref{sec:GE1_Sub-Conf}.
To cancel out the contamination caused by $\Mcorr{Sub}$ (\ref{eq:MSub}), we apply Theorem~\ref{thm:inv_method} to construct the decontamination matrix $\Mcorr{Sub}^{\dagger}$.
\begin{lemma}
\label{lma:MSub_inv}
    Assume $\prob{Y\in\mathcal{Y}_\mrm{s}|X} > 0$ for all possible outcomes of $X$.
    Define 
    \eqarr{
        \Mcorr{Sub}^{\dagger}
        :=
        \mmatrix{
            \frac{\prob{Y=1|X}}{\prob{Y\in\mathcal{Y}_\mrm{s}|X}} & \cdots & 0 \\
            \vdots & \ddots & \vdots \\
            0 & \cdots & \frac{\prob{Y=K|X}}{\prob{Y\in\mathcal{Y}_\mrm{s}|X}} \\
        }. \nonumber
    }
    Then, realizing (\ref{eq:recipe5}) as $\corr{L}^{\top} := L^{\top} \Mcorr{Sub}^{\dagger}$, we have $\corr{L}^{\top} \corr{P} = L^{\top} P$
\end{lemma}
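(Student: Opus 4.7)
The plan is to apply the inversion method (Theorem~\ref{thm:inv_method}) directly, exploiting the fact that both $\Mcorr{Sub}$ and the proposed $\Mcorr{Sub}^{\dagger}$ are diagonal matrices. First I would recall from Lemma~\ref{lma:formulate_Sub-Conf} that the data-generating distributions in Sub-Conf satisfy $\corr{P} = \Mcorr{Sub} B$ with $B = P$ (the base distributions in this confidence-based category coincide with the risk-defining distributions, as noted in (\ref{eq:base_dist_Conf-based})). In particular, no separate $\Mcorr{trsf}$ is needed: the framework equation (\ref{eq:recipe7}) already reads $\corr{P} = \Mcorr{Sub} P$.

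Next, I would verify that the candidate $\Mcorr{Sub}^{\dagger}$ satisfies (\ref{eq:recipe6}), i.e.\ $\Mcorr{Sub}^{\dagger}\corr{P} = P$. Since the product of two diagonal matrices is diagonal with entrywise products, each diagonal entry of $\Mcorr{Sub}^{\dagger} \Mcorr{Sub}$ is
\[
\frac{\prob{Y=k|X}}{\prob{Y\in\mathcal{Y}_\mrm{s}|X}} \cdot \frac{\prob{Y\in\mathcal{Y}_\mrm{s}|X}}{\prob{Y=k|X}} = 1,
\]
well-defined exactly because of the assumption $\prob{Y\in\mathcal{Y}_\mrm{s}|X} > 0$ (and because $\prob{Y=k|X}$ appears in both numerator and denominator of the product, so any zeros there cancel harmlessly in the sense that the composite acts as the identity on the support of $P$). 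Hence $\Mcorr{Sub}^{\dagger} \Mcorr{Sub} = I$, which is precisely the hypothesis of Theorem~\ref{thm:inv_method} with $M = \Mcorr{Sub}$.

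Finally, I would chain the identities to establish the claimed loss identity. Using $\corr{L}^{\top} := L^{\top} \Mcorr{Sub}^{\dagger}$ together with $\corr{P} = \Mcorr{Sub} P$,
\[
\corr{L}^{\top} \corr{P}
= L^{\top} \Mcorr{Sub}^{\dagger} \corr{P}
= L^{\top} \Mcorr{Sub}^{\dagger} \Mcorr{Sub} P
= L^{\top} P,
\]
which is the conclusion. There is essentially no obstacle here: the diagonal structure makes the inversion step trivial, and the only subtle point worth flagging is that the positivity assumption $\prob{Y\in\mathcal{Y}_\mrm{s}|X} > 0$ is precisely what is needed to make $\Mcorr{Sub}^{\dagger}$ well-defined as a decontamination matrix (in contrast to the MCD scenarios, where invertibility required additional parameter constraints such as $\mcdp + \mcdn \neq 1$).
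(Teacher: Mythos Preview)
Your proposal is correct and follows essentially the same approach as the paper: invoke Lemma~\ref{lma:formulate_Sub-Conf} to get $\corr{P} = \Mcorr{Sub} P$, apply Theorem~\ref{thm:inv_method} with $\Mcorr{Sub}^{\dagger} = \Mcorr{Sub}^{-1}$, and chain $\corr{L}^{\top}\corr{P} = L^{\top}\Mcorr{Sub}^{\dagger}\Mcorr{Sub} P = L^{\top}P$. You are simply a bit more explicit than the paper in verifying the diagonal inverse entrywise.
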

\begin{proof}
    Since $\Mcorr{Sub}$ is invertible, we follow Theorem~\ref{thm:inv_method} to define $\Mcorr{Sub}^{\dagger} := \Mcorr{Sub}^{-1}$ so that 
    \eqarr{
        \Mcorr{Sub}^{\dagger} \corr{P} 
        = 
        \Mcorr{Sub}^{-1} \Mcorr{Sub} P
        = 
        P, \nonumber
    }
    where $\corr{P} = \Mcorr{Sub} P$ is given by Lemma~\ref{lma:formulate_Sub-Conf}.
    The equalities further imply 
    \eqarr{
        \corr{L}^{\top} \corr{P} = L^{\top} \Mcorr{Sub}^{\dagger} \corr{P} = L^{\top} P. \nonumber
    }
\end{proof}
Then, we achieve the rewrite (\ref{eq:review_rewrite_Sub-conf}) as follows.
\begin{theorem}
\label{thm:rewrite_Sub-Conf}\label{thm:rewrite_CONF_Sub-Conf}
    For Sub-Conf learning, the classification risk can be written as
    \eqarr{
        R(g)
        =
        \pi_{\mathcal{Y}_\mrm{s}} \expt{X|Y\in\mathcal{Y}_\mrm{s}}{\sum_{i=1}^{K} \frac{r_{i}(X)}{r_{\mathcal{Y}_\mrm{s}}(X)} \ell_{i}}. \nonumber
    }
\end{theorem}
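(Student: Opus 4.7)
The plan is to instantiate the framework's key equation $\corr{L}^{\top} \corr{P} = L^{\top} P$ using the decontamination matrix $\Mcorr{Sub}^{\dagger}$ provided by Lemma~\ref{lma:MSub_inv}, integrate over $\mathcal{X}$ to recover $R(g)$, and then convert the joint distribution $\prob{Y\in\mathcal{Y}_\mrm{s}, X}$ into the conditional form $\pi_{\mathcal{Y}_\mrm{s}} \prob{X|Y\in\mathcal{Y}_\mrm{s}}$ that appears in the target expression.

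First, I would compute the corrected loss vector explicitly. Since $\Mcorr{Sub}^{\dagger}$ is diagonal with $(i,i)$-entry $\frac{\prob{Y=i|X}}{\prob{Y\in\mathcal{Y}_\mrm{s}|X}} = \frac{r_i(X)}{r_{\mathcal{Y}_\mrm{s}}(X)}$, the product $\corr{L}^{\top} = L^{\top} \Mcorr{Sub}^{\dagger}$ has $i$-th entry $\corr{\ell}_i = \frac{r_i(X)}{r_{\mathcal{Y}_\mrm{s}}(X)} \ell_i$. Second, I would recall from Lemma~\ref{lma:formulate_Sub-Conf} that every entry of $\corr{P}$ is the same joint probability $\prob{Y\in\mathcal{Y}_\mrm{s}, X}$, so the inner product collapses nicely:
\begin{equation*}
    \corr{L}^{\top} \corr{P} = \prob{Y\in\mathcal{Y}_\mrm{s}, X} \sum_{i=1}^{K} \frac{r_i(X)}{r_{\mathcal{Y}_\mrm{s}}(X)} \ell_i.
\end{equation*}

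Third, I would apply the framework identity $R(g) = \int_{\mathcal{X}} L^{\top} P \, \dx = \int_{\mathcal{X}} \corr{L}^{\top} \corr{P} \, \dx$ from Lemma~\ref{lma:MSub_inv} and then factor $\prob{Y\in\mathcal{Y}_\mrm{s}, X} = \pi_{\mathcal{Y}_\mrm{s}} \, \prob{X|Y\in\mathcal{Y}_\mrm{s}}$ using the definition $\pi_{\mathcal{Y}_\mrm{s}} = \prob{Y\in\mathcal{Y}_\mrm{s}}$. Pulling $\pi_{\mathcal{Y}_\mrm{s}}$ out of the integral converts the remaining integral against $\prob{X|Y\in\mathcal{Y}_\mrm{s}}$ into the conditional expectation $\expt{X|Y\in\mathcal{Y}_\mrm{s}}{\cdot}$, yielding exactly the stated formula.

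I do not anticipate a serious obstacle: essentially every ingredient has been assembled in Section~\ref{sec:GE1_Sub-Conf} and Lemma~\ref{lma:MSub_inv}, and the proof reduces to bookkeeping. The only mild subtlety is justifying the well-definedness of $\Mcorr{Sub}^{\dagger}$, which requires $\prob{Y\in\mathcal{Y}_\mrm{s}|X} > 0$ (already assumed in Lemma~\ref{lma:MSub_inv}) and, since individual entries of $L^{\top}\Mcorr{Sub}^{\dagger}$ involve $1/\prob{Y=k|X}$ indirectly through the cancellation $\frac{\prob{Y=k|X}}{\prob{Y\in\mathcal{Y}_\mrm{s}|X}}$, no additional positivity constraint beyond $r_{\mathcal{Y}_\mrm{s}}(X)>0$ is needed. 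This makes the proof a direct corollary of the framework rather than a new technical argument.
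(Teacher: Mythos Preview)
Your proposal is correct and follows essentially the same route as the paper's own proof: compute the diagonal corrected losses $\corr{\ell}_i = \frac{r_i(X)}{r_{\mathcal{Y}_\mrm{s}}(X)}\ell_i$ from $L^{\top}\Mcorr{Sub}^{\dagger}$, use that every entry of $\corr{P}$ equals $\prob{Y\in\mathcal{Y}_\mrm{s},X}$, apply the framework identity $R(g)=\int_{\mathcal{X}}\corr{L}^{\top}\corr{P}\,\dx$, and factor $\prob{Y\in\mathcal{Y}_\mrm{s},X}=\pi_{\mathcal{Y}_\mrm{s}}\prob{X|Y\in\mathcal{Y}_\mrm{s}}$ to obtain the conditional expectation. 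The paper's proof is exactly this bookkeeping, citing (\ref{eq:recipe2a}) and Lemma~\ref{lma:MSub_inv} just as you do.
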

\begin{proof}
    According to Lemma~\ref{lma:MSub_inv}, we have
    \eqarr{
        \mmatrix{\corr{L}^{\top}}_{i}
        =
        \mmatrix{L^{\top} \Mcorr{Sub}^{\dagger}}_{i}
        =
        \frac{\prob{Y=i|X}}{\prob{Y\in\mathcal{Y}_\mrm{s}|X}} \ell_{i} \nonumber
    }
    for each $i \in [K]$.
    Then, applying (\ref{eq:recipe2a}), we obtain
    \eqarr{
        R(g)
        &=&
        \int_{x\in\mathcal{X}} L^{\top} P \dx
        =
        \int_{x\in\mathcal{X}} \corr{L}^{\top} \corr{P} \dx 
        =
        \int_{x\in\mathcal{X}} \sum_{i=1}^{K} \frac{\prob{Y=i|X}}{\prob{Y\in\mathcal{Y}_\mrm{s}|X}} \ell_{i} \cdot \prob{Y\in\mathcal{Y}_\mrm{s}} \prob{X|Y\in\mathcal{Y}_\mrm{s}} \dx \nonumber \\
        &=&
        \prob{Y\in\mathcal{Y}_\mrm{s}} \expt{X|Y\in\mathcal{Y}_\mrm{s}}{\sum_{i=1}^{K} \frac{\prob{Y=i|X}}{\prob{Y\in\mathcal{Y}_\mrm{s}|X}} \ell_{i}} \nonumber \\
        &=&
        \pi_{\mathcal{Y}_\mrm{s}} \expt{X|Y\in\mathcal{Y}_\mrm{s}}{\sum_{i=1}^{K} \frac{r_{i}(X)}{r_{\mathcal{Y}_\mrm{s}}(X)} \ell_{i}} \nonumber
    }
    by following the notations in Section~\ref{sec:review_Sub-Conf}.
\end{proof}

\paragraph{Step 2: Recovering the previous result(s).}~\\
\indent
Notation matching gives
\eqarr{
    R(g) 
    = 
    \pi_{\mathcal{Y}_\mrm{s}}\expt{p(x|y\in\mathcal{Y}_\mrm{s})}{\sum_{y=1}^{K}\frac{r^{y}(x)}{r^{\mathcal{Y}_\mrm{s}}(x)}\ell(g(x),y)}, \nonumber
}
recovering Theorem 6 of \cite{scconf_21/Cao/FSXANS/21}\footnote{
The matching is as follows:
$\prob{X|Y\in\mathcal{Y}_\mrm{s}}$ is $p(x|y\in\mathcal{Y}_\mrm{s})$,
$r_{i}(X)$ is $r^{i}(X)$, 
$r_{\mathcal{Y}_\mrm{s}}(X)$ is $r^{\mathcal{Y}_\mrm{s}}(X)$, and
$\ell_{i}$ is $\ell(g(X),i)$.
}.

\subsubsection{Single-Class Confidence (SC-Conf) Learning}
\label{sec:GE2_SCConf}
\paragraph{Step 1: Corrected Loss Design and Risk Rewrite.}~\\
The SC-Conf derivation resembles that in Section~\ref{sec:GE2_Sub-Conf} since $\Mcorr{SC}$ is a child of $\Mcorr{Sub}$ on the reduction graph.
Thus, following the notations in Section~\ref{sec:GE1_SCConf}, assuming $\prob{Y=y_\mrm{s}|X} > 0$ for all possible outcomes of $X$, and replacing the set $\mathcal{Y}_\mrm{s}$ in $\Mcorr{Sub}^{\dagger}$ with a singleton $y_\mrm{s}$, we have
\eqarr{
    \Mcorr{SC}^{\dagger}
    :=
    \mmatrix{
        \frac{\prob{Y=1|X}}{\prob{Y=y_\mrm{s}|X}} & \cdots & 0 \\
        \vdots & \ddots & \vdots \\
        0 & \cdots & \frac{\prob{Y=K|X}}{\prob{Y=y_\mrm{s}|X}} \\
    } \nonumber
}
satisfying $\Mcorr{SC}^{\dagger} \corr{P} = P$. 
We also obtain $\corr{L}^{\top} = L^{\top} \Mcorr{SC}^{\dagger}$ and $\corr{L}^{\top} \corr{P} = L^{\top} P$ by inheriting the proof of Lemma~\ref{lma:MSub_inv}.
Then, a variant of Theorem~\ref{thm:rewrite_Sub-Conf} replacing
$
    \mmatrix{\corr{L}^{\top}}_{i}
    =
    \frac{\prob{Y=i|X}}{\prob{Y\in\mathcal{Y}_\mrm{s}|X}} \ell_{i}
$
with
\eqarr{
        \mmatrix{\corr{L}^{\top}}_{i}
        =
        \mmatrix{L^{\top} \Mcorr{SC}^{\dagger}}_{i}
        =
        \frac{\prob{Y=i|X}}{\prob{Y=y_\mrm{s}|X}} \ell_{i} 
        =
        \frac{r_{i}(X)}{r_{y_\mrm{s}}(X)} \ell_{i} \nonumber
    }
rewrites the classification risk and proves (\ref{eq:review_rewrite_SC-conf}) for SC-Conf learning:
\begin{corollary}
\label{thm:rewrite_SCConf}
    For SC-Conf learning, the classification risk can be written as
    \eqarr{
        R(g) 
        = 
        \pi_{y_\mrm{s}} \expt{X|Y=y_\mrm{s}}{\sum_{i=1}^{K} \frac{r_{i}(X)}{r_{y_\mrm{s}}(X)} \ell_{i}}. \nonumber
    }
\end{corollary}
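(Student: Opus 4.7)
The plan is to reuse the Sub-Conf machinery essentially verbatim by specializing the subset $\mathcal{Y}_\mrm{s}$ to the singleton $\{y_\mrm{s}\}$. The contamination formulation in Lemma~\ref{lma:formulate_SC-Conf} already gives $\corr{P} = \Mcorr{SC} B$ with $B = P$, and the inversion Theorem~\ref{thm:inv_method} applied to the diagonal matrix $\Mcorr{SC}$ (invertible under the assumption $\prob{Y=y_\mrm{s}|X} > 0$ for all $X$) produces the decontamination matrix $\Mcorr{SC}^{\dagger}$ already displayed above the corollary. Thus by the analysis component (\ref{eq:recipe5}), the corrected loss vector is $\corr{L}^{\top} := L^{\top} \Mcorr{SC}^{\dagger}$ and satisfies $\corr{L}^{\top} \corr{P} = L^{\top} P$.

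Next, I would read off the entries: since $\Mcorr{SC}^{\dagger}$ is diagonal, the $i$-th component of $\corr{L}^{\top}$ is simply $\frac{\prob{Y=i|X}}{\prob{Y=y_\mrm{s}|X}} \ell_{i} = \frac{r_i(X)}{r_{y_\mrm{s}}(X)} \ell_{i}$, where the second equality merely rewrites the conditional probabilities via the notation in Table~\ref{tab:small_notation_table}. Then, replicating the final display in the proof of Theorem~\ref{thm:rewrite_Sub-Conf}, I would compute
\[
R(g) = \int_{x\in\mathcal{X}} L^{\top} P \, \dx = \int_{x\in\mathcal{X}} \corr{L}^{\top} \corr{P} \, \dx,
\]
and then substitute $\corr{P}$ via the SC-Conf formulation (\ref{eq:formulate_SC-conf}): each entry of $\corr{P}$ equals $\prob{Y=y_\mrm{s}, X} = \pi_{y_\mrm{s}} \prob{X|Y=y_\mrm{s}}$. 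Factoring $\pi_{y_\mrm{s}} \prob{X|Y=y_\mrm{s}}$ out of the sum over $i$ and recognizing the integral over $x$ as an expectation with respect to $\prob{X|Y=y_\mrm{s}}$ yields
\[
R(g) = \pi_{y_\mrm{s}} \expt{X|Y=y_\mrm{s}}{\sum_{i=1}^{K} \frac{r_{i}(X)}{r_{y_\mrm{s}}(X)} \ell_{i}}.
\]

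There is no real obstacle here, since SC-Conf is genuinely the $\mathcal{Y}_\mrm{s} = \{y_\mrm{s}\}$ instantiation of Sub-Conf; the proof is a direct inheritance argument from Theorem~\ref{thm:rewrite_Sub-Conf} and Lemma~\ref{lma:MSub_inv} with the singleton substitution. The only point worth noting explicitly is the invertibility assumption on $\Mcorr{SC}$, namely $\prob{Y=y_\mrm{s}|X} > 0$, which is required for $\Mcorr{SC}^{\dagger}$ to be well-defined and which is the natural specialization of the analogous assumption in Lemma~\ref{lma:MSub_inv}.
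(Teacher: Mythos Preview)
Your proposal is correct and matches the paper's own approach essentially verbatim: the paper also specializes $\mathcal{Y}_\mrm{s}$ to the singleton $\{y_\mrm{s}\}$, inherits $\Mcorr{SC}^{\dagger}$ and the corrected losses from Lemma~\ref{lma:MSub_inv}, and then repeats the integration in Theorem~\ref{thm:rewrite_Sub-Conf} with the substitution $\prob{Y\in\mathcal{Y}_\mrm{s}|X} \to \prob{Y=y_\mrm{s}|X}$. The assumption $\prob{Y=y_\mrm{s}|X} > 0$ you flag is exactly the one the paper imposes as well.
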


\paragraph{Step 2: Recovering the previous result(s).}~\\
\indent
By matching notations, we obtain 
\eqarr{
    R(g) 
    = 
    \pi_{y_\mrm{s}}\expt{p(x|y_\mrm{s})}{\sum_{y=1}^{K}\frac{r^{y}(x)}{r^{y_\mrm{s}}(x)}\ell(g(x),y)}, \nonumber
}
recovering Theorem 1 of \cite{scconf_21/Cao/FSXANS/21}\footnote{
The matching is as follows:
$\prob{X|Y=y_\mrm{s}}$ is $p(x|y_\mrm{s})$,
$r_{i}(X)$ is $r^{i}(X)$, 
$r_{y_\mrm{s}}(X)$ is $r^{y_\mrm{s}}(X)$, and
$\ell_{i}$ is $\ell(g(X),i)$.
}.

\subsubsection{Positive-confidence (Pconf) Learning}
\label{sec:GE2_Pconf}
\paragraph{Step 1: Corrected Loss Design and Risk Rewrite.}~\\
Let us follow the notations in Section~\ref{sec:GE1_Pconf}.
Recall that $\Mcorr{Pconf}$ is a child of $\Mcorr{SC}$ on the reduction graph with $K = 2$ and $y_{\mrm{S}} = \rmp$.
Thus, assuming $\prob{Y=\rmp|X} > 0$ for all possible outcomes of $X$ and replacing $K$ and $y_{\mrm{s}}$ in Section~\ref{sec:GE2_SCConf} accordingly, we obtain the decontamination matrix
\eqarr{
    \Mcorr{Pconf}^{\dagger}
    :=
    \mmatrix{
        \frac{\prob{Y=\rmp|X}}{\prob{Y=\rmp|X}} & 0 \\
        0 & \frac{\prob{Y=\rmn|X}}{\prob{Y=\rmp|X}}
    } 
    =
    \mmatrix{
        1 & 0 \\
        0 & \frac{1-r(X)}{r(X)}
    }
    \nonumber
}
and the rewrite (\ref{eq:review_rewrite_Pconf}) reviewed in Section~\ref{sec:review_Pconf}.
\begin{corollary}
\label{thm:rewrite_Pconf}
    For Pconf learning, the classification risk can be written as
    \eqarr{
        R(g)
        =
        \pi_\rmp
        \expt{\mrm{P}}{
            \ell_\rmp + \frac{1-r(X)}{r(X)} \ell_\rmn
        }. \nonumber
    }
\end{corollary}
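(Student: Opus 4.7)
The plan is to specialize the SC-Conf rewrite (Corollary~\ref{thm:rewrite_SCConf}) to the binary case $K=2$ with $y_\mrm{s} = \rmp$, which is exactly the reduction path $\Mcorr{Sub} \to \Mcorr{SC} \to \Mcorr{Pconf}$ already identified in Section~\ref{sec:GE1_Pconf}. Alternatively, one can verify the rewrite directly via the framework (\ref{eq:recipe2a}) using the decontamination matrix $\Mcorr{Pconf}^{\dagger}$ given just above the statement. Both routes land on the same expression, and the latter is essentially self-contained given the setup already laid out for Pconf.

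First, I would construct the corrected loss vector using (\ref{eq:recipe5}):
$$\corr{L}^{\top} = L^{\top} \Mcorr{Pconf}^{\dagger} = \mmatrix{\ell_\rmp & \ell_\rmn}\mmatrix{1 & 0 \\ 0 & \frac{1-r(X)}{r(X)}} = \mmatrix{\ell_\rmp & \frac{1-r(X)}{r(X)} \ell_\rmn}.$$
Since Lemma~\ref{lma:formulate_Pconf} gives $\corr{P} = \mmatrix{\prob{Y=\rmp,X} \\ \prob{Y=\rmp,X}}^{\top}$ (both entries equal because $\Mcorr{Pconf} B$ produces $\prob{\mrm{P}}$ in each slot) and because $\Mcorr{Pconf}^{\dagger}\corr{P} = P$ by Theorem~\ref{thm:inv_method}, we have $\corr{L}^{\top}\corr{P} = L^{\top}P$ pointwise in $x$.

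Next, I would integrate: applying (\ref{eq:recipe2a}),
\begin{eqnarray*}
R(g) = \int_{\mathcal{X}} \corr{L}^{\top}\corr{P}\,\dx
&=& \int_{\mathcal{X}} \prob{Y=\rmp,x}\left(\ell_\rmp + \frac{1-r(x)}{r(x)}\,\ell_\rmn\right)\dx \\
&=& \pi_\rmp \int_{\mathcal{X}} \prob{x|Y=\rmp}\left(\ell_\rmp + \frac{1-r(x)}{r(x)}\,\ell_\rmn\right)\dx,
\end{eqnarray*}
where the second line uses $\prob{Y=\rmp,X} = \pi_\rmp\,\prob{X|Y=\rmp}$. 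Recognising $\prob{X|Y=\rmp} = \prob{\mrm{P}}$ from (\ref{eq:formulate_Pconf}) yields the claimed form.

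There is no real obstacle here; the statement is a direct specialization of the SC-Conf result. The only small bookkeeping point is the identification $r_\rmn(X) = 1 - r(X)$ (binary confidences sum to one) if one derives the rewrite by setting $K=2$ and $y_\mrm{s} = \rmp$ in Corollary~\ref{thm:rewrite_SCConf}; going through $\Mcorr{Pconf}^{\dagger}$ sidesteps even that, because the $\frac{1-r(X)}{r(X)}$ factor is already encoded in the decontamination matrix.
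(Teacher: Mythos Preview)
Your proposal is correct and follows essentially the same approach as the paper: specialize the SC-Conf rewrite to $K=2$, $y_\mrm{s}=\rmp$, using the decontamination matrix $\Mcorr{Pconf}^{\dagger}$ already displayed before the corollary. The paper's own proof is even terser (it just says ``replacing $K$ and $y_{\mrm{s}}$ accordingly'' in the SC-Conf argument), whereas you spell out the computation of $\corr{L}^\top\corr{P}$ and the integration explicitly; the only minor slip is the stray transpose on $\corr{P}$, which should be the column vector $\mmatrix{\prob{Y=\rmp,X}\\ \prob{Y=\rmp,X}}$.
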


\paragraph{Step 2: Recovering the previous result(s).}~\\
\indent
By matching notations, we obtain 
\eqarr{
    R(g)
    =
    \pi_{+}\expt{+}{\ell(g(x)) + \frac{1-r(x)}{r(x)} \ell(-g(x))}, \nonumber
}
recovering Theorem 1 of \cite{pconf_17/Ishida/NS/18}\footnote{
The matching is as follows: 
$\pi_\rmp$ is $\pi_{+}$,
$\prob{X|Y=\rmp}$ is $p(x|y=+1)$,
$\prob{Y=\rmp|X}$ is $r(x)$, 
$\prob{Y=\rmn|X}$ is $1-r(x)$,
$\ell_{\rmp}$ is $\ell(g(x))$, and
$\ell_{\rmn}$ is $\ell(-g(x))$.
}.


\blockComment{
\section{Drafting Section}
\subsection{Drafting Subsection}

\subsubsection{\rred{(Drafting)} Similar-Unlabeled (SU) Learning \texorpdfstring{\citep{su_18/Bao/NS/18}}{Lg}}

\subsubsection{\rred{(Drafting)} Similar-Unlabeled (SU) Learning}

\paragraph{Step 1: Corrected Loss Design and Risk Rewrite.}~\\
\indent

\paragraph{Step 2: Recovering the previous result(s).}~\\
\indent
}

\subsubsection{Soft-Label Learning}
\label{sec:GE2_Soft}
\paragraph{Step 1: Corrected Loss Design and Risk Rewrite.}~\\
Since $\Mcorr{Soft}$ is shown to be a child of $\Mcorr{Sub}$ on the reduction graph in Section~\ref{sec:GE1_Soft}, the analysis for soft-label learning resembles the argument in Section~\ref{sec:GE2_Sub-Conf}.
We follow the notations in Section~\ref{sec:GE1_Soft}, substitute $\prob{Y\in\mathcal{Y}_\mrm{s}|X}$ in Lemma~\ref{lma:MSub_inv} with $\prob{Y\in[K]|X} = 1$, and fix $\corr{P}$ in Lemma~\ref{lma:MSub_inv} and Theorem~\ref{thm:rewrite_Sub-Conf} as (\ref{eq:data_dist_soft}) to obtain the decontamination matrix
\eqarr{
    \Mcorr{Soft}^{\dagger}
    :=
    \mmatrix{
        \prob{Y=1|X} & \cdots & 0 \\
        \vdots & \ddots & \vdots \\
        0 & \cdots & \prob{Y=K|X}
    } \nonumber
}
and achieve (\ref{eq:review_rewrite_Soft}) by the next corollary.
\begin{corollary}
\label{thm:rewrite_Soft}
    For soft-label learning, the classification risk can be written as
    \eqarr{
        R(g) 
        = 
        \expt{X}{\sum_{i=1}^{K} \prob{Y=i|X}\ell_{i}}
        =
        \expt{X}{\sum_{i=1}^{K} r_{i}(X)\ell_{i}}. \nonumber
    }
\end{corollary}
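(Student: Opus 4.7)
The plan is to mirror the proof of Theorem~\ref{thm:rewrite_Sub-Conf} almost verbatim, with $\prob{Y \in \mathcal{Y}_\mrm{s}|X}$ replaced by $\prob{Y \in [K]|X} = 1$ and with $\corr{P}$ taken to be the vector (\ref{eq:data_dist_soft}) whose every entry equals $\prob{X}$. The backbone is already assembled: Lemma~\ref{lma:formulate_soft} gives $\corr{P} = \Mcorr{Soft} B$ with $B = P$, so I only need to verify that the stated $\Mcorr{Soft}^{\dagger}$ decontaminates $\corr{P}$ and then plug into (\ref{eq:recipe2a}).

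First I would check $\Mcorr{Soft}^{\dagger} \corr{P} = P$ entrywise. For each $i \in [K]$, the $i$-th entry of $\Mcorr{Soft}^{\dagger} \corr{P}$ is $\prob{Y=i|X} \cdot \prob{X}$, which equals $\prob{Y=i, X} = P_i$ by the definition of a conditional probability. (This step implicitly uses the same benign assumption as elsewhere in Section~\ref{sec:risk_rewrite_Confs}, namely that $\prob{Y=i|X}$ is well defined; since the contamination factors in $\Mcorr{Soft}$ are $1/\prob{Y=i|X}$ and the decontamination factors are $\prob{Y=i|X}$, nothing more restrictive than the original Sub-Conf assumption is needed, and in fact the direction ``multiply by $\prob{Y=i|X}$'' is unconditionally valid.)

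Next I would realize the corrected losses by (\ref{eq:recipe5}). Because $\Mcorr{Soft}^{\dagger}$ is diagonal, the $i$-th entry of $\corr{L}^{\top} := L^{\top}\Mcorr{Soft}^{\dagger}$ is simply
\begin{equation*}
    \bigl(\corr{L}^{\top}\bigr)_{i} = \prob{Y=i|X}\,\ell_{i} = r_{i}(X)\,\ell_{i},
\end{equation*}
and consequently $\corr{L}^{\top}\corr{P} = L^{\top}\Mcorr{Soft}^{\dagger}\corr{P} = L^{\top} P$.

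Finally I would invoke (\ref{eq:recipe2a}) to obtain
\begin{equation*}
    R(g) = \int_{x\in\mathcal{X}} L^{\top} P \,\dx = \int_{x\in\mathcal{X}} \corr{L}^{\top} \corr{P} \,\dx = \int_{x\in\mathcal{X}} \sum_{i=1}^{K} r_{i}(X)\,\ell_{i}\,\prob{X} \,\dx = \expt{X}{\sum_{i=1}^{K} r_{i}(X)\,\ell_{i}},
\end{equation*}
which is exactly the claim. There is no real obstacle here: the soft-label case is the degenerate Sub-Conf setting where the normalizer $r_{\mathcal{Y}_\mrm{s}}(X)$ collapses to $1$ and the factor $\pi_{\mathcal{Y}_\mrm{s}}$ collapses to $\pi_{[K]} = 1$, so the generic derivation in Section~\ref{sec:GE2_Sub-Conf} specializes cleanly without any additional analytic work. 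If anything deserves a sentence of care, it is only pointing out that $\prob{X}$, not a class-conditional, is the marginal used in the final expectation, which is the sole genuine distinction between soft-label and SC-Conf learning.
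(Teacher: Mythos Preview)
Your proposal is correct and follows essentially the same route as the paper: specialize the Sub-Conf derivation (Lemma~\ref{lma:MSub_inv} and Theorem~\ref{thm:rewrite_Sub-Conf}) by setting $\prob{Y\in\mathcal{Y}_\mrm{s}|X}=\prob{Y\in[K]|X}=1$ and taking $\corr{P}$ to be (\ref{eq:data_dist_soft}), which collapses $\pi_{\mathcal{Y}_\mrm{s}}$ and $r_{\mathcal{Y}_\mrm{s}}(X)$ to $1$ and yields the stated rewrite. Your entrywise verification of $\Mcorr{Soft}^{\dagger}\corr{P}=P$ and the observation that the final expectation is over $\prob{X}$ rather than a class-conditional are exactly the points the paper relies on.
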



\paragraph{Step 2: Recovering the previous result(s).}~\\
\indent
\cite{soft_22/Ishida/YCNS/22} did not focus on the classification risk rewrite problem.
We can modify Corollary~\ref{thm:rewrite_Soft} to provide a risk rewrite for binary soft-label learning mentioned by \cite{soft_22/Ishida/YCNS/22}.
Taking $K=2$, we have
\eqarr{
    R(g) 
    = 
    \expt{X}{\prob{Y=\rmp|X}\ell_{\rmp} + \prob{Y=\rmn|X}\ell_{\rmn}}
    =
    \expt{X}{r(X)\ell_{\rmp} + (1-r(X))\ell_{\rmn}}. \nonumber
}

\section{Conclusion and Outlook}
\label{sec:future}
We set out with the questions wishing to determine if there is a common way to interpret the formation of weak supervision and search for a generic treatment to solve WSL, to understand the essence of WSL.
In response, we proposed a framework that unifies the formulations and analyses of a set of WSL scenarios to provide a common ground to connect, compare, and understand various weakly-supervised signals.
The formulation component of the proposed framework, viewing WSL from a contamination perspective, associates a WSL data-generating process with a base distribution vector multiplied by a contamination matrix.
By instantiating the contamination matrices of WSLs, we revealed a comprehensive reduction graph, Figure~\ref{fig:reduction_map_all}, connecting existing WSLs.
Each vertex contains a contamination matrix and the section index of the WSL scenario which the matrix characterizes.
Each edge represents the reduction relation of two WSLs.
We can see three major branches from the abstract $\Mcorr{corr}$, corresponding to Tables~\ref{tab:MCD_matrices_summary}, \ref{tab:CCN_matrices_summary}, and \ref{tab:conf_matrices_summary} we discussed in Section~\ref{sec:matrixFormulations}.
The analysis component of the proposed framework, tackling the problem from a decontamination viewpoint, working with the technical building blocks Theorems~\ref{thm:inv_method} and \ref{thm:marginal_chain} constitute a generic treatment to solve the risk rewrite problem.
Section~\ref{sec:riskRewrite} discussed in depth how the analysis component conducts risk rewrite and recovers existing results for WSLs.

\begin{figure}[h]
    \centering
    \includegraphics[width=\textwidth]{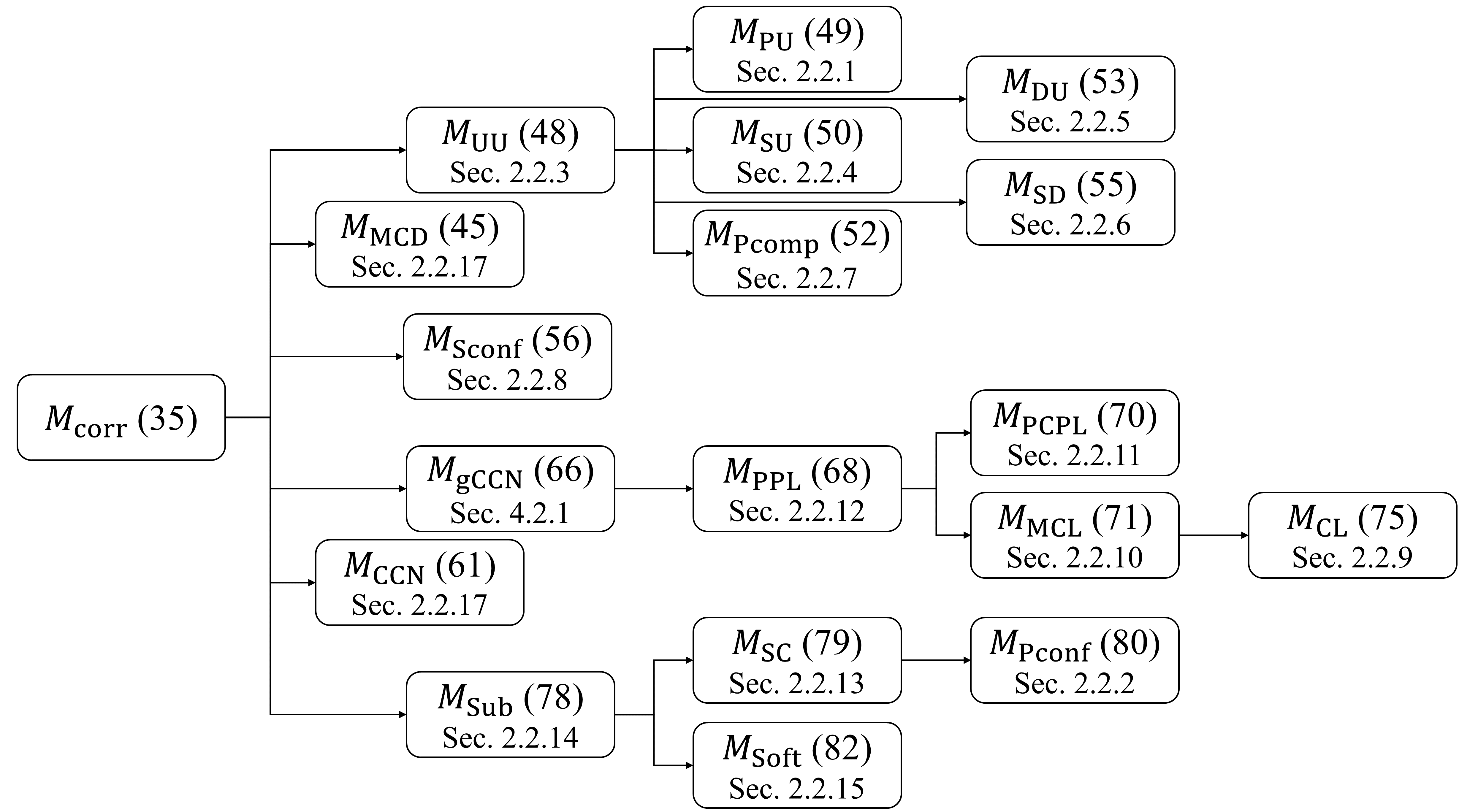}
    \caption{Depicting the reduction map from Tables~\ref{tab:MCD_matrices_summary}, \ref{tab:CCN_matrices_summary}, and \ref{tab:conf_matrices_summary}.}
    \label{fig:reduction_map_all}
\end{figure}

The application of the proposed framework results in a set of theorems. 
We summarize them in Table~\ref{tab:all_theorem_table}.
The Formulation column consists of the results of the formulation component (\ref{eq:recipe1}).
The Decontamination and the Corrected losses columns correspond to the results of the analysis component ((\ref{eq:recipe6}), (\ref{eq:recipe5}), and (\ref{eq:recipe2a})).
The Recovery column justifies the framework by recovering results from the literature.
Crucial results are marked red.

\begin{table}[H]
\centering
\caption{\label{tab:all_theorem_table} Theorem Structure.}
{\renewcommand{\arraystretch}{1.1}
\begin{tabular}[t]{ |c|c|c|c|c| } 
    \hline
    Model & Formulation & Decontamination & Corrected losses & Recovery \\
    & (Find $M$ s.t. & (Find $M^{\dagger}$ s.t. & (Rewrite via & \\
    & $\corr{P} = M B$.) & $P = M^{\dagger} \corr{P}$.) & $\corr{L}^{\top} = L^{\top} M^{\dagger}$ and $\corr{P}$.) & \\
    \hline
    Abstract & (\ref{eq:recipe1}) & (\ref{eq:recipe6}) & (\ref{eq:recipe5}) and (\ref{eq:recipe2a}) & \\
    model & & \rred{Theorem~\ref{thm:inv_method}} and \rred{Theorem~\ref{thm:marginal_chain}} & & \\
    \hline
    MCD & & & & \\
    \rred{UU} & \rred{Lemma~\ref{lma:formulate_UU}} & Corollary~\ref{thm:UU_M_inv} & \rred{Theorem~\ref{thm:UU_rewrite_corrected_losses}} & (Notation swap.) \\
    PU & Lemma~\ref{lma:formulate_PU} & (Immediate reduction.) & Corollary~\ref{thm:rewrite_MCD_PU} & (Notation swap.) \\
    SU & Lemma~\ref{lma:formulate_SU} & (Immediate reduction.) & Corollary~\ref{thm:rewrite_SU} & Lemmas~\ref{lma:recover_SU_1} and \ref{lma:symmetric_similar} \\
    Pcomp & Lemma~\ref{lma:formulate_Pcomp} & (Immediate reduction.) & Corollary~\ref{thm:rewrite_Pcomp} & (Notation swap.) \\
    DU & Lemma~\ref{lma:formulate_DU} & (Immediate reduction.) & Corollary~\ref{thm:rewrite_DU} & Lemmas~\ref{lma:recover_DU_1} and \ref{lma:symmetric_dissimilar} \\
    SD & Lemma~\ref{lma:formulate_SD} & (Immediate reduction.) & Corollary~\ref{thm:rewrite_SD} & Lemma~\ref{lma:recover_SD_1} \\
    Sconf & Lemma~\ref{lma:MSconf} & Lemmas~\ref{lma:key_Sconf} and \ref{lma:inv_MSconf} & Theorem~\ref{thm:rewrite_Sconf} & (Notation swap.) \\
    \hline
    CCN & & & & \\
    \rred{gCCN} & \rred{Lemma~\ref{lma:formulate_gCCN}} &  (\ref{eq:MGeneral_inv_b}) and  Theorem~\ref{thm:marginal_chain} & \rred{Theorem~\ref{thm:rewrite_gCCN}} & (Notation swap.) \\
    PPL & Lemma~\ref{lma:formulate_PPL} & Lemma~\ref{lma:inv_MPPL} & Corollary~\ref{thm:rewrite_PPL} & (Notation swap.) \\
    PCPL & Lemma~\ref{lma:formulate_PCPL} & (Immediate reduction.) & Corollary~\ref{thm:rewrite_PCPL} & Lemma~\ref{lma:recover_PCPL} \\
    MCL & Lemma~\ref{lma:matrix_formulation_MCL} & (Immediate reduction.) & Corollary~\ref{thm:rewrite_MCL} & Theorem~\ref{thm:rewrite_MCL_inv}, \\
    & & & & Lemmas~\ref{lma:Md_inv} and \ref{lma:Md_inv_2}\\
    CL & Lemma~\ref{lma:matrix_formulation_CL} & (Immediate reduction.) & Corollary~\ref{thm:rewrite_CL} & (Notation swap.) \\
    \hline
    \rred{Sub-Conf} & \rred{Lemma~\ref{lma:formulate_Sub-Conf}} & Lemma~\ref{lma:MSub_inv} & \rred{Theorem~\ref{thm:rewrite_Sub-Conf}} & (Notation swap.) \\
    SC-Conf & Lemma~\ref{lma:formulate_SC-Conf} & (Immediate reduction.) & Corollary~\ref{thm:rewrite_SCConf} & (Notation swap.) \\
    Pconf & Lemma~\ref{lma:formulate_Pconf} & (Immediate reduction.) & Corollary~\ref{thm:rewrite_Pconf} & (Notation swap.) \\
    Soft & Lemma~\ref{lma:formulate_soft} & (Immediate reduction.) & Corollary~\ref{thm:rewrite_Soft} & (N/A.) \\
    \hline
\end{tabular}
}   
\end{table}     

The proposed framework is abstract and flexible; hence, we would like to discuss its potential from the following aspects. 
\future 
Firstly,
the performance measure focused on in this paper is the classification risk. 
With proper choices of $P$ and $L$, our framework can be extended to other performance metrics, such as the balanced error rate and cost-sensitive measures \citep{Brodersen/OSB/10, pu_14/Plessis/NS/14, mcd_15/Menon/ROW/15, 16_Scott/Blanchard/FHPS/16, ccn_18/Natarajan/DRT/17, portion_20_Scott/Scott/Z/20}.
\future 
Secondly,
we can explore the formulation capability by exploiting the power of matrix operations.
Cascading matrices allow us to formulate complex scenarios, such as data containing preference relations collected in a noisy environment.
Matrix addition allows us to categorize different contamination mechanisms into cases to capture the structural properties of a problem.
A complicated scenario could undergo a sophisticated formulation procedure, but once we have the resulting contamination matrix, the problem boils down to calculating the corresponding decontamination matrix.
\future 
Thirdly,
the MCD scenarios discussed in this paper (Sections~\ref{sec:formulations_mcds} and \ref{sec:risk_rewrite_MCDs}) belong to binary classification.
A way of extending an MCD formulation to multiclass classification is to extend $\Mcorr{MCD}$ (\ref{eq:MMCD_2}) from a $2 \times 2$ matrix to a $K \times K$ one, in which $K^2-K$ mixture rates are used to characterize the extended $\Mcorr{gMCD}$: the $(i,j)$ entry is $\gamma_{i,j}$ if $i \neq j$ and is $1-\sum_{j \neq i} \gamma_{i,j}$ for the $i$-th entry on the diagonal.
\future 
Fourthly, 
the label-flipping probabilities $\prob{\corr{Y}|Y}$ in \cite{ccn_18/Natarajan/DRT/17} and \cite{partial_20_PCPL/Feng/LHXNGAS/20} assume that the contaminated label $\corr{Y}$ is independent of $X$ condition on the ture label $Y$.
The formulation matrices, $\Mcorr{CCN}$ (\ref{eq:MCCN}) and $\Mcorr{gCCN}$ (\ref{eq:MGeneral_2}), in contrast, take $X$ into consideration.
This formulation enables us to tackle the instance-dependent problem \citep{instance_20/Berthon/HLNS/21} in the future.
\future 
Fifthly,
we hope the marginal chain method can bring up new thoughts for WSL investigations, as it avoids the invertible assumption by exploiting the fact that distributions define the performance measures.
We also project its potential in research regarding the broader sense of contamination and decontamination.
\future 
Sixthly,
the properness of \cite{partial_21_PPL/Wu/LS/23} provides an efficient technique to compute $\prob{Y|S,X}$ needed in $\Mcorr{gCCN}$ (\ref{eq:MGeneral_inv_b}).
It would be intriguing to know if there are any other alternatives.
\future
Finally but not least, the proposed framework operating under matrix multiplication belongs to a broader question of under what circumstances does a function $f^{\dagger}$ exist with $P = f^{\dagger}(\corr{P})$ if $\corr{P} = f(P)$. 


\acks{The authors were supported by the Institute for AI and Beyond, UTokyo.
The first author would like to thank Professor Takashi Ishida (UTokyo) for valuable insights and discussions in extending the coverage of the framework and the colleagues, Xin-Qiang Cai, Masahiro Negishi, Wei Wang, and Yivan Zhang (in alphabetical order), for comments in improving the manuscript.
}

\newpage
\appendix

\section{Notations}
\label{sec:notations}

\begin{table}[H]
\centering
\caption{\label{tab:large_notation_table_v1} Notations and Aliases.}
{\renewcommand{\arraystretch}{1.2}
\begin{tabular}[t]{ |l|l|l|l| } 
    \hline
    Name of the notation & Expression & Aliases & Convention \\
    \hline
    Example & $(y, x)$ & & $(x, y)$ \\
    \hline
    Binary classes & $\{\rmp, \rmn\}$ & & $\{+1, -1\}$ \\
    Multiple classes & $\{1, \cdots, K\}$ & $[K]$ & \\
    Compound classes of $[K]$ & $2^{[K]} \backslash \left\{\emptyset, [K] \right\}$ & $\mathcal{S}$ & \\
    A subset of classes & $\mathcal{Y}_{s} \subset [K]$ & & \\
    \hline
    Joint distribution & $\Pr(Y=y, X=x)$ & $\prob{Y=y,x}$, $\prob{Y=y,X}$, or $\prob{Y,X}$ & $\Pr(x, y)$ \\
    Class prior & $\Pr(Y=y)$ & $\pi_{y}$ & \\
    Marginal & $\Pr(X)$ & $\prob{X}$ & \\
    Class-conditional & $\Pr(X=x \mid Y=y)$ & $\prob{x|y}$, $\prob{X|y}$, $\prob{x|Y=y}$, or $\prob{X|Y=y}$ & \\
    Class probability & $\Pr(Y=y \mid X=x)$ & $\prob{Y=y|x}$, $\prob{Y=y|X}$, or $\prob{Y|X}$ & $\eta(x)$ \\
    Confidence & $\Pr(Y=y\mid X=x)$ & $r_{y}(X)$, $r_{y}(x)$, or $r(X)$ if $y=\rmp$ & $r^{y}(x)$ or $r(x)$ \\
    Sample size probability & $\Pr(|S|=d)$ & $\prob{|S|=d}$ or $q_{|S|}$ & \\
    \hline
    Hypothesis and its space & $g \in \mathcal{G}$ & & \\
    \hline
    Loss of $g$ & $\ell_{Y=y}(g(x))$ & $\ell_{y}$, $\ell_{y}(X)$, or $\ell_{Y}(g(X))$ & $\ell(g(X),Y)$ \\
    \hline
    Classification risk & $\expt{Y,X}{\ell_{Y}(g(X))}$ & $R(g)$ & $\expt{X,Y}{\ell(g(X),Y)}$ \\
    \hline
    The $j$-th entry of vector $V$ & $\left(V\right)_{j}$ & $V_j$ & \\
    \hline
    Indicator function of $E$ & $\ind{E}$ & & \\
    \hline
    Complement of set $s$ & $\mathcal{Y}\backslash s$ & $\corr{s}$ & \\
    \hline
    Identity matrix & $I$ & & \\
    \hline
    MCD parameters & $\mcdpp$ and $\mcdnn$ & & \\
    \hline
    UU parameters & $\mcdp$ and $\mcdn$ & & $1-\theta$ and $\theta'$ \\
    \hline
    CCN parameters & $\prob{\corr{Y}|Y,X}$ & $\prob{S|Y,X}$ or $\prob{\corr{S}|Y,X}$ & $\rho_{+}$ and $\rho_{-}$ \\
    \hline
\end{tabular}
}
\end{table}

\blockComment{




}

\blockComment{
}






\vskip 0.2in
\bibliography{refs}

\begin{thebibliography}{60}
\providecommand{\natexlab}[1]{#1}
\providecommand{\url}[1]{\texttt{#1}}
\expandafter\ifx\csname urlstyle\endcsname\relax
  \providecommand{\doi}[1]{doi: #1}\else
  \providecommand{\doi}{doi: \begingroup \urlstyle{rm}\Url}\fi

\bibitem[Bao et~al.(2018)Bao, Niu, and Sugiyama]{su_18/Bao/NS/18}
Han Bao, Gang Niu, and Masashi Sugiyama.
\newblock Classification from pairwise similarity and unlabeled data.
\newblock In \emph{Proceedings of the 35th International Conference on Machine Learning, {ICML} 2018}, pages 461--470, Stockholmsm{\"{a}}ssan, Stockholm, Sweden, 2018.

\bibitem[Berthon et~al.(2021)Berthon, Han, Niu, Liu, and Sugiyama]{instance_20/Berthon/HLNS/21}
Antonin Berthon, Bo~Han, Gang Niu, Tongliang Liu, and Masashi Sugiyama.
\newblock Confidence scores make instance-dependent label-noise learning possible.
\newblock In \emph{Proceedings of the 38th International Conference on Machine Learning, {ICML} 2021, Virtual Event}, pages 825--836, 2021.

\bibitem[Blanchard and Scott(2014)]{mcd_14_Scott/Blanchard/S/14}
Gilles Blanchard and Clayton Scott.
\newblock Decontamination of mutually contaminated models.
\newblock In \emph{Proceedings of the Seventeenth International Conference on Artificial Intelligence and Statistics, {AISTATS} 2014}, pages 1--9, Reykjavik, Iceland, 2014.

\bibitem[Blanchard et~al.(2016)Blanchard, Flaska, Handy, Pozzi, and Scott]{16_Scott/Blanchard/FHPS/16}
Gilles Blanchard, Marek Flaska, Gregory Handy, Sara Pozzi, and Clayton Scott.
\newblock Classification with asymmetric label noise: Consistency and maximal denoising.
\newblock \emph{Electronic Journal of Statistics}, 10\penalty0 (2):\penalty0 2780--2824, 2016.

\bibitem[Brodersen et~al.(2010)Brodersen, Ong, Stephan, and Buhmann]{Brodersen/OSB/10}
Kay~Henning Brodersen, Cheng~Soon Ong, Klaas~Enno Stephan, and Joachim~M. Buhmann.
\newblock The balanced accuracy and its posterior distribution.
\newblock In \emph{20th International Conference on Pattern Recognition, {ICPR} 2010}, pages 3121--3124, Istanbul, Turkey, 2010.

\bibitem[Cao et~al.(2021{\natexlab{a}})Cao, Feng, Shu, Xu, An, Niu, and Sugiyama]{scconf_21/Cao/FSXANS/21}
Yuzhou Cao, Lei Feng, Senlin Shu, Yitian Xu, Bo~An, Gang Niu, and Masashi Sugiyama.
\newblock Multi-class classification from single-class data with confidences.
\newblock \emph{CoRR}, abs/2106.08864, 2021{\natexlab{a}}.
\newblock URL \url{https://arxiv.org/abs/2106.08864}.

\bibitem[Cao et~al.(2021{\natexlab{b}})Cao, Feng, Xu, An, Niu, and Sugiyama]{sconf_21/Cao/FXANS/21}
Yuzhou Cao, Lei Feng, Yitian Xu, Bo~An, Gang Niu, and Masashi Sugiyama.
\newblock Learning from similarity-confidence data.
\newblock In \emph{Proceedings of the 38th International Conference on Machine Learning, {ICML} 2021, Virtual Event}, pages 1272--1282, 2021{\natexlab{b}}.

\bibitem[Charoenphakdee et~al.(2019)Charoenphakdee, Lee, and Sugiyama]{icml:2019:Charoenphakdee+LS}
Nontawat Charoenphakdee, Jongyeong Lee, and Masashi Sugiyama.
\newblock On symmetric losses for learning from corrupted labels.
\newblock In \emph{Proceedings of the 36th International Conference on Machine Learning, {ICML} 2019, Long Beach, California, {USA}}, pages 961--970, 2019.

\bibitem[Charoenphakdee et~al.(2021)Charoenphakdee, Cui, Zhang, and Sugiyama]{ICML:Charoenphakdee+etal:2021}
Nontawat Charoenphakdee, Zhenghang Cui, Yivan Zhang, and Masashi Sugiyama.
\newblock Classification with rejection based on cost-sensitive classification.
\newblock In \emph{Proceedings of 38th International Conference on Machine Learning, {ICML} 2021, Virtual Event}, pages 1507--1517, 2021.

\bibitem[Chou et~al.(2020)Chou, Niu, Lin, and Sugiyama]{chou2020unbiased}
Yu{-}Ting Chou, Gang Niu, Hsuan{-}Tien Lin, and Masashi Sugiyama.
\newblock Unbiased risk estimators can mislead: {A} case study of learning with complementary labels.
\newblock In \emph{Proceedings of the 37th International Conference on Machine Learning, {ICML} 2020, Virtual Event}, pages 1929--1938, 2020.

\bibitem[Cid{-}Sueiro(2012)]{partial_12_Cid-Suerio/Cid-Suerio/12}
Jes{\'{u}}s Cid{-}Sueiro.
\newblock Proper losses for learning from partial labels.
\newblock In \emph{Advances in Neural Information Processing Systems 25: 26th Annual Conference on Neural Information Processing Systems 2012}, pages 1574--1582, Lake Tahoe, Nevada, {USA}, 2012.

\bibitem[Cour et~al.(2011)Cour, Sapp, and Taskar]{cour2011learning}
Timothee Cour, Ben Sapp, and Ben Taskar.
\newblock Learning from partial labels.
\newblock \emph{Journal of Machine Learning Research}, 12:\penalty0 1501--1536, 2011.
\newblock URL \url{https://dl.acm.org/doi/10.5555/1953048.2021049}.

\bibitem[du~Plessis et~al.(2013)du~Plessis, Niu, and Sugiyama]{TAAI:duPlessis+etal:2013}
Marthinus~Christoffel du~Plessis, Gang Niu, and Masashi Sugiyama.
\newblock Clustering unclustered data: {U}nsupervised binary labeling of two datasets having different class balances.
\newblock In \emph{Proceedings of Conference on Technologies and Applications of Artificial Intelligence, {TAAI} 2013}, pages 1--6, Taipei, Taiwan, 2013.

\bibitem[du~Plessis et~al.(2014)du~Plessis, Niu, and Sugiyama]{pu_14/Plessis/NS/14}
Marthinus~Christoffel du~Plessis, Gang Niu, and Masashi Sugiyama.
\newblock Analysis of learning from positive and unlabeled data.
\newblock In \emph{Advances in Neural Information Processing Systems 27: Annual Conference on Neural Information Processing Systems 2014}, pages 703--711, Montr{\'{e}}al, Qu{\'{e}}bec, Canada, 2014.

\bibitem[du~Plessis et~al.(2015)du~Plessis, Niu, and Sugiyama]{pu_15/Plessis/NS/15}
Marthinus~Christoffel du~Plessis, Gang Niu, and Masashi Sugiyama.
\newblock Convex formulation for learning from positive and unlabeled data.
\newblock In \emph{Proceedings of the 32nd International Conference on Machine Learning, {ICML} 2015}, pages 1386--1394, Lille, France, 2015.

\bibitem[Elkan and Noto(2008)]{pu_08_EN}
Charles Elkan and Keith Noto.
\newblock Learning classifiers from only positive and unlabeled data.
\newblock In \emph{Proceedings of the 14th {ACM} {SIGKDD} International Conference on Knowledge Discovery and Data Mining}, pages 213--220, Las Vegas, Nevada, {USA}, 2008.

\bibitem[Feng et~al.(2020{\natexlab{a}})Feng, Kaneko, Han, Niu, An, and Sugiyama]{comp_20_MCL/Feng/KHNAS/20}
Lei Feng, Takuo Kaneko, Bo~Han, Gang Niu, Bo~An, and Masashi Sugiyama.
\newblock Learning with multiple complementary labels.
\newblock In \emph{Proceedings of the 37th International Conference on Machine Learning, {ICML} 2020, Virtual Event}, pages 3072--3081, 2020{\natexlab{a}}.

\bibitem[Feng et~al.(2020{\natexlab{b}})Feng, Lv, Han, Xu, Niu, Geng, An, and Sugiyama]{partial_20_PCPL/Feng/LHXNGAS/20}
Lei Feng, Jiaqi Lv, Bo~Han, Miao Xu, Gang Niu, Xin Geng, Bo~An, and Masashi Sugiyama.
\newblock Provably consistent partial-label learning.
\newblock In \emph{Advances in Neural Information Processing Systems 33: Annual Conference on Neural Information Processing Systems 2020, {NeurIPS} 2020, Virtual Event}, 2020{\natexlab{b}}.
\newblock URL \url{https://proceedings.neurips.cc/paper/2020/hash/7bd28f15a49d5e5848d6ec70e584e625-Abstract.html}.

\bibitem[Feng et~al.(2021)Feng, Shu, Lu, Han, Xu, Niu, An, and Sugiyama]{pcomp_20/Feng/SLS/21}
Lei Feng, Senlin Shu, Nan Lu, Bo~Han, Miao Xu, Gang Niu, Bo~An, and Masashi Sugiyama.
\newblock Pointwise binary classification with pairwise confidence comparisons.
\newblock In \emph{Proceedings of the 38th International Conference on Machine Learning, {ICML} 2021, Virtual Event}, pages 3252--3262, 2021.

\bibitem[Goldberger and Ben{-}Reuven(2017)]{17_Goldberger/Goldberger/B/17}
Jacob Goldberger and Ehud Ben{-}Reuven.
\newblock Training deep neural-networks using a noise adaptation layer.
\newblock In \emph{5th International Conference on Learning Representations, {ICLR} 2017, Toulon, France, Conference Track Proceedings}, Toulon, France, 2017.
\newblock URL \url{https://openreview.net/forum?id=H12GRgcxg}.

\bibitem[Han et~al.(2018)Han, Yao, Yu, Niu, Xu, Hu, Tsang, and Sugiyama]{18_Han/Han/YYNXHTS/18}
Bo~Han, Quanming Yao, Xingrui Yu, Gang Niu, Miao Xu, Weihua Hu, Ivor~W. Tsang, and Masashi Sugiyama.
\newblock Co-teaching: Robust training of deep neural networks with extremely noisy labels.
\newblock In \emph{Advances in Neural Information Processing Systems 31: Annual Conference on Neural Information Processing Systems 2018, {NeurIPS} 2018}, pages 8536--8546, Montr{\'{e}}al, Qu{\'{e}}bec, Canada, 2018.

\bibitem[Ishida et~al.(2017)Ishida, Niu, Hu, and Sugiyama]{comp_17/Ishida/NHS/17}
Takashi Ishida, Gang Niu, Weihua Hu, and Masashi Sugiyama.
\newblock Learning from complementary labels.
\newblock In \emph{Advances in Neural Information Processing Systems 30: Annual Conference on Neural Information Processing Systems 2017}, pages 5639--5649, Long Beach, California, {USA}, 2017.

\bibitem[Ishida et~al.(2018)Ishida, Niu, and Sugiyama]{pconf_17/Ishida/NS/18}
Takashi Ishida, Gang Niu, and Masashi Sugiyama.
\newblock Binary classification from positive-confidence data.
\newblock In \emph{Advances in Neural Information Processing Systems 31: Annual Conference on Neural Information Processing Systems 2018, {NeurIPS} 2018}, pages 5921--5932, Montr{\'{e}}al, Qu{\'{e}}bec, Canada, 2018.

\bibitem[Ishida et~al.(2019)Ishida, Niu, Menon, and Sugiyama]{comp_18/Ishida/NMS/19}
Takashi Ishida, Gang Niu, Aditya~Krishna Menon, and Masashi Sugiyama.
\newblock Complementary-label learning for arbitrary losses and models.
\newblock In \emph{Proceedings of the 36th International Conference on Machine Learning, {ICML} 2019}, pages 2971--2980, Long Beach, California, {USA}, 2019.

\bibitem[Ishida et~al.(2023)Ishida, Yamane, Charoenphakdee, Niu, and Sugiyama]{soft_22/Ishida/YCNS/22}
Takashi Ishida, Ikko Yamane, Nontawat Charoenphakdee, Gang Niu, and Masashi Sugiyama.
\newblock Is the performance of my deep network too good to be true? {A} direct approach to estimating the bayes error in binary classification.
\newblock In \emph{The Eleventh International Conference on Learning Representations, {ICLR} 2023, Kigali, Rwanda}, 2023.
\newblock URL \url{https://openreview.net/forum?id=FZdJQgy05rz}.

\bibitem[Jiang et~al.(2018)Jiang, Zhou, Leung, Li, and Fei{-}Fei]{18_Jiang/Jiang/ZLLF/18}
Lu~Jiang, Zhengyuan Zhou, Thomas Leung, Li{-}Jia Li, and Li~Fei{-}Fei.
\newblock Mentornet: Learning data-driven curriculum for very deep neural networks on corrupted labels.
\newblock In \emph{Proceedings of the 35th International Conference on Machine Learning, {ICML} 2018}, pages 2309--2318, Stockholmsm{\"{a}}ssan, Stockholm, Sweden, 2018.

\bibitem[Katsura and Uchida(2020)]{katsura2020bridging}
Yasuhiro Katsura and Masato Uchida.
\newblock Bridging ordinary-label learning and complementary-label learning.
\newblock In \emph{Proceedings of The 12th Asian Conference on Machine Learning, {ACML} 2020}, pages 161--176, Bangkok, Thailand, 2020.

\bibitem[Katz{-}Samuels et~al.(2019)Katz{-}Samuels, Blanchard, and Scott]{mcd_19_Scott/Katz-Samuels/BS/19}
Julian Katz{-}Samuels, Gilles Blanchard, and Clayton Scott.
\newblock Decontamination of mutual contamination models.
\newblock \emph{Journal of Machine Learning Research}, 20:\penalty0 41:1--41:57, 2019.
\newblock URL \url{http://jmlr.org/papers/v20/17-576.html}.

\bibitem[Kiryo et~al.(2017)Kiryo, Niu, du~Plessis, and Sugiyama]{pu_17/Kiryo/NPS/17}
Ryuichi Kiryo, Gang Niu, Marthinus~Christoffel du~Plessis, and Masashi Sugiyama.
\newblock Positive-unlabeled learning with non-negative risk estimator.
\newblock In \emph{Advances in Neural Information Processing Systems 30: Annual Conference on Neural Information Processing Systems 2017}, pages 1675--1685, Long Beach, California, {USA}, 2017.

\bibitem[Lu et~al.(2019)Lu, Niu, Menon, and Sugiyama]{uu_18/Lu/NMS/19}
Nan Lu, Gang Niu, Aditya~Krishna Menon, and Masashi Sugiyama.
\newblock On the minimal supervision for training any binary classifier from only unlabeled data.
\newblock In \emph{7th International Conference on Learning Representations, {ICLR} 2019, New Orleans, Louisiana, {USA}}, New Orleans, Louisiana, {USA}, 2019.
\newblock URL \url{https://openreview.net/forum?id=B1xWcj0qYm}.

\bibitem[Lu et~al.(2020)Lu, Zhang, Niu, and Sugiyama]{uu_19_Lu}
Nan Lu, Tianyi Zhang, Gang Niu, and Masashi Sugiyama.
\newblock Mitigating overfitting in supervised classification from two unlabeled datasets: {A} consistent risk correction approach.
\newblock In \emph{The 23rd International Conference on Artificial Intelligence and Statistics, {AISTATS} 2020, Online Event}, pages 1115--1125, 2020.
\newblock URL \url{http://proceedings.mlr.press/v108/lu20c.html}.

\bibitem[Lu et~al.(2021)Lu, Lei, Niu, Sato, and Sugiyama]{uu_21_Lu}
Nan Lu, Shida Lei, Gang Niu, Issei Sato, and Masashi Sugiyama.
\newblock Binary classification from multiple unlabeled datasets via surrogate set classification.
\newblock In \emph{Proceedings of the 38th International Conference on Machine Learning, {ICML} 2021, Virtual Event}, pages 7134--7144, 2021.
\newblock URL \url{http://proceedings.mlr.press/v139/lu21c.html}.

\bibitem[Lv et~al.(2020)Lv, Xu, Feng, Niu, Geng, and Sugiyama]{partial_20_PIPL/Lv/XFNGS/20}
Jiaqi Lv, Miao Xu, Lei Feng, Gang Niu, Xin Geng, and Masashi Sugiyama.
\newblock Progressive identification of true labels for partial-label learning.
\newblock In \emph{Proceedings of the 37th International Conference on Machine Learning, {ICML} 2020, Virtual Event}, pages 6500--6510, 2020.

\bibitem[Ma et~al.(2018)Ma, Wang, Houle, Zhou, Erfani, Xia, Wijewickrema, and Bailey]{18_Ma/Ma/WHZEXWB/18}
Xingjun Ma, Yisen Wang, Michael~E. Houle, Shuo Zhou, Sarah~M. Erfani, Shu{-}Tao Xia, Sudanthi N.~R. Wijewickrema, and James Bailey.
\newblock Dimensionality-driven learning with noisy labels.
\newblock In \emph{Proceedings of the 35th International Conference on Machine Learning, {ICML} 2018}, pages 3361--3370, Stockholmsm{\"{a}}ssan, Stockholm, Sweden, 2018.

\bibitem[Menon et~al.(2015)Menon, van Rooyen, Ong, and Williamson]{mcd_15/Menon/ROW/15}
Aditya~Krishna Menon, Brendan van Rooyen, Cheng~Soon Ong, and Bob Williamson.
\newblock Learning from corrupted binary labels via class-probability estimation.
\newblock In \emph{Proceedings of the 32nd International Conference on Machine Learning, {ICML} 2015}, pages 125--134, Lille, France, 2015.

\bibitem[Natarajan et~al.(2013)Natarajan, Dhillon, Ravikumar, and Tewari]{ccn_13/Natarajan/DRT/13}
Nagarajan Natarajan, Inderjit~S. Dhillon, Pradeep Ravikumar, and Ambuj Tewari.
\newblock Learning with noisy labels.
\newblock In \emph{Advances in Neural Information Processing Systems 26: 27th Annual Conference on Neural Information Processing Systems 2013}, pages 1196--1204, Lake Tahoe, Nevada, {USA}, 2013.

\bibitem[Natarajan et~al.(2017)Natarajan, Dhillon, Ravikumar, and Tewari]{ccn_18/Natarajan/DRT/17}
Nagarajan Natarajan, Inderjit~S. Dhillon, Pradeep Ravikumar, and Ambuj Tewari.
\newblock Cost-sensitive learning with noisy labels.
\newblock \emph{Journal of Machine Learning Research}, 18:\penalty0 155:1--155:33, 2017.
\newblock URL \url{http://jmlr.org/papers/v18/15-226.html}.

\bibitem[Negishi(2023)]{23_UUU}
Masahiro Negishi.
\newblock Pairwise-constraint classification in weakly supervised machine learning: Risk-consistent approach and classifier-consistent approach.
\newblock Senior thesis, 2023.

\bibitem[Niu et~al.(2016)Niu, du~Plessis, Sakai, Ma, and Sugiyama]{pu_16/Niu/PSMS/16}
Gang Niu, Marthinus~Christoffel du~Plessis, Tomoya Sakai, Yao Ma, and Masashi Sugiyama.
\newblock Theoretical comparisons of positive-unlabeled learning against positive-negative learning.
\newblock In \emph{Advances in Neural Information Processing Systems 29: Annual Conference on Neural Information Processing Systems 2016}, pages 1199--1207, Barcelona, Spain, 2016.

\bibitem[Northcutt et~al.(2017)Northcutt, Wu, and Chuang]{17_Northcutt/Northcutt/WC/17}
Curtis~G. Northcutt, Tailin Wu, and Isaac~L. Chuang.
\newblock Learning with confident examples: Rank pruning for robust classification with noisy labels.
\newblock In \emph{Proceedings of the Thirty-Third Conference on Uncertainty in Artificial Intelligence, {UAI} 2017}, Sydney, Australia, 2017.
\newblock URL \url{http://auai.org/uai2017/proceedings/papers/35.pdf}.

\bibitem[Northcutt et~al.(2021)Northcutt, Jiang, and Chuang]{19_Northcutt/Northcutt/JC/21}
Curtis~G. Northcutt, Lu~Jiang, and Isaac~L. Chuang.
\newblock Confident learning: Estimating uncertainty in dataset labels.
\newblock \emph{Journal of Artificial Intelligence Research}, 70:\penalty0 1373--1411, 2021.

\bibitem[Patrini et~al.(2017)Patrini, Rozza, Menon, Nock, and Qu]{17_Patrini/Patrini/RMNQ/17}
Giorgio Patrini, Alessandro Rozza, Aditya~Krishna Menon, Richard Nock, and Lizhen Qu.
\newblock Making deep neural networks robust to label noise: {A} loss correction approach.
\newblock In \emph{2017 {IEEE} Conference on Computer Vision and Pattern Recognition, {CVPR} 2017}, pages 2233--2241, Honolulu, Hawaii, {USA}, 2017.

\bibitem[Reed et~al.(2015)Reed, Lee, Anguelov, Szegedy, Erhan, and Rabinovich]{label_14_Reed/Reed/LASER/14}
Scott~E. Reed, Honglak Lee, Dragomir Anguelov, Christian Szegedy, Dumitru Erhan, and Andrew Rabinovich.
\newblock Training deep neural networks on noisy labels with bootstrapping.
\newblock In \emph{3rd International Conference on Learning Representations, {ICLR} 2015, Workshop Track Proceedings}, San Diego, California, {USA}, 2015.
\newblock URL \url{http://arxiv.org/abs/1412.6596}.

\bibitem[Sakai et~al.(2018)Sakai, Niu, and Sugiyama]{ML:Sakai+etal:2018}
Tomoya Sakai, Gang Niu, and Masashi Sugiyama.
\newblock Semi-supervised {AUC} optimization based on positive-unlabeled learning.
\newblock \emph{Machine Learning}, 107\penalty0 (4):\penalty0 767--794, 2018.

\bibitem[Sansone et~al.(2019)Sansone, Natale, and Zhou]{pu_19_SNZ}
Emanuele Sansone, Francesco G. B.~De Natale, and Zhi{-}Hua Zhou.
\newblock Efficient training for positive unlabeled learning.
\newblock \emph{IEEE transactions on pattern analysis and machine intelligence}, 41\penalty0 (11):\penalty0 2584--2598, 2019.

\bibitem[Scott and Zhang(2019)]{portion_19_Scott/abs-1910-04665}
Clayton Scott and Jianxin Zhang.
\newblock Learning from multiple corrupted sources, with application to learning from label proportions.
\newblock \emph{CoRR}, abs/1910.04665, 2019.
\newblock URL \url{http://arxiv.org/abs/1910.04665}.

\bibitem[Scott and Zhang(2020)]{portion_20_Scott/Scott/Z/20}
Clayton Scott and Jianxin Zhang.
\newblock Learning from label proportions: {A} mutual contamination framework.
\newblock In \emph{Advances in Neural Information Processing Systems 33: Annual Conference on Neural Information Processing Systems 2020, {NeurIPS} 2020, Virtual Event}, 2020.
\newblock URL \url{https://proceedings.neurips.cc/paper/2020/hash/fcde14913c766cf307c75059e0e89af5-Abstract.html}.

\bibitem[Scott et~al.(2013)Scott, Blanchard, and Handy]{mcd_13_Scott/Scott/BH/13}
Clayton Scott, Gilles Blanchard, and Gregory Handy.
\newblock Classification with asymmetric label noise: Consistency and maximal denoising.
\newblock In \emph{{COLT} 2013 - The 26th Annual Conference on Learning Theory}, pages 489--511, Princeton University, New Jersey, {USA}, 2013.

\bibitem[Shimada et~al.(2021)Shimada, Bao, Sato, and Sugiyama]{sdu_19/Shimada/BSS/21}
Takuya Shimada, Han Bao, Issei Sato, and Masashi Sugiyama.
\newblock Classification from pairwise similarities/dissimilarities and unlabeled data via empirical risk minimization.
\newblock \emph{Neural Computation}, 33\penalty0 (5):\penalty0 1234--1268, 2021.

\bibitem[Sugiyama et~al.(2022)Sugiyama, Bao, Ishida, Lu, Sakai, and Niu]{wsl_sugibook/Sugiyama/BILSG/22}
Masashi Sugiyama, Han Bao, Takashi Ishida, Nan Lu, Tomoya Sakai, and Gang Niu.
\newblock \emph{Machine Learning from Weak Supervision: An Empirical Risk Minimization Approach}.
\newblock Adaptive Computation and Machine Learning series. MIT Press, 2022.
\newblock ISBN 9780262047074.

\bibitem[Sukhbaatar and Fergus(2015)]{15_Sukhbaatar/Sukhbaatar/F/14}
Sainbayar Sukhbaatar and Rob Fergus.
\newblock Learning from noisy labels with deep neural networks.
\newblock In \emph{3rd International Conference on Learning Representations, {ICLR} 2015, Workshop Track Proceedings}, San Diego, California, {USA}, 2015.
\newblock URL \url{http://arxiv.org/abs/1406.2080}.

\bibitem[Tanaka et~al.(2018)Tanaka, Ikami, Yamasaki, and Aizawa]{label_18_Tanaka/Tanaka/IYA/18}
Daiki Tanaka, Daiki Ikami, Toshihiko Yamasaki, and Kiyoharu Aizawa.
\newblock Joint optimization framework for learning with noisy labels.
\newblock In \emph{2018 {IEEE} Conference on Computer Vision and Pattern Recognition, {CVPR} 2018}, pages 5552--5560, Salt Lake City, Utah, USA, 2018.

\bibitem[Tewari and Bartlett(2014)]{14_Tewari_LearnTheory}
Ambuj Tewari and Peter~L. Bartlett.
\newblock Chapter 14 - learning theory.
\newblock In \emph{Academic Press Library in Signal Processing: Volume 1}, pages 775--816. Elsevier, 2014.
\newblock \doi{https://doi.org/10.1016/B978-0-12-396502-8.00014-0}.

\bibitem[van Rooyen and Williamson(2015)]{15_Rooyen/Rooyen/W/15}
Brendan van Rooyen and Robert~C. Williamson.
\newblock Learning in the presence of corruption.
\newblock \emph{CoRR}, abs/1504.00091, 2015.
\newblock URL \url{http://arxiv.org/abs/1504.00091}.

\bibitem[van Rooyen and Williamson(2017)]{18_Rooyen/Rooyen/W/17}
Brendan van Rooyen and Robert~C. Williamson.
\newblock A theory of learning with corrupted labels.
\newblock \emph{Journal of Machine Learning Research}, 18:\penalty0 228:1--228:50, 2017.
\newblock URL \url{http://jmlr.org/papers/v18/16-315.html}.

\bibitem[Wang et~al.(2019)Wang, Li, and Zhou]{wang2019partial}
Qian{-}Wei Wang, Yu{-}Feng Li, and Zhi{-}Hua Zhou.
\newblock Partial label learning with unlabeled data.
\newblock In \emph{Proceedings of the Twenty-Eighth International Joint Conference on Artificial Intelligence, {IJCAI} 2019}, pages 3755--3761, Macao, China, 2019.

\bibitem[Wu et~al.(2023)Wu, Lv, and Sugiyama]{partial_21_PPL/Wu/LS/23}
Zhenguo Wu, Jiaqi Lv, and Masashi Sugiyama.
\newblock Learning with proper partial labels.
\newblock \emph{Neural Computation}, 35\penalty0 (1):\penalty0 58--81, 2023.
\newblock URL \url{https://doi.org/10.1162/neco\_a\_01554}.

\bibitem[Yu et~al.(2019)Yu, Han, Yao, Niu, Tsang, and Sugiyama]{ICML:Yu+etal:2019}
Xingrui Yu, Bo~Han, Jiangchao Yao, Gang Niu, Ivor~W. Tsang, and Masashi Sugiyama.
\newblock How does disagreement help generalization against label corruption?
\newblock In \emph{Proceedings of the 36th International Conference on Machine Learning, {ICML} 2019}, pages 7164--7173, Long Beach, California, {USA}, 2019.

\bibitem[Yu et~al.(2018)Yu, Liu, Gong, and Tao]{comp_17_Tao/Yu/LGT/18}
Xiyu Yu, Tongliang Liu, Mingming Gong, and Dacheng Tao.
\newblock Learning with biased complementary labels.
\newblock In \emph{Computer Vision - {ECCV} 2018 - 15th European Conference, Munich, Germany, Proceedings, Part {I}}, pages 69--85, 2018.

\bibitem[Zhou(2018)]{wsl_18_survey/Zhou/18}
Zhi-Hua Zhou.
\newblock A brief introduction to weakly supervised learning.
\newblock \emph{National science review}, 5\penalty0 (1):\penalty0 44--53, 2018.

\end{thebibliography}

\end{document}